%%%%%%%% ICML 2021 EXAMPLE LATEX SUBMISSION FILE %%%%%%%%%%%%%%%%%

\documentclass{article}

\newif\ifdraft
% \drafttrue

\newif\ifapxAppended
\apxAppendedtrue

\newif\ifonecolumn
\onecolumntrue

% !TEX root = ./main.tex

% maths tools
\usepackage{amsthm} % advanced maths symbols
\usepackage{amssymb}
\usepackage{amsmath}
\usepackage{stmaryrd}
\usepackage{mathtools}
\usepackage{MnSymbol}
\usepackage{relsize} % allow resizing of math symbols
\usepackage{listings} % listings for writing code
\usepackage[style=british]{csquotes} % quotes
\usepackage{wrapfig}
\usepackage{tikz}
\usetikzlibrary{trees}
\usetikzlibrary{calc}
\usepackage{calculation}
\usepackage{xr-hyper} % allow cross-referencing between documents
\usepackage{widetext}
\usepackage{bussproofs} % proof trees
\usepackage{paralist}
\makeatletter
\newcommand{\algrule}[1][.2pt]{\par\vskip.1\baselineskip\hrule height #1\par\vskip.1\baselineskip}
\makeatother

\ifdraft
\usepackage[draft,nompar]{commenting}
%% Luke: comments within certain environments e.g. algorithm
\newcommand\lochanged[1]{{\color{red!60!yellow} #1}}
\else
\usepackage[final]{commenting}
\newcommand\lochanged[1]{}
\fi
\declareauthor{lo}{lo}{red!60!yellow}
\authorcommand{lo}{comment}
\declareauthor{cm}{cm}{green!50!black}
\authorcommand{cm}{comment}
\setdefaultauthor{cm}
\declareauthor{fz}{fz}{purple}
\authorcommand{fz}{comment}
\declareauthor{anon}{anon}{blue}
\authorcommand{anon}{comment}

% hyperref makes hyperlinks in the resulting PDF.
% If your build breaks (sometimes temporarily if a hyperlink spans a page)
% please comment out the following usepackage line and replace
% \usepackage{icml2020} with \usepackage[nohyperref]{icml2020} above.
%\usepackage[draft]{hyperref}
\usepackage{hyperref}

\usepackage[capitalise]{cleveref} % for automatically labeling references

\Crefname{theorem}{Thm.}{Thm.}
\Crefname{corollary}{Cor.}{Corollary}
\crefname{proposition}{Prop.}{Propositions}
\Crefname{claim}{Claim}{Claims}
\Crefname{definition}{Def.}{Definitions}
\Crefname{fact}{Fact}{Facts}
\Crefname{conjecture}{Conj.}{Conjectures}
\Crefname{example}{Ex.}{Ex.}
\Crefname{remark}{Rem.}{Remarks}
\Crefname{convention}{Convention}{Conventions}
\Crefname{lemma}{Lem.}{Lemmas}
\Crefname{assumption}{Ass.~}{Ass.~}
\Crefname{section}{Sec.}{Sec.}
\Crefname{appendix}{App.}{App.}
\Crefname{figure}{Fig.}{Fig.}
\Crefname{algorithm}{Alg.}{Alg.}

\usepackage{thmtools}
\usepackage{thm-restate}

\DeclareUnicodeCharacter{0301}{\'{e}}
\theoremstyle{definition}
\newtheorem{definition}{Definition}
\newtheorem{example}[definition]{Example}

\newtheorem*{assumption*}{Assumption}

\theoremstyle{plain}
\newtheorem{theorem}[definition]{Theorem}
\newtheorem{proposition}[theorem]{Proposition}
\newtheorem{lemma}[theorem]{Lemma}

\theoremstyle{remark}
\newtheorem{remark}[definition]{Remark}

\newcommand\aref[3][blue]{%
  \begingroup%
  \hypersetup{linkcolor=#1}%
  \hyperlink{#2}{#3}%
  \endgroup
}

%\crefname{assumption}{Assumption}{Assumptions}

% reduce space after floats
\setlength{\textfloatsep}{0.1in plus 0pt minus 0pt}
\setlength{\dbltextfloatsep}{0.1in plus 0pt minus 0pt}
% \addtolength{\textfloatsep}{-0.2in}
% \addtolength{\dbltextfloatsep}{-0.2in}

\usepackage{url}

\usepackage{pgfplots}

% !TEX root = ./main.tex

% BEGIN {macros introduced by Luke}
\usepackage[normalem]{ulem}
\newcommand\btrfunction{{TR function}}
% END {macros introduced by Luke}

\newcommand{\yhl}[1]{\colorbox{yellow!50}{\hspace{-3pt}#1\hspace{-3pt}}}
\newcommand{\ghl}[1]{\colorbox{green!20}{\hspace{-3pt}#1\hspace{-3pt}}}

% maths marcos
% Locally define theequation to allow myself to use \tag in IEEEeqnarray
\makeatletter
\newcommand\ztag[1]{%
\def\@currentlabel{\footnotesize #1}%
\gdef\tmp{%
\addtocounter{equation}{-1}%
\def\theequation{\footnotesize #1}}%
\aftergroup\aftergroup\aftergroup\aftergroup\aftergroup\aftergroup
\aftergroup\aftergroup\aftergroup\aftergroup\aftergroup\aftergroup
\aftergroup\aftergroup\aftergroup\aftergroup\aftergroup\aftergroup
\aftergroup\aftergroup\aftergroup\aftergroup\aftergroup\aftergroup
\aftergroup\aftergroup\aftergroup\aftergroup\aftergroup\aftergroup
\aftergroup
\tmp}

% maths sets
\newcommand{\Nat}{\mathbb{N}}
\newcommand{\Real}{\mathbb{R}}
\newcommand{\pReal}{\Real_{\geq 0}}

\newcommand{\Borel}{\mathcal{B}}

% functions
\newcommand{\charfn}[1]{\boldsymbol{1}_{#1}}
\newcommand{\inv}[1]{{#1}^{-1}}
\newcommand{\partialto}{\rightharpoonup}

\newcommand{\proj}[1]{\mathsf{proj}_{#1}}

\newcommand\dif{\textrm{d}}
\newcommand{\concat}{\mathbin{+\mkern-8mu+}}

\newcommand{\MyCase}[3]{
  \begin{cases}
    #1 & \text{if } #2,\\
    #3 & \text{otherwise.}
  \end{cases}
}

% topology
\newcommand{\interior}[1]{\mathring{#1}}

\newcommand{\domain}[1]{\mathsf{Dom}(#1)}
\newcommand{\support}[1]{\mathsf{Supp}(#1)}
\newcommand{\nsupport}[2]{\mathsf{Supp}^{#2}(#1)}
\newcommand{\comp}[1]{{#1}^c}
\newcommand{\closure}[1]{\overline{#1}}

\newcommand{\boundary}[1]{{\partial #1}}

\DeclarePairedDelimiter\norm{\lVert}{\rVert}%
\DeclarePairedDelimiter\bignorm{\big\lVert}{\big\rVert}%
\newcommand{\sign}{\mathrm{sign}}
% \makeatletter
% \newcommand{\xRightarrow}[2][]{\ext@arrow 0359\Rightarrowfill@{#1}{#2}}
% \makeatother
\newcommand{\expint}[4]{
  \int_{#1}\ {#2}\ {#3}(\dif{#4})
}
\newcommand{\shortint}[3]{
  \int_{#1}\ {#2}\ \dif{#3}
}
\newcommand{\defn}[1]{\textbf{\em #1}}

% vectors
\SetSymbolFont{stmry}{bold}{U}{stmry}{m}{n} % avoid font warnings

\renewcommand\vec[1]{\boldsymbol{#1}}

\newcommand{\grad}[1]{{\nabla #1}}
\newcommand{\idmat}{\boldsymbol{I}}

% brackets
\newcommand{\anbr}[1]{\langle #1\rangle}

\newcommand{\set}[1]{\{#1\}}

\DeclarePairedDelimiter\church{\llceil}{\rrceil}

% probability

\newcommand{\leb}{\mathsf{Leb}}
\newcommand{\measure}[1]{\mu_{#1}}
\newcommand{\pdf}[1]{\mathsf{pdf}_{#1}}
\newcommand{\pdfGau}{\varphi}

\newcommand{\Gau}{\mathcal{N}}
\newcommand{\Lap}{\mathcal{L}}
\newcommand{\Uni}{\mathcal{U}}

% syntax
\newcommand{\PCFReal}{\mathsf{R}}
\newcommand{\PCF}[1]{\underline{#1}}
\newcommand{\PCFIf}[3]{\mathsf{if}\big(#1, #2, #3\big)}
\newcommand{\Normal}{\mathsf{normal}}
\newcommand{\Score}[1]{\mathsf{score}(#1)}
\newcommand{\Y}[1]{\mathsf{Y}{#1}}
\newcommand{\Fail}{\mathsf{fail}}

\newcommand{\terms}{\Lambda}

\newcommand{\closedvalues}{\Lambda^0_v}
\newcommand{\sk}{\mathsf{SK}}

\newcommand{\pop}{\mathcal {F}}
\newcommand{\tyarrow}{\Rightarrow}

\newcommand{\contra}{R'}
\newcommand{\Ifleq}[3]{\mathsf{if}\big(#1\leq 0, #2, #3\big)}

\newcommand{\tow}{\text{otherwise }}

% meta-variables
\newcommand{\terma}{M}
\newcommand{\termb}{N}
\newcommand{\termc}{L}
\newcommand{\typea}{\sigma}
\newcommand{\typeb}{\tau}

% trace semantics
\newcommand{\traces}{\mathbb{T}}
\newcommand{\trace}{\boldsymbol{t}}

\newcommand{\tmeasure}{{\mu_{\traces}}}
\newcommand{\emptytrace}{{[]}}
\newcommand{\evalcon}{E}
\newcommand{\valuea}{V}
\newcommand{\redexa}{R}
\newcommand{\red}{\longrightarrow}
\newcommand{\redplus}{\red^+}
\newcommand{\config}[3]{\left\langle{#1,\allowbreak#2,\allowbreak#3}\right\rangle}
\newcommand{\weightfn}{\mathsf{weight}}
\newcommand{\valuefn}{\mathsf{value}}
\newcommand{\oper}[1]{\llbracket #1 \rrbracket}
\newcommand{\transkernel}[1]{k_{#1}}
\newcommand{\List}[1]{\mathsf{List}({#1})}
\newcommand{\Pair}[1]{\mathsf{Pair}({#1})}

% spcf
\newcommand{\states}{\mathbb{S}}
\newcommand{\smeasure}{{\mu_{\states}}}
\newcommand{\validstates}{\states^{\textrm{valid}}}

% prob

% tree

\newcommand{\btra}{w}
\newcommand{\trunc}[1]{\btra_{\leq #1}}

% hmc
\newcommand{\seqa}{\vec{q}}
\newcommand{\seqrange}[2]{\vec{q}^{#1 \dots #2}}
\newcommand{\seqi}[1]{\vec{q}_{#1}}

\newcommand{\sdist}{\pi}
\newcommand{\spdf}{\zeta}
\newcommand{\tdist}{\nu}
\newcommand{\tpdf}{\rho}

\newcommand{\stateterm}[1]{(\vec{q}^{(#1)},\vec{p}^{(#1)})}
\newcommand{\chain}{\set{\stateterm{i}}_{i\in\Nat}}

\newcommand{\HMCint}{\Psi}

% algorithm
\newcommand{\len}[1]{|#1|}
\newcommand{\BTint}{\Psi_{\mathsf{NP}}}
\newcommand{\NPRInt}{\Psi_{\mathsf{NP-R}}}
\newcommand{\NPDisInt}{\Psi_{\mathsf{NP-Dis}}}
\newcommand{\extend}{\mathsf{extend}}

% \newcommand\mycommfont[1]{{\footnotesize\ttfamily\textcolor{blue}{#1}}}
% \SetCommentSty{mycommfont}
% \newcommand{\myfuncfont}[1]{{\ttfamily\textcolor{green!50!black}{#1}}}
% \SetFuncSty{myfuncfont}

%%%%%%%%%%%%%%%
% Python code %
%%%%%%%%%%%%%%%

% Default fixed font does not support bold face
\DeclareFixedFont{\ttb}{T1}{txtt}{bx}{n}{9} % for bold
\DeclareFixedFont{\ttm}{T1}{txtt}{m}{n}{9}  % for normal

% Custom colors
\definecolor{deepblue}{rgb}{0,0,0.5}
\definecolor{deepred}{rgb}{0.6,0,0}
\definecolor{deepgreen}{rgb}{0,0.5,0}

% Python style for highlighting
\newcommand\pythonstyle{\lstset{
  language=Python,
  % basicstyle=\footnotesize\ttm,
  basicstyle=\scriptsize\ttm,
  otherkeywords={append,range,min,len,sum,True,False,pop},             % Add keywords here
  keywordstyle=\ttb\color{deepblue},
  commentstyle=\ttm\color{white!55!black},
  emph={domain,cdfN,pdfN,grad,score,normal,uniform,sample,observe,from,log},          % Custom highlighting
  emphstyle=\ttb\color{deepred},    % Custom highlighting style
  ndkeywords={extend,NPint,NPHMC,NPHMCstep,validstate,HMCint,eNPHMC,eNPHMCstep,accept,supported},
  ndkeywordstyle=\ttb\color{deepgreen},
  frame=trBL,                         % Any extra options here
  showstringspaces=false,            %
  breaklines=true,
  % columns=fullflexible,
  basewidth=0.55em
}}

% Python environment
\lstnewenvironment{python}[1][]
{
\pythonstyle
\lstset{#1}
}
{}

% Python for external files

% Python for inline
\newcommand\pythoninline[1]{{\pythonstyle\lstinline!#1!}}

% Recommended, but optional, packages for figures and better typesetting:
\usepackage{microtype}
\usepackage{graphicx}
\usepackage{subfigure}
\usepackage{booktabs} % for professional tables

% hyperref makes hyperlinks in the resulting PDF.
% If your build breaks (sometimes temporarily if a hyperlink spans a page)
% please comment out the following usepackage line and replace
% \usepackage{icml2021} with \usepackage[nohyperref]{icml2021} above.
%\usepackage{hyperref}

% Attempt to make hyperref and algorithmic work together better:

% Use the following line for the initial blind version submitted for review:
% \usepackage{icml2021}

% If accepted, instead use the following line for the camera-ready submission:
\usepackage[accepted]{icml2021}

% The \icmltitle you define below is probably too long as a header.
% Therefore, a short form for the running title is supplied here:
\icmltitlerunning{Nonparametric Hamiltonian Monte Carlo}

% Prevent single lines at the start or end of a column:
\widowpenalty10000
\clubpenalty10000

\begin{document}

\twocolumn[
\icmltitle{Nonparametric Hamiltonian Monte Carlo}

% It is OKAY to include author information, even for blind
% submissions: the style file will automatically remove it for you
% unless you've provided the [accepted] option to the icml2021
% package.

% List of affiliations: The first argument should be a (short)
% identifier you will use later to specify author affiliations
% Academic affiliations should list Department, University, City, Region, Country
% Industry affiliations should list Company, City, Region, Country

% You can specify symbols, otherwise they are numbered in order.
% Ideally, you should not use this facility. Affiliations will be numbered
% in order of appearance and this is the preferred way.
\icmlsetsymbol{equal}{*}

\begin{icmlauthorlist}
\icmlauthor{Carol Mak}{oxford}
\icmlauthor{Fabian Zaiser}{oxford}
\icmlauthor{Luke Ong}{oxford}
\end{icmlauthorlist}

\icmlaffiliation{oxford}{Department of Computer Science, University of Oxford, United Kingdom}

\icmlcorrespondingauthor{Carol Mak}{pui.mak@cs.ox.ac.uk}

% You may provide any keywords that you
% find helpful for describing your paper; these are used to populate
% the "keywords" metadata in the PDF but will not be shown in the document
\icmlkeywords{Inference algorithm, MCMC, Tree}

\vskip 0.3in
]

% this must go after the closing bracket ] following \twocolumn[ ...

% This command actually creates the footnote in the first column
% listing the affiliations and the copyright notice.
% The command takes one argument, which is text to display at the start of the footnote.
% The \icmlEqualContribution command is standard text for equal contribution.
% Remove it (just {}) if you do not need this facility.

\printAffiliationsAndNotice{}  % leave blank if no need to mention equal contribution
% \printAffiliationsAndNotice{\icmlEqualContribution} % otherwise use the standard text.

\begin{abstract}

Probabilistic programming uses programs to express generative models whose posterior probability is then computed by built-in inference engines. A challenging goal is to develop general purpose inference algorithms that work out-of-the-box for arbitrary programs in a universal probabilistic programming language (PPL). The densities defined by such programs, which may use stochastic branching and recursion, are (in general) \emph{nonparametric}, in the sense that they correspond to models on an infinite-dimensional parameter space. However standard inference algorithms, such as the Hamiltonian Monte Carlo (HMC) algorithm, target distributions with a fixed number of parameters. This paper introduces the \emph{Nonparametric Hamiltonian Monte Carlo} (NP-HMC) algorithm which generalises HMC to nonparametric models. Inputs to NP-HMC are a new class of measurable functions called ``\emph{tree representable}'', which serve as a language-independent representation of the density functions of probabilistic programs in a universal PPL. We provide a correctness proof of NP-HMC, and empirically demonstrate significant performance improvements over existing approaches on several nonparametric examples.
%This document provides a basic paper template and submission guidelines. Abstracts must be a single paragraph, ideally between 4--6 sentences long. Gross violations will trigger corrections at the camera-ready phase.
\end{abstract}

\lo{CONVENTIONS:

1. Use $\dif$ (Roman) in integrals

2. Changed ``bold blue path'' to ``blue path''

3. Nonparametric (not non-parametric)

4. initialise / formalise (not initialize / formalize) [British English, for consistency]
}

\fz{
5. Use \texttt{\textbackslash{}cref\{...\}} instead of \texttt{Figure/ Algorithm/ etc.~\textbackslash{}ref\{...\}} if possible, for consistency
}

\cm{
6. Use \texttt{Assumption \textbackslash{}aref\{ass:\#1\}\{\#1\}} for referencing Assumption \#1.
}

\section{Introduction}
\label{sec:intro}

% !TEX root = ./main.tex

\iffalse
Probabilistic programming languages (PPLs)
%\changed[cm]{can be seen as a tool}
are tools to study real-world random phenomena with observed data, by specifying probabilistic models using a mix of deterministic and stochastic constructs.}
% \changed[fz]{are tools to specify probabilistic models describing real-world phenomena, including observations.}
%on top of which inference can be performed automatically.
Unlike other representations of probabilistic models, PPLs benefit
from the simplicity and expressiveness of programming languages and makes probabilistic modelling clearer and more widely applicable.
%\changed[cm]{To reason about the posterior, PPL tends to}
PPLs usually come with built-in algorithms to perform Bayesian inference on such models automatically.
This frees
%scientists
\changed[fz]{practitioners}
from the mathematically involved task of designing custom inference algorithms for their models.
\fi

\changed[lo]{Probabilistic programming is a general purpose means of expressing probabilistic models as programs, and automatically performing Bayesian inference.
Probabilistic programming systems enable data scientists and domain experts to focus on designing good models;
the task of developing efficient inference engines can be left to experts in Bayesian statistics, machine learning and programming languages.
To realise the full potential of probabilistic programming, it is essential to automate the inference of latent variables in the model, conditioned on the observed data.}

% universal PPLs: its benefits and applications
%Introduced by \citet{DBLP:conf/focs/Kozen79}, a particularly fruitful line of work has been \changed[fz]{a}\cm{Is this the right article adjective? Luke?} \emph{universal PPL} \cite{DBLP:conf/icfp/BorgstromLGS16}.
%\fz{How about: Introduced by \citet{DBLP:conf/focs/Kozen79}, a particularly fruitful line of work has been \emph{universal PPLs} such as \citet{DBLP:conf/icfp/BorgstromLGS16}.}
\changed[lo]{Church \cite{DBLP:conf/uai/GoodmanMRBT08} introduced \emph{universal probabilistic programming}, the idea of writing probabilistic models in a Turing-complete functional programming language.}
\changed[lo]{Typically containing only a handful of basic programming constructs} such as branching and recursion, universal probabilistic programming languages (PPLs) can nonetheless specify all computable probabilistic models \cite{VakarKS19}.
%\fz{I don't think we should emphasize that they are basic. We should emphasize that they have branching and recursion. There are non-basic universal PPLs as well. So how about: Universal PPLs allow branching and recursion, and are thus Turing complete. As a consequence, they can specify all computable probabilistic models \cite{VakarKS19}.}
In particular, \defn{nonparametric models}---models with an unbounded number of random variables---can be described naturally in universal PPLs using recursion.
These include probabilistic models with an unknown number of components, like
Bayesian nonparametric models \cite{https://doi.org/10.1111/1467-9868.00095},
variable selection in regression \cite{article},
signal processing \cite{DBLP:conf/aistats/MurrayLKBS18};
and models that are defined on infinite-dimensional spaces, such as probabilistic context free grammars \cite{Manning99},
%Dirichlet processes \cite{Ferguson73}, \fz{Does this not overlap with Bayesian nonparametric models?} \lo{Yes.}
birth-death models of evolution \cite{DBLP:conf/uai/KudlickaMRS19} and
statistical phylogenetics \cite{Ronquist2020.06.16.154443}.
%Motivated by their broad applicability, several practical universal PPLs have recently been introduced, such as
Examples of practical universal PPL include Anglican \cite{DBLP:conf/aistats/WoodMM14}, Venture \cite{DBLP:journals/corr/MansinghkaSP14}, Web PPL \cite{dippl}, Hakaru \cite{narayanan2020symbolic}, Pyro \cite{pyro}, Turing \cite{DBLP:conf/aistats/GeXG18} and Gen \cite{DBLP:conf/pldi/Cusumano-Towner19}.

However, because universal PPLs are expressively complete, it is a challenging problem to design and implement general purpose inference engines for them.
The parameter space of a nonparametric model is a union of spaces of varying dimensions.
%To approximate the posterior via an MCMC algorithm (say), the transition kernel will have to switch between (possibly an unbounded number of) configurations of different dimensions efficiently.
\changed[fz]{To approximate the posterior via an Markov chain Monte Carlo (MCMC) algorithm, the transition kernel will have to efficiently switch between a potentially unbounded number of configurations of different dimensions.}
% In order to explore the whole parameter space,
% transition kernels have to jump between dimensions
% (See \citet{DBLP:conf/icml/ZhouYTR20} Section 3.1 for a detailed discussion.)
This difficulty explains why there are so few \emph{general purpose} MCMC algorithms for universal PPLs \cite{DBLP:journals/jmlr/WingateSG11,DBLP:conf/aistats/WoodMM14,DBLP:conf/pkdd/TolpinMPW15,HurNRS15}.
\changed[lo]{We believe it is also the reason why these algorithms struggle with nonparametric models, as we show in \cref{sec:experiements}.
A case in point is the widely used universal PPL Pyro.
Even though it allows the specification of nonparametric models, its HMC and No-U-Turn Sampler \cite{HoffmanG14} inference engines do not support them reliably:}
%it failed to produce a posterior, in another it produced a wrong one.}
\changed[fz]{in one of our benchmark tests, they produced a wrong posterior (\cref{fig:walk-pyro}).}

%We believe it is also the reason why these algorithms are only reliable for parametric models with a small number of random variables (as we show in \cref{sec:experiements}).
%and struggle with complex models like \changed[fz]{high-dimensional} Gaussian mixtures.

% The Hamiltonian Monte Carlo (HMC) algorithm has become the workhorse in PPLs like Stan \cite{stan}, Pyro and Gen, since it leverages the recent advances of automated gradient computation to propose highly effective samples from target distributions and does not suffer from the curse of dimensionality.
% However, it is not designed to transition between traces of different and unbounded lengths, and hence HMC is not applicable for arbitrary universal probabilistic programs.

%\paragraph{Contributions}
In this paper, we introduce the \emph{Nonparametric Hamiltonian Monte Carlo} (NP-HMC) algorithm, which generalises the Hamiltonian Monte Carlo (HMC) algorithm \cite{DUANE1987216} to nonparametric models.
The input to NP-HMC is what we call a \emph{tree representable} (TR) function, which is a large class of measurable functions of type \changed[fz]{$\btra: \bigcup_{n \in \Nat} \Real^n \to \Real_{\ge 0}$},
designed to be a language-independent representation for the density functions of programs written in any universal PPL.
The parameter space of the standard HMC algorithm is $\Real^n$, a Euclidean space of a fixed dimension.
By contrast, the parameter space of NP-HMC is $\bigcup_{n \in \Nat}\Real^n$.
\changed[lo]{The key innovation of NP-HMC is a method by which the dimension of the configuration of the current sample is incremented lazily, while preserving the efficacy of HMC by keeping the Hamiltonian approximately invariant.}
We prove that NP-HMC is correct, i.e., the induced Markov chain converges to the posterior distribution.
%Furthermore, we show that our adoption of HMC is conceptually generic by presenting the nonparametric extensions of two HMC variants: reflective/refractive HMC \cite{AfsharD15} and discontinuous HMC \cite{NishimuraDL20}.
To evaluate the practical utility of NP-HMC, we compare an implementation of the algorithm against existing out-of-the-box MCMC inference algorithms on several challenging models with an unbounded number of random variables.
Our results suggest that NP-HMC is applicable to a large class of probabilistic programs written in universal PPLs,
offering significantly better performance than existing algorithms.

\paragraph{Notation}

We write $\seqa$ to mean a (possibly infinite) real-valued sequence;
$\seqrange{1}{i}$ \changed[lo]{the prefix of $\vec \seqa$ consisting of the first $i$ coordinates};
$\seqi{i}$ the $i$-th coordinate of $\seqa$; and
$\len{\seqa}$ the length of $\seqa$.
We write sequence as lists,
such as $[3.6,1.0,3,55, -4.2]$, and
% the empty sequence as $\emptytrace$, and
the concatenation of sequences $\seqa$ and $\seqa'$ as $\seqa \concat \seqa'$.

%if it is finite.
%\lo{Is ``if it is finite'' necessary? If $\vec q$ is infinite then $\len{\vec q} = \infty$, naturally.}

We write $\Borel_n$ for the Borel $\sigma$-algebra of $\Real^n$;
%, the smallest $\sigma$-algebra containing all open sets in $\Real^n$;
%$\Gau_n$ for the multivariate normal distribution on $\Real^n$ with mean $0$ and variance $1$.
$\Gau_n$ for the standard $n$-dimensional normal distribution with mean $\boldsymbol 0$ and covariance $\idmat$;
\changed[cm]{$\pdfGau_n(\vec{x} \mid \vec{\mu}, \vec{\Sigma})$}
%\lo{${\Gau_n}(\vec{x} \mid \vec{\mu}, \vec{\Sigma})$ would be a more standard notation.}
for the density of $\vec{x} \in \Real^n$ in the $n$-dimensional normal distribution with mean $\vec{\mu}$ and covariance $\vec{\Sigma}$.
For brevity we write $\pdfGau_n(\vec{x})$ for $\pdfGau_n(\vec{x} \mid \vec{0}, \vec{\idmat})$ and $\pdfGau$ for $\pdfGau_1$.
%\lo{Multivariate Gaussian is specified by a mean vector and covariance matrix.}
\fz{Can we inline some of the definitions in this paragraph to where they're used? It seems like some are only used once.}

For any $\pReal$-valued function %$f:X \to \Real$,
\changed[lo]{$f: \domain{f} \to \pReal$},
we write $\support{f} := \inv{f}(\Real_{>0})$ for the \defn{support} of $f$; and
% If $X$ contains $\Real^n$\lo{Why do we need the preceding if? In fact, quite often we only have $\domain{f} \cap \Real^n \not= \emptyset$.}, we write
$\nsupport{f}{n} := \support{f} \cap \Real^n$ for the support of $f$ in $\Real^n$.
We say $x \in X$ is $f$-\defn{supported} if $x \in \support{f}$.

\section{Tree Representable Functions}
\label{sec:tr-functions}

% !TEX root = ./main.tex

Conventional HMC \changed[fz]{samples from a distribution} with a density function $\btra: \Real^n \to \Real_{\ge 0}$ where the dimension of the target (parameter) space $\Real^n$ is fixed.
However this is too restrictive for probabilistic programs, because---with branching and recursion---the target space has a variable, even unbounded, number of dimensions.
%\begin{python}[caption={A simple probabilistic program.},label=intro-program,captionpos=b,float,linewidth=5cm]
\begin{python}[caption={A simple probabilistic program.},label={intro-program},captionpos=b,float]
q = sample(normal(0, 1))
sum = 0
while sum < q:
  sum += sample(normal(0, 1))
observe(sum, normal(q, 1))
\end{python}%
\begin{example}[Working]
Consider the probabilistic program in \cref{intro-program} where \pythoninline{sample(normal(0, 1))} denotes sampling from the standard normal distribution.
%\fz{I explicitly added a \pythoninline{sample} function to make things clearer.}
The dimension of the target space, i.e.~the number of samples drawn, can vary from run to run because of the branching behaviour of the while loop.
(Recall that by \defn{trace}, we mean the sequence of samples drawn in the course of a particular run, one for each random primitive encountered.)
\changed[fz]{The density function then records the weight of this trace, computed by multiplying the probability densities of all sampled values and the likelihoods of all observations during the run.}
For the above program, we could have a trace $[0.3, 0.5] \in \Real^2$ of length 2 or a trace $[1.0, 0.5, 0.5] \in \Real^3$ of length 3, and so on.
Hence we have to consider density functions of type $\btra: \bigcup_{n\in \Nat} \Real^n \to \Real_{\ge 0}$.
However, not every such function makes sense as the density of a probabilistic program.
For example, if $\btra([1.0, 0.5, 0.5]) > 0$ then the program can execute successfully with the trace $[1.0, 0.5, 0.5]$, but not with $[1.0, 0.5]$ or any other prefix.
\changed[fz]{In other words,} no proper prefix of $[1.0, 0.5, 0.5]$ is in $\support{\btra}$.
\end{example}
%These observations are formalised in the following definitions.

Thus we set our target space to be the \textbf{\em measure space of traces}
$\traces := \bigcup_{n\in\Nat} \Real^n$ equipped with the standard disjoint union $\sigma$-algebra
$\Sigma_\traces := \set{\bigcup_{n\in\Nat} U_n \mid U_n \in \Borel_n }$, with measure given by summing the respective (higher-dimensional) normals $\tmeasure(\bigcup_{n\in\Nat} U_n) := \sum_{n\in\Nat} \Gau_n(U_n)$.

We consider density functions that are measurable functions $\btra: \traces \to \pReal$ satisfying the \defn{prefix property}:
%\begin{quote}
whenever $\vec{q} \in \nsupport{\btra}{n}$ then for all $k < n$,
we have $\vec{q}^{1\dots k} \not\in \nsupport{\btra}{k}$.
%\end{quote}
We call them \defn{tree representable (TR) functions} because any such function $\btra$ can be represented as a possibly infinite but finitely branching tree, which we call \emph{program tree}.
%\anon{Zhou: Does “finite” refer to the number of branches? I do not fully follow why it is finite.}
%\lo{Yes. The tree is finite-branching because it can be viewed as the unravelled (abstract syntax) tree of a program that computes $w$; see footnote~\ref{fnote:TR function}.}
%\changed[lo]{which can be viewed as the abstract syntax tree of a program that computes $w$, but with any recursion unravelled (so that the tree is potentially infinite).}
This is exemplified in \cref{fig:binary-tree} (left),
where
a circular node denotes an element of the input of type $\Real$;
a rectangular node gives the condition for
{$\vec{q} \in \nsupport{\btra}{n}$}
(with the left, but not the right, child satisfying the condition); and
a leaf node gives the result of the function on that branch.
Any \defn{branch} (i.e.~path from root to leaf) in a program tree of $w$ represents a set of finite sequences $[q_1,\dots,q_n]$ in $\support{\btra}$.
In fact, every program tree of a TR function $w$ specifies a countable partition of $\support{\btra}$ \changed[fz]{via its branches}.
The prefix property guarantees that \changed[fz]{for each TR function $w$, there are program trees of the form in \cref{fig:binary-tree} (left) representing $w$.}

\tikzset{position/.style={blue,font=\footnotesize}}
\tikzset{path/.style={draw=blue, line width=3pt}}
\tikzset{not path/.style={draw=black, line width=0.5pt}}
\tikzset{input/.style={circle,draw=black, line width=0.5pt,minimum size=12pt,font=\footnotesize},inner sep=2pt}
\tikzset{condition/.style={rectangle,draw=black, line width=0.5pt, minimum height=0.1pt,font=\footnotesize},inner sep=2pt}
\tikzset{leaf/.style={draw=none,font=\footnotesize}}
\tikzset{decision/.style={black!50!green}}

\begin{figure}[t]
  \parbox{.24\textwidth}{
    \centering
    \begin{tikzpicture}[scale=.55,sibling distance = 5em]
      \node [input] (q1) {$q_1$}
        child {
          node [condition] (c1) {$[q_1] \overset{?}{\in} \nsupport{\btra}{1}$}
          child {node [leaf] (r1) {$\btra([q_1])$}}
          child {
            node [input] (q2) {$q_2$}
            child {
              node [condition] (c2) {$[q_1,q_2] \overset{?}{\in} \nsupport{\btra}{2}$}
              child {node [leaf,xshift=-1em] (r2) {$\btra([q_1,q_2])$}}
              child {
                node [input] (q3) {$q_3$}
                child {
                  node [condition] (c3)
                  {$[q_1,q_2,q_3] \overset{?}{\in} \nsupport{\btra}{3}$}
                  child {node [leaf,xshift=-1em] (r3) {$\btra([q_1,q_2,q_3])$}}
                  child {node {$\vdots$}}
                }
              }
            }
          }
        };
        \node at ($(c1) + (220:1.3cm) $) [decision] {\scriptsize yes};
        \node at ($(c1) + (315:1.1cm) $) [decision] {\scriptsize no};
        \node at ($(c2) + (215:1.4cm) $) [decision] {\scriptsize yes};
        \node at ($(c2) + (315:1.1cm) $) [decision] {\scriptsize no};
        \node at ($(c3) + (215:1.4cm) $) [decision] {\scriptsize yes};
        \node at ($(c3) + (315:1.1cm) $) [decision] {\scriptsize no};
    \end{tikzpicture}
  }%
  \parbox{.25\textwidth}{
    \centering
    \begin{tikzpicture}[scale=.55,sibling distance = 5em]
      \node [input] (q1) {$q_1$}
        child {
          node [condition] (c1) {$q_1 \leq 0$}
          edge from parent [path]
          child {node [leaf,xshift=-1em] (r1) {$\pdfGau(0 \mid q_1,1)$}
          edge from parent [not path]}
          child {
            node [input] (q2) {$q_2$}
            child {
              node [condition] (c2) {$0<q_1\leq q_2$}
              child {node [leaf,xshift=-1.25em] (r2) {$\pdfGau(q_2 \mid q_1,1)$}}
              child {
                node [input] (q3) {$q_3$}
                edge from parent [not path]
                child {
                  node [condition, text width=2.1cm] (c3)
                  {$0 < q_1$ and \hspace{1cm} $q_2 < q_1 \leq q_2+q_3$}
                  child {node [leaf,xshift=-2em] (r3) {$\pdfGau(q_2+q_3 \mid q_1,1)$}}
                  child {node {$\vdots$}}
                }
              }
            }
          }
        };
        \node [position, right of=q1, node distance=17pt] {$0.3$};
        \node [position, right of=q2, node distance=17pt] {$0.5$};
    \end{tikzpicture}
  }%
  \caption{(left) Generic \btrfunction\ $\btra$; (right) \btrfunction\ $\btra$ for the probabilistic program in \cref{intro-program}.}
  \label{fig:binary-tree}
\end{figure}
%\fz{I changed the following paragraph:}
We target \btrfunction{}s as densities for our new sampler NP-HMC because they are a naturally large class of functions.
In particular, every program of a universal PPL has a density function that is tree representable.\footnote{\label{fnote:TR function} With (additional) suitable assumptions about the computability of $w$, we can view any such tree as the abstract syntax tree of a program that computes $w$, but with any recursion unravelled (so that the tree is potentially infinite).}
%(see \cref{prop: all spcf terms have TR weight function} in \cref{appendix: SPCF}).
(See
\ifapxAppended\cref{prop: all spcf terms have TR weight function} in \cref{appendix: SPCF}
\else the SM
\fi
for a formal account.)
%Thus we may view \btrfunction{}s as an abstract presentation of probabilistic programs that is independent of the programming language.
%\changed[lo]{Thus we may view \btrfunction{}s as the class of density functions definable by a universal PPL, but presented in an abstract, language-independent manner.}
For instance, the program in \cref{intro-program} has density\footnote{Notice that, even though the program samples from a normal distribution, $\btra$ does not factor in Gaussian densities \changed[fz]{from those sample statements, just the observe statement}, since they are already accounted for by $\measure{\traces}$.}
$\btra$ (\btra.r.t.~the \changed[cm]{stock} measure $\tmeasure$) given by
\[
  {\btra}{(\vec{q})} :=
    \begin{cases}
      {\pdfGau(\sum_{i=2}^{n} \vec{q}_i \mid \vec{q}_1,1)}
      &
      \begin{matrix}
        \text{if }
        \forall k < \len{\vec{q}} \text{\hspace{5em}} \\[-0.1em]
        {\sum_{i=2}^{k} \vec{q}_i < \vec{q}_1 \leq \sum_{i=2}^{n} \vec{q}_i},
      \end{matrix}
      \\
      0 & \text{otherwise}
    \end{cases}
\]
which is TR and \changed[lo]{it has a program tree as depicted in \cref{fig:binary-tree} (right)}.
Notice that every element in the support of $\btra$ belongs to a branch in this tree: for example, the trace $[0.3,0.5]$ belongs to the blue branch in \cref{fig:binary-tree} (right).
% \footnote{For ease of reference,
% the rectangular nodes in \cref{fig:binary-tree} (right) give the condition for
% $\vec{q} \in \nsupport{w}{n} \cup \bigcup_{k=1}^{n-1}(\nsupport{w}{k} \times \Real^{n-k} )$,
% the question mark on the condition and the labels of its children are dropped.
% Note that this induces the same tree.}

%As we will explain in \cref{sec:np-hmc}, as a class of density functions, {\btrfunction}s are not superfluous to the workings of our new algorithm.
As we will explain in \cref{sec:np-hmc}, the prefix property (satisfied by TR functions) is
%not superfluous to our new algorithm. In fact, it is
essential for the correctness of NP-HMC.

\section{Nonparametric HMC}
\label{sec:np-hmc}

% !TEX root = ./main.tex

\defn{Nonparametric Hamiltonian Monte Carlo (NP-HMC)} (\cref{fig:algorithms}) is a MCMC algorithm that, given a \lo{The qualification ``(computable)'' may prompt questions. In view of footnote~\ref{fnote:TR function}, I think we can / should drop it.} \btrfunction{} $\btra$, iteratively proposes \changed[lo]{a new sample $\seqa \in \bigcup_{n \in \Nat}\Real^n$} and accepts {it} with a suitable Hastings acceptance probability,
%{in order to sample from the target distribution}
\changed[lo]{such that the invariant distribution of the induced Markov chain is}
\[ \tdist:{A}\mapsto \frac{1}{Z} \shortint{A}{\btra}{\tmeasure}
\]
with normalising constant $Z := \shortint{\traces}{\btra}{\tmeasure}$.
As the name suggests, NP-HMC is a generalisation of the standard HMC algorithm (\cref{rem:NP-HMC extends HMC}.i) to \emph{nonparametric models}, in the form of TR functions whose support is a subspace of $\traces$ of unbounded dimension.

In this section, we first explain our generalised algorithm, using a version (\cref{alg:np-hmc}) that is geared towards conceptual clarity, and defer discussions of more efficient variants.

% explain using tree
\begin{figure*}[t]
  \parbox{.42\textwidth}{
  \parbox[t][5cm][c]{.4\textwidth}{
  \centering
  \begin{tikzpicture}[scale=0.8]
    \begin{axis}[
      axis lines = left,
      xlabel = $q_1$,
      ylabel = {Energy},
      xmin=-4, xmax=4,
      ymin=0,
      height=7cm,
      every axis plot/.append style={very thick},
      grid=both,
      grid style={line width=.1pt, draw=gray!10},
      major grid style={line width=.2pt,draw=gray!50},
    ]
    % red plot
    \addplot [
      domain=-4:0,
      samples=100,
      color=red,
    ]
    {-ln(1/sqrt(2*pi)*exp(-(1/2)*x^2))};
    \addlegendentry{$-\log(\pdfGau(0 \mid q_1,1))\cdot[q_1 \leq 0]$}
    % points on red plot
    \node[label={45:{[-3.1]}},circle,fill,inner sep=2pt] at (axis cs:-3.1,5.72393853) {};
    \node[label={45:{[-2.37]}},circle,fill,inner sep=2pt] at (axis cs:-2.37,3.72738853) {};
    \node[label={45:{[-0.86]}},circle,fill,inner sep=2pt] at (axis cs:-0.86,1.28873853) {};
    \end{axis}
  \end{tikzpicture}
  }
  \small
  \begin{center}
  \begin{tabular}{lllll}
  \toprule
  Time &0 &1 &2 &3 \\
  \midrule
  $\vec{q}$ &[-3.1] &[-2.37] &[-0.86] &[1.15] \\
  \midrule
  $\vec{p}$ &[1.2] &[3.29] &[3.94] &[5.26] \\
  \bottomrule
  \end{tabular}
  \end{center}
  \caption{The Hamiltonian dynamics of a particle on the surface $-\log(\pdfGau(0 \mid q_1,1))\cdot[q_1 \leq 0]$ on $\Real$.}
  \label{fig:hamdyn on R}
  }
  \hspace{.03\textwidth}
  \parbox{.55\textwidth}{
  \parbox[t][5cm][c]{.5\textwidth}{
  \centering
  \begin{tikzpicture}[scale=0.78]
    \begin{axis}[
      axis lines = left,
      xlabel = $q_1$,
      ylabel = $q_2$,
      zlabel = {Energy},
      xmin=-4, xmax=4,
      ymin=-4, ymax=4,
      zmin=0,
      view={15}{5},
      height=7cm,
      legend style={at={(1.05,1.05)}},
      every axis plot/.append style={thick},
      grid=both,
      grid style={line width=.1pt, draw=gray!10},
      major grid style={line width=.2pt,draw=gray!50},
      legend cell align={left},
    ]
    % red plot
    \addplot3 [
      mesh,
      domain=-4:0,
      y domain=-4:4,
      samples=40,
      color=red,
    ]
    {-ln(1/sqrt(2*pi)*exp(-(1/2)*x^2))};
    \addlegendentry{$-\log(\pdfGau(0 \mid q_1,1))\cdot[q_1 \leq 0]$}
    % blue plot
    \addplot3 [
      mesh,
      domain=0:4,
      y domain=0:4,
      restrict z to domain=0:8,
      samples=40,
      color=blue
    ]{(y > x) ? -ln(1/sqrt(2*pi)*exp(-(1/2)*(y-x)^2)):-1};
    \addlegendentry{$-\log(\pdfGau(q_2 \mid q_1,1)) \cdot[0 < q_1 \leq q_2] $}
    % points on red plot
    \node[label={[fill=white, fill opacity=0.75, text opacity=1]90:{[-3.1, -1.61]}},circle,fill,inner sep=2pt] at (axis cs:-3.1,-1.61,5.72393853) {};
    \node[label={[fill=white, fill opacity=0.75, text opacity=1]90:{[-2.37, -0.39]}},circle,fill,inner sep=2pt] at (axis cs:-2.37,-0.39,3.72738853) {};
    \node[label={[fill=white, fill opacity=0.75, text opacity=1]90:{[-0.86, 0.82]}},circle,fill,inner sep=2pt] at (axis cs:-0.86,0.82,1.28873853) {};
    % points on blue plot
    \node[label={[fill=white, fill opacity=0.75, text opacity=1]45:{[1.15, 2.04]}},circle,fill,inner sep=2pt] at (axis cs:1.15,2.04,1.31321053) {};
    \end{axis}
  \end{tikzpicture}
  }
  \small
  \begin{tabular}{lllll}
  \toprule
  Time &0 &1 &2 &3 \\
  \midrule
  $\vec{q}$ &[-3.1, -1.61]&[-2.37, -0.39]&[-0.86, 0.82]&[1.15, 2.04] \\
  \midrule
  $\vec{p}$ &[1.2, 3.04] &[3.29, 3.04] &[3.94, 3.04] &[5.26, 3.04] \\
  \bottomrule
  \end{tabular}
  \caption{The Hamiltonian dynamics of a particle on the updated surface $-\log(\pdfGau(0 \mid q_1,1))\cdot[q_1 \leq 0] -\log(\pdfGau(q_2 \mid q_1,1)) \cdot[0 < q_1 \leq q_2] $ on $\Real^2$.}
  \label{fig:hamdyn on R2}
  }
\end{figure*}

\paragraph{Idea}
%We assume basic familiarity of the HMC algorithm; see \cite{Neal2011} for an account.
\changed[fz]{We assume basic familiarity with the HMC algorithm; see \cite{Betancourt18} for the intuition behind it and \cite{Neal2011} for details.}
Like the HMC algorithm, NP-HMC views a sample $\seqa$ as a particle at position $\seqa$, with a randomly initialised momentum $\vec p$, moving on a frictionless surface derived from the density function $\btra$.
The key innovation lies in our treatment when the particle moves \emph{beyond} the surface, i.e.~outside the support of the density function.
This procedure is called $\extend$ (\cref{alg:extend}), which we will now illustrate.

%\lo{I like the new example below to illustrate extend. \cref{intro-program} is a well-chosen example program. Building the illustration on the example program is a great idea. Readers would appreciate the continuity and coherence of the working example. Moreover the new example takes up less space. It's also good to drop (the possibly confusing) rewind, resimulate, and resume.}

Let's trace the movement of a particle
at position $\vec{q}=[-3.1]$ with a randomly chosen momentum $\vec{p} = [1.2]$,
on the \changed[lo]{surface (a line in 1D) determined} by the \btrfunction{} $\btra$ in \cref{fig:binary-tree} (right).
The first two steps taken by the particle are simulated according to the Hamiltonian dynamics on the surface $-\log(\pdfGau(0 \mid q_1,1))\cdot[q_1 \leq 0]$, which is derived from the restriction of $\btra$ to $\Real$.
The positions on the surface and states\footnote{As in HMC, a \emph{state} of the NP-HMC algorithm is a position-momentum pair $(\vec{q}, \vec{p})$ with $|\vec{q}|=|\vec{p}|$; but unlike HMC, $\vec{q}, \vec{p} \in \traces$.} of the particle at each step are given in \cref{fig:hamdyn on R}.

At the third time step, the particle is at the position $[1.15]$, which is no longer on the surface.
To search for a suitable state, NP-HMC increments the dimension of the \changed[lo]{current surface (line)} as follows.

First, the surface is extended to
$-\log(\pdfGau(0 \mid q_1,1))\cdot[q_1 \leq 0] -\log(\pdfGau(q_2 \mid q_1,1)) \cdot[0 < q_1 \leq q_2]$,
which is derived from the sum of the respective restrictions of $\btra$ to $\Real$ and to $\Real^2$,
as depicted in \cref{fig:hamdyn on R2}.
Since $\btra$ satisfies the prefix property, the respective supports of these restrictions,
namely $\set{[q_1,q_2]\in\Real^2 \mid q_1 \leq 0}$ and $\set{[q_1,q_2]\in\Real^2 \mid 0 < q_1 \leq q_2}$, are disjoint; and
hence the states of the particle on the previous surface $-\log(\pdfGau(0 \mid q_1,1))\cdot[q_1 \leq 0]$ can be reused on the updated surface.
\changed[lo]{Notice that the respective first coordinates of the particle's positions on the surface in \cref{fig:hamdyn on R2} are identical to the particle's positions on the surface (line) in \cref{fig:hamdyn on R}.}
%see that the first coordinates of the position and momentum variables on the new surface in the table in \cref{fig:hamdyn on R2} are copied from that on the previous surface in \cref{fig:hamdyn on R}.

%\lo{The tables (including font size) in \cref{fig:hamdyn on R,fig:hamdyn on R2} seem disproportionately large. Reducing them would save space and give a more balanced look.}

Next, \changed[lo]{the initial state $(\vec{q} =[-3.1],\vec{p}=[1.2])$ is extended by appending a randomly chosen value to both the position and momentum components}, so that the particle is positioned on the updated surface with an initial momentum.
In our example, $-1.61$ and $3.04$ are sampled and the initial state of the particle becomes $(\vec{q} =[-3.1,-1.61],\vec{p}=[1.2,3.04])$ which is located on the surface as shown in \cref{fig:hamdyn on R2}.
The states at times $1$, $2$ and $3$ are updated accordingly and are given in the table in \cref{fig:hamdyn on R2}.
Notice that the particle at time $3$ is now positioned on the updated surface, and hence Hamiltonian dynamics can resume.
The rest of this section is devoted to \changed[lo]{formalising} our algorithm.

\begin{assumption*}
  Henceforth we assume that the input \btrfunction\ $\btra$ satisfies the following:
  \begin{compactenum}
    \item %The distribution $\tdist$ is finite,
    [\hypertarget{ass:1}{1}]
    \emph{Integrability}, i.e.,~the normalising constant $Z < \infty$.
    (Otherwise, the inference problem would be ill-defined.)
    \item
    [\hypertarget{ass:2}{2}]
    The function $\btra$ is \emph{almost everywhere continuously differentiable} on $\traces$.
    (Since Hamiltonian dynamics exploits the derivative of $\btra$ in simulating the position of a particle,
    this ensures that the derivative exists ``often enough'' for the Hamiltonian integrator to be used.)
    % \fz{Dominik: Perhaps, it's helpful to make it more explicit where you use Assumption 2.}
    % \lo{We need this in the convergence proof (as required by the argument of Cances et al.) Anywhere else?}
    \item
    [\hypertarget{ass:3}{3}]
    For \emph{almost} every infinite real-valued sequence $\seqa$,
    there is a $k$ such that $\btra$ is positive on $\seqrange{1}{k}$.
    (This ensures the $\extend$ subroutine in \cref{alg:extend} terminates almost surely.)
    % \lo{This corresponds to the assumption from standard HMC that the density function is non-zero (almost) everywhere.}
    % \lo{Where do we use this? With this assumption, the extend algorithm terminates with probability one.}
  \end{compactenum}
\end{assumption*}

\begin{remark}
  \label{rem:NP-HMC extends HMC}
  \begin{asparaenum}[(i)]
    \item  NP-HMC is a generalisation of HMC to nonparametric models. Precisely, NP-HMC specialises to standard HMC (with the leapfrog integrator) in case the \changed[fz]{density} function $w$ satisfies \changed[cm]{$\domain{w} = \Real^n$} for some $n$,
      % \begin{compactenum}
      %   \item
      %     $\domain{w} = \Real^n$, and
      %   \item
      %     $w$ is positive everywhere \cite{Neal2011},
      % \end{compactenum}
    in addition to Assumptions \aref{ass:1}{1}, \aref{ass:2}{2} and \aref{ass:3}{3}.
    \item
      All closed integrable \emph{almost surely terminating} programs of a universal PPL\footnote{\changed[cm]{An almost surely terminating program (as defined in
      \ifapxAppended\cref{appendix: SPCF}\else the SM\fi)
      almost never observes a value with zero probability density.}}
      induce densities that satisfy Assumptions \aref{ass:1}{1}, \aref{ass:2}{2} and \aref{ass:3}{3}.
      \changed[lo]{(See \ifapxAppended\cref{appendix: SPCF} and \cref{lemma: all AST SPCF term satisfy all assumptions} \else the Supplementary Materials (SM) \fi
      for an account.)}
  \end{asparaenum}
\end{remark}
\iffalse
\fz{I don't think (3) holds for arbitrary SPCF programs. E.g. consider the program \texttt{score 0}. I think it only holds if we additionally almost never score with 0.
Then (3) follows directly from almost sure termination.}
\cm{Thanks for this. You're right and it does not hold for all SPCF programs. However, I think what you suggested is not enough, as we can still score a constant function at 0. It might be better/easier to change the small-step reduction of scoring to the following.
$\config{\Score{\PCF{r}}}{w}{s} \red \MyCase{\config{\PCF{r}}{w\cdot r}{s}}{r > 0}{\mathsf{fail}}$
In which case, almost surely terminating should be enough. Do you agree?
}
\fz{I agree and I think we mean the same thing. I meant that
I don't think we need to change the reduction rule for this though. An informal sentence like the emphasized one above should be enough.}
\fi

\paragraph{Truncations}

The surfaces on which the particle is positioned are derived from \changed[lo]{a sum of appropriate restrictions} of the input \btrfunction{} $\btra$, defined as follows.
The $n$-th \defn{truncation} $\trunc{n}:\Real^n \to \pReal$ is
$\seqa \mapsto \sum_{k=1}^n \btra(\seqrange{1}{k})$,
which returns the cumulative sum of the weight on the prefixes of an $n$-dimensional trace $\seqa$.
Thanks to the prefix property, for each $\vec q$, at most one summand is non-zero.
\emph{So any real-valued $\trunc{n}$-supported sequence $\seqa\in\Real^n$ has a prefix in the support of $\btra$;
and any $\btra$-supported sequence of length $n$ is also $\trunc{n}$-supported}.

%As the name of the algorithm suggests,
% A state is proposed according to Hamiltonian dynamics.
We define a family $U = \set{U_n}_{n\in\Nat}$ of \defn{potential energies}
where each $U_n:\Real^n \partialto \pReal$ is a partial function defined as $U_n := -\log \trunc{n}$
with domain $\domain{U_n} := \support{\trunc{n}}$.
\changed[cm]{These are the surfaces on which the particle is positioned.}

\paragraph{Proposal step}
The \defn{nonparametric (NP) integrator} $\BTint$ (\cref{alg:np-hmc integrator}) proposes a state by simulating the Hamiltonian motion of a particle at position $\seqa \in \Real^n$, with potential energy $U_n(\seqa)$ and a randomly chosen momentum $\vec{p} \in \Real^n$.\footnote{The Hamiltonian motion is almost always defined, by Ass.~\aref{ass:2}{2}.}
The simulation runs $L$ discrete update steps \changed[fz]{(also called \emph{leapfrog steps})} of size $\epsilon > 0$, or until
the particle leaves the domain of $U_n$ (i.e.~the support of $\trunc{n}$).
At that moment, the simulation stops and
NP-HMC ``extends'' the state $(\seqa,\vec{p})$ via the $\extend$ subroutine (\cref{alg:extend}), until the position of the particle falls into the domain of some potential energy \changed[lo]{(i.e.~support of some higher dimensional truncation)}.%
\footnote{This happens almost surely, thanks to Ass.~\aref{ass:3}{3}.}
Once the \changed[fz]{extended} position of the particle is settled, simulation resumes.
\changed[fz]{If no extension is necessary, the behaviour is the same as that of the standard HMC leapfrog integrator.}

%\lo{\LaTeX\ It is useful for the algorithms to have line numbers (see preceding). One way to optimise space is to fill the bottom of Fig.~2 with running text, and reposition the three algorithms, one atop of another, filling one column. The advantage is that we will, I expect, have the width to display line numbers.}

\paragraph{Extend}
The heart of NP-HMC is the $\extend$ subroutine (\cref{alg:extend}).
%Recall the aim of extend is to find a state for the particle with a position in the domain of some potential energy (i.e.~support of the input \btrfunction{} $\btra$).
Suppose
$\extend$ is called after
{$i$} position steps and {$i-1/2$} momentum steps are completed, i.e., at time $t = i \, \epsilon$.
If {$\seqa$} is in the domain of {$U_{\len{\seqa}}$}, $\extend$ leaves the state unchanged;
otherwise \changed[fz]{$\seqa$ is not long enough and the while loop extends it as follows.}
\begin{compactitem}
  \item Sample a pair {$(x_0,y_0)$} of real numbers from the standard normal distribution respectively.
  \item
    Trace the motion of a particle with {constant} potential energy $1$
    for {$i$} position and \changed[lo]{$i-1/2$} momentum steps,
    starting from {$(x_0,y_0)$},
    to obtain {$(x,y)$}.
    Notice that the momentum update is simply the identity, hence we only need to consider {$i$} position updates which takes {$x$ to $x_0 + t \, y_0$}.
  \item Append
    {$(x_0,y_0)$} to the initial state {$(\vec{q_0},\vec{p_0})$} and
    {$(x,y)$} to the current state {$(\seqa,\vec{p})$}.
\end{compactitem}
Thus the length of the position $\seqa$ is incremented,
and by \changed[lo]{Assumption} \aref{ass:3}{3}, this loop terminates almost surely at a position {$\seqa$} in the domain of the potential energy of dimension $|\seqa|$.

% !TEX root = ./main.tex

% algorithms
\begin{figure}[t!]
  \centering
  \vspace{-3mm}
  \begin{minipage}{\linewidth}
    \begin{algorithm}[H]
      \caption{NP-HMC Step}
      \label{alg:np-hmc}
      \begin{algorithmic}
        \STATE {\bfseries Input:}
          current \changed[fz]{sample $\vec{q_0}$},
          density function $\btra$,
          step size $\epsilon$,
          number of steps $L$
        \STATE \textbf{Output:}
          next \changed[fz]{sample $\vec{q}$}
        \algrule
        %\STATE Initialize $\vec{q_0} = \vec{q_0}^{1\dots k}$
        %  where $k$ is such that $\btra(\vec{q_0}^{1\dots k}) > 0$
        % \STATE \changed[lo]{$\vec{q_0} = \vec{q_0}^{1\dots k}$
          % where $\btra(\vec{q_0}^{1\dots k}) > 0$
        \STATE $\vec{p_0} \sim \Gau_{\len{\vec{q_0}}}$
          \hfill \COMMENT{Initialise}
        \STATE $U = \set{ U_n := \lambda \seqa. -\log(\trunc{n}(\seqa))}_{n\in\Nat}$
        \STATE \ghl{$((\seqa, \vec{p}), (\vec{q_0}, \vec{p_0})) =
          \BTint((\vec{q_0}, \vec{p_0}), U, \epsilon, L)$}
        \IF{$\Uni(0,1) < \min \Big(1,\frac
                                  {\changed[cm]{\trunc{\len{\seqa}}} (\seqa) \,{\pdfGau_{2\len{\seqa}}} (\seqa\concat\vec{p})}
                                  {\changed[cm]{\trunc{\len{\seqa}}} (\vec{q_0}) \,{\pdfGau_{2\len{\seqa}}} (\vec{q_0}\concat\vec{p_0})}\Big)$}
        \STATE {\bfseries return} $\seqrange{1}{k}$ where $\btra(\seqrange{1}{k}) > 0$
        \ELSE
        \STATE {\bfseries return} $\vec{q_0}^{1\dots k}$ \changed[fz]{where $\btra(\vec{q_0}^{1\dots k}) > 0$}
        \ENDIF
      \end{algorithmic}
    \end{algorithm}
  \end{minipage}
  \\[-2mm]
  \begin{minipage}{\linewidth}
    \begin{algorithm}[H]
      \caption{\ghl{NP Integrator $\BTint$}}
      \label{alg:np-hmc integrator}
      \begin{algorithmic}
        \STATE {\bfseries Input:}
          current state $(\vec{q_0}, \vec{p_0})$,
          family of potential energies $U = \set{U_n}_{n\in \Nat}$,
          step size $\epsilon$,
          number of steps {$L$}
        \STATE \changed[fz]{\textbf{Output:}
          new state $(\vec q, \vec p)$,
          % computed according to Hamiltonian dynamics,
          extended initial state $(\vec q_0, \vec p_0)$}
        \algrule
        \STATE $(\seqa,\vec{p}) = (\vec{q_0}, \vec{p_0})$ \hfill \COMMENT{Initialise}
        \FOR{$i=0$ {\bfseries to} $L$}
          \STATE $\vec{p}=\vec{p}-\frac{\epsilon}{2}\grad{U_{\len{\vec{q}}}}(\seqa)$ \hfill\COMMENT{1/2 momentum step}
          \STATE $\seqa=\seqa+\epsilon \, \vec{p}$ \hfill\COMMENT{1 position step}
          \STATE \yhl{$((\seqa, \vec{p}), (\vec{q_0}, \vec{p_0})) = \extend((\seqa, \vec{p}), (\vec{q_0}, \vec{p_0}),\changed[lo]{i\,\epsilon}, U)$}
          \STATE $\vec{p}=\vec{p}-\frac{\epsilon}{2}\grad{U_{\len{\vec{q}}}}(\seqa)$ \hfill\COMMENT{1/2 momentum step}
        \ENDFOR
        % \STATE $\vec{p} = -\vec{p}$
        \STATE {\bfseries return} $((\seqa, \vec{p}), (\vec{q_0}, \vec{p_0}))$
      \end{algorithmic}
    \end{algorithm}
  \end{minipage}
  \\[-2mm]
  \begin{minipage}{\linewidth}
    \begin{algorithm}[H]
      \caption{\yhl{$\extend$}}
      \label{alg:extend}
      \begin{algorithmic}
        \STATE {\bfseries Input:}
          current state $(\seqa, \vec{p})$,
          initial state $(\vec{q_0}, \vec{p_0})$,
          time $t$,
          family of potential energies $U = \set{U_n}_{n\in \Nat}$
        \STATE \changed[fz]{\textbf{Output:}
          extended state $(\vec q, \vec p)$,
          extended initial state $(\vec q_0, \vec p_0)$}
        \algrule
        \WHILE{$\seqa \notin \domain{U_{\len{\seqa}}}$}
          \STATE $x_0 \sim \Gau_1; y_0 \sim \Gau_1$ \hfill\COMMENT{sample from normal}
          \STATE $(x, y) = (x_0 + t \, y_0, y_0)$ \hfill\COMMENT{run for time t}
          \STATE $(\vec{q_0}, \vec{p_0}) = (\vec{q_0} \concat [x_0], \vec{p_0} \concat [y_0])$ \hfill\COMMENT{update initial}
          \STATE $(\seqa, \vec{p}) = (\seqa \concat [x], \vec{p} \concat [y])$ \hfill\COMMENT{update current}
        \ENDWHILE
        \STATE {\bfseries return} $((\seqa, \vec{p}), (\vec{q_0}, \vec{p_0}))$
      \end{algorithmic}
    \end{algorithm}
  \end{minipage}
  \caption{Pseudocode for Nonparametric Hamiltonian Monte Carlo} %(NP-HMC)}
  \label{fig:algorithms}
\end{figure}

% resolved
% \anon{LAFI reviewer 1: It's not clear to me what the core benefit of the binary tree representation is in the algorithms. Could you point out the specific lines in the algorithms where the tree is utilized, and maybe give a bit of intuition for why the tree structure is helping at this point?}

\paragraph{Putting them together}
A single NP-HMC iteration, as shown in \cref{alg:np-hmc}, produces a proposed sample $\vec{q}$ from the current sample $\vec{q_0}$, by applying the NP integrator $\BTint$ to the state $(\vec{q_0},\vec{p_0})$ with a freshly sampled momentum $\vec{p_0}$.
The proposed state $(\vec{q},\vec{p})$ is then accepted with probability given by the Hastings acceptance ratio.

%\anon{LAFI R3: The extend function which moves between program traces doesn't appear to have any guarantees of how many iterations the while loop executes before it generates a valid trace.
%Without this it's difficult to understand the performance or utility of this sampling procedure as it could take an unbounded amount of time wandering around before it finds a trace again.

%In the preliminary experiments, how much computation was expended in the extend function, not generating samples, vs time spent generating samples?
%This could provide an efficiency measure for the algorithm which would then provide a target for improving the technique.

%The correctness section doesn't discuss the extend function.}

\begin{remark}
  The prefix property of the input \btrfunction{} $\btra$ plays an important role in the NP-HMC inference algorithm.
  If \cref{alg:np-hmc} returns $\vec q$ as the next sample,
  then for any extension $\vec q'$ of $\vec q$, we have $w(\vec q') = 0$, and so, all such $q'$ are irrelevant for inference (\changed[fz]{because} $U_{|\vec q'|}(\vec{q'}) = U_{|\vec q|}(\vec{q})$).
  If the prefix property weren't satisfied,
  the algorithm would fail to account for the weight on such $\vec q'$.
\end{remark}
\fz{Is this remark really helpful or can it be deleted?}

\paragraph{Other extensions and efficiency considerations}

We have presented a version of NP-HMC that eschews runtime efficiency in favour of clarity.
%(our aim is to explain the ideas behind our nonparametric extension of HMC in the simplest and purest form possible).
An advantage of such a presentation
\changed[lo]{(in deliberately purified form)}
%\changed[lo]{(that expresses the idea in its purest form)}
is that it becomes easy to see that the same method is just as applicable to such HMC variants as
reflective/refractive HMC \cite{AfsharD15} and
discontinuous HMC \cite{NishimuraDL20}; we call the respective extensions NP-RHMC and NP-DHMC.
For details, see
\ifapxAppended\cref{sec: hmc variants}%
\else the SM%
\fi%
.
%\changed[fz]{We present the nonparametric extension of those two, called NP-RHMC and NP-DHMC, in
%\ifapxAppended\cref{sec: hmc variants}.
%\else the SM.
%\fi
%Furthermore, we implemented NP-DHMC in Python (\cref{sec:experiements}).
%}

\fz{I changed this.
Before we said we implemented NP-RHMC, which is only partially true.
I implemented it, but only for very simple branches of the form \texttt{sample(distribution) < constant}.
I barely tested this implementation.
The original RHMC paper implements it for general affine conditions, but this is difficult to implement and still insufficient for our experiments anyway (cf. stick-breaking).
Since NP-DHMC does not require computing intersections at all, I think it is just the superior algorithm.
Nevertheless, we should not claim to have fully implemented NP-RHMC.}

Several efficiency improvements to NP-HMC are possible.
If the density function is given by a probabilistic program, one can interleave its execution with $\extend$, by gradually extending $\vec q$ at every encountered \pythoninline{sample} statement.
Similarly, the truncations $w_{\le n}$ don't require an expensive summation to compute.
For details, see
\ifapxAppended\cref{sec:efficiency-improvements}%
\else the SM%
\fi%
.

Our implementation (\cref{sec:experiements}) also improves the $\extend$ function (\cref{alg:extend}): it not only extends a trace $\vec q$ if necessary, it also trims it to the unique prefix $\vec q'$ of $\vec q$ with positive $w(\vec q')$.
This version works better in our experiments.
For details, see
\ifapxAppended\cref{sec:efficiency-improvements}%
\else the SM%
\fi%
.

\lo{Because of space restriction, we may need to move this to the implementation / examples section in the appendix (also partly because we don't yet have a proof of correctness, but this is not so critical).}
\fz{I think it is important to mention this in the main text because NP-DHMC does not work in practice without this adaption.}

\section{Correctness}
\label{sec:correctness}

% !TEX root = ./main.tex

The {NP-HMC} algorithm is correct in the sense that
the {generated} Markov chain converges to the target distribution
$\tdist:{A}\mapsto{\frac{1}{Z} \shortint{A}{w}{\tmeasure}}$ \changed[fz]{with normalising constant $Z$}.
We present an outline of our proof here.
The full proof can be found in
\ifapxAppended\cref{appendix: correctness}.
\else the SM.
\fi

% \begin{enumerate*}[label=\alph*)]
%   \item the state distribution $\sdist$ is invariant against {NPHMC},
%   \item the marginalised distribution of $\sdist$ is indeed the target distribution $\tdist$
%   \item the Markov kernel is $\tdist$-irreducible and $\tdist$-aperiodic.
% \end{enumerate*}

\paragraph{Invariant distribution}
By iterating \cref{alg:np-hmc}, the NP-HMC algorithm generates a Markov chain $\set{\vec{q}^{(i)}}_{i\in\Nat} $ on the target space $\traces$.
The first step to correctness is to show that the invariant distribution of this chain is indeed the target distribution.

This proof is non-trivial, since the length of the generated sample depends on the values of the random samples drawn in the $\extend$ subroutine (\cref{alg:extend}).
To work around this, we define an auxiliary algorithm, %which is \emph{extensionally equivalent}\footnote{meaning that the two algorithms compute the same function} to {NP-HMC},
\changed[lo]{which induces the same Markov chain as NP-HMC,}
but does not increase the dimension dynamically.
Instead, it finds the smallest {$N$} such that all intermediate positions in the {$L$} leapfrog steps stay in the domain of {$U_N$},
and performs leapfrog steps as in standard HMC.
In this algorithm, all stochastic primitives are executed outside of the Hamiltonian dynamics simulation, and the simulation has a fixed dimension.
Hence we can proceed to identify the invariant distribution.
We then show (in
\ifapxAppended\cref{lemma: pi is the invariant distribution of np-hmc,thm: marginalised distribution is the target distribution}%
\else the SM%
\fi)
that the Markov chain generated by this auxiliary algorithm has the target distribution $\tdist$ as its invariant distribution,
and hence the same holds for NP-HMC (\cref{alg:np-hmc}).
%\anon{Dominik: Perhaps, it's interesting to stress the challenges that unbounded recursion adds and how the TR functions relate to that.}

\begin{restatable}{theorem}{invariant}
% \begin{theorem}
\label{thm: marginalised distribution is the target distribution}
Given Assumptions \aref{ass:1}{1}, \aref{ass:2}{2} and \aref{ass:3}{3}, the target distribution $\tdist$ is the invariant distribution of the Markov chain generated by iterating \cref{alg:np-hmc}.
% \end{theorem}
\end{restatable}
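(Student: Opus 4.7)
The strategy is to replace NP-HMC by an equivalent \emph{auxiliary algorithm} that performs ordinary HMC at a suitably chosen fixed dimension $N$, so that invariance reduces to classical theory. Given a current sample $\vec{q_0} \in \support{w}$ of length $k_0$, the auxiliary algorithm draws $\vec{p_0} \sim \Gau_{k_0}$ together with an i.i.d.\ infinite sequence $(x_i, y_i)_{i \ge 1}$ of pairs from $\Gau_1$, and then selects the smallest $N \ge k_0$ such that $L$ leapfrog steps of size $\epsilon$ driven by $\grad{U_N}$, starting from the extended initial state $(\vec{q_0} \concat [x_1, \dots, x_{N-k_0}],\, \vec{p_0} \concat [y_1, \dots, y_{N-k_0}])$, keep every intermediate position inside $\domain{U_N}$. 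Assumption~\aref{ass:3}{3} guarantees that $N$ is a.s.\ finite. At this fixed dimension one then performs a standard HMC step with Metropolis acceptance, and returns the unique $w$-supported prefix of the accepted position.

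The key step is to show that NP-HMC and the auxiliary algorithm induce the same Markov kernel on $\traces$. Here the prefix property is crucial: for $\vec{q} \in \domain{U_N}$, the value $\trunc{N}(\vec{q})$ depends only on the unique prefix of $\vec{q}$ lying in $\support{w}$, so $\grad{U_N}(\vec{q})$ has zero entries in all coordinates beyond that prefix. A leapfrog step under $U_N$ therefore acts as the identity on those ``extra'' momenta and advances the corresponding positions by $\epsilon$ times the momentum. Comparing this with the free-particle motion $(x_0, y_0) \mapsto (x_0 + t\,y_0,\, y_0)$ used by $\extend$ at time $t = i\,\epsilon$, a straightforward induction on $i$ shows that the two trajectories coincide once the $(x_i, y_i)$ drawn up front are coupled with the samples drawn inside $\extend$.

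For fixed dimension the classical HMC correctness argument applies: conditional on $\{N = n\}$, the auxiliary algorithm is a Metropolis step on $\Real^n \times \Real^n$ whose deterministic proposal is $L$ leapfrog iterations under $U_n$ followed by momentum negation, and whose acceptance ratio (as in \cref{alg:np-hmc}) targets the joint density $\trunc{n}(\vec{q})\,\pdfGau_n(\vec{q})\,\pdfGau_n(\vec{p})$. Because leapfrog is volume-preserving and time-reversible, this proposal is a volume-preserving involution, and the Metropolis correction therefore gives invariance of the joint distribution; marginalising out $\vec{p}$ yields the invariant position density $\trunc{n}(\vec{q})\,\pdfGau_n(\vec{q})$ on $\Real^n$.

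It remains to marginalise back to $\traces$. By the prefix property, $\nsupport{\trunc{n}}{n}$ decomposes as a disjoint union $\bigcup_{k \le n} \bigl(\nsupport{w}{k} \times \Real^{n-k}\bigr)$, on each stratum of which $\trunc{n}$ depends only on the first $k$ coordinates; factoring $\pdfGau_n = \pdfGau_k \cdot \pdfGau_{n-k}$ and integrating out the last $n-k$ coordinates against $\pdfGau_{n-k}$ reduces the invariant density to $w \cdot \pdfGau_k$ on $\nsupport{w}{k}$, which matches the definition $\tmeasure(\bigcup_{n} U_n) = \sum_n \Gau_n(U_n)$ exactly. Hence the projection of the fixed-dimensional invariant distribution onto the $w$-supported prefix coincides with the restriction of $\tdist$ to traces of length $\le n$, and taking the marginal over the random $N$ yields $\tdist$ on all of $\traces$. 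The main obstacle is this last bookkeeping step: one must verify that the randomness used to select $N$ does not bias the projection and that different values of $n$ contribute to disjoint strata of $\traces$. The prefix property, which both disentangles these strata and causes $\grad{U_N}$ to vanish on exactly the coordinates being marginalised out, is what makes the whole reduction go through.
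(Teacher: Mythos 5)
Your overall strategy is the same as the paper's: replace NP-HMC by an equivalent algorithm that pre-samples the extension coordinates and runs a fixed-dimension leapfrog integrator at the smallest admissible dimension $N$ (this is the paper's \texttt{eNPHMC}), with the equivalence justified, as in \cref{prop: we can move all sampling to the top of np-hmc}, by the fact that $\grad{U_N}$ vanishes on the coordinates beyond the $w$-supported prefix so that those coordinates undergo exactly the free motion $(x_0,y_0)\mapsto(x_0+t\,y_0,y_0)$ used by $\extend$. Up to that point your proposal matches the paper.

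The gap is in the fixed-dimension invariance and marginalisation step. Conditional on $\{N=n\}$, the law of the extended initial state is \emph{not} the canonical density $\trunc{n}(\vec q)\,\pdfGau_n(\vec q)\,\pdfGau_n(\vec p)$ on all of $\Real^n\times\Real^n$: the event $\{N=n\}$ is precisely the event that the extended state lies in the set $\validstates_n$ of $(\epsilon,L)$-valid states (all $L$ leapfrog positions stay in $\domain{U_n}$, and no shorter prefix state already has this property), so the conditional target is the canonical density \emph{restricted} to $\validstates_n$; the Metropolis argument moreover needs that the leapfrog map sends $\validstates_n$ into itself (\cref{prop: hmc integrator is volume preserving and reversible on valid states}), which you do not address. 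More importantly, your marginalisation — integrating the extra $n-k$ position coordinates and the momenta against the full Gaussian to recover $w\cdot\pdfGau_k$ on $\nsupport{w}{k}$ — is not valid on the restricted set, because $\validstates_n$ is not a product $\nsupport{w}{k}\times\Real^{n-k}\times\Real^{n}$: the per-$n$ projection is $w(\vec q^{1\dots k})$ times the Gaussian mass of the prefix-dependent set of extensions that make the state valid at dimension exactly $n$, which is \emph{not} the restriction of $\tdist$ to traces of length $\le n$. The identity only emerges after summing over $n$, using that for each fixed prefix these masses sum to one by almost-sure termination of $\extend$ (Assumption 3). This is exactly the bookkeeping you flag as ``the main obstacle'' but do not carry out, and it is where the paper's proof does its real work: it defines the global distribution $\sdist$ supported on $\validstates$ with density $\trunc{\len{\vec q}}(\vec q)/Z$, shows separately that the extension kernel and the leapfrog-plus-accept kernel each preserve $\sdist$ (\cref{prop: np-hmc-equivalent step 12 inv,prop: np-hmc-equivalent step 34 inv}, giving \cref{lemma: pi is the invariant distribution of np-hmc}), and then proves $f_*\sdist=\tdist$ and $f_*\smeasure=\tmeasure$ by a limiting argument over dimensions (\cref{lemma: relationships between pi and pi_n}). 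Without an argument of this kind, or an explicit computation of the mixture over the state-dependent random $N$, the final step of your proposal does not go through.
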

%\lo{It's great to add the two theorems. They clarify what we mean by correctness (though this is perhaps standard).}
% no longer relevant
% \paragraph{Marginalised distribution}
% The Markov chain $\set{(\vec{q}^{(i)},\vec{p}^{(i)})}_{i\in\Nat} $ generated by {NP-HMC}
% has invariant distribution $\sdist$, but
% we are in fact interested in the \emph{marginalised} chain
% $\set{f(\vec{q}^{(i)},\vec{p}^{(i)})}_{i\in\Nat}$
% where given a state $(\vec{q},\vec{p})$, the function $f$ finds the prefix of $\vec{q}$ that has positive weight under $w$.
% By proving $f_*\sdist = \tdist$,
% we have that the target distribution $\tdist$ is invariant w.r.t.~the marginalised chain.

\paragraph{Convergence}
In
\ifapxAppended\cref{sec:convergence}%
\else the SM%
\fi, we extend the proof of \citet{CancesLS07}
{to show that the chain converges for a small enough step size $\epsilon$},
as long as the following additional assumptions are met:
\begin{compactenum}[(C1)]
\item $w$ is continuously differentiable on a non-null set $A$ with measure-zero boundary.
% \item $\set{U_n}$ is uniformly bounded above.
\item \changed[fz]{$w|_{\support{w}}$} is bounded below by a positive constant.
% \item $\set{\grad{U_n}}$ is uniformly bounded above and below.
\item For each $n$, the function $\frac{\grad{w_{\le n}}}{w_{\le n}}$ is uniformly bounded from above and below on $\support{w_{\le n}} \cap A$.
% \item Each $\grad{U_n}$ is Lipschitz on $A \cap \domain{U_n}$.
\item For each $n$, the function $\frac{\grad{w_{\le n}}}{w_{\le n}}$ is Lipschitz \changed[fz]{continuous} on $\support{w_{\le n}} \cap A$.
\end{compactenum}

\begin{restatable}{theorem}{convergence}
  \label{thm: np-hmc converges}
  If Assumptions (C1)--(C4) are satisfied in addition to Assumptions \aref{ass:1}{1}, \aref{ass:2}{2} and \aref{ass:3}{3},
  the Markov chain generated by iterating \cref{alg:np-hmc} converges to the target distribution $\tdist$.
\end{restatable}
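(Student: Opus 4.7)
The plan is to adapt the strategy of Cancès, Le Bris, and Stoltz \cite{CancesLS07} to the nonparametric setting, using the auxiliary fixed-dimension reformulation of NP-HMC from \cref{thm: marginalised distribution is the target distribution} as a bridge to their finite-dimensional framework. The first step is to decompose the NP-HMC transition kernel as a mixture, indexed by the ambient dimension $N$, of standard HMC kernels on the strata $\Real^N$. Conditional on the event that all $L$ leapfrog intermediates remain in $\support{\trunc{N}}$, the NP-HMC proposal is literally a standard HMC step against the potential $U_N = -\log \trunc{N}$ on $\Real^N$; the remaining outcomes are absorbed into the Gaussian-driven extend step, whose effect is to transition between strata of different dimensions.

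Second, I would verify that on each stratum the Cancès-Le Bris-Stoltz hypotheses hold under (C1)--(C4). Concretely, (C1) gives $C^1$-regularity of $U_N$ on a full-measure open subset of $\support{\trunc{N}}$ whose boundary is Lebesgue-null; (C2) yields an upper bound on $U_N$ over $\support{\trunc{N}}$; (C3) bounds $\nabla U_N$ uniformly; and (C4) provides a Lipschitz constant for $\nabla U_N$. These are precisely the ingredients used in \cite{CancesLS07} to obtain, for sufficiently small $\epsilon$, a small-set minorization for the leapfrog HMC kernel: there exist a compact $K_N \subseteq \support{\trunc{N}}$ of positive $\tdist$-measure and a constant $c_N > 0$ such that the stratum-$N$ kernel dominates $c_N\,\leb|_{K_N}$ uniformly over starting points in $K_N$.

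Third, I would lift the per-stratum minorizations to a global small-set condition on $\traces$. Because extend appends independent $\Gau_1$-distributed coordinates and Assumption \aref{ass:3}{3} guarantees almost-sure termination, from any seed trace there is positive probability to reach any prescribed higher stratum in one NP-HMC step, after which the stratum-$N$ minorization applies; this gives $\tdist$-irreducibility. Aperiodicity is automatic from the strictly positive rejection probability of the Metropolis-Hastings correction on $K_N$. Combining irreducibility, aperiodicity, the small-set minorization, and the invariance of $\tdist$ (\cref{thm: marginalised distribution is the target distribution}) with the standard Meyn-Tweedie machinery then yields convergence in total variation. The principal obstacle will be this splicing step: since the Gaussian density appended by extend is strictly positive but has decaying tails, the resulting lower bound on $\Real^N$ is not uniform in $N$, so I expect to restrict attention to compact seed sets $K_N$ whose $\Gau_N$-mass is bounded below uniformly along any finite orbit of the chain, which is feasible because almost surely only finitely many distinct strata are visited in any finite time window.
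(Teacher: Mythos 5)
Your high-level skeleton (condition on the event that all $L$ leapfrog intermediates stay in $\support{\trunc{N}}$ so that the step is an ordinary HMC step for $U_N$, verify Canc\`es--Le Bris--Stoltz-type hypotheses from (C1)--(C4), then deduce irreducibility and aperiodicity and conclude total-variation convergence) is the same general route as the paper, which also reduces to a fixed-dimension reformulation and then invokes Tierney's theorem (\cref{lemma: Tieryney}). However, there is a genuine gap at exactly the point where the paper does its real work: the cross-dimensional irreducibility. Your splicing step asserts that ``from any seed trace there is positive probability to reach any prescribed higher stratum in one NP-HMC step, after which the stratum-$N$ minorization applies,'' justified only by the fact that $\extend$ appends independent Gaussians. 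But $\extend$ (\cref{alg:extend}) is triggered \emph{only} when the leapfrog trajectory exits $\support{\trunc{n}}$, and the dimension it reaches is dictated by the dynamics and the support structure of $\btra$, not freely prescribable; moreover the extension happens mid-trajectory, so it does not reset the chain to a fresh HMC step from an arbitrary point of the higher stratum, and transitions to \emph{shorter} supported traces (landing in a region whose $\btra$-supported prefix is shorter) are not covered at all. To apply your per-stratum minorization you would first need positive probability of depositing the chain in a prescribed positive-measure subset of $\nsupport{\btra}{N}$ starting from an arbitrary supported trace of a different length --- which is precisely the statement to be proved. The paper closes this gap with three nontrivial lemmas: a reachability result (\cref{lemma: bthmc reachable in A}) proved by minimising a discrete action functional over leapfrog paths in the Fr\'echet space $\Real^\omega$, a Lipschitz argument (\cref{lemma:bthmc irreducible in A}) transferring positive measure of target sets into positive measure of initial momenta, and an explicit momentum-interval computation (\cref{lemma: bthmc reach A}) showing the regularity set $A$ is reachable for small enough $\epsilon$; none of these is supplied or replaced by your argument.

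Two further points. First, your aperiodicity claim (``automatic from the strictly positive rejection probability of the Metropolis--Hastings correction'') is unsubstantiated: nothing in (C1)--(C4) prevents the acceptance ratio from equalling $1$, so you cannot assume a positive holding probability; the paper instead derives $\tdist$-aperiodicity from strong irreducibility by contradiction (\cref{lemma: aperiodic}). Second, your step 2 overstates what \cite{CancesLS07} provide: their Lemmas 2 and 3 give strong $\tdist$-irreducibility under the regularity/boundedness/Lipschitz hypotheses, not a uniform small-set minorization $c_N\,\leb|_{K_N}$ on a compact set, and such a minorization is not needed here --- given invariance (\cref{thm: marginalised distribution is the target distribution}), irreducibility and aperiodicity already yield total-variation convergence via \cref{lemma: Tieryney}, which is why the nonuniformity-in-$N$ obstacle you flag (and only vaguely resolve) does not arise in the paper's argument.
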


\iffalse
\fz{@Carol: I tried to phrase the conditions in terms of $w$ instead of $U$. Could you please check this?}
\cm{Thanks for this.}
\fz{I fixed condition 2. $w$ can only be positive on the support, so we have to restrict it.}
\fi

\section{Experiments}
\label{sec:experiements}

% !TEX root = ./main.tex

We implemented the NP-HMC algorithm and its variants (NP-RHMC and NP-DHMC) in Python, using PyTorch \cite{pytorch} for automatic differentiation.
\changed[fz]{We implemented it from scratch rather than in an existing system because NP-DHMC needs additional information about each sample (does the density function depend discontinuously on it?), so it requires a deeper integration in the probabilistic programming system.}
%Note that NP-DHMC needs additional information about each sample (does the density function depend discontinuously on it?), making it difficult to implement in existing probabilistic programming systems.
In our empirical evaluation, we focus on the NP-DHMC algorithm because it inherits discontinuous HMC's efficient handling of discontinuities:
contrary to NP-RHMC, it does not need to compute the intersections of the particle's trajectory with the regions of discontinuity.
Our implementation also uses the efficiency improvements discussed in
\ifapxAppended \cref{sec:efficiency-improvements}%
\else the SM%
\fi.
The code for our implementation and experiments is available at \url{https://github.com/fzaiser/nonparametric-hmc} and archived as \cite{code}.

We compare our implementation with Anglican's \cite{DBLP:conf/aistats/WoodMM14} inference algorithms that are applicable out-of-the-box to nonparametric models:
lightweight Metropolis-Hastings (LMH), particle Gibbs (PGibbs) and random-walk lightweight Metropolis-Hastings (RMH).%
\footnote{
% Initially, we also tried Interacting Particle MCMC (IPMCMC),
\changed[cm]{We also compared Interacting Particle MCMC (IPMCMC),} but it performed consistently worse than PGibbs in our experiments
\changed[cm]{and hence is omitted.}
}
NP-DHMC performs more computation per sample than its competitors because it evaluates the density function in each of the $L$ leapfrog steps, not just once like the other inference algorithms.
To equalise the computation budgets, we generate $L$ times as many samples for each competitor algorithm, and apply thinning (taking every $L$-th sample) to get a comparable sample size.

%\lo{Isn't the algorithm in \cite{HurNRS15} a version of LMH?}
%\cm{No, the proposal sample is obtained by sampling a new value for each element in the current sample.}

\begin{table}
\begin{center}\footnotesize
\caption{Total variation distance from the ground truth for the geometric distribution, averaged over 10 runs. Each run: $10^3$ NP-DHMC samples with $10^2$ burn-in, 5 leapfrog steps of size 0.1; and $5\times 10^3$ LMH, PGibbs and RMH samples.}
\label{table:geometric-tvd}
\vspace{0.1in}
\begin{tabular}{lllll}
\toprule
method &\textbf{ours} &LMH{} &PGibbs{} &RMH{} \\
\midrule
TVD &\textbf{0.0136} &0.0224 &0.0158 & 0.0196 \\
\bottomrule
\end{tabular}
\end{center}
\end{table}
\paragraph{Geometric distribution}
A classic example to illustrate recursion in a universal PPL is sampling from a geometric distribution with parameter $p$ by repeatedly flipping a biased coin with probability $p$ (see e.g.~Ch.~5 in \cite{intro-prob-prog}).
The pseudocode for it is:
\begin{minipage}{\columnwidth}
\begin{python}
def geometric():
  if sample(uniform(0, 1)) < p: return 1
  else: return 1 + geometric()
\end{python}
\end{minipage}
We tested our algorithm on this problem with $p = 0.2$.
Our implementation works well on this example and has no trouble jumping between traces of different length.
To quantify this, we computed the total variation distance to the ground truth for each approach and report it in \cref{table:geometric-tvd}.
The result is perhaps surprising given that the odds are ``stacked against'' NP-DHMC in this model:
it is a discrete model, so there is no gradient information, and there are no observations (only sampling from the prior).
These properties should favour the competitors, making the performance of NP-DHMC rather remarkable.

\paragraph{Random walk}
To better evaluate our algorithm on probabilistic programs with unbounded loops (such as the example from \cref{sec:tr-functions}), we considered the one-sided random walk on $\pReal$ described in \cite{MakOPW20}:
A pedestrian starts from a random point in $[0,3]$ and walks a uniformly random distance of at most 1 in either direction, until they pass 0.
Given a (noisily) measured total distance of 1.1 travelled, what is the posterior distribution of the starting point?
As this process has infinite expected running time, we need a stopping condition if the pedestrian is too far away from zero, (\pythoninline{distance < 10}), as shown in the following pseudocode:
\begin{minipage}{\columnwidth}
\begin{python}
start = sample(uniform(0,3))
position = start; distance = 0
while position > 0 and distance < 10:
  step = sample(uniform(-1, 1))
  position += step; distance += abs(step)
observe(distance, normal(1.1, 0.1))
return start
\end{python}
\end{minipage}

\begin{figure}
\centering
\footnotesize
\includegraphics[width=\columnwidth]{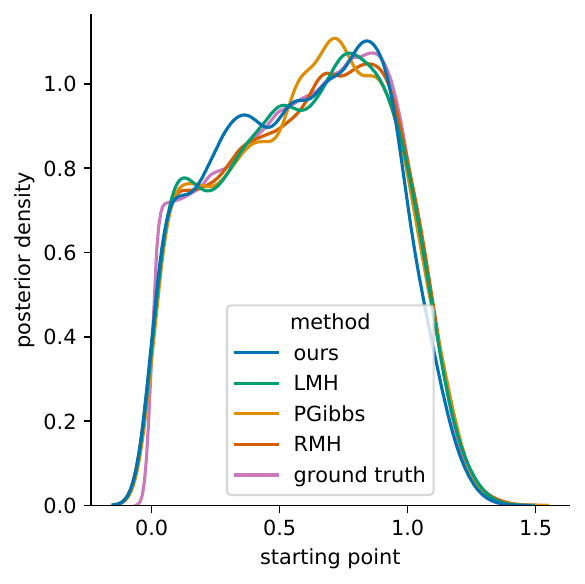}
\begin{tabular}{lllll}
\toprule
method & \textbf{ours} &LMH{} &PGibbs{} &RMH{} \\
\midrule
ESS &\textbf{679} &526 &310 & 508 \\
\bottomrule
\end{tabular}
\caption{Kernel density estimate (top) averaged over 10 runs and estimated effective sample size (bottom) averaged over 10 runs. Each run: $10^3$ NP-DHMC samples with $10^2$ burn-in, 50 leapfrog steps of size 0.1; and $5\times 10^4$ LMH, PGibbs and RMH samples.}
\label{fig:walk}
\end{figure}
% \paragraph{Random walk}
% To better evaluate our algorithm on probabilistic programs with unbounded loops (such as the example from \cref{sec:tr-functions}), we considered the one-sided random walk example on $\pReal$ described in \citet{MakOPW20}:
% A pedestrian starts on a uniformly random point of $[0,3]$ and walks a uniformly random distance of at most 1 in either direction, until they pass 0.
% Given a (noisily) measured total distance of 1.1 travelled, what is the posterior distribution of the starting point?
% As this process has infinite expected running time, we need a stopping condition if we're too far away from zero, as shown in the following pseudocode (\pythoninline{distance < 10}):
% \begin{python}
% start = sample(uniform(0,3))
% position = start; distance = 0
% while position > 0 and distance < 10:
%   step = sample(uniform(-1, 1))
%   position += step; distance += abs(step)
% observe(distance, normal(1.1, 0.1))
% return start
% \end{python}

%The results are shown in \cref{fig:walk}.
%The sampled posterior distribution is consistent with and less noisy than importance sampling.
This example is interesting and challenging because the true posterior is difficult to determine precisely.
Therefore we took $10^7$ importance samples (effective sample size $\approx 4.4 \times 10^5$) and considered those as the ground truth.
As one can see from \cref{fig:walk}, NP-DHMC comes closest.
Since it is not clear what measure to use for the distance from these ``ground truth'' samples, we instead computed the effective sample size\footnote{We used the standard ESS estimator (based on weighted samples) for the ground-truth importance samples, and an autocorrelation-based MCMC ESS estimator (Sec.~11.5 in \cite{Gelman2014}) for the rest. %However the latter estimator gives a similar result on the IS samples, so the two ESS estimators are comparable in this case.
}
%\footnote{For IS one would use an ESS estimator based on weighted samples. However, the MCMC ESS estimator (which is based on autocorelation \cite{vehtari2020rank}) applied to the IS samples gives a similar result (469.0), so the numbers should still be comparable.}
for each method (\cref{fig:walk}).
Our method does best in that regard as well.

\begin{figure}
\centering
\includegraphics[width=\columnwidth]{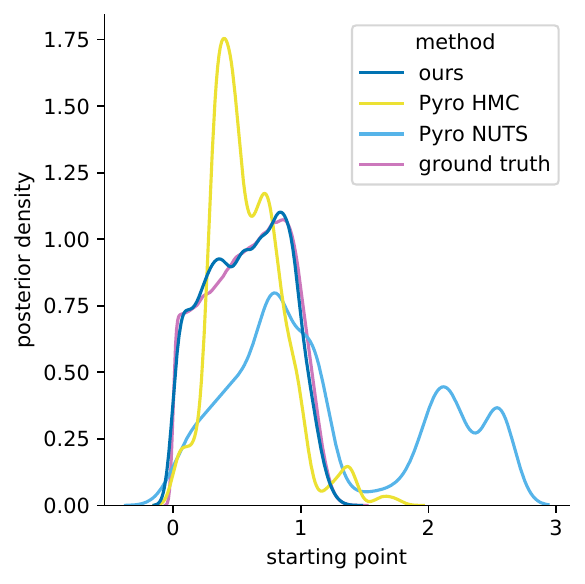}
\vspace{0.1in}
\vspace{-0.3in}
\caption{Kernel density estimate for the random walk example compared to Pyro averaged over 10 runs, some of which were discarded because of low acceptance rate ($< 0.1$). Each run: $10^3$ samples with $10^2$ burn-in, 50 leapfrog steps of size 0.1.}
\label{fig:walk-pyro}
\end{figure}
\changed[lo]{The popular PPL Pyro accepts nonparametric models as input.
%Therefore, we thought it would be interesting \changed[cm]{to see} how Pyro's HMC implementation performs on this example.
We therefore tried to ascertain the performance of its HMC implementation on this example.}
We ran both Pyro's HMC sampler (with the same hyperparameters as ours) and Pyro's No-U-Turn sampler (NUTS) \cite{HoffmanG14}, which aims to automatically infer good hyperparameter settings.
%Pyro's inference was very slow, sometimes taking almost a minute to produce a single sample.
%The inferred posterior distributions (\cref{fig:walk-pyro}) are also far away from the ground truth, and clearly wrong.
The inferred posterior distributions (\cref{fig:walk-pyro}) are far away from the ground truth, and clearly wrong.
Pyro’s inference was very slow, sometimes taking almost a minute to produce a single sample.
\changed[lo]{For this reason, we didn't run more experiments with it.}

%This gives further justification to NP-HMC, seeing as standard HMC implementation struggle with this example.
\fz{Let me know what you think about this. Also, does the plot take up too much space?}
\cm{I wonder if we could add the Pyro plots to Fig 5. Don't worry if it's too crowded. Another note about colours. I think it would be nice to use a different colour for the Pyro engines and keep the colour of ground truth consistent with Fig 5.}
\fz{I can do it, but I think it would be too crowded.}

\begin{figure*}
\centering
\includegraphics[width=0.9\textwidth]{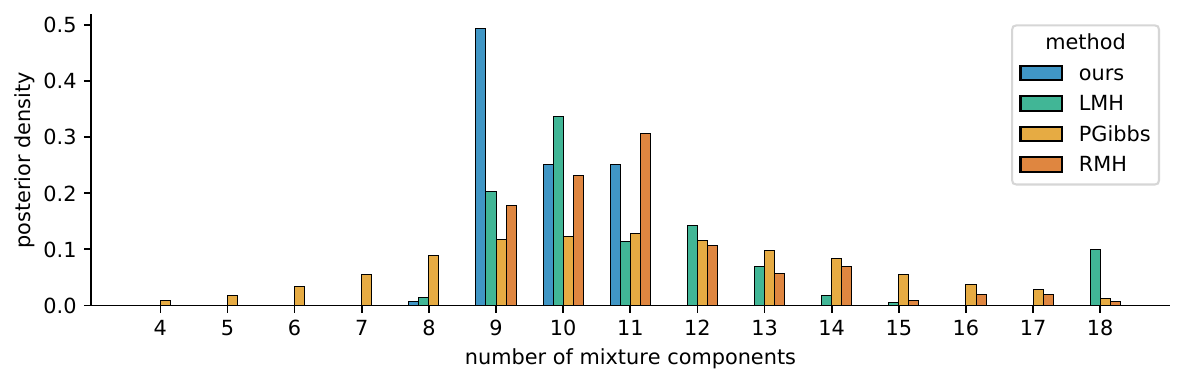}
\vspace{0.1in}
\vspace{-0.3in}
\caption{Histogram of the no.~of mixtures for the GMM example; correct posterior = 9, averaged over 10 runs. Each run: $10^3$ NP-DHMC samples with $10^2$ burn-in, 50 leapfrog steps of size 0.05; and $5\times 10^4$ LMH, PGibbs and RMH samples.}
\label{fig:gmm}
\end{figure*}

\begin{figure}[h]
\centering
\includegraphics[width=\columnwidth]{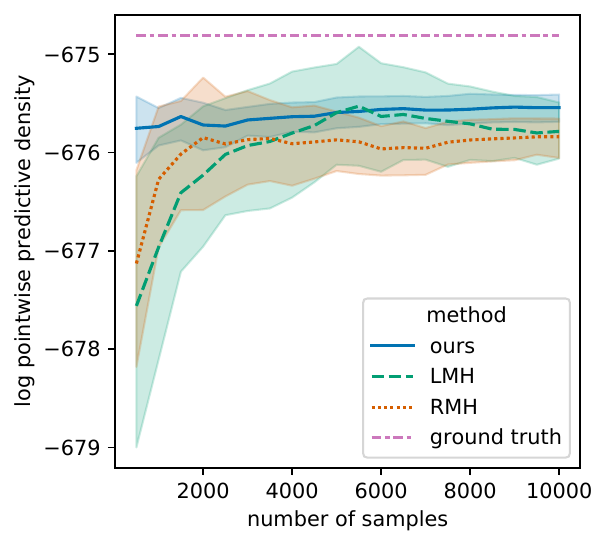}
\vspace{-0.2in}
\caption{LPPD for the GMM example, averaged over 10 runs. Each run: $10^3$ NP-DHMC samples with $10^2$ burn-in, 50 leapfrog steps of size 0.05; and $5\times 10^4$ LMH, PGibbs and RMH samples. The shaded area is one standard deviation. PGibbs (with final LPPD $-716.85 \pm 0.64$) is omitted to show the top contenders clearly.}
\label{fig:gmm-lppd-plot-zoomed}
\end{figure}
\begin{figure}[h]
\centering
\includegraphics[width=\columnwidth]{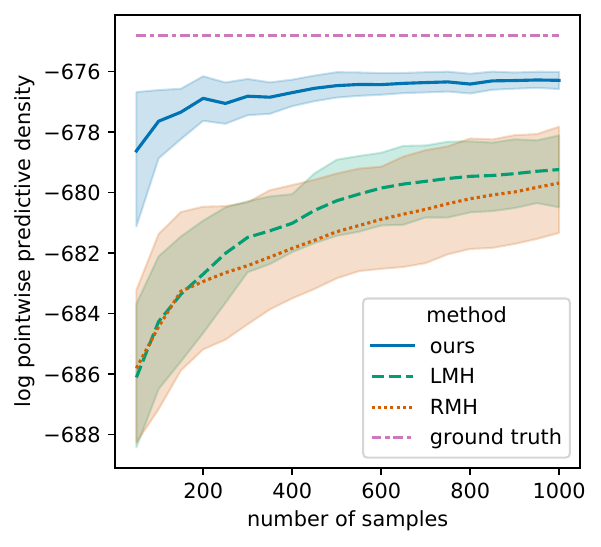}
\vspace{-0.2in}
\caption{LPPD for the DPMM example, averaged over 10 runs. Each run: $10^2$ NP-DHMC samples with $50$ burn-in, 20 leapfrog steps of size 0.05; and $2\times 10^3$ LMH, PGibbs and RMH samples. The shaded area is one standard deviation. PGibbs (with final LPPD $-725.96 \pm 9.83$) is omitted to show the top contenders clearly.}
% \caption{LPPD for the DP mixture model in terms of the number of samples from 10 runs. The shaded area is one standard deviation. The second plot omits Particle Gibbs to zoom into the top contenders.}
\label{fig:dpmm-lppd-plot-zoomed}
\end{figure}

\iffalse
\begin{table*}
\centering
\caption{Log pointwise predictive density (LPPD) for the Dirichlet Process Mixture Model (DPMM), mean and standard deviation, averaged over 10 runs. Each run: $10^2$ NP-DHMC samples with $50$ burn-in, 20 leapfrog steps of size 0.05; and $2\times 10^3$ LMH, PGibbs and RMH samples.}
\vspace{0.1in}
\begin{tabular}{lllllll}
\toprule
model & ``true'' & \textbf{ours} &LMH{} &PGibbs{} &RMH{} \\
\midrule
% GMM &-674.81 &$\mathbf{-675.53 \pm 0.24}$ & $-675.84\pm 0.37$ &$-716.85 \pm 0.64$ &$-675.82 \pm 0.43$ \\
DPMM &-674.81 &$\mathbf{-676.232 \pm 0.62}$ &$-679.24 \pm 1.97$ &$-725.96 \pm 9.83$ &$-679.68 \pm 2.97$ \\
\bottomrule
\end{tabular}
\label{table:gmm-lppd}
\end{table*}
\fi
\paragraph{Gaussian mixture model}
We also considered the following mixture model adapted from \cite{DBLP:conf/icml/ZhouYTR20}, where the number of mixture components $K$ is unbounded.
\begin{align*}
K &\sim \mathrm{Poisson}(10) + 1 \\
\mu_k &\sim \mathrm{Uniform}([0, 100]^3) &&\text{for } k = 1,\dots,K \\
x_n &\sim \frac{1}{K}\sum_{k=1}^K \Gau_3(\mu_k, 10^2I_3) &&\text{for } n = 1,\dots,N
\end{align*}
This model samples parameters $\theta = (K\in \Nat, \mu_k \in [0,100]^3)$ and $N$ data points $X = \{x_1,\dots,x_N\} \subseteq \Real^3$.
Note that this model uses much higher standard deviations (10 instead of 0.1) for the Gaussian mixture components compared to \cite{DBLP:conf/icml/ZhouYTR20}.
This is to avoid typical problems of MCMC algorithms with multimodal distributions, which is an issue inherent to MCMC algorithms and tangential to this work.
We used this model to generate $N = 200$ training data points for a fixed $\theta^* = (K^* = 9, \mu_{1\dots K^*}^*)$.
We let our inference algorithms sample from $p(\theta \mid X)$ and compared the posterior on the number of mixture components $K$ with the other inference algorithms.
The histogram (\cref{fig:gmm}) shows that NP-DHMC usually finds the correct number of mixture components ($K^* = 9$).
% , whereas the other algorithms infer a number that is too high.

In addition, we computed the log pointwise predictive density (LPPD) for a test set with $N' = 50$ data points $Y = \{y_1,\dots,y_{N'}\}$, generated from the same $\theta^*$ as the training data.
The LPPD is defined as $\sum_{i=1}^{N'} \log \int p(y_i \mid \theta) p(\theta \mid X) \dif\theta$ and can be approximated by $\sum_{i=1}^{N'} \log \frac{1}{M} \sum_{j=1}^M p(y_i \mid \theta_j)$ where $(\theta_j)_{j=1\dots M}$ are samples from $p(\theta \mid X)$ \cite{Gelman2014}.
The results (\cref{fig:gmm-lppd-plot-zoomed}) include the ``true'' LPPD of the test data under the point estimate $\theta^*$.
As we can see, NP-DHMC outperforms the other methods and has the lowest variance over multiple runs.

\paragraph{Dirichlet process mixture model}
Finally, we consider a classic example of nonparametric models: the Dirichlet process $\mathrm{DP}(\alpha, H)$ \cite{Ferguson73}, which is a stochastic process parametrised by concentration parameter $\alpha > 0$ and a probability distribution $H$.
For practical purposes, one can think of samples from $\mathrm{DP}(\alpha, H)$ as an infinite sequence $(w_k, h_k)_{n\in \Nat}$ where $w_n$ are weights that sum to 1 and $h_n$ are samples from $H$.
Conceptually, a DP Gaussian mixture model takes the form
\begin{align*}
(w_k, h_k)_{k\in \Nat} &\sim \mathrm{DP}(\alpha, H) \\
x_n &\sim \sum_{k=1}^\infty w_k \cdot \Gau(h_k, \Sigma) &&\text{for } n = 1,\dots,N
\end{align*}
Sampling from a DP is usually implemented by the stick-breaking method \cite{Sethuraman94}.
However, one cannot actually compute an infinite sequence.
A practical workaround is to cap the number of mixture components by a fixed $K$ and only consider $(w_k, h_k)_{k=1,\dots,K}$.%\footnote{See, for example, the Pyro tutorial on DP mixture models: \url{https://pyro.ai/examples/dirichlet_process_mixture.html}. Accessed: 2021-2-4.}
\footnote{For instance, see \url{https://pyro.ai/examples/dirichlet_process_mixture.html} (accessed: 2021-06-06), the Pyro tutorial on DP mixture models.}
%This makes the model parametric again, facilitating the use of standard HMC.
%However, if the data actually requires more mixture components than $K$, such a model will be insufficient.
\changed[lo]{This renders the model parametric, enabling the use of standard HMC.
However such a treatment is clearly unsatisfactory: if the data actually requires more mixture components than $K$, the model would be found wanting.}

We propose a different approach.
Instead of choosing a fixed $K$, we allow it to depend on the weights $w_k$:
we pick the minimal $K \in \Nat$ such that $\sum_{i=1}^K w_i > 1 - \epsilon$ for some $\epsilon > 0$.
With this restriction, we only discard insignificant mixture components (with a weight $w_k < \epsilon$) and allow as many mixture components as necessary to model the data accurately.
This model is not parametric anymore, but still tractable by our algorithm.

We implemented the above model with the parameters $\alpha = 5$, $\epsilon = 0.01$, and the remaining parameters as chosen in the previous GMM example, i.e.~$H = \mathrm{Uniform}([0,100]^3)$ and $\Sigma = 10^2I_3$.
We used the same training and test data as in the previous GMM example and present the LPPD results in \cref{fig:dpmm-lppd-plot-zoomed}.
% and computed the log pointwise predictive density (LPPD) for a test set with $N' = 50$ data points $Y = \{y_1,\dots,y_{N'}\}$, generated from the same $\theta^*$ as the training data.
% The LPPD is defined as $\sum_{i=1}^{N'} \log \int p(y_i \mid \theta) p(\theta \mid X) \dif\theta$ and can be approximated by $\sum_{i=1}^{N'} \log \frac{1}{M} \sum_{j=1}^M p(y_i \mid \theta_j)$ where $(\theta_j)_{j=1\dots M}$ are samples from $p(\theta \mid X)$ \cite{Gelman2014}.
% The LPPD results (\cref{fig:dpmm-lppd-plot-zoomed}) include the ``true'' LPPD of the test data under the point estimate $\theta^*$.
As we can see, NP-DHMC outperforms the other methods and has the lowest variance over multiple runs.

\section{Related Work and Conclusion}
\label{sec:related work}

% !TEX root = ./main.tex

%It is challenging to design efficient inference algorithm for probabilistic programs with branching and recursion.

%\paragraph{Out-of-the-box MCMC Inferences}
The standard MCMC algorithm for PPLs that is widely implemented (for example, in Anglican, Venture, and Web PPL) is the Lightweight Metropolis-Hastings (LMH) algorithm and its extensions \cite{DBLP:conf/aistats/YangHG14,DBLP:conf/pkdd/TolpinMPW15,DBLP:conf/aistats/RitchieSG16},
% applicable to a broad class of probabilistic programs
which performs single-site updates on the current sample and re-executes the program.
% For correctness, the program can only reuse values from previous run up to updated sample,
% the rest are drawn from the prior directly.
Unlike NP-HMC, where Hamiltonian motion is simulated on the resulting extended trace,
% The earlier versions of LMH had issues with reusing values from previous runs, but even in the corrected versions
% re-sampling from the prior is unavoidable because of (stochastic) branching.
LMH suffers from a lack of predictive accuracy in its proposal
% , often faring no better than importance sampling
(as shown in \cref{sec:experiements}).

\cm{Due to space constraints, I don't think we need to talk about the MH algorithm from \citet{HurNRS15}.}
\iffalse
A lesser-known general purpose inference algorithm for PPLs is the version of Metropolis-Hastings introduced in \cite{DBLP:conf/icfp/BorgstromLGS16,HurNRS15}.
% the proposal is \changed[cm]{the trace of a run of the program where samples are drawn from the normal distribution with the current sample as the mean and from the prior
% if more random primitives are encountered.}
In contrast to single-site updates like LMH, it updates all values in the current sample and re-executes the program.
% branching is changed.
% \fz{What do you mean by "branching is changed"?}
% Due to the inconsistency of how the values in the proposal are gathered, \fz{I'm not sure what that means}
% \cm{I'm referring to the fact that half of the proposal is derived from the previous sample and half sampled from prior.}
This is the worst performing algorithm in our experiments (see \cref{sec:experiements}).

\paragraph{Elementary Inferences} \lo{This paragraph is standard, and can safely be omitted when space is at a premium.}
Inference algorithms like importance sampling (IS) or rejection sampling can be applied to probabilistic programs, and are implemented in most PPLs.
However, as shown in the GMM example in \cref{sec:experiements}, IS struggles with high dimension models and
sampling from the prior for the proposal does not give a high acceptance ratio.
Hence, it is not a fitting inference algorithm for complex models described by probabilistic programs.
\cm{Todo: Add SMC here}
\fi

\changed[lo]{The Reversible Jump Markov chain Monte Carlo (RJMCMC) algorithm \cite{Green95} is similar to NP-HMC in that it is a trans-dimensional MCMC sampler.
%However, unlike NP-HMC which is a \emph{general purpose} inference algorithm in that it works out-of-the-box when given an input density function, RJMCMC additionally requires the user to specify a transition kernel.
However, NP-HMC is a \emph{general purpose} inference algorithm that works out-of-the-box when given an input density function, whereas RJMCMC additionally requires the user to specify a transition kernel.
Various RJMCMC transition kernels have been suggested for specific models, e.g.~split-merge proposal for infinite Gaussian mixture models.}

%\paragraph{Customisable MCMC Inferences}
Some PPLs such as Hakaru, Pyro and Gen give users the flexibility to hand-code the proposal in a MCMC setting.
For instance, \citet{cusumanotowner2020automating} implement the split-merge proposal \cite{https://doi.org/10.1111/1467-9868.00095} of RJMCMC in Gen.
Though this line of research is orthogonal to ours,
PPLs such as Gen could play a useful role in the implementation of NP-HMC and similar extensions of inference algorithms to nonparametric models.
\iffalse
This is however not ideal as designing efficient proposals for specific complex probabilistic models requires knowledge in the workings of inference algorithms and is usually not straightforward.
Moreover, this setting forfeits one of the main appeals of PPLs to automate Bayesian inference.
\lo{We should be a little more positive in our assessment. This line of work is well-motivated (but orthogonal to ours); it is in our interest to acknowledge it.}
\fi

%\paragraph{Hamiltonian Inferences}
The HMC algorithm and its variants, notably the No-U-Turn Sampler,
%\cite{HoffmanG14},
are the workhorse inference methods in the influential PPL Stan \cite{stan}.
% However basic language features such as sampling from discrete distributions, branching, and recursion cause great difficulties for HMC-based samplers of probabilistic programs.
%Hamiltonian Monte Carlo (HMC) algorithm and its variants such as the No-U-Turn Sampler \cite{HoffmanG14} and Riemann manifold HMC \cite{DBLP:conf/gsi/Betancourt13}, has been the workhorse in many PPLs such as Stan, Pyro and Gen.
%However, PPLs post many outstanding challenges for HMC, such as supporting discrete random variables, discontinuous densities caused by branching and unbounded parameter space due to recursion.
The challenges posed by stochastic branching in PPLs are the focus of reflective/refractive HMC \cite{AfsharD15}; discontinuous HMC \cite{NishimuraDL20}; mixed HMC \cite{DBLP:conf/nips/Zhou20}; and the first-order PPL in \cite{DBLP:conf/aistats/ZhouGKRYW19} which is equipped with an implementation of discontinuous HMC.
\changed[lo]{By contrast, our work is an attempt to tackle the language constructs of branching \emph{and} recursion.}

\changed[lo]{Unlike Monte Carlo methods, variational inference (VI) \cite{BleiKM17} solves the Bayesian inference problem by treating it as an optimisation problem.
When adapted to models expressed as probabilistic programs, the score function VI \cite{RanganathGB14} can in principle be applied to a large class of branching and recursive programs because only the variational density functions need to be differentiable.
Existing implementations of VI algorithms in probabilistic programming systems are however far from automatic:
in the main, the guide programs (that express variational distributions) still need to be hand-coded.}

\iffalse
%\paragraph{Variational inference}
A different approach to posterior inference is Stochastic Variational Inference (SVI) \cite{RanganathGB14}:
the goal is to approximate the posterior distribution with a parametrised family of simpler distributions.
Thus, one wants to find the parameters that are closest to the posterior according to some distance measure (e.g.~Kullback–Leibler divergence).
This optimization problem can be approximately solved using stochastic gradient descent.
% \changed[fz]{A proper comparison between NP-HMC and SVI is complicated because it is unclear how to choose the variational family, but would be interesting future work.}
\changed[cm]{We think SVI and NP-HMC would complement each other in PPLs with SVI more fitting for models with large dataset and NP-HMC superior when the dataset is small.}
\cm{Should we just say that MCMC and SVI are used in different settings? MCMC is better when the dataset is small and accuracy is required whereas SVI is better for a large dataset.}
\fi

Recently, \citet{DBLP:conf/icml/ZhouYTR20} introduced the Divide, Conquer, and Combine (DCC) algorithm, which is applicable to programs definable using branching and recursion.
\changed[fz]{As a} hybrid algorithm, DCC solves the problem of designing a proposal that can efficiently transition between configurations by performing local inferences on submodels, and returning an appropriately weighted combination of the respective samples.
Thanks to a judicious resource allocation scheme, it exhibits strong performance on multimodal distributions.

\iffalse
Recently, \citet{DBLP:conf/icml/ZhouYTR20} introduced the Divide, Conquer, and Combine (DCC) algorithm, which performs local inferences on submodels of an input probabilistic program, and returns a combination of the respective samples.
% To discover all submodels automatically, a global random-walk LMH (a variant of LMH) proposal is performed.
\changed[cm]{DCC can be seen as a hybrid method which our NP-HMC algorithm might complement in a universal PPL.}
% As DCC ``remembers'' all visited submodels in order to perform local inferences, space usage is an impediment to scaling to large models.
\fz{Do they say this in the DCC paper? We should only claim this if we know for sure.}
\cm{Thanks for checking my worst instincts. I always feel like I need to address the shortcomings of other algorithms.}
\fi
\anon{Zhou: This is not fully true though, as LMH forgets any proposed, not accepted samples whereas DCC ‘remembers’ them. Also DCC uses RMH actually.

I agree that it is possible to make proposals cleverer. However, at the end of the day, if we are not provided more information, we will need to rely on sth. similar to RMH to explore (prior + local movements).

I cannot comment for now as I do not fully follow how NP-HMC extends. But I think it might be good to compare your jumping scheme with LMH, in order to improve clarity. }

\iffalse
The paper that inspired this work most is the probabilistic path HMC \cite{DBLP:conf/icml/DinhBZM17} which performs HMC on discrete tree spaces described by orthant complexes. %(which are Euclidean spaces glued together along a common lower-dimensional boundary).
Though addressing a very different problem, its treatment of movements between tree topologies helped us to understand movement between branches in the program tree of a general probabilistic program.
\lo{This is ultimately not so relevant. It can go if space is an issue.}
\fz{Agreed.}
\fi

\subsection*{Conclusion}
\label{sec:conclusions}

% !TEX root = ./main.tex
\changed[lo]{We have presented the NP-HMC algorithm, the first extension of the HMC algorithm to nonparametric models.
%The input to NP-HMC is a general class of density functions definable by recursive probabilistic programs.
We have proved that NP-HMC is correct. We have also empirically demonstrated that it enjoys significant performance improvements over state-of-the-art MCMC algorithms for universal PPLs on four nonparametric models, thereby illustrating that the key advantage of HMC---the proposal of moves to distant states with a high acceptance probability---has been preserved by NP-HMC.}

\iffalse
\changed[fz]{We have presented} the nonparametric Hamiltonian Monte Carlo algorithm, which generalises the popular HMC algorithm to nonparametric models typically described by universal PPLs.
We highlight its theoretical and practical merits in preserving the effectiveness of HMC and outperforming state-of-the-art MCMC algorithms for universal PPLs.

\fz{
Suggestion:

We have presented an extension of the popular HMC algorithm to nonparametric models.
NP-HMC enjoys nice theoretical properties: the adaption from standard HMC is conceptually simple (one only has to insert $\extend$ calls in the integrator), and we have proved its correctness.
In practice, our algorithm inherits the effectiveness of HMC and outperforms state-of-the-art inference algorithms for universal PPLs.}
\fi

% Acknowledgements should only appear in the accepted version.
\paragraph{Acknowledgements}

We thank the reviewers for their insightful feedback and pointing out \changed[fz]{important related work}.
We are grateful to Hugo Paquet, Dominik Wagner and Yuan Zhou, who gave detailed comments on an early draft, and to Tom Rainforth and Arnaud Doucet for their helpful comments and advice.
\changed[fz]{We gratefully acknowledge support from the EPSRC}\fz{Not sure how to word this, but EPSRC is funding my DPhil.}\changed[cm]{ and the Croucher Foundation.}
\cm{Thanks to funding agencies and corporate sponsors that provided financial support?}

%\changed[lo]{We thanks Yuan Zhou, Tom Rainforth and Arnaud Doucet for their helpful comments and advice.}

% In the unusual situation where you want a paper to appear in the
% references without citing it in the main text, use \nocite
% \nocite{langley00}

\bibliography{database.bib}
\bibliographystyle{icml2021}

%%%%%%%%%%%%%%%%%%%%%%%%%%%%%%%%%%%%%%%%%%%%%%%%%%%%%%%%%%%%%%%%%%%%%%%%%%%%%%%
%%%%%%%%%%%%%%%%%%%%%%%%%%%%%%%%%%%%%%%%%%%%%%%%%%%%%%%%%%%%%%%%%%%%%%%%%%%%%%%
% DELETE THIS PART. DO NOT PLACE CONTENT AFTER THE REFERENCES!
%%%%%%%%%%%%%%%%%%%%%%%%%%%%%%%%%%%%%%%%%%%%%%%%%%%%%%%%%%%%%%%%%%%%%%%%%%%%%%%
%%%%%%%%%%%%%%%%%%%%%%%%%%%%%%%%%%%%%%%%%%%%%%%%%%%%%%%%%%%%%%%%%%%%%%%%%%%%%%%
\ifapxAppended
\clearpage

\onecolumn
\icmltitle{Nonparametric Hamiltonian Monte Carlo (Appendix)}
\vskip 0.3in

\appendix
% !TEX root = ../main.tex

\section{Statistical PCF}
\label{appendix: SPCF}
% !TEX root = ../main.tex

% outline
In this section, we present a simply-typed
statistical probabilistic programming language with (stochastic) branching and recursion, and its operational semantics.

This language serves two purposes for the NP-HMC algorithm.
First, it is a purified universal probabilistic programming language (PPL) widely considered \cite{DBLP:conf/icfp/BorgstromLGS16,VakarKS19,MakOPW20} which specifies tree-representable functions that satisfies Ass.~\aref{ass:1}{1}, \aref{ass:2}{2} and \aref{ass:3}{3} (\cref{prop: all spcf terms have TR weight function,lemma: all AST SPCF term satisfy all assumptions}) and hence NP-HMC can be applied.
Second, its (operational) semantics is used to prove correctness of NP-HMC in \cref{appendix: correctness}.

\subsection{Syntax}
\label{subsec:stat PCF}

SPCF is a simply-typed higher-order universal PPL with branching and recursion.
More formally, it is a statistical probabilistic version of call-by-value
PCF \cite{DBLP:journals/tcs/Scott93,DBLP:conf/fsttcs/Sieber90} with reals as the ground type.
The terms and part of the typing system of SPCF are presented in \cref{fig:SPCF syntax}.
Free variables and closed terms are defined in the usual way.
In the interest of readability, we sometimes use pseudocode (e.g.~\cref{intro-program}) in the style of Python to express SPCF terms.

There are two probabilistic constructs of SPCF:
the sampling construct $\Normal$ draws from $\Gau$, the standard Gaussian distribution with mean $0$ and variance $1$;
the scoring construct $\Score{\terma}$ enables conditioning on observed data by multiplying the weight of the current execution with the real number denoted by $\terma$.
Note this is not limiting as the standard uniform distribution with endpoints $0$ and $1$ can be described as $\PCF{\mathsf{cdfNormal}}(\Normal)$ where $\mathsf{cdfNormal}$ is the cumulative distribution function (cdf) of the standard normal distribution.
And any real-valued distribution with inverse cdf $f$ can be described as $\PCF{f}(\PCF{\mathsf{cdfNormal}}(\Normal))$.

\begin{remark}
  The main difference between our variant of SPCF and the others
  \cite{VakarKS19,MakOPW20} is that our sampling construct draws from the \emph{standard normal distribution} instead of the standard uniform distribution.
  This does not restrict nor extend our language and is only considered since the target (parameter) space of the standard HMC algorithm matches that of the support of a standard $n$-dimensional normal distribution.
\end{remark}

\begin{figure}[h]
  \defn{Types} (typically denoted $\typea,\typeb$) and
  \defn{terms} (typically $\terma,\termb,\termc $):
  \begin{align*}
    % types
    \typea,\typeb & ::=
    \PCFReal \mid \typea \tyarrow \typeb \\
    % terms
    \terma,\termb,\termc & ::=
    y \mid
    \PCF{r} \mid
    \lambda y.\terma \mid
    \terma\,\termb \mid
    \PCFIf{\termc \leq 0}{\terma}{\termb}\mid
    \PCF{f}(\terma_1,\dots,\terma_\ell) \mid
    \Y{\terma} \mid
    \Normal \mid
    \Score{\terma}
  \end{align*}
  \defn{Typing system}:
  $$
    \AxiomC{\vphantom{$4$}}
    \UnaryInfC{$\Gamma \vdash \Normal:\PCFReal$}
    \DisplayProof
    \qquad
    \AxiomC{$\Gamma \vdash \terma:\PCFReal$}
    \UnaryInfC{$\Gamma \vdash \Score{\terma}:\PCFReal$}
    \DisplayProof
    \qquad
    \AxiomC{$
      \Gamma \vdash \terma : (\typea \tyarrow \typeb) \tyarrow (\typea \tyarrow \typeb)
    $}
    \UnaryInfC{$
      \Gamma \vdash \Y{\terma} : \typea \tyarrow \typeb
    $}
    \DisplayProof
  $$
  \caption{
    Syntax of SPCF, where $r \in \Real$, $x,y$ are variables, and
   $f:\Real^n \to \Real$ ranges over a set $\pop$ of partial, measurable
   primitive functions.
  }
  \label{fig:SPCF syntax}
\end{figure}

\subsection{Operational Semantics}
\label{subsec:operational semantics}

\begin{figure}[t]
  \defn{Values} (typically denoted $\valuea$),
  \defn{redexes} (typically $\redexa$) and
  \defn{evaluation contexts} (typically $\evalcon$):
  \begin{align*}
    \valuea & ::=
    \PCF{r} \mid \lambda y.M \\
    \redexa & ::=
              (\lambda y\ldotp\terma)\,\valuea \mid
              \Ifleq{\PCF{r}}{\terma}{\termb} \mid
              \PCF{f}(\PCF{r_1},\dots,\PCF{r_\ell}) \mid
              \Y{(\lambda y\ldotp\terma)}\mid
              \Normal \mid
              \Score{\PCF{r}}\\
    \evalcon & ::=
               [] \mid
               \evalcon\,\terma \mid
               (\lambda y.\terma)\,\evalcon \mid
               \Ifleq{\evalcon}{\terma}{\termb}\mid
               \PCF{f}(\PCF{r_1}, \dots, \PCF{r_{i-1}},\evalcon,\terma_{i+1},\dots,\terma_\ell) \mid
               \Y{\evalcon} \mid
              \Score{\evalcon}
  \end{align*}
  \noindent\defn{Redex contractions}:
  \begin{align*}
    \config{(\lambda y.\terma)\,\valuea}{w}{\trace} & \red
      \config{\terma[\valuea/y]}{w}{\trace} \\
    \config{\PCF{f}(\PCF{r_1},\dots,\PCF{r_\ell})}{w}{\trace} & \red
      \begin{cases}
        \config{\PCF{f(r_1,\dots,r_\ell)}}{w}{\trace}
        & \text{if } (r_1,\dots,r_\ell) \in \domain{f},\\
        \Fail & \text{otherwise.}
      \end{cases}
      \\
    \config{\Y{(\lambda y.\terma)}}{w}{\trace} & \red
      \config{\lambda z.\terma[\Y{(\lambda y.\terma)}/y]\,z}{w}{\trace}
      \tag{for fresh variable $z$} \\
    \config{\Ifleq{\PCF{r}}{\terma}{\termb}}{w}{\trace} & \red
      \begin{cases}
        \config{\terma}{w}{\trace}
        & \text{if } r \leq 0,\\
        \config{\termb}{w}{\trace} & \text{otherwise.}
      \end{cases}\\
    \config{\Normal}{w}{\trace} & \red
      \config{\PCF{r}}{w}{\trace\concat [r]} \tag{for some $r \in \Real$} \\
    \config{\Score{\PCF{r}}}{w}{\trace} & \red
      \begin{cases}
        \config{\PCF{r}}{r\cdot w}{\trace}
        & \text{if } r > 0,\\
        \Fail & \text{otherwise.}
      \end{cases}
  \end{align*}
  \noindent\defn{Evaluation contexts}:
  $$
    \AxiomC{$\config{\redexa}{w}{\trace} \red \config{\contra}{w'}{\trace'}$}
    \UnaryInfC{$\config{E[\redexa]}{w}{\trace} \red \config{E[\contra]}{w'}{\trace'}$}
    \DisplayProof
    \qquad
    \AxiomC{$\config{\redexa}{w}{\trace} \red\Fail$}
    \UnaryInfC{$\config{E[\redexa]}{w}{\trace} \red\Fail$}
    \DisplayProof
    % \infer{\config{E[\redexa]}{w}{\trace} \red \config{E[\contra]}{w'}{\trace'}}{\config{\redexa}{w}{\trace} \red \config{\contra}{w'}{\trace'}}
    % \qquad\qquad
    % \infer{\config{E[\redexa]}{w}{\trace} \red\Fail}{\config{\redexa}{w}{\trace} \red\Fail}
  $$
  \caption{
    Operational small-step semantics of SPCF}
  \label{fig:operational small-step}
\end{figure}

The small-step reduction of SPCF is standard (see \citet{DBLP:conf/icfp/BorgstromLGS16}).
We present it as a rewrite system of \defn{configurations}, which are triples of the form $\config{\terma}{w}{\trace}$ where $M$ is a closed SPCF term, $w \in \pReal$ is a \defn{weight}, and $\trace \in \traces$ a trace, as defined in \cref{fig:operational small-step}.
\fz{Shouldn't top and bottom of the evaluation context rules be flipped?}

In the rule for $\Normal$, a random value $r \in \Real$ is generated and recorded in the trace, while the weight remains unchanged:
even though the program samples from a normal distribution, the weight does not factor in Gaussian densities as they are already accounted for by $\measure{\traces}$.
In the rule for $\Score{\PCF{r}}$, the current weight is multiplied by $r\in\Real$: typically this reflects the likelihood of the current execution given some observed data.
Similarly to \cite{DBLP:conf/icfp/BorgstromLGS16} we reduce terms which cannot be reduced in a reasonable way (i.e.~scoring with nonpositive constants or evaluating functions outside their domain) to $\Fail$.

We write $\redplus$ for the transitive closure of $\red$, and
$\red^*$ for the reflexive and transitive closure of $\red$.

\subsubsection{Value and Weight Functions.}
\label{appendix: value and weight functions}

Recall the measure space of traces
$\traces := \bigcup_{n\in\Nat} \Real^n$ is equipped with the standard disjoint union $\sigma$-algebra
$\Sigma_\traces := \set{\bigcup_{n\in\Nat} U_n \mid U_n \in \Borel_n }$, with measure given by summing the respective (higher-dimensional) normals $\tmeasure(\bigcup_{n\in\Nat} U_n) := \sum_{n\in\Nat} \Gau_n(U_n)$.
Following \citet{DBLP:conf/icfp/BorgstromLGS16},
we write $\terms$ to denote the set of all SPCF terms and view it as $\bigcup_{n\in\Nat} (\sk_n \times \Real^n)$
where $\sk_n$ is the set of SPCF terms with exactly $n$ numerals place-holders.
The measurable space of terms
is equipped with the $\sigma$-algebra $\Sigma_{\terms}$ that is the Borel algebra of the
countable disjoint union topology of the product topology of the discrete topology on
$\sk_n$ and the standard topology on $\Real^n$.
Similarly the subspace $\closedvalues$ of closed values inherits the Borel algebra on $\terms$.

% Let $\terma$ be an SPCF term with free variables amongst $x_1,\dots,x_m$ of type $\PCFReal$. Its \defn{value function} $\valuefn_{\terma} : \Real^m \times \traces \to \closedvalues \cup \{\bot\}$ returns, given values for each free variable and a trace, the output value of the program, if the program terminates in a value. The \defn{weight function} $\weightfn_{\terma} : \Real^m \times \traces \to \pReal$ returns the final weight of the corresponding execution. Formally:
Let $\terma$ be a closed SPCF term.
Its \defn{value function} $\valuefn_{\terma} : \traces \to \closedvalues \cup \{\bot\}$ returns, given a trace, the output value of the program, if the program terminates in a value.
The \defn{weight function} $\weightfn_{\terma} : \traces \to \pReal$ returns the final weight of the corresponding execution. Formally:
\begin{align*}
  \valuefn_{\terma} (\trace) & :=
  \begin{cases}
  V & \hbox{if $\config{\terma}{1}{\emptytrace} \red^*
      \config{\valuea}{w}{\trace}$}\\
  \bot & \tow
  \end{cases}
  &
  \weightfn_{\terma} (\trace) & :=
  \begin{cases}
  w & \hbox{if
    $\config{\terma}{1}{\emptytrace} \red^* \config{\valuea}{w}{\trace}$}\\
  0 & \tow
  \end{cases}
\end{align*}
It follows already from \cite{DBLP:conf/icfp/BorgstromLGS16} that the functions $\valuefn_\terma$ and $\weightfn_\terma$ are measurable.

Finally, every closed SPCF term $\terma$ has an associated \defn{value measure}
\begin{align*}
  {\oper{\terma}}: {\Sigma_{\closedvalues}} &\longrightarrow{\pReal}\\
   U & \longmapsto
    \shortint{\inv{\valuefn_\terma}(U)}
    {\weightfn_\terma}
    {\tmeasure}
\end{align*}
This corresponds to the denotational semantics of SPCF in the $\omega$-quasi-Borel space model via computational adequacy \cite{VakarKS19}.

\begin{proposition}
  \label{prop: all spcf terms have TR weight function}
  Every closed SPCF term has a tree representable weight function.
\end{proposition}

\begin{proof}
  Assume $\terma$ is a closed SPCF term and $\seqa \in \nsupport{\weightfn_\terma}{n}$.
  The reduction of $\terma$ must be
  $\config{\terma}{1}{\emptytrace} \red^* \config{\valuea}{w}{\seqa}$ for some value $\valuea$ and weight $w > 0$.
  Assume for contradiction that there is some $k < n$ where
  $\config{\terma}{1}{\emptytrace} \red^* \config{\valuea'}{w'}{\seqrange{1}{k}}$ for some value $\valuea'$ and weight $w' > 0$.
  Since $\seqrange{1}{k}$ is a prefix of $\seqa$ and $\red$ is deterministic if the trace is given,
  we must have
  $
  \config{\terma}{1}{\emptytrace}
  \redplus \config{\valuea'}{w'}{\seqrange{1}{k}}
  \redplus \config{\valuea}{w}{\seqa},
  $
  which contradicts the fact that $\valuea'$ is a value.
\end{proof}

\subsection{Almost-sure Termination}

% Intuitively, a program $M$ (closed term of ground type) is almost-surely terminating if the probability that a run of $M$ terminates is 1.

\begin{definition}\rm
  \label{def:ast}
  We say that a SPCF term $M$ \defn{terminates almost surely} if
  % \begin{asparaenum}[1.]
  %   \item
    $\terma$ is closed and
    $\tmeasure(\set{\trace \in \traces \mid \exists V, w \,.\,
    \config{\terma}{1}{\emptytrace} \red^* \config{\valuea}{w}{\trace}}) =1$;
  % or
  %   \item
    % $\terma$ has free variables amongst $x_1,\ldots,x_m$ all of which are of type $\PCFReal$, and there exists $T \in \Sigma_{\Real^m}$ such that $\leb_m(\Real^m\setminus T)=0$ and for each $\tupf r\in T$, $M[\tupf{\PCF r}/\tupf x]$ terminates almost surely.
  % \end{asparaenum}
\end{definition}

The following proposition is used in \cref{prop: step 1 and 2 is probabilistic} to support the correctness proof.

\begin{proposition}
  \label{prop: AST SPCF term gives probability measure}
  The value measure $\oper{\terma}$ of
  a closed almost surely terminating SPCF term $\terma$ which does not contain $\Score{-}$ as a subterm
  is probabilistic.
\end{proposition}

One of the main contribution of \cite{MakOPW20} is to find a suitable class of primitive functions such that their main theorem (\cref{lemma: weight and value functions are diff ae}) holds.

For our purposes, we take the set of \defn{analytic functions} with co-domain $\Real$ as our class $\pop$ of primitive functions which, as shown in Example 3 of \cite{MakOPW20}, satisfies the conditions for which the following lemma holds.

\begin{lemma}[\citet{MakOPW20}, Theorem 3]
  \label{lemma: weight and value functions are diff ae}
  Let $\terma$ be an SPCF term which terminates almost surely.
  Then its weight function $\weightfn_{\terma}$ and value function $\valuefn_{\terma}$ are differentiable almost everywhere.
\end{lemma}

\begin{definition}\rm
  \label{def:integrable}
  We say that a SPCF term $M$ is \defn{integrable} if
  % \begin{asparaenum}[1.]
  %   \item
      $\terma$ is closed and
      its value measure is finite,
      i.e.~$\oper{M}(\closedvalues) < \infty$;
  % or
  %   \item $\terma$ has free variables amongst $x_1,\ldots,x_m$ all of which are of type $\PCFReal$, and there exists $T \in \Sigma_{\Real^m}$ such that $\leb_m(\Real^m\setminus T)=0$ and for each $\tupf r\in T$, $M[\tupf{\PCF r}/\tupf x]$ is integrable.
  % \end{asparaenum}
\end{definition}

We conclude with the following lemma which shows that NP-HMC is an adequate inference algorithm for closed SPCF terms.

\begin{lemma}
  \label{lemma: all AST SPCF term satisfy all assumptions}
  The weight function of a closed integrable almost surely terminating SPCF term satisfies Assumptions \aref{ass:1}{1}, \aref{ass:2}{2} and \aref{ass:3}{3} of the NP-HMC algorithm.
\end{lemma}

\begin{proof}
  Let $\terma$ be a closed integrable almost surely terminating SPCF term,
  and $\btra$ be its weight function.
  $\btra$ is tree representable by \cref{prop: all spcf terms have TR weight function}.
  Integrability of $\btra$ (Assumption \aref{ass:1}{1}) is given as an assumption, and
  $\btra$ is almost everywhere continuously differentiable (Assumption \aref{ass:2}{2}) by \cref{lemma: weight and value functions are diff ae}.

  Assume for contradiction that Assumption \aref{ass:3}{3} does not hold.
  i.e.~There is a non-null set $U$ of infinite real-valued sequence where $\btra$ is zero on all prefixes of sequences in $U$.
  Let $U_p := \set{\seqrange{1}{k} \mid \seqa \in U, k \in \Nat}$ be the set of prefixes of sequences in $U$.
  Since $U$ is non-null, $U_p$ must also be non-null.
  Moreover, $\btra$ is zero on all traces in $U_p$.
  By the definition of weight function,
  $\seqa \in U_p$ implies
  % $\btra(\seqa) = 0$ if and only if
  $\config{\terma}{1}{\emptytrace} \not\red^* \config{\valuea}{w'}{\seqa}$
  for some $\valuea$ and $w'$.
  Hence, the probability of a non-terminating run of $\terma$ is non-zero and
  $\terma$ is not almost surely terminating.
\end{proof}

\begin{remark}
  The weight function as defined in \cref{appendix: value and weight functions} is the input density function of the target distribution to which an inference algorithm typically samples from.
  In this paper, we call this function the ``weight function'' when considering semantics following \cite{DBLP:conf/esop/CulpepperC17,VakarKS19,MakOPW20},
  and use the notion ``density'' when referring it in an inference algorithm similar to \cite{DBLP:conf/aistats/ZhouGKRYW19,DBLP:conf/icml/ZhouYTR20,cusumanotowner2020automating}.
\end{remark}

\section{Hamiltonian Monte Carlo Algorithm and its Variants}
\label{appendix: HMC}
% !TEX root = ../main.tex

Hamiltonian Monte Carlo (HMC) algorithm \cite{DUANE1987216,CancesLS07,Neal2011}
is a Markov chain Monte Carlo inference algorithm that
generates
samples from a continuous (finite) distribution $\tdist$ on
the \changed[lo]{measure space} $(\Real^n, \Borel_n, \leb_n)$, where $\Borel_n$ denotes the Borel $\sigma$-algebra.

\subsection{HMC Algorithm}

To generate a Markov chain $\set{\vec{q_i}}_{i\in\Nat}$ of samples from $\tdist$,
HMC simulates the \emph{Hamiltonian} motion of a particle on the negative logarithm of the density function of $\tdist$ with some auxiliary momentum.
Hence regions with high probability in $\tdist$ have low potential energy and are more likely to be visited by the simulated particle.
In each iteration, the particle is given some random momentum.
We formalise the algorithm here.

\subsubsection{Hamiltonian Dynamics}
\label{sec: Hamilton eq}

Say $\tpdf:\Real^n \to \Real$ is the (not necessarily normalized) probability density function of $\tdist$.
The simulated particle has two types of energies:
\defn{potential energy} $U:\Real^n \to\Real$ given by $U(\vec q) := -\log \tpdf(\vec q)$ and
\defn{kinetic energy} $K:\Real^n \to\Real$ given by $K(\vec p) := -\log \pdf{D}(\vec p)$ where
$D$ is some momentum distribution, typically a $n$-dimensional normal distribution.
Henceforth, we take
$K(\vec p) := \sum_{i=1}^n \frac{\vec{p}_i^2}{2}$.

The \defn{Hamiltonian} $H: \Real^n \times \Real^n \to \pReal $ of a system is defined quite simply to be the sum of the potential and kinetic energies, i.e.~
\[
  H(\vec{q},\vec{p}) := U(\vec{q}) + K(\vec{p}).
\]

The trajectories $\set{(\vec{q}^t,\vec{p}^t)}_{t \geq 0}$, where $\vec{q}^t$ and $\vec{p}^t$ are the position and momentum of the particle at time $t$ respectively, defined by the Hamiltonian $H$, can be determined by the \defn{Hamiltonian equations}:
\[
  \frac{\dif \vec{q}(t)}{\dif t} := \frac{\partial H}{\partial \vec{p}}(\vec{q}(t),\vec{p}(t))
  = \grad{K}(\vec{p}(t)) = \vec{p}(t) \qquad\text{and}\qquad
  \frac{\dif \vec{p}(t)}{\dif t} := -\frac{\partial H}{\partial \vec{q}}(\vec{q}(t),\vec{p}(t))
  = -\grad{U}(\vec{q}(t)).
\]
with initial conditions $(\vec{q}(0),\vec{p}(0)) = (\vec{q}^0,\vec{p}^0)$.

The \defn{canonical distribution} (also called Boltzmann-Gibbs distribution) $\sdist$
on the measure space $(\Real^n \times \Real^{n},\Sigma_{\Real^n \times \Real^{n}},\leb_{2n})$
corresponding to $H$
is given by the probability density function
\[
  \spdf(\vec{q},\vec{p}) := \frac{1}{Z} \exp{(-H(\vec{q},\vec{p}))} = \frac{1}{Z} \exp{(-U(\vec{q})-K(\vec{p}))}
  \qquad
  \text{where }Z := \shortint{\Real^n}{\tpdf}{\leb_n}
\]

\subsubsection{The Algorithm}
\label{sec:HMC algorithm}

Since computers cannot simulate continuous motions like Hamiltonian,
the equations of motion are generally numerically integrated by the \defn{leapfrog} method
(also called the velocity-Verlet algorithm \cite{PhysRev.159.98}):
\begin{align*}
  % \text{if } (\vec{q}^n,\vec{p}^n) \in \mathsf{Refl} \text{ then } \vec{p}^n & = -\vec{p}^n \\
  \vec{p}^{n+1/2} & = \vec{p}^n - \epsilon / 2 \cdot \grad{U}(\vec{q}^n) \\
  \vec{q}^{n+1} & = \vec{q}^n + \epsilon \cdot \vec{p}^{n+1/2} \\
  \vec{p}^{n+1} & = \vec{p}^{(n+1)/2} - \epsilon / 2 \cdot \grad{U}(\vec{q}^{n+1})
\end{align*}
where $\epsilon$ is the time step.
% \fz{Shouldn't it be $n+1/2$ instead of $(n+1)/2$?}

The \defn{integrator} ${\HMCint_n}:\Real^n \times \Real^n \to \Real^n \times \Real^n$
as given in \cref{alg:hmc integrator},
takes a state $(\vec{q},\vec{p})$ and
performs $L$ leapfrog steps with initial condition $(\vec{q}^0,\vec{p}^0) := (\vec{q},\vec{p})$ and time step $\epsilon$, and
return the state $(\vec{q}^{L},-\vec{p}^{L})$.

\begin{figure}[t!]
  \centering
  \vspace{-3mm}
  \begin{minipage}{0.45\linewidth}
    \begin{algorithm}[H]
      \caption{{HMC Integrator $\HMCint_n$}}
      \label{alg:hmc integrator}
      \begin{algorithmic}
        \STATE {\bfseries Input:}
          current state $(\vec{q_0}, \vec{p_0})$,
          potential energy $U$,
          step size $\epsilon$,
          number of steps {$L$}
        \STATE {\textbf{Output:}}
          new state $(\vec q, \vec p)$
        \algrule
        \STATE $(\seqa,\vec{p}) = (\vec{q_0}, \vec{p_0})$ \hfill \COMMENT{initialise}
        \FOR{$i=0$ {\bfseries to} $L$}
          \STATE $\vec{p}=\vec{p}-\frac{\epsilon}{2}\grad{U}(\seqa)$ \hfill\COMMENT{1/2 momentum step}
          \STATE $\seqa=\seqa+\epsilon \, \vec{p}$ \hfill\COMMENT{1 position step}
          \STATE $\vec{p}=\vec{p}-\frac{\epsilon}{2}\grad{U}(\seqa)$ \hfill\COMMENT{1/2 momentum step}
        \ENDFOR
        \STATE $\vec{p} = -\vec{p}$
        \STATE {\bfseries return} $(\seqa, \vec{p})$
      \end{algorithmic}
    \end{algorithm}
  \end{minipage}
  \hspace{0.05\linewidth}
  \begin{minipage}{0.45\linewidth}
    \begin{algorithm}[H]
      \caption{HMC Step}
      \label{alg:hmc}
      \begin{algorithmic}
        \STATE {\bfseries Input:}
          current sample $\vec{q_0}$,
          potential energy $U$,
          step size $\epsilon$,
          number of steps $L$
        \STATE \textbf{Output:}
          next \changed[fz]{sample $\vec{q}$}
        \algrule
        \STATE $\vec{p_0} \sim \Gau_{n}$
          \hfill \COMMENT{Kick}
        \STATE {$(\seqa, \vec{p}) =
          {\HMCint_n}((\vec{q_0}, \vec{p_0}), U, \epsilon, L)$}
          \hfill \COMMENT{Integrate}
        \IF{$\Uni(0,1)\footnote{$\Uni(0,1)$ is the standard uniform distribution.} < \min \set{1, \frac{\spdf(\vec{q},\vec{p})}{\spdf(\vec{q_0},\vec{p_0})}} $}
        \STATE {\bfseries return} $\seqa$
          \hfill \COMMENT{MH acceptance ratio}
        \ELSE
        \STATE {\bfseries return} $\vec{q_0}$
        \ENDIF
      \end{algorithmic}
    \end{algorithm}
  \end{minipage}
\end{figure}

\begin{proposition}[\citet{bou-rabee_sanz-serna_2018}, Theorem 4.1 and 4.2]
  \label{prop: hmc integrator is volume preserving and reversible}
  The integrator ${\HMCint_n}$ is volume preserving (i.e.~${\HMCint_n}_*\leb_{2n} = \leb_{2n}$) and reversible (i.e.~${\HMCint_n} = \inv{{\HMCint_n}}$) on $\Real^n \times \Real^n$.
\end{proposition}

\begin{proof}
  Let $\phi_k^P, \phi_k^Q:\Real^{2n}\to\Real^{2n}$ be the transition of momentum and position variables with step size $k$ respectively,
  i.e.~$\phi_k^P(\vec{q},\vec{p}) = (\vec{q},\vec{p}-k \grad{U}(\vec{q}))$, and
  $\phi_k^Q(\vec{q},\vec{p}) = (\vec{q} + k \grad{K}(\vec{p}),\vec{p})$.
  Hence, we can write the integrator ${\HMCint_n}$ as the composition $S \circ \phi_{\epsilon/2}^P \circ \phi_{\epsilon}^Q \circ \phi_{\epsilon/2}^P$,
  where $S(\vec{q},\vec{p}) := (\vec{q},-\vec{p})$.

  It is easy to see that
  $\inv{(\phi_k^P)} = S \circ \phi_k^P \circ S$ and
  $\inv{(\phi_k^Q)} = S \circ \phi_k^Q \circ S$.
  Hence,
  $
    \inv{{\HMCint_n}}
    =
    \inv{(\phi_{\epsilon/2}^P)} \circ \inv{(\phi_\epsilon^Q)} \circ \inv{(\phi_{\epsilon/2}^P)} \circ S
    =
    S \circ {\phi_{\epsilon/2}^P} \circ {\phi_\epsilon^Q} \circ {\phi_{\epsilon/2}^P}
    = {\HMCint_n}
  $
  and ${\HMCint_n}$ is reversible.

  Similarly it is easy to see that the \changed[fz]{shear} transformations $\phi_k^P$, $\phi_k^Q$ and momentum flip $S$ preserves measure on $\Real^{2n}$,
  i.e.~$\phi_k^P(D)$, $\phi_k^Q(D)$, $S(D)$ and $D$ have the same measure for all measurable set $D$ in $\Real^{2n}$.
  Hence,
  $
    ({\HMCint_n}_*\leb_{2n})(D)
    =
    \leb_{2n}(\inv{{\HMCint_n}}(D))
    =
    \leb_{2n}({\HMCint_n}(D))
    =
    \leb_{2n}(D)
  $
  and ${\HMCint_n}$ is volume preserving.
\end{proof}

\cref{alg:hmc} shows how HMC generates a sample from the current one $\vec{q_0}$.
It first performs leapfrog steps on $(\vec{q_0},\vec{p_0})$ via the integrator $\HMCint_n$
with a randomly chosen initial momentum $\vec{p_0}$.
The result $(\vec{q},\vec{p})$ of $\HMCint_n$ is then accepted with probability
$\min \set{1, \frac{\spdf(\vec{q},\vec{p})}{\spdf(\vec{q_0},\vec{p_0})}}$.
Note that if Hamiltonian is preserved
(i.e.~$H(\vec{q},\vec{p}) = H(\vec{q_0},\vec{p_0})$),
the acceptance probability is one and the proposal will always be accepted.

A Markov chain $\set{\vec{q_i}}_{i\in\Nat}$ is generated by iterating \cref{alg:hmc}.

\subsubsection{Correctness}
\label{sec:HMC correctness}

The HMC algorithm is only effective if its generated Markov chain $\set{\vec{q_i}}_{i\in\Nat}$ does converge to the target distribution $\tdist$.
Here we consider the typical convergence result of the total variation norm for the probability measure generated.

Formally, we say a Markov chain $\set{\vec{q_i}}_{i\in\Nat}$ converges to the target distribution $\tdist$ on $\Real^n$ if
\[
  \forall \vec{q}\in\Real^n, \quad \lim_{m\to\infty} \norm{Q^m(\vec{q},-)-\tdist} = 0,
\]
where
$Q^m(\vec{q},A)$ is the probability for which the Markov chain is in $A\in\Borel_n$ after $m$ steps starting at $\vec{q}\in\Real^n$ and
$\norm{-}$ denotes the total variation norm on $\Real^n$ (i.e.~$\norm{\mu} := \sup_{A \in \Borel_n} \mu(A) - \inf_{A \in \Borel_n} \mu(A)$).

Here we present the necessary conditions to prove such a result for the HMC algorithm.
Let $Q:\Real^n \times \Borel_n \to\pReal$ be the transition kernel specified by \cref{alg:hmc},
so that $Q(\vec{q},A)$ is the probability for which the next sample returned by \cref{alg:hmc} is in $A \in \Borel_n$ given the current sample is $\vec{q}\in\Real^n$.
We write $Q^m$ to be $m$ compositions of $Q$.
(i.e.~$Q^0(\vec{q},A) := [\vec{q} \in A]$; for $k > 0$, $Q^{k+1}(\vec{q},A) := \int_{\Real^n}{Q^{k}(\vec{q'},A)}\ {Q(\vec{q},\dif \vec{q'})}$).

First, we make sure that $\tdist$ is the invariant distribution of the Markov chain.

\begin{proposition}[\citet{bou-rabee_sanz-serna_2018}, Theorem 5.2]
  $\tdist$ is invariant against $Q$.
\end{proposition}

While showing $\tdist$ is the invariant distribution for the Markov chain is relatively simple,
we would be wrong to think that convergence follows trivially.
In fact, as shown in the following example, the Markov chain can easily be periodic.

\begin{example}[\citet{bou-rabee_sanz-serna_2018}, Example 5.1]
  \label{ex: not always ergodic}
  Consider the case where the target distribution is a (unnormalised) one-dim.~normal distribution.
  In particular say the potential energy is
  $U(q) := q^2/2$.
  Then, the Hamiltonian flow ($H(q,p) = U(q) + K(p) = q^2/2 + p^2/2$)
  is a rotation in the $(q,p)$-plane with period $2\pi$.
  If the duration of the simulation is $\pi$, the exact flow returns $q_1 = -q_0$.
\end{example}

\cm{Find a concrete $L$ and $\epsilon$ for this example.}

There are known conditions
for which HMC converges to the right distribution \cite{schutte1999}.
Here we follow the treatment given by \citet{CancesLS07}.

Results from \cite{Tierney94,DBLP:conf/icfp/BorgstromLGS16} tell us that it is enough to show that the transition kernel $Q$ is \defn{strongly $\tdist$-irreducible}: for all $a$ and $B$, $\tdist(B) > 0$ implies $Q(a,B) > 0$.

\begin{lemma}[\citet{CancesLS07}, Lemma 2 and 3 (Strong irreducibility)]
  \label{lemma: HMC irreducbile}
  Assume $U$ is continuously differentiable, bounded above on $\Real^n$ and
  $\grad{U}$ is globally Lipschitz.
  Then the transition kernel $Q$ is strongly $\tdist$-irreducible.
\end{lemma}

\begin{lemma}[\citet{DBLP:conf/icfp/BorgstromLGS16}, Lemma 33 (Aperiodicity)]
  \label{lemma: HMC aperiodic}
  A strongly $\tdist$-irreducible transition kernel is also $\tdist$-aperiodic.
\end{lemma}

\begin{restatable}[\citet{Tierney94}, Theorem 1 and Corollary 2]{lemma}{tierney}
  \label{lemma: Tieryney}
  If the transition kernel $Q$ with invariant distribution $\tdist$
  is $\tdist$-irreducible and $\tdist$-aperiodic, then
  for all $\vec{q}$, $\lim_{n\to\infty} \norm{Q^n(\vec{q},-)-\tdist} = 0$.
\end{restatable}

\begin{theorem}
  \label{thm: hmc converges}
  If $U$ is continuously differentiable, bounded above on $\Real^n$ and
  $\grad{U}$ is globally Lipschitz,
  the Markov chain generated by iterating \cref{alg:hmc} converges to the target distribution $\tdist$.
\end{theorem}

\subsection{HMC Variants}
\label{sec: hmc variants}

\subsubsection{Reflective/Refractive HMC}

\newcommand{\nextBoundary}{\ensuremath{\mathsf{nextBoundary}}}
\newcommand{\decompose}{\ensuremath{\mathsf{decompose}}}

Reflective/refractive HMC (RHMC) \cite{AfsharD15} is an extension of HMC that improves its behaviour for discontinuous density functions.
Standard HMC is correct for such distributions as well, but the acceptance probability may be very low and convergence extremely slow.

We need to quickly discuss what discontinuities mean in our setting:
In addition to discontinuities of each $U_n: \Real^n \to \Real$ itself, we also regard it as a discontinuity when $\vec q$ leaves the support of $U_n$, since this means that a different branch in the tree representing function is chosen.
The set of these discontinuities is $\partial \support{w}$, i.e. the boundary of the support of the density function.

Fortunately, the extension of RHMC to our nonparametric setting is straightforward.
The algorithm is described in \cref{alg:np-rhmc-integrator}.
The only relevant difference is the need for an $\extend$ call in the algorithm.

The rest of the algorithm is the same as \cite{AfsharD15}:
It uses two additional functions that deal with the discontinuities of $U$: \decompose{} and \nextBoundary.
Just like in \cite{AfsharD15}, we assume that these are given to the algorithm because their implementation depends on the kind of discontinuities in the density function.
In the original paper, they only consider discontinuities that are given by affine subspaces.

The function $\nextBoundary(\vec q, \vec p, T, U)$ takes a position $\vec q \in \Real^n$, a momentum $\vec p \in \Real^n$, a time limit $T > 0$, and family of potential energies $\set{U_n}_{n\in \Nat}$.
It then checks whether a particle starting at $\vec q$ moving with momentum $\vec p$ will hit a discontinuity of $U$ in time $\le T$.
If so, it returns the time $t$ of ``impact'', the position $\vec q_<$ just before the discontinuity and $\vec q_>$ just after the discontinuity.

The function $\decompose(\vec q, \vec p, U)$ takes a position $\vec q$ on the discontinuity, a momentum $\vec p$, and $U$ as before.
It then decomposes the momentum $\vec p$ into a component $\vec p_\parallel$ that is parallel to the discontinuity and $\vec p_\perp$ that is perpendicular to it.

The basic idea of the algorithm is inspired by reflection and refraction in physics.
We simulate the trajectory of a particle according to Hamiltonian dynamics.
When hitting a discontinuity, we compute the potential difference. If the kinetic energy is big enough to overcome it, refraction occurs: the perpendicular component of $\vec p$ is scaled down.
Otherwise, the particle is reflected.

The only difference to the original algorithm in \cite{AfsharD15} is the call to $\extend$.
Why is it necessary?
When hitting a discontinuity (and only then!), we may have to switch to a different branch on the tree representing the density function.
Hence we may have to extend the position $q_>$ just after the discontinuity, which is why we call extend on it.

\newcommand{\randomlyPermute}{\ensuremath{\mathsf{randomlyPermute}}}
\newcommand{\coordIntegrator}{\ensuremath{\mathsf{coordIntegrator}}}

\begin{figure}[t!]
  \centering
  \vspace{-3mm}
  \begin{minipage}{0.49\linewidth}
    \begin{algorithm}[H]
      \caption{NP-RHMC Integrator $\NPRInt$}
      \label{alg:np-rhmc-integrator}
      \begin{algorithmic}
        \STATE {\bfseries Input:}
          current state $(\vec{q_0}, \vec{p_0})$,
          family of potential energies $\set{U_n}_{n\in \Nat}$,
          step size $\epsilon$,
          number of steps {$L$}
        \STATE \textbf{Output:}
          new state $(\vec q, \vec p)$ computed according to Hamiltonian dynamics,
          extended initial state $(\vec q_0, \vec p_0)$
        \algrule
        \STATE $(\vec q,\vec{p}) = (\vec{q_0}, \vec{p_0})$ \hfill \COMMENT{initialise}
        \FOR{$i=0$ {\bfseries to} $L$}
          \STATE $\vec{p}=\vec{p}-\frac{\epsilon}{2}\grad{U_{\len{\vec{q_0}}}}(\seqa)$ \hfill\COMMENT{1/2 momentum step}
          \STATE $t = 0$ \hfill\COMMENT{start of position step}
          \WHILE{$\nextBoundary(\vec q, \vec p, \epsilon - t, U)$ exists}
            \STATE $(t', \vec q_<, \vec q_>) = \nextBoundary(\vec q, \vec p, \epsilon - t, U)$
            \STATE $t = t + t'$
            \STATE $((\vec q', \vec p'), (\vec q_0', \vec p_0')) = \extend((\vec q_>, \vec p), (\vec q_0, \vec p_0), i \epsilon + t, U)$
            \STATE $\Delta U = (U_{\len{\vec q'}}(\vec q') - U_{\len{\vec q_<}}(\vec q_<))$
            \IF{$\|\vec p_\perp\|^2 > 2 \Delta U$}
              \STATE $(\vec p_\parallel, \vec p_\perp) = \decompose(\vec q', \vec p', U)$
              \STATE $\vec p_\perp = \sqrt{\|\vec p_\perp\|^2 - 2 \Delta U} \frac{\vec p_\perp}{\|\vec p_\perp\|}$ \hfill\COMMENT{refraction}
              \STATE $\vec q = \vec q'$
            \ELSE
              \STATE $(\vec p_\parallel, \vec p_\perp) = \decompose(\vec q_<, \vec p, U)$
              \STATE $\vec p_\perp = -\vec p_\perp$\hfill\COMMENT{reflection}
              \STATE $\vec q = \vec q_<$
            \ENDIF
            \STATE $\vec p = \vec p_\perp + \vec p_\parallel$
          \ENDWHILE
          \STATE $\vec q =\vec q + (\epsilon - t)\vec{p}$ \hfill\COMMENT{rest of position step}
          \STATE $\vec{p}=\vec{p}-\frac{\epsilon}{2}\grad{U_{\len{\vec q}}}(\seqa)$ \hfill\COMMENT{1/2 momentum step}
        \ENDFOR
        \STATE $\vec{p} = -\vec{p}$
        \STATE {\bfseries return} $((\seqa, \vec{p}), (\vec{q_0}, \vec{p_0}))$
      \end{algorithmic}
    \end{algorithm}
    \begin{algorithm}[H]
      \caption{$\extend$ for NP-DHMC}
      \label{alg: extend for np-dhmc}
      \begin{algorithmic}
        \STATE {\bfseries Input:}
          current state $(\vec{q},\vec{p})$,
          initial state $(\vec{q_0},\vec{p_0})$,
          time $t$,
          family of potential energies $U = \set{U_n}_{n\in\Nat}$
          family of potential energies $\set{U_n}_{n\in \Nat}$,
          step size $\epsilon$,
          number of steps {$L$}
        \STATE \textbf{Output:}
          extended current state $(\vec q, \vec p)$,
          extended initial state $(\vec q_0, \vec p_0)$
        \algrule
          \WHILE{$\vec{q} \not\in \domain{U_{\len{\vec q}}}$}
            \STATE $x \sim \Gau(0,1)$
            \IF{$\len{q} + 1 \in C$}
              \STATE $y \sim \Gau(0,1)$ \hfill\COMMENT{Gaussian for continuous params}
              \STATE $(x_0, y_0) = (x - t \, y, y)$ \hfill\COMMENT{update to current time $t$}
            \ELSE
              \STATE $y_0 \sim \Lap(0,1)$ \hfill\COMMENT{Laplace for discontinuous ones}
              \STATE $(x_0, y_0) = (x - t \, \sign(y), y)$ \hfill\COMMENT{update to current time $t$}
            \ENDIF
            \STATE $(\vec{q_0},\vec{p_0})  = (\vec{q_0} \concat [x_0], \vec{p_0} \concat [y_0])$
            \STATE $(\vec{q}, \vec{p}) = (\vec{q} \concat [x], \vec{p} \concat [y])$ \hfill\COMMENT{increment dimension}
          \ENDWHILE
          \STATE \textbf{return} $((\vec{q},\vec{p}),(\vec{q_0},\vec{p_0}))$
      \end{algorithmic}
    \end{algorithm}
  \end{minipage}
  \hspace{0.01\linewidth}
  \begin{minipage}{0.49\linewidth}
    \begin{algorithm}[H]
      \caption{NP-DHMC Integrator $\NPDisInt$}
      \label{alg:np-dis-hmc-integrator}
      \begin{algorithmic}
        \STATE {\bfseries Input:}
          current state $(\vec{q_0}, \vec{p_0})$,
          family of potential energies $\set{U_n}_{n\in \Nat}$,
          step size $\epsilon$,
          number of steps {$L$},
          discontinuous coordinates $D$
        \STATE \textbf{Output:}
          new state $(\vec q, \vec p)$ computed according to Hamiltonian dynamics,
          extended initial state $(\vec q_0, \vec p_0)$
        \algrule
        \STATE $(\vec q,\vec{p}) = (\vec{q_0}, \vec{p_0})$ \hfill \COMMENT{initialise}
        \STATE $\vec{q}' = \vec{q}_0$
        \STATE $\vec{p}' = \vec{p}_0$
        \STATE $N = \len{\vec{q}_0}$
        \FOR{$i=0$ {\bfseries to} $L$}
          \STATE $\vec{p}_C = \vec{p}_C - \frac{\epsilon}{2}{\nabla_{\vec q_C} U_N}{(\vec q)}$
          \STATE $\vec{q}_C = \vec{q}_C + \frac{\epsilon}{2} {\vec{p}_C}$
          \FOR{$j \in \randomlyPermute(D)$}
            \IF{$j < \len{\vec q}$}
              \STATE \COMMENT{$\len{\vec q}$ may have changed, so must check $j < \len{\vec q}$}
              \STATE $((\vec q, \vec p), (\vec{q'}, \vec{p'})) = $
              \STATE \qquad$\coordIntegrator((\vec q, \vec p), (\vec{q'}, \vec{p'}), j, i\epsilon, \epsilon)$
            \ENDIF
          \ENDFOR
          \STATE $N = \len{\vec{q}}$
          \STATE $\vec{q}_C = \vec{q}_C + \frac{\epsilon}{2} {\vec{p}_C}$
          \STATE $\vec{p}_C = \vec{p}_C - \frac{\epsilon}{2}{\nabla_{\vec q_C} U_N}{(\vec q)}$
        \ENDFOR
        \STATE $\vec{p} = -\vec{p}$
        \STATE {\bfseries return} $((\seqa, \vec{p}), (\vec{q'}, \vec{p'}))$
      \STATE
      \STATE \textbf{function} $\coordIntegrator((\vec q, \vec p), (\vec{q'}, \vec{p'}), j, t, \epsilon)$
        \STATE $\vec{q^*} = \vec q$
        \STATE $q^*_j = q^*_j + \epsilon \sign(p_j)$
        \STATE $((\vec q^*, \vec p^*), (\vec{q'}^*, \vec{p'}^*)) = \extend((\vec q^*, \vec p^*), (\vec{q'}, \vec{p'}), t, U)$
        \STATE $\Delta U = U(\vec{q^*}) - U(\vec q)$
        \IF{$|p_j| > \Delta U$}
          \STATE $(\vec q, \vec p) = (\vec{q^*}, \vec{p^*})$ \hfill\COMMENT{enough kinetic energy to jump}
          \STATE $(\vec{q'}, \vec{p'}) = (\vec{q'}^*, \vec{p'}^*)$
          \STATE $p_j = p_j - \sign(p_j)\Delta U$
        \ELSE
          \STATE $p_j = -p_j$ \hfill\COMMENT{not enough kinetic energy, reflect}
        \ENDIF
        \STATE {\bfseries return} $((\vec q, \vec p), (\vec{q'}, \vec{p'}))$
      \end{algorithmic}
    \end{algorithm}
  \end{minipage}
\end{figure}

\subsubsection{Laplace Momentum and Discontinuous HMC}

The Hamiltonian Monte Carlo method usually uses Gaussian momentum because it corresponds to the physical interpretation of kinetic energy being $\frac12 \sum_i \vec{p}_i^2$ for a momentum vector $\vec p$.
\citet{NishimuraDL20} propose to use Laplace momentum where the kinetic energy for a momentum vector $\vec p$ is given by $\sum_i |\vec{p}_i|$.
This means that the momentum vector must follow a Laplace distribution, denoted as $\Lap(0,1)$, with density proportional to $\prod_i \exp(-|\vec{p}_i|)$.
Hamilton's equations have to be changed to
\[ \frac{\dif \vec q}{\dif t} = \sign(p), \quad \frac{\dif \vec p}{\dif t} = -\nabla_q U. \]
Note that the time derivative of $\vec q$ only depends on the sign of the $p_i$'s.
Hence, if the sign does not change, the change of $\vec q$ can be computed, irrespective of the intermediate values of $U_{\len{\vec q}}(\vec q)$.
The integrator of discontinuous HMC \cite{NishimuraDL20} takes advantage of this for ``discontinuous parameters'', i.e. parameters that $U$ is not continuous in. Thus it can jump through multiple discontinuities of $U$ without evaluating it at every boundary.

We adapt the integrator from \cite{NishimuraDL20} to NP-HMC.
Following them, we assume for simplicity that each coordinate of the position space either corresponds to a continuous or discontinuous parameter, irrespective of which path is chosen.
The set $C$ records all the continuous parameters and $D = \mathbb N \setminus C$ the discontinuous ones.
We use a Gaussian distribution for the continuous parameters of the momentum vector and a Laplace distribution for the discontinuous parameters.
Our integrator updates the continuous coordinates by half a step size just as before, but then the discontinuous ones are updated coordinate by coordinate, a technique called \emph{operator splitting}.
Afterwards, the continuous coordinates are updated by half a step size again.
Algorithm \ref{alg:np-dis-hmc-integrator} contains all the details.

Again, the main difference to the original algorithm is a call to $\extend$.
Note we also have to modify the $\extend$ function itself (given in \cref{alg: extend for np-dhmc}) because some momentum coordinates have to be sampled from a Laplace distribution, and not a Gaussian as before.

\changed[fz]{We also make the following modification: we update the $q_0$ position to current time $t$ instead of $q$ because this avoids having to re-run the probabilistic program.
If we update $q$, a re-run might be necessary if $q$ changed again after an extension, but for $q_0$ this is not the case because the extended part does not affect the weight.}

\subsection{Efficiency Improvements}
\label{sec:efficiency-improvements}

As touched upon in the main text, our implementation includes various performance improvements compared to the pseudocode presentation of NP-HMC.
\begin{asparaenum}[(i)]
\item The $\extend$ function (\cref{alg:extend}) as presented may seem inefficient.
While it terminates almost surely (thanks to Assumption~\aref{ass:3}{3}),
the expected number of iterations may be infinite.
In practice, however, the density function $w$ will arise from a probabilistic program, such as \cref{intro-program}.
Therefore, to evaluate $w$, it would be natural to run the program.
The length of $\vec q$ returned by $\extend$ is exactly the number of \pythoninline{sample} statements encountered during the program's execution.
In particular, if the program has finite expected running time, then the same is true of $\extend$.

\item On top of that, efficient implementations of NP-HMC will interleave the execution of the program with $\extend$, by gradually extending $\vec q$ (if necessary) at every encountered \pythoninline{sample} statement.
This way, $\extend$ increases the running time only by a small constant factor.

\item \changed[fz]{For this to work, we also make the following modification: we update the $q_0$ position to current time $t$ instead of $q$ because this avoids having to re-run the probabilistic program.
If we update $q$, a re-run might be necessary if $q$ changed again after an extension, but for $q_0$ this is not the case because the extended part does not affect the weight.}

%Similarly, the use of $U_n = -\log w_{\le n}$ and $w_{\le n}(\vec q) = \sum_{k=1}^n \btra(\seqrange{1}{k})$ in the above algorithms does not mean that we actually have to compute sums every time.
\item In a similar vein, we do not have to compute the sum $w_{\le n}(\vec q) = \sum_{k=1}^n \btra(\seqrange{1}{k})$ each time $U_n = -\log w_{\le n}$ is accessed.
By the prefix property, only one of the summands of $w_{\le n}(\vec q)$ is actually nonzero.
Moreover, if $w$ is given by a probabilistic program, then the weight computed during the execution of the program on $\vec q$ is exactly this nonzero summand, assuming that the trace $\vec q$ is long enough for a successful run (which the $\extend$ function ensures).
\item
Another notable way our implementation differs from the algorithm presented above is that it not only extends a trace $\vec q$ in $\extend$ (if necessary), but also trims it (if necessary) to the unique prefix $\vec q'$ of $\vec q$ with positive $w(\vec q')$.
The dimension of $\vec p$ is adjusted accordingly.
This seems to work much better for certain examples, such as the geometric distribution described in \cref{sec:experiements}.
The reason is most likely that the unused suffix (which may have been adapted to the state \changed[lo]{before the current call of $\extend$) is a hindrance when trying to extend to a different state later on.}
\end{asparaenum}

\section{Proof of Correctness}
\label{appendix: correctness}
% !TEX root = ../main.tex

In this section,
we show that the NP-HMC algorithm is correct,
in the sense that the Markov chain generated by iterating \cref{alg:np-hmc}
converges to the target distribution
$\tdist:{A}\mapsto{\frac{1}{Z} \int_{A} w\ d\tmeasure}$
where
$Z := \int_{\traces}w\ d\tmeasure$.

Henceforth, we assume that \changed[cm]{the density function $\btra$ of the target distribution $\tdist$ is tree-representable and} satisfies Assumptions \aref{ass:1}{1}, \aref{ass:2}{2} and \aref{ass:3}{3}.

\subsection{An Equivalent Algorithm}

We write \cref{alg:np-hmc} as the program \pythoninline{NPHMCstep}
(\cref{alg:np-hmc integrator} as \pythoninline{NPint} and \cref{alg:extend} as \pythoninline{extend}) in \cref{python:np-hmc}.
We present
input sample as \pythoninline{q0};
the density function as \pythoninline{w} and
define potential energy $U$, which is a family of partial functions, as a function \pythoninline{U}, such that
\pythoninline{U(n)} is a partial function denoting $U_n$;
step size as \pythoninline{ep}; and
number of steps as \pythoninline{L}.
We also assume the following primitive functions are implemented:
\pythoninline{normal} is the sampling construct in the language which samples a real number from the standard normal distribution $\Gau_1$.
\pythoninline{domain(f)} gives the domain of the partial function \pythoninline{f}.
\pythoninline{pdfN(x,n)} gives the probability density of \pythoninline{x} on the standard \pythoninline{n}-dimensional normal distribution.
\pythoninline{cdfN(x)} gives the cumulative distribution of \pythoninline{x} on the standard normal distribution.
\pythoninline{grad(f,x)} gives the gradient of the partial function \pythoninline{f} at \pythoninline{x} if defined and \pythoninline{None} if not.

The program \pythoninline{NPHMC} generates a Markov chain on $\traces$ by iterating \pythoninline{NPHMCstep}.

\begin{figure*}
  \hspace{0.025\linewidth}
  \begin{minipage}{0.45\linewidth}
\begin{python}[caption={Python code for \pythoninline{NPHMC}},label={python:np-hmc}]
def extend((q,p),(q0,p0),t,U):
  while q not in domain(U(len(q))):
    x0 = normal
    y0 = normal
    x = x0 + t*y0
    y = y0
    q0.append(x0)
    p0.append(y0)
    q.append(x)
    p.append(y)
  return ((q,p),(q0,p0))

def NPint((q0,p0),U,ep,L):
  q = q0
  p = p0
  for i in range(L):
    p = p - ep/2*grad(U(len(q0)),q)
    q = q + ep*p
    ((q,p),(q0,p0)) =
      extend((q,p),(q0,p0),i*ep,U)
    p = p - ep/2*grad(U(len(q0)),q)
  return ((q,p),(q0,p0))

def NPHMCstep(q0,w,ep,L):
  # initialisation
  p0 = [normal for i in range(len(q0))]
  U = lambda n: lambda q:
    -log(sum([w(q[:i]) for i in range(n)]))
  # NP-HMC integration
  ((q,p),(q0,p0)) = NPint((q0,p0),U,ep,L)
  # MH acceptance
  if cdfN(normal) < accept((q,p),(q0,p0),w):
    return supported(q,w)
  else:
    return supported(q0,w)

def NPHMC(q0,w,ep,L,M):
  S = [q0]
  for i in range(M):
    S.append(NPHMCstep(S[i],w,ep,L))
  return S
\end{python}
\begin{python}[caption={Python code for helper functions},label={python:helper}]
# the MH acceptance ratio
def accept((q,p),(q0,p0),w):
  N = len(q)
  N_trunc = lambda q':
    sum([w(q'[:i]) for i in range(N)])
  weight = (N_trunc(q)*pdfN((q,p),2N))/
           (N_trunc(q0)*pdfN((q0,p0),2N))
  return min(1,weight)

# the w-supported prefix of q
def supported(q,w):
  k = 1
  while w(q[:k]) == 0 and k < len(q):
    k += 1
  return q[:k]
\end{python}
  \end{minipage}
  \hspace{0.025\linewidth}
  \begin{minipage}{0.45\linewidth}
\begin{python}[caption={Python code for \pythoninline{eNPHMC}},label={python:enp-hmc}]
def validstate((q0,p0),U,ep,L):
  q = q0
  p = p0
  for i in range(L):
    p = p - ep/2*grad(U,q)
    q = q + ep*p
    if q not in domain(U):
      return False
    p = p - ep/2*grad(U,q)
  return True

def HMCint((q0,p0),U,ep,L):
  q = q0
  p = p0
  for i in range(L):
    p = p - ep/2*grad(U,q)
    q = q + ep*p
    p = p - ep/2*grad(U,q)
  # momentum flip
  p = -p
  return (q,p)

def eNPHMCstep((q0,p0),w,ep,L):
  # initialisation (step 1)
  q0 = supported(q0,w)
  p0 = [normal for i in range(len(q0))]
  U = lambda n: lambda q:
    -log(sum([w(q[:i]) for i in range(n)]))
  # search (step 2)
  while not validstate((q0,p0),U(len(q0)),ep,L):
    x0 = normal
    y0 = normal
    q0.append(x0)
    p0.append(y0)
  # HMC integration (step 3)
  (q,p) = HMCint((q0,p0),U(len(q0)),ep,L)
  # MH acceptance (step 4)
  if cdfN(normal) < accept((q,p),(q0,p0),w):
    return (q,p)
  else:
    return (q0,p0)

def eNPHMC(q0,w,ep,L,M):
  mc = [(q0,0)]
  for i in range(M):
    mc.append(eNPHMCstep(mc[i],w,ep,L))
  # marginalisation
  S = [supported(q,w) for (q,p) in mc]
  return S
\end{python}
  \end{minipage}
\end{figure*}

Instead of a direct proof, we consider an auxiliary program \pythoninline{eNPHMC} \emph{equivalent} to \pythoninline{NPHMC} \changed[lo]{(in the sense of \cref{prop: we can move all sampling to the top of np-hmc})}, which does not increase the dimension dynamically;
instead it finds the smallest $N$ such that all intermediate positions during the $L$ leapfrog steps stay in the domain of $U_N$,
and performs leapfrog steps as in standard HMC.

The program \pythoninline{eNPHMC} is given in \cref{python:enp-hmc},
which iterates \pythoninline{eNPHMCstep} to generate a Markov chain on \emph{states}
and then marginalise it using the helper function \pythoninline{supported} to obtain a Markov chain on $\traces$.
The program \pythoninline{validstate} determines whether
the input state \pythoninline{(q0,p0)} goes beyond the domain of the potential energy \pythoninline{U} in \pythoninline{L} leapfrog steps, and
the program \pythoninline{HMCint} is the leapfrog integrator of the standard HMC algorithm.

\begin{remark}
  Programs in \cref{python:np-hmc,python:enp-hmc,python:helper} are given in Python syntax, but they can be translated into SPCF.
  First, note we can represent pairs and lists using Church encoding as follows:
  \begin{align*}
    \Pair{\typea,\typeb} & := \typea \to \typeb \to (\typea \to \typeb \to \PCFReal) \to \PCFReal &
    \List{\typea} & := (\typea \to \PCFReal \to \PCFReal) \to (\PCFReal \to \PCFReal) \\
    \anbr{\terma,\termb} & \equiv \lambda z .z\,\terma\,\termb &
    [\terma_1,\dots,\terma_\ell] & \equiv \lambda f x .f\,\terma_1(f\,\terma_2 \dots (f\,\terma_\ell\,\PCF{0}))
  \end{align*}
  Hence a state $(\vec{q},\vec{p}) \in \Real^\ell \times \Real^\ell$ can be encoded as a value
  $[\anbr{\PCF{\vec{q}_1},\PCF{\vec{p}_1}},\dots,\anbr{\PCF{\vec{q}_\ell},\PCF{\vec{p}_\ell}}]$
  with type
  $\List{\Pair{\PCFReal,\PCFReal}}$.

  Now we look at all the primitive functions used in the programs.
  It is easy to see that
  \pythoninline{cdfN}, \pythoninline{pdfN} and \pythoninline{log} are analytic functions.
  \pythoninline{len}, \pythoninline{append} and \pythoninline{sum} can be defined on Church lists.
  \pythoninline{grad} can be defined using the simple numerical differentiation method using analytic functions like subtraction and division.
  We can change \pythoninline{domain} in such a way that
  it takes \pythoninline{q} and \pythoninline{w} as inputs and
  tests whether \pythoninline{sum([w(q[:i]) for i in range(len(q))])} is zero (instead of testing whether \pythoninline{q} is in the domain of \pythoninline{U(len(q))}).

  Now we give a formal definition of equivalence.
  We say two SPCF programs are \defn{equivalent} if
  they induce the same value and weight functions,
  as specified in \cref{appendix: value and weight functions}.
\end{remark}

\begin{proposition}
  \label{prop: we can move all sampling to the top of np-hmc}
  \pythoninline{NPHMC} and \pythoninline{eNPHMC} are equivalent.
\end{proposition}

\begin{proof}
  We give an informal explanation here.

  First note that
  \pythoninline{NPHMCstep} is a Markov process on samples,
  and \pythoninline{eNPHMCstep} on states.
  However, it is easy to see that some minor changes to \pythoninline{NPHMCstep} and \pythoninline{NPHMC}
  make \pythoninline{NPHMCstep} a Markov process on states.
  \changed[lo]{Precisely, the following does not alter the meaning of program \pythoninline{NPHMC}:
  \begin{compactenum}[(1)]
    \item  Given a state \pythoninline{(q0,p0)} in \pythoninline{NPHMCstep}, apply \pythoninline{supported} to \pythoninline{q0} at the start of initialisation and
      return the state \pythoninline{(q0,p0)} or \pythoninline{(q,p)} at the MH acceptance step.
    \item In \pythoninline{NPHMC}, add the marginalisation step just like in \pythoninline{eNPHMC}.
  \end{compactenum}}
  Hence, it is enough to show that all steps in programs \pythoninline{NPHMCstep} and \pythoninline{eNPHMCstep} are equivalent, i.e.~they \changed[cm]{give the same weight and value functions}.

  \lo{@Carol: The preceding statement is problematic: the meaning of ``all steps in programs $A$ and $B$ are operationally equivalent'' is unclear.
  Is the following correct? I think it may be too strong.

  First observe that the input \pythoninline{p0} in \pythoninline{eNPHMCstep} is spurious (in that it is ignored).
  So modulo this input and the modification (1) above, \pythoninline{NPHMCstep} and \pythoninline{eNPHMCstep} are functions of the same type.
  The claim is that, given the same trace (meaning the sequence of random draws from \pythoninline{normal}), \pythoninline{NPHMCstep} and \pythoninline{eNPHMCstep} are extensionally equal, i.e., they define the same function.}

  After the modification,
  \pythoninline{NPHMCstep} and \pythoninline{eNPHMCstep} have the same
  initialisation and MH acceptance step.
  So it remains to show that the NP-HMC integration as described in \pythoninline{NPint} behaves the same as searching for a valid initial state (step 2) and HMC integration (step 3) in \pythoninline{eNPHMCstep}.

  In \pythoninline{NPHMCstep}, \pythoninline{((q,p),(q0,p0)) = NPint((q0,p0),U,ep,L)}
  ``integrates'' from the initial state \pythoninline{(q0,p0)}
  until it goes beyond the domain of \pythoninline{U(len(q0))}, at which moment
  it \pythoninline{extend}s.

  While in \pythoninline{eNPHMCstep}, it increments the dimension of the state \pythoninline{(q0,p0)} until
  it has \emph{just} enough dimension to ``integrate'' for time \pythoninline{ep*L} through \pythoninline{U(len(q0))} without going beyond the domain of \pythoninline{U(len(q0))}.
  This ensures the state \pythoninline{(q0,p0)} is safe to be an input to the standard HMC integrator \pythoninline{HMCint}.

  Notice that
  given the same values for the samples,
  the resulting initial state \pythoninline{(q0,p0)} in \pythoninline{NPHMCstep} would be the same as
  that in \pythoninline{eNPHMCstep}.
  Hence, the proposal state \pythoninline{(q,p)} in both programs would be the same.
\end{proof}

\begin{remark}
  The discussion in the proof of \cref{prop: we can move all sampling to the top of np-hmc} argues informally that \pythoninline{NPHMC} and \pythoninline{eNPHMC} are equivalent.
  We outline a formal proof here.
  To show that \pythoninline{NPHMC} and \pythoninline{eNPHMC} are equivalent, we first demonstrate that one program can be obtained form another by a series of meaning-preserving transformations (i.e.~transformations that preserves the value and weight functions).
  After that we show that the convergence result (\cref{thm: np-hmc converges}) is invariant over equivalent programs.
\end{remark}

Since \pythoninline{NPHMC} and \pythoninline{eNPHMC} are equivalent,
it is enough to show that \pythoninline{eNPHMC} is correct, i.e.~generates a Markov chain that converges to the target distribution.
We present a three-step proof.
\begin{compactenum}[1.]
  \item We first identify the invariant distribution $\sdist$ of the Markov chain $\chain$ generated by iterating \pythoninline{eNPHMCstep}. (\cref{eq: defn of state distribution})
  \item
  We then show that the \emph{marginalised} chain $\set{f\stateterm{i}}_{i\in\Nat}$ is invariant under the target distribution $\tdist$, where
  \changed[fz]{$f(\vec q, \vec p)$ is the unique prefix of $\vec q$ that has positive weight according to $w$.}
  (\cref{thm: marginalised distribution is the target distribution})
  \item Finally, we show this chain converges for a small enough step size $\epsilon$. (\cref{thm: np-hmc converges})
\end{compactenum}

\subsection{Invariant Distribution}

By iterating \pythoninline{eNPHMCstep}, a Markov chain $\chain$ is generated.
We now analyse this Markov chain by studying its invariant distribution $\sdist$ and
transition kernel.

% measure space
Let $(\states,\Sigma_{\states},\smeasure)$ be the \defn{state space}
where
$\states := \biguplus_{n\in\Nat} (\Real^n\times \Real^n)$,
$\Sigma_\states := \set{\biguplus_{n\in\Nat} U_n \mid U_n \in  \Borel_{2n}}$ and
$\smeasure(\biguplus_{n\in\Nat} U_n) := \sum_{n\in\Nat} (\Gau_n\times \Gau_{n})(U_n)$.
It is easy to see that all output states in \pythoninline{eNPHMCstep}, and hence all elements of the Markov chain, is in $\states$.

However not all states have a positive weight.
In fact not even the union of the support of invariant distributions of the fixed dimension HMC on each of the truncations works.
This is because if \pythoninline{eNPHMCstep} returns $(\vec{q},\vec{p}) \in \Real^{2k}$, then it cannot return states of the form $(\vec{q}\concat \vec{q'},\vec{p} \concat\vec{p'}) \in \Real^{2n}$, which is a valid returning state for the fixed dimension HMC.
Hence we define a subset of states which precisely capture all possible returning states of \pythoninline{eNPHMCstep},
and define a distribution on it.

We say a state $(\vec{q},\vec{p})$ is {$(\epsilon, L)$-\defn{valid} (or simply \defn{valid} whenever the parameters $\epsilon$ and $L$ are clear from the context)} if
a particle starting from the state $(\vec{q},\vec{p})$
does not ``fall beyond'' the domain of $U_{\len{\vec{q}}} := -\log \trunc{\len{\vec{q}}} $
in the course of $L$ discrete leapfrog steps of size $\epsilon$,
and the states $(\vec{q}^{1\dots k},\vec{p}^{1\dots k})$ are not {$(\epsilon, L)$-valid} for all $k < n$.

Let $\validstates$ denote the set of all valid states
and $\validstates_n := \validstates \cap (\Real^n \times \Real^n)$ denote the the set of all $n$-dimension valid states.
The program \pythoninline{validstate} verifies valid states, i.e~
\pythoninline{validstate} always returns True when the input state is valid.

Let $\sdist$ be a distribution on $\states$ with density
% \fz{shouldn't this be just density? It's not necessarily a probability density function}
$\spdf$ (with respect to $\smeasure$) given by
\begin{align} \label{eq: defn of state distribution}
  \spdf(\vec{q},\vec{p}) :=
  \MyCase{
    \frac{1}{Z} \trunc{\len{\vec{q}}}(\vec{q})
  }{(\vec{q},\vec{p})\in \validstates}{0}
\end{align}
% Thus, the probability of a measurable set $X$ on $\states$ given by $\sdist$ is
% \[
%   \sdist (X) =
%   \int_X
%   [(\vec{q},\vec{p}) \in \validstates] \cdot
%   \frac{1}{Z} \sum_{n=1}^{\len{\vec{q}}} w(\vec{q}^{1\dots n})
%   \
%   \smeasure(d(\vec{q},\vec{p}))
% \]
Since the the position component of all valid states must have a $w$-supported prefix,
the set of valid states can be written as
\ifonecolumn
\[
  \validstates =
  \bigcup_{n=1}^{\infty}
  \bigcup_{m=n}^{\infty}
  \{
    (\vec{q}\concat\vec{x},\vec{y}) \in \validstates_m \mid
    \vec{q} \in \nsupport{w}{n},
    \vec{x} \in \Real^{m-n}, \vec{y} \in \Real^m
  \},
\]
\else
\begin{align*}
  \validstates =
  \bigcup_{n=1}^{\infty}
  \bigcup_{m=n}^{\infty}
  \{ &
    (\vec{q}\concat\vec{x},\vec{y}) \in \validstates_m \mid \\
    &\quad
    \vec{q} \in \nsupport{w}{n},
    \vec{x} \in \Real^{m-n}, \vec{y} \in \Real^m
  \},
\end{align*}
\fi
and hence the distribution $\sdist$ can be written as
\ifonecolumn
\begin{align} \label{eq: state distribution}
  \sdist: X & \mapsto
  \expint{X}{
  [(\vec{q},\vec{p}) \in \validstates] \cdot
  \frac{1}{Z} \trunc{\len{\vec{q}}}(\vec{q})}
  {\smeasure}{(\vec{q},\vec{p})}
  =
  \expint{X}{
  [(\vec{q},\vec{p}) \in \validstates] \cdot
  \frac{1}{Z} \sum_{n=1}^{\len{\vec{q}}} w(\vec{q}^{1\dots n})}
  {\smeasure}{(\vec{q},\vec{p})} \nonumber \\
  & =
  \sum_{n=1}^\infty
  \sum_{m=n}^\infty
  \expint{\Real^n}{
  \expint{\Real^{m-n}}{
  \expint{\Real^m}{
  [(\vec{q}\concat\vec{x},\vec{y}) \in X \cap \validstates_m] \cdot
  \frac{1}{Z} w(\vec{q})
  }{\Gau_m}{\vec{y}}
  }{\Gau_{m-n}}{\vec{x}}
  }{\Gau_n}{\vec{q}}
\end{align}
\else
\begin{align} \label{eq: state distribution}
  & \sdist: X \mapsto \nonumber \\
  &
  \expint{X}{
  [(\vec{q},\vec{p}) \in \validstates] \cdot
  \frac{1}{Z} \sum_{n=1}^{\len{\vec{q}}} w(\vec{q}^{1\dots n})}
  {\smeasure}{(\vec{q},\vec{p})} \nonumber \\
  % & =\sum_{n=1}^\infty
  % \sum_{m=n}^\infty
  % \int_{\Real^n}
  % \int_{\Real^{m-n}}
  % \int_{\Real^m}
  % [(\vec{q}\concat\vec{x},\vec{y}) \in X \cap \validstates_m] \cdot
  % \frac{1}{Z} w(\vec{q}) \cdot \pdfGau_m(\vec{y})
  % \
  % \leb_m(d\vec{y})
  % \Gau_{m-n}(d\vec{x})
  % \Gau_n(d\vec{q}) \\
  & =\sum_{n=1}^\infty
  \sum_{m=n}^\infty
  \int_{\Real^n}
  \int_{\Real^{m-n}}
  \int_{\Real^m}
  \nonumber\\
  & \qquad\qquad\quad
  [(\vec{q}\concat\vec{x},\vec{y}) \in X \cap \validstates_m] \cdot
  \frac{1}{Z} w(\vec{q})
  \nonumber\\
  & \qquad\qquad\qquad\qquad\quad
  \Gau_m(\dif\vec{y})
  \Gau_{m-n}(\dif\vec{x})
  \Gau_n(\dif\vec{q}).
\end{align}
\fi
We claim that $\sdist$ is the \defn{invariant distribution} of the Markov chain determined by
\pythoninline{eNPHMCstep}.
The rest of this subsection is devoted to a proof of the claim.

% definition of transition kernel
% the following seems too technical
For any state $(\vec{q},\vec{p}) \in \states$,
we write $\church{(\vec{q},\vec{p})}$ to be the term $[\anbr{\PCF{\vec{q}_1},\PCF{\vec{p}_1}},\dots,\anbr{\PCF{\vec{q}_{\len{\vec{q}}}},\PCF{\vec{p}_{\len{\vec{q}}}}}]$ of type $\List{\Pair{\PCFReal,\PCFReal}}$.
Take a SPCF term $\terma$ of type
$\set{x:\List{\Pair{\PCFReal,\PCFReal}} } \vdash \terma:\List{\Pair{\PCFReal,\PCFReal}}$.
We define a function $v_{\terma} :\states \times \traces \to \states$ such that
$\church{v_{\terma}({\vec{s}},\trace)} = \valuefn_{\terma[\church{\vec{s}}/x]}(\trace)$.
Then,
the \defn{transition kernel}
$\transkernel{\terma}: \states \times \Sigma_{\states} \rightarrow \states$ of $\terma$ given by
\begin{align*}
  \transkernel{\terma} ({\vec{s}},U) := \shortint{\inv{v_{\terma}({\vec{s}},-)}(U)}{\weightfn_{\terma[\church{\vec{s}}/x]}}{\tmeasure}.
\end{align*}
returns the probability of $\terma$ returning a state in $U$ given the input $\vec{s}$.

We say $\terma$ leaves the distribution $\mu$ on $\states$ invariant if
for all $U \in \Sigma_\states$,
$\expint{\states}{\transkernel{\terma}(\vec{s},U)}{\mu}{\vec{s}}
=
\mu(U).$

\subsubsection{Initialisation and Search (Steps 1 and 2)}

Given $(\vec{q_0},\vec{p_0}) \in \validstates$ and $X\in \Sigma_{\states}$,
where $w(\vec{q_0}^{1\dots n}) > 0$,
the initialisation (step 1) of \pythoninline{eNPHMCstep} returns
a pair of the $w$-supported prefix of $\vec{q_0}$ and a randomly drawn momentum.
Hence, its transition kernel $\transkernel{1}$ is given by
$\transkernel{1}((\vec{q_0},\vec{p_0}),X) := \expint{\traces}{[(\vec{q_0}^{1\dots n},\vec{t}) \in X]}{\tmeasure}{\vec{t}}$.
Note that
$\vec p_0$ (of the input state $(\vec q_0, \vec p_0)$) is ignored by \pythoninline{eNPHMCstep}.

If the input state $(\vec{q_0},\vec{p_0})$ is not a valid state,
we have $\transkernel{1}((\vec{q_0},\vec{p_0}),X) = 0$.
This is required for technical reasons but is excluded in the program \pythoninline{eNPHMCstep} for ease of readability.
At it stands in \cref{python:enp-hmc}, \pythoninline{eNPHMCstep} does not care whether the input state is valid as long as it has a prefix which is $\btra$-supported.
To define such a transition kernel for \pythoninline{eNPHMCstep},
we can simply call \pythoninline{validstate} on the input state at the start of initialisation
and fail this execution if the input state is not valid.

After that,
given $(\vec{q_0},\vec{p_0}) \in \states$ and $X\in \Sigma_{\states}$
where $w(\vec{q_0}^{1\dots n}) > 0$,
step 2 of \pythoninline{eNPHMCstep} searches for a valid state by repeating drawing from the standard normal distribution.
We can write its transition kernel $\transkernel{2}$ as
$\transkernel{2}((\vec{q_0},\vec{p_0}),X) :=
\expint{\traces}{[(\vec{q_0}\concat\vec{t}^{\mathsf{odd}},\vec{p_0} \concat \vec{t}^{\mathsf{even}}) \in X \cap \validstates]}{\tmeasure}{\vec{t}}$
where
$\vec{t}^{\mathsf{odd}}$ and $\vec{t}^{\mathsf{even}}$ are subsequences of $\vec{t}$ containing the values of odd and even indexes respectively.

For any $X \in \Sigma_{\traces}$,
the (combined) transition kernel $\transkernel{1,2}$ of steps 1 and 2 of \pythoninline{eNPHMCstep} is given by
\begin{align*}
  \transkernel{1,2}((\vec{q_0},\vec{p_0}),X)
  & =
  \expint{\traces}{
  \expint{\traces}{
  [(\vec{q_0}^{1\dots n}\concat\vec{t'}^{\mathsf{odd}},\vec{t} \concat \vec{t'}^{\mathsf{even}}) \in X \cap \validstates]
  }{\tmeasure}{\vec{t'}}
  }{\tmeasure}{\vec{t}} \\
  & =
  \expint{\Real^n}{
  \sum_{m=n}^{\infty}
  \expint{\Real^{m-n}}{
  \expint{\Real^{m-n}}{
  [(\vec{q_0}^{1\dots n}\concat\vec{t''},\vec{t} \concat \vec{t'}) \in X \cap \validstates]
  }{\Gau_{m-n}}{\vec{t''}}
  }{\Gau_{m-n}}{\vec{t'}}
  }{\Gau_n}{\vec{t}} \\
  & =
  \sum_{m=n}^\infty
  \expint{\Real^m}{
  \expint{\Real^{m-n}}{
  [(\vec{q_0}^{1\dots n} \concat \vec{x}, \vec{y})\in X\cap \validstates]
  }{\Gau_{m-n}}{\vec{x}}
  }{\Gau_m}{\vec{y}}
\end{align*}
if $(\vec{q_0},\vec{p_0}) \in \validstates$;
and $\transkernel{1,2}((\vec{q_0},\vec{p_0}),X) = 0 $ otherwise.

\begin{proposition}
  \label{prop: step 1 and 2 is probabilistic}
  The transition kernel is probabilistic,
  i.e.~
  $\transkernel{1,2}((\vec{q_0},\vec{p_0}),\states) = \transkernel{1,2}((\vec{q_0},\vec{p_0}),\validstates) = 1$ for any valid state
  $(\vec{q_0},\vec{p_0}) \in \validstates$.
\end{proposition}

\begin{proof}
  Let $(\vec{q_0},\vec{p_0}) \in \validstates$.
  We can see $\transkernel{1,2}((\vec{q_0},\vec{p_0}), -)$ as the value measure of steps 1 and 2 of \pythoninline{eNPHMCstep} (with the initial states substituted by $\church{(\vec{q_0},\vec{p_0})}$) which does not contain $\Score{-}$ as a subterm.
  Moreover, Assumption \aref{ass:3}{3} ensures step 2 almost always terminates and returns a valid state.
  Hence, \cref{prop: AST SPCF term gives probability measure} tells us that $\transkernel{1,2}((\vec{q_0},\vec{p_0}), -)$ is probabilistic and
  $\transkernel{1,2}((\vec{q_0},\vec{p_0}),\states) = \transkernel{1,2}((\vec{q_0},\vec{p_0}),\validstates) = 1$.
\end{proof}

% \lo{Intuitively, $\transkernel{1,2}((\vec{q_0},\vec{p_0}),\states) = 1$ follows from Proposition~\ref{prop:ast programs are aed}.
% It is not clear to me why $\transkernel{1,2}((\vec{q_0},\vec{p_0}),\validstates) = 1$ follows from $\transkernel{1,2}((\vec{q_0},\vec{p_0}),\states) = 1$.}
% \cm{It is because after step 2 in \pythoninline{eNPHMCstep}, we ensured that the state \pythoninline{(q0,p0)} is valid. So,
% $\transkernel{1,2}((\vec{q_0},\vec{p_0}),\validstates) = \transkernel{1,2}((\vec{q_0},\vec{p_0}),\states) = 1$. }

\begin{proposition}
  \label{prop: np-hmc-equivalent step 12 inv}
  $\sdist$ is invariant {with respect to}
  step 1 and 2 of \pythoninline{eNPHMCstep}.
\end{proposition}

\begin{proof}
  %\lo{NOTATION. We can save some space in the following by renaming $(\vec{q_0},\vec{p_0})$ to $(\vec{q},\vec{p})$: I don't think the subscript 0 has any suggestive benefit.}

  We aim to show:
  \(\expint{\states}{\transkernel{1,2}((\vec{q_0},\vec{p_0}),X)}{\sdist}{(\vec{q_0},\vec{p_0})} =
  \sdist(X)\)
  for any measurable set $X \in \Sigma_{\states}$.
  \begin{calculation}
    \displaystyle
    \expint{\states}{\transkernel{1,2}((\vec{q_0},\vec{p_0}),X)}{\sdist}{(\vec{q_0},\vec{p_0})}
    =
    \expint{\validstates}{\transkernel{1,2}((\vec{q_0},\vec{p_0}),X)}{\sdist}{(\vec{q_0},\vec{p_0})}
    \step[=]{
      \cref{eq: state distribution}, definition of $\transkernel{1,2}$ and
      writing $(\vec{q_0},\vec{p_0})\in \validstates$ as $(\vec{q} \concat \vec{x} , \vec{y})$ where
      $\vec{q} \in \support{w}$
    }
    \displaystyle
    \sum_{n=1}^\infty
    \sum_{m=n}^\infty
    \int_{\Real^n}
    \int_{\Real^{m-n}}
    \int_{\Real^m}
    \bigg(
      \sum_{k=n}^\infty
      \expint{\Real^k}{
      \expint{\Real^{k-n}}
      {[(\vec{q} \concat \vec{x'} , \vec{y'})\in X\cap \validstates]}
      {\Gau_{k-n}}{\vec{x'}}
      }
      {\Gau_k}{\vec{y'}}
    \bigg)
    \cdot \\
    \qquad\qquad\qquad\qquad\qquad\qquad\
    \displaystyle
    \bigg(
      [(\vec{q}\concat\vec{x},\vec{y}) \in \validstates]\cdot
      \frac{1}{Z} w(\vec{q})
    \bigg)
    \
    \Gau_{m}(\dif\vec{y})
    \Gau_{m-n}(\dif\vec{x})
    \Gau_{n}(\dif\vec{q})
    \step[=]{ Rearranging (allowed because everything is nonnegative) }
    \displaystyle
    \sum_{n=1}^\infty
    \sum_{k=n}^\infty
    \int_{\Real^n}
    \int_{\Real^{k-n}}
    \int_{\Real^k}
    [(\vec{q} \concat \vec{x'} , \vec{y'})\in X\cap \validstates]\cdot
    \frac{1}{Z} w(\vec{q})\\
    \qquad\quad\
    \displaystyle
    \bigg(
      \sum_{m=n}^\infty
      \expint{\Real^{m-n}}{
      \expint{\Real^m}
      {[(\vec{q}\concat\vec{x},\vec{y}) \in \validstates]}
      {\Gau_{m}}{\vec{y}}
      }
      {\Gau_{m-n}}{\vec{x}}
    \bigg)
    \
    \Gau_k(\dif\vec{y'})
    \Gau_{k-n}(\dif\vec{x'})
    \Gau_{n}(\dif\vec{q})
    \step[=]{Definition of $\transkernel{1,2}$ where $(\hat{\vec{q}},\hat{\vec{p}})$ is an arbitrary valid state such that $\hat{\vec{q}}^{1\dots n} = \vec{q}$}
    \displaystyle
    \sum_{n=1}^\infty
    \sum_{k=n}^\infty
    \expint{\Real^n}{
    \expint{\Real^{k-n}}{
    \expint{\Real^k}{
    [(\vec{q} \concat \vec{x'} , \vec{y'})\in X\cap \validstates]\cdot
    \frac{1}{Z} w(\vec{q})\cdot
    \transkernel{1,2}((\hat{\vec{q}},\hat{\vec{p}}),\validstates)
    \
    }{\Gau_k}{\vec{y'}}
    }{\Gau_{k-n}}{\vec{x'}}
    }{\Gau_{n}}{\vec{q}}
    \step[=]{
      Definition of $\spdf$ and
      \cref{prop: step 1 and 2 is probabilistic} for some valid state $(\hat{\vec{q}},\hat{\vec{p}})$
    }
    \displaystyle
    \shortint{X}{\spdf}{\smeasure}
  \end{calculation}
\end{proof}

\subsubsection{Integration and Acceptance (Steps 3 and 4)}

Let $(\vec{q_0},\vec{p_0}) \in \states$ and $X\in \Sigma_{\states}$.
Now we check that the HMC integration (step 3) and acceptance (step 4) preserve the invariant distribution $\sdist$.

Similar to HMC, the transition kernel for steps 3 and 4 is given by
\[
  \transkernel{3,4}((\vec{q_0},\vec{p_0}),X)
  =
  \MyCase{\alpha(\vec{q_0},\vec{p_0})\cdot[\HMCint_{\len{\vec{q_0}}}(\vec{q_0},\vec{p_0}) \in X] + (1-\alpha(\vec{q_0},\vec{p_0}))\cdot [(\vec{q_0},\vec{p_0}) \in X]}{(\vec{q_0},\vec{p_0}) \in \validstates}
  {0}
\]
where
$\alpha(\vec{q_0},\vec{p_0}) = \min \set{
  1,
  \frac
    {\trunc{N}(\vec{q})\cdot \pdfGau_{2N}(\vec{q},\vec{p}) }
    {\trunc{N}(\vec{q_0})\cdot \pdfGau_{2N}(\vec{q_0},\vec{p_0}) }
}$
for
$N = \len{\vec{q_0}}$ and
$(\vec{q},\vec{p})= \HMCint_{N}(\vec{q_0},\vec{p_0})$.

\begin{proposition}
  \label{prop: hmc integrator is volume preserving and reversible on valid states}
  The HMC integrator ${\HMCint_n}$ with respect to the potential energy $U_n$ is
  volume preserving with respect to $\leb_{2n}$ (i.e.~${\HMCint_n}_*\leb_{2n} = \leb_{2n}$) and
  reversible (i.e.~${\HMCint_n} = \inv{{\HMCint_n}}$) on $\validstates_n$.
\end{proposition}

\begin{proof}
  Since measurable subsets of and states in $\validstates_n$ are also in the $n$-dimension Euclidean Space,
  and $\HMCint_n$ always map valid states to valid states,
  \cref{prop: hmc integrator is volume preserving and reversible} is sufficient.
\end{proof}

% \fz{I'm confused as to what measures are the base measures here. \cref{prop: hmc integrator is volume preserving and reversible} talks about the Lebesgue measure. But the last proposition talks about $\Gau_n \times \leb_n$.}

\begin{proposition}
  \label{prop: np-hmc-equivalent step 34 inv}
  $\sdist$ is invariant against integration and acceptance (steps 3 and 4) of \pythoninline{eNPHMCstep}.
\end{proposition}

\begin{proof}
  We aim to show:
  $\expint{\states}{\transkernel{3,4}(x,X)}{\sdist}{x} = \sdist(X)$
  for all $X \in \Sigma_{\states}$.
  By \cref{prop: hmc integrator is volume preserving and reversible on valid states},
  for all $n$, HMC integrator $\HMCint_n$ is volume preserving against $\leb_{2n}$ and reversible on $\validstates_n$. Hence, we have
  \begin{align*}
    &\expint{\states}{\transkernel{3,4}(x,X)}{\sdist}{x}
    =
    \expint{\validstates}{\transkernel{3,4}(x,X)}{\sdist}{x}
    =
    \sum_{n=1}^{\infty} \expint{\validstates_n}{\transkernel{3,4}(x,X)\cdot \spdf(x)}{(\Gau_n\times\Gau_{n})}{x} \\
    &=
    \shortint{X}{\spdf}{\smeasure}
    +
    \sum_{n=1}^{\infty}
    \bigg(
      \expint{\validstates_n}{[\HMCint_n(x) \in X \cap \validstates_n] \cdot \alpha(x) \cdot\spdf(x)
      \cdot \pdfGau_{2n}(x)}{\leb_{2n}}{x} \\
    & \qquad\qquad\qquad\qquad\qquad
      -
      \expint{\validstates_n}{[x \in X \cap \validstates_n]\cdot \alpha(x) \cdot\spdf(x)
      \cdot \pdfGau_{2n}(x)}{\leb_{2n}}{x}
    \bigg)
  \end{align*}
  The second and third integrals are the same since
  the pushforward measure of $\leb_{2n}$ along the integrator $\HMCint_n$ is the same as $\leb_{2n}$ ($\HMCint_n$ is volume preserving on $\validstates_n$) for all $n$ and
  $\alpha(x)\cdot\spdf(x)\cdot \pdfGau_{2n}(x) = \alpha(\HMCint_n(x))\cdot\spdf(\HMCint_n(x))\cdot \pdfGau_{2n}(\HMCint_n(x))$ for all $x \in \validstates_n$ (all $\HMCint_n$ are reversible on $\validstates_n$).
\end{proof}

Since the transition kernel $P$ of \pythoninline{eNPHMCstep} is the composition of $\transkernel{1,2}$ and $\transkernel{3,4}$,
i.e.~$P(x,X) := \int_{\states} \transkernel{3,4}(x',X)\ \transkernel{1,2}(x,\dif x') $ for $x \in \states$ and $X \in \Sigma_{\states}$,
and both $\transkernel{1,2}$ and $\transkernel{3,4}$ are invariant against $\sdist$ (\cref{prop: np-hmc-equivalent step 12 inv,prop: np-hmc-equivalent step 34 inv}),
we conclude with the following lemma.

\begin{lemma}
  \label{lemma: pi is the invariant distribution of np-hmc}
  $\sdist$ is the invariant distribution of the Markov chain generated by iterating \pythoninline{eNPHMCstep}.
\end{lemma}

\subsection{Marginalised Markov Chains}

It is important to notice that the Markov chain
$\set{(\vec{q_i},\vec{p_i})}_{i\in\Nat} $
generated by iterating \pythoninline{eNPHMCstep}
with invariant distribution $\sdist$
is \emph{not} the samples we are seeking.
The chain we are in fact interested in is
the \emph{marginalised} chain
$\set{f(\vec{q_i},\vec{p_i})}_{i\in\Nat}$
where the measurable\footnote{For any measurable set $A \in \Sigma_{\traces}$,
  $f^{-1}(A) = \big(\bigcup_{n=1}^\infty \bigcup_{m=n}^\infty ((A \cap \Real^n) \times \Real^{m-n})\times \Real^m\big) \cap \validstates$ is measurable in $\states$.}
function $f$ finds the prefix of $\vec{q}$ which is $w$-supported, formally defined as
\begin{align*}
  {f}: \quad {\validstates} & \longrightarrow{\traces} \\
  {(\vec{q},\vec{p})} & \longmapsto
    \vec{q}^{1\dots n} \quad
    \text{for } \vec{q}^{1\dots n} \in \support{w}.
\end{align*}
This function is realised by the \pythoninline{supported} program in \cref{python:helper}.

In this section we show that this marginalised chain has the target distribution $\tdist$ as its invariant distribution.
Let $Q: \support{w} \times \Sigma_{\traces} \to \pReal$ be the transition kernel of this marginalised chain.
We can write it as
$Q(f(x), A) = P(x, \inv{f}(A))$
for $x \in \validstates$ and $A \in \Sigma_{\traces}$.

\begin{remark}
  In the standard HMC algorithm, the function $f$ would simply be the first projection,
  and it is trivial to check that the pushforward of the invariant distribution along the first projection is exactly the target distribution.
  Hence this step tends to be skipped in the correctness proof of HMC \cite{Neal2011,bou-rabee_sanz-serna_2018}.
\end{remark}

\begin{lemma}
  \label{lemma: relationships between pi and pi_n}
  Writing
  $\validstates_{\leq n} := \bigcup_{k=1}^n \validstates_k$,
  we let $\sdist_n$ be a probability distribution on measurable space
  $(\Real^{2n}, \Borel^{2n},\Gau_{2n})$ given by
  \[
    \sdist_n(X) := \expint{X}{\frac{1}{Z_n} \trunc{n}(\vec{q})}{\Gau_{2n}}{(\vec{q},\vec{p})}
    \qquad
    \text{where }
    Z_n := \shortint{\Real^n}{\trunc{n}}{\Gau_n}
    \text{ and }
    X \in \Borel_{2n}.
  \]
  % \fz{Wouldn't it make more sense to consider the pdf wrt the standard Gaussian?}
  \begin{compactenum}[(1)]
    \item
      $\sdist(\states \setminus {\validstates_{\leq n}}) \to 0$ as $n\to \infty$.
    \item
      For $m \geq n$,      $Z_n\cdot\sdist_n = Z_m\cdot e^{(m,n)}_* \sdist_m $ on $\validstates_n$
      where
      $e^{(m,n)} : \Real^m \times \Real^{m} \to \Real^n \times \Real^{n}$ with
      $e^{(m,n)}(\vec{q},\vec{p}) = (\vec{q}^{1\dots n} ,\vec{p}^{1\dots n}) $.
    \item
      $Z\cdot\sdist = Z_n\cdot g^{(n)}_*\sdist_n$ on ${\validstates_{\leq n}}$ where
      $g^{(n)} : \Real^n \times \Real^{n}\partialto {\validstates_{\leq n}}$ such that
      $g^{(n)}(\vec{q},\vec{p}) = (\vec{q}^{1\dots k} ,\vec{p}^{1\dots k}) \in {\validstates_{\leq n}}$.
  \end{compactenum}
\end{lemma}

\begin{proof}
  \begin{compactenum}[(1)]
    \item
      $\sdist$ is an invariant distribution,
      and hence it is probabilistic.
      The sum
      $\sum_{n=1}^{\infty} \sdist(\validstates_n)$
      which equals
      $\sdist(\bigcup_{n=1}^{\infty} \validstates_n) = \sdist(\validstates)$ must converge.
      Hence
      $
      \sdist(\states \setminus {\validstates_{\leq n}})
      = \sum_{i=n+1}^{\infty} \sdist(\validstates_i)
      \to 0
      $ as $n\to \infty$.

    \item Simple to show.

    \item Let $X$ be a measurable subset of ${\validstates_{\leq n}}$.
    Then,
    \begin{align*}
      Z\cdot \sdist(X)
      & = \sum_{k=1}^n Z_k \cdot \sdist_k(X\cap \validstates_k) = Z_n \sum_{k=1}^n e^{(n,k)}_* \sdist_n (X\cap \validstates_k) \\
      & = Z_n \cdot \sdist_n (\bigcup_{k=1}^n
          \set{ (\vec{q},\vec{p}) \in \Real^{2n} \mid (\vec{q}^{1\dots k},\vec{p}^{1\dots k}) \in X\cap \validstates_k}) \\
      & = Z_n \cdot g^{(n)}_*\sdist_n (X).
    \end{align*}
  \end{compactenum}
\end{proof}

\invariant*
% \begin{lemma}
%   \label{lemma: marginalised distribution is the target distribution}
%   $\tdist$ is the invariant distribution of the Markov chain generated by iterating \cref{alg:np-hmc}.
% \end{lemma}

\begin{proof}
  For any $A \in \Sigma_{\traces}$,
  if (1) $\tdist = f_*\sdist$ on $\traces$
  and (2) $\tmeasure = f_*\smeasure$ on $\support{w}$,
  then
  \begin{align*}
    \tdist(A) & = f_*\sdist(A)
    = \expint{\states}{P(x, \inv{f}(A))}{\smeasure}{x} \tag{\cref{lemma: pi is the invariant distribution of np-hmc}} \\
    & = \expint{\validstates}{P(x, \inv{f}(A))}{\smeasure}{x}
    = \expint{\validstates}{Q(f(x), A)}{\smeasure}{x} \\
    & = \expint{\support{\btra}}{Q(q, A)}{f_*\smeasure}{q}
    = \expint{\support{\btra}}{Q(q, A)}{\tmeasure}{q}
    = \expint{\traces}{Q(q, A)}{\tmeasure}{q}.
  \end{align*}
  Hence it is enough to show (1) and (2).

  \begin{compactenum}[(1)]
    \item
      Let $A \subseteq \Real^n$ be a measurable set on $\traces$
      and $\delta > 0$.
      Then partitioning
      $\inv{f}(A) = \set{(\vec{q},\vec{p}) \in \validstates
        \mid
      \vec{q}^{1\dots n} \in A}$
      using $\validstates_k$, we have {for sufficiently large $m$},
      \begin{align*}
        f_*\sdist(A)
        & = \sdist\left(\bigcup_{k=1}^m \inv{f}(A)\cap \validstates_k\right)
          + \sdist\left(\bigcup_{k=m+1}^\infty \inv{f}(A)\cap \validstates_k\right) \\
        & < \frac{Z_m}{Z} \cdot
          g^{(m)}_*\sdist_m\left(\bigcup_{k=1}^m \inv{f}(A)\cap \validstates_k\right) + \delta
        \tag{by \cref{lemma: relationships between pi and pi_n} (1) and (3)} \\
        & \leq \frac{Z_m}{Z} \cdot
          \sdist_m(A\times\Real^{m-n} \times\Real^m) + \delta \\
        & = \tdist(A) + \delta.
      \end{align*}
      For any measurable set $A \in \Sigma_{\traces}$,
      we have
      $
      f_*\sdist(A)
      =
      \sum_{n=1}^\infty f_*\sdist(A\cap \Real^n)
      \leq
      \sum_{n=1}^\infty \tdist(A\cap \Real^n)
      =
      \tdist(A)$.
      Since both $\tdist$ and $\sdist$ are {probability} distributions, we also have
      $\tdist(A) = 1 - \tdist(\traces \setminus A)
      \leq 1 - f_*\sdist(\traces \setminus A)
      = 1 - (1- f_*\sdist(A))
      = f_*\sdist(A)$.
      Hence
      $f_*\sdist = \tdist$ on $\traces$.
    \item
      Similarly, let $A \subseteq \nsupport{\btra}{n}$ be a measurable set on $\traces$
      and $\delta > 0$.
      Then {for sufficiently large $m$}, we must have
      $\smeasure(\bigcup_{k=m+1}^\infty \validstates_k) = \smeasure(\validstates \setminus \validstates_{\leq m}) < \delta$.
      Hence,
      \begin{align*}
        f_*\smeasure(A)
        & = \smeasure\left(\bigcup_{k=1}^m \inv{f}(A)\cap \validstates_k\right)
          + \smeasure\left(\bigcup_{k=m+1}^\infty \inv{f}(A)\cap \validstates_k\right) \\
        & < \sum_{k=1}^m \Gau_{2k} (\inv{f}(A)\cap \validstates_k) + \delta \\
        & = \sum_{k=1}^m \Gau_{2m} (\set{(\vec{q},\vec{p})\in\Real^{2m} \mid (\vec{q}^{1\dots k},\vec{p}^{1\dots k}) \in \inv{f}(A)\cap \validstates_k}) + \delta \\
        & = \Gau_{2m} (\bigcup_{k=1}^m \set{(\vec{q},\vec{p})\in\Real^{2m} \mid (\vec{q}^{1\dots k},\vec{p}^{1\dots k}) \in \inv{f}(A)\cap \validstates_k}) + \delta \\
        & \leq \Gau_{2m}(A\times\Real^{m-n} \times\Real^m) + \delta \\
        & = \tmeasure(A) + \delta.
      \end{align*}
      Then the proof proceeds as in (1).

  \end{compactenum}
\end{proof}

\subsection{Convergence}
\label{sec:convergence}

Last but not least, we check for the convergence of the marginalised chain to the target distribution $\tdist$.

As shown in \cref{ex: not always ergodic},
it is not trivial that the standard HMC algorithm converges.
The same can be said of the NP-HMC algorithm.
Recall the conditions on the transition kernel to ensure convergence.

\tierney*

Recall $Q$ is the transition kernel of the Markov chain generated by iterating \cref{alg:np-hmc}
on $\support{w}$.
In \cref{thm: marginalised distribution is the target distribution}, we have shown that $Q$ has invariant distribution $\tdist$.
Hence, most of this section is devoted to searching for sufficient conditions (\cref{ass: convergence}) in order to show that the transition kernel $Q$ is $\tdist$-irreducible (\cref{lemma: irreducible}) and aperiodic (\cref{lemma: aperiodic}).
We conclude in \cref{thm: np-hmc converges} that this Markov chain converges to the target distribution $\tdist$.

We start by
extending the result in \cite{CancesLS07}
% which gives sufficient conditions
% ($U$ is continuously differentiable, bounded above on $\Real^n$ and $\grad{U}$ is globally Lipschitz),
in two ways:
\begin{compactenum}
  \item
    The density function is only continuously differentiable \emph{almost everywhere}.
  \item
    The position space is the target space $\traces$.
\end{compactenum}

Let $\mathcal{U}$ be the collection of measurable subsets of $\traces$ with the property that their boundary has measure zero.
Formally,
$\mathcal{U} := \set{A \in \Sigma_{\traces} \mid \tmeasure(\boundary{A}) = 0 }$.
Not every set in $\Sigma_{\traces}$ satisfies this property.
A typical example would be the fat Cantor set.
It is easy to see that $\mathcal{U}$ is closed under complementation.
Moreover, for any non-null set $A$ in $\mathcal{U}$,
its interior $\interior{A}$ is non-empty.

We assume the density function $w:\traces \to \pReal$ is continuously differentiable on a non-null set $A \in \mathcal{U}$.
We start by showing that the Markov chain can almost surely move between $\btra$-supported elements in $A$.

\begin{lemma}
  \label{lemma: bthmc reachable in A}
  Assume $w$ is continuously differentiable on a non-null set $A \in \mathcal{U}$ and
  $\set{U_n}$ is uniformly bounded above (i.e.~there is an upper bound $M$, where $U_n(\vec{q}) < M$ for all $\vec{q} \in \domain{U_n}$ for all $n \in \Nat$).
  For \changed[cm]{almost all} $\vec{a}, \vec{b} \in A \cap \support{\btra}$,
  there
  exists some $k \geq \max{\set{\len{\vec{a}},\len{\vec{b}}}}$ and $\vec{p} \in \Real^k$ such that
  $\proj{1}(\HMCint_k(\vec{a}\concat \vec{0}^{1\dots k-\len{\vec{a}}},\vec{p}))^{1\dots \len{\vec{b}}} = \vec{b}$,
  where $\proj{1}(\vec{q},\vec{p}) = \vec{q}$.
\end{lemma}

\begin{proof}
  Define a function $V$ on the sequence space $\Real^\omega$, which is a Fréchet space with a family of semi-norms $\set{\norm{-}_k}_{k\in\Nat}$ where $\norm{\vec{x}}_k = |\vec{x}_k|$, as
  \begin{align*}
    {V}:\quad {\Real^{\omega}} & \longrightarrow{\pReal} \\
    {\vec{x}} & \longmapsto{-\log \sum_{k=1}^\infty w(\vec{x}^{1\dots k}).}
  \end{align*}
  $V$ is well-defined thanks to Assumption \aref{ass:3}{3}.
  Since $w$ is continuously differentiable on $A$,
  $V$ is continuously differentiable on the non-empty open set $\hat{A} := \bigcup_{n=1}^\infty (\interior{A} \cap \Real^n) \times \Real^\omega$.
  Moreover, $V$ must be bounded above, say by some $M$.

  Now we consider the minimization of the function $S_\epsilon: \changed[cm]{(\Real^{\omega})^{L+1}} \to \Real^{\omega}$ \changed[fz]{where $\epsilon$ is the leapfrog step size},
  \[
    (S_\epsilon (\vec{q}^0,\dots,\vec{q}^{L}))_k
    :=
    \epsilon \sum_{i=0}^{L-1} \bigg(
      \frac{1}{2} \Big(
        \frac{\vec{q}_k^{i+1} - \vec{q}_k^{i}}{\epsilon}
      \Big)^2
      -
      \frac{V(\vec{q}^{i+1})+V(\vec{q}^i)}{2}
    \bigg)
    \qquad\text{for all }k\in\Nat
  \]
  where
  $\vec{q}^0 = \vec{a} \concat \vec{0}$ and
  $\vec{q}^L = \vec{b}\concat \vec{0}$.
  Since $V$ is bounded above by $M$,
  for all $\phi \in (\Real^\omega)^{L+1}$,
  each component of ${S_\epsilon(\phi)}\in\Real^{\omega}$ is bounded below by $-\epsilon (L-1)M$
  (i.e.~$\forall k\in\Nat$,
  ${S_\epsilon(\phi)}_k > -\epsilon (L-1)M$).
  Hence, $S_\epsilon$ is bounded below.
  By the completeness of $\Real^\omega$, $\inf{S_\epsilon} \in \Real^\omega$ exists.

  Consider \changed[fz]{a} minimising sequence $\set{\phi_n}_{n\in\Nat} $ on $\changed[cm]{(\Real^\omega)}^{L+1}$ where
  ${S_\epsilon(\phi_{n+1})}_k < {S_\epsilon(\phi_n)}_k$ for all $n,k \in\Nat$ and
  $S_\epsilon(\phi_n) \to \inf{S_\epsilon}$ as $n \to \infty$.
  Writing the sequence as $\set{(\vec{q}^{0,n},\dots, \vec{q}^{L,n})}_{n\in\Nat}$,
  we say it is bounded on $(\Real^\omega)^{L+1}$ if and only if
  for each $i = 0,\dots, L$, $\set{\vec{q}^{i,n}}_{n\in\Nat}$ is a bounded set on $\Real^\omega$ which is equivalent to saying that
  for each $i= 0,\dots, L$ and for all $k \in \Nat$, $\set{\norm{\vec{q}^{i,n}}_k}_{n\in\Nat}$ is bounded on $\Real$.
  It is easy to see that for all $n\in\Nat$ and $i= 1,\dots, L$,
  $\norm{\vec{q}^{i+1,n}-\vec{q}^{i,n}}_k \leq 2\epsilon S_\epsilon(\phi_0) + 2\epsilon^2 LM$ and
  $\norm{\vec{q}^{1,n}}_k \leq 2\epsilon S_\epsilon(\phi_0) + 2\epsilon^2 LM + \norm{\vec{q}^0}_k$,
  so for any $i = 0,\dots, L$ and $k \in \Nat$,
  $\set{\norm{\vec{q}^{i,n}}_k}_{n\in\Nat}$ is bounded
  and hence
  the sequence $\set{\phi_n}_{n\in\Nat}$ is bounded.
  Moreover,
  \changed[cm]{its closure $\Phi := \closure{\set{\phi_n}_{n\in\Nat}}$ is bounded and closed.}
  % since the sequence contains all its limit points, $\set{\phi_n}_{n\in\Nat}$ is closed.

  Note that the Fréchet space $\Real^\omega$ is a quasi-complete nuclear space and has the Heine–Borel property\cm{according to Wikipedia}, i.e.~all closed and bounded set is compact.
  So, the \changed[cm]{set $\Phi$ is compact.}
  % So, the \changed[cm]{set} $\set{\phi_n}_{n\in\Nat}$ is compact.
  Moreover, since $\Real^\omega$ is completely metrisable, the compact \changed[cm]{set $\Phi$} is also sequentially compact, i.e.~\changed[cm]{every sequence in $\Phi$ has a subsequence converging to a point in $\Phi$.}
  Hence $\set{\phi_n}_{n\in\Nat} \subseteq \Phi$ must have a subsequence $\set{\phi_{n_k}}_{k\in\Nat}$ which converges to some point $\bar{\phi}$ in $\Phi$.

  \changed[cm]{
  We claim that $\bar{\phi}$ is almost surely in $\hat{A}^{L+1}$.
  \fz{again, "almost surely" requires a probability measure. Do you mean for almost all a,b, we have $\bar\phi$ in $\hat{A}^{L+1}$?}
  We show that the set $(\Real^\omega)^{L+1} \setminus \hat{A}^{L+1}$ has measure zero.
  First note that by Assumption \aref{ass:2}{2}, $\btra$ is continuously differentiable almost everywhere and hence $\traces \setminus A$ is a null set.
  Moreover, by the definition of $A \in \mathcal{U}$, $\traces \setminus \interior{A}$ is also a null set.
  Then this implies the set of infinite sequences with no prefixes in $\interior{A}$ has measure zero, i.e.~$\Real^\omega\setminus \hat{A}$ is a null set.
  Hence $(\Real^\omega)^{L+1} \setminus \hat{A}^{L+1} = \set{(\vec{q}^0, \dots, \vec{q}^L) \in (\Real^\omega)^{L+1} \mid \exists i\ .\ \vec{q}^i \not\in\hat{A}} = \bigcup_{i=0}^L (\Real^\omega)^{i} \times (\Real^\omega \setminus \hat{A} ) \times (\Real^\omega)^{L-i} $ has zero measure.

  Since $\bar{\phi}$ is constrained by $\vec{q}^0 = \vec{a} \concat \vec{0}$ and $\vec{q}^L = \vec{b} \concat \vec{0}$,
  there can only be a null set of
  $\vec{a}, \vec{b} \in A \cap \support{w}$ which induces $\bar{\phi}$ in the null set $(\Real^\omega)^{L+1} \setminus \hat{A}^{L+1}$.
  Hence $\bar{\phi}$ is almost surely in $\hat{A}^{L+1}$.

  Assume $\bar{\phi}$ is in $\hat{A}^{L+1}$.
  Since $V$ is continuously differentiable on $\hat{A}$, so is $S_\epsilon$ on $\hat{A}^{L+1}$.
  By the continuity of $S_\epsilon$, we have
  $
    \inf{S_\epsilon}
    =
    \lim_{k\to\infty} S_\epsilon(\phi_{n_k})
    =
    S_\epsilon(\lim_{k\to\infty}\phi_{n_k})
    =
    S_\epsilon(\bar{\phi})
  $,
  so $S_\epsilon$ attains its infimum on $\hat{A}^{L+1}$.
  }

  By \cref{prop: R^omega turns on inf}, the gradient of $S_\epsilon$ at its infimum $\bar{\phi} = (\vec{\bar{q}}^0,\dots, \vec{\bar{q}}^L) $ is $\vec{0}$.
  Hence $\vec{\bar{q}}^0 = \vec{a} \concat \vec{0}$,
  $\vec{\bar{q}}^L = \vec{b}\concat \vec{0}$ and
  \[
    \vec{\bar{q}}^{i+1} = 2\vec{\bar{q}}^i - \vec{\bar{q}}^{i-1} - \epsilon^2 \grad{V}(\vec{\bar{q}}^i)
    \qquad
    \text{for }i = 1,\dots, L-1
  \]
  which is the solution to the leapfrog steps.
  In other words, the infimum $\bar{\phi}$ gives a path from $\vec{a}\concat \vec{0}$ to $\vec{b}\concat\vec{0}$ via the leapfrog trajectory with initial momentum
  $\vec{p} = \frac{1}{\epsilon}(\vec{\bar{q}}^1-\vec{a}\concat\vec{0}) + \frac{\epsilon}{2}\grad{V}(\vec{a}\concat\vec{0})$.

  Last but not least, let $k$ be the maximum of $k_i$'s where $w({\vec{\bar{q}}^i}^{1\dots k_i}) > 0$ for all $i = 0,\dots, L$.
  Then it is easy to see that
  $\proj{1}(\HMCint_k(\vec{a}\concat \vec{0}^{1\dots k-\len{\vec{a}}}),\vec{p}^{1\dots k})^{1\dots \len{\vec{b}}} = \vec{b}$.
\end{proof}

\begin{proposition}
  \label{prop: R^omega turns on inf}
  Let $f:\Real^\omega \to \Real^\omega$ be a function with infimum\fz{what does this mean if the range is $\Real^\omega$?}\cm{It is defined below. If $x_0$ is an infimum of $f$, then
  ${f(x)}_{\ell} \geq {f(x_0)}_{\ell}$
  for all $\ell\in\Nat$ and $x \in \Real^\omega$.} at $x_0 \in \Real^\omega$ and
  is continuously differentiable on $A \subseteq \Real^\omega$ where $x_0 \in A$,
  then $\grad{f}(x_0)$ is the zero map, i.e.~$\grad{f}(x_0)(h) = \vec{0}$ for all $h \in \Real^\omega$.
\end{proposition}

\begin{proof}
  First note that $f$ is continuously differentiable at $x_0 \in A$ means that
  for any $\epsilon > 0$ there exists an $\delta > 0$ such that for any $k \in\Nat$ and $x \in \Real^\omega$ such that $\norm{x-x_0}_k < \delta $ implies
  $\frac{\norm{f(x)-f(x_0)-L(x-x_0)}_\ell}{\norm{x-x_0}_k} < \epsilon $ for all $\ell \in\Nat$,
  where $L:\Real^\omega \to \Real^\omega$ is the bounded linear map defined as $L := (Df)(x_0)$.
  \footnote{
    This can be easily seen by substituting $h$ by $\frac{x-x_0}{\norm{x-x_0}_k}$ in the standard definition of continuously differentiable functions $f$ on $A \subseteq \Real^\omega$.
  }

  Assume for contradiction that $L$ is not a zero map.
  i.e.~There exists some $h \in \Real^\omega$ such that $Lh \not= 0$.
  Let $k$ be the coordinate such that ${(Lh)}_k \not= 0$
  and $\epsilon > 0$.

  Since $x_0$ is an infimum of $f$,
  ${f(x)}_{\ell} \geq {f(x_0)}_{\ell}$
  for all $\ell\in\Nat$ and $x \in \Real^\omega$.
  Moreover, $f$ is continuously differentiable at $x_0$ so
  there exists an $\delta > 0$ such that for any $x \in \Real^\omega$, $\norm{x-x_0}_k < \delta $ implies
  $\frac{\norm{f(x)-f(x_0)-L(x-x_0)}_\ell}{\norm{x-x_0}_k} < \epsilon $ for all $\ell \in\Nat$.

  Consider the sequence $\set{y_n}_{n\in\Nat}$ defined as
  $y_n := x_0 - \frac{1}{n}\frac{Lh}{\norm{Lh}_k}\cdot h$.
  The distance between $y_n$ and $x_0$ is
  $
    \norm{y_n-x_0}_{k}
    =
    \bignorm{\frac{-1}{n}\frac{Lh}{\norm{Lh}_k}\cdot h}_{k}
    =
    {\frac{1}{n}\norm{h}_k}.
  $
  So for large enough $n$,
  $\norm{y_n-x_0}_{k} < \delta$.

  Hence,
  \[
    0
    \leq
    \frac{{(f(y_n)-f(x_0))}_k}{\norm{y_n-x_0}_{k}}
    <
    \frac{{L(y_n-x_0)}_k}{\norm{y_n-x_0}_{k}} + \epsilon
    =
    \frac{n}{\norm{h}_{k}}\cdot \bigg(\frac{-1}{n}\frac{(Lh)^2}{\norm{Lh}_k}\bigg)_k + \epsilon
    =
    -\frac{\norm{Lh}_k}{\norm{h}_k} + \epsilon
  \]
  which implies
  $\norm{Lh}_k < \norm{h}_k\epsilon$.
  Since $\epsilon$ is arbitrary, we have $\norm{Lh}_k \leq 0$ which implies $(Lh)_k = 0$ and contradicts our assumption.
\end{proof}

Now we show that the Markov chain can move to any measurable set with positive measure on $A$ from almost all $\btra$-supported element in $A$.

\begin{lemma}
  \label{lemma:bthmc irreducible in A}
  Assuming $w$ is continuously differentiable on a non-null set $A \in \mathcal{U}$ and
  $\set{U_n}$ is uniformly bounded above (i.e.~there is an upper bound $M$, where $U_n(\vec{q}) < M$ for all $\vec{q} \in \domain{U_n}$ for all $n \in \Nat$)
  and
  $\grad{U_n}$ is Lipschitz on $A \cap \domain{U_n}$.
  For \changed[cm]{almost all} $\vec{a} \in A \cap \support{w}$ and measurable subset $B \subseteq A$,
  $\tdist(B) > 0$
  implies
  $Q(\vec{a},B) > 0$.
\end{lemma}

\begin{proof}
  It is enough to prove the statement for
  a non-null measurable set $B \subseteq A \cap \Real^n$ where
  all elements of $B$ have positive weight since
  all measurable subset $B$ of $A$ with $\tdist(B)>0$ must contains such a subset.
  \changed[cm]{Moreover we restrict $B$ to the elements where the statement in \cref{lemma: bthmc reachable in A} always hold w.r.t.~$\vec{a}$.}

  Say $m = \len{\vec{a}}$ and
  $M = \max\set{m,n}$.
  Let
  $I_{\vec{a}}(B)  = \{
  {\vec{p} \in \Real^k} \mid
  {k \geq M}$ and
  all intermediate leapfrog steps starting from $(\vec{a} \concat \vec{0}^{1\dots k-m},\vec{p})\in\validstates$ are in $A \cap \domain{U_k}$ and
  $\proj{1}(\HMCint_k(\vec{a}\concat \vec{0}^{1\dots k-m},\vec{p}))^{1\dots n} \in B\}$.
  It is enough to show that
  $\sum_{k = M}^\infty \leb_k(I_{\vec{a}}(B) \cap \Real^k) > 0$.

  Let $\theta: I_{\vec{a}}(B) \to B$ be the function where
  $\theta(\vec{p})$ gives
  the next sample in $B$ after $L$ HMC leapfrog steps starting with initial state
  $(\vec{a}\concat \vec{0}^{1\dots \len{\vec{p}}-m},\vec{p})$.
  By \cref{lemma: bthmc reachable in A}, $\theta$ is subjective.

  We write $I_{\vec{a}}^k(B) = I_{\vec{a}}(B) \cap \Real^k$ and
  show that $\theta_k: I_{\vec{a}}^k(B) \to B$ is Lipschitz.
  By assumption for any $\vec{p}^0 \in I_{\vec{a}}^k(B)$,
  all the intermediate positions are in $\domain{U_k} \cap A$.
  Hence, we can write $\theta_k(\vec{p}) := \proj{1}(\HMCint_k(\vec{a}\concat \vec{0}^{1\dots k-m},\vec{p})) = \vec{q}^L$
  as
  \[
    \vec{q}^{0} + \epsilon L \vec{p}^{0} -
    \epsilon^2\big(
      \frac{L}{2}\grad{U_k}(\vec{q}^{0}) +
      \sum_{k=1}^{L-1} k \grad{U_k}(\vec{q}^{L-k})
    \big).
  \]
  Let $\vec{p}, \vec{p'} \in I_{\vec{a}}^k(B)$,
  and $\vec{q}^i, \vec{q'}^i$ be the position of the state after $i$ leapfrog steps with momentum kick $\vec{p},\vec{p'}$ respectively. Then,
  \begin{align*}
    |\theta_k(\vec{p}) - \theta_k(\vec{p'})| =
    |\vec{q}^L-\vec{q'}^L|
    & \leq \epsilon L|\vec{p}-\vec{p'}| +
    \epsilon^2 \sum_{i=1}^{L-1}i|\grad{U_k}(\vec{q}^{L-i})-\grad{U_k}(\vec{q'}^{L-i})| \\
    & \leq \epsilon L|\vec{p}-\vec{p'}| +
    \epsilon^2 \sum_{i=1}^{L-1}i|\vec{q}^{L-i}-\vec{q'}^{L-i}| \tag{$U_k$ is Lipschitz on $A \cap \domain{U_k}$}
  \end{align*}
  hence
  $|\theta_k(\vec{p}) - \theta_k(\vec{p'})| \leq c |\vec{p}-\vec{p'}|$ for some constant $c$
  and $\theta_k$ is Lipschitz.

  Assume for contradiction that
  $\sum_{k = M}^\infty \leb_k(I_{\vec{a}}(B) \cap \Real^k) = 0$
  which means that
  for all ${k \geq M}$, $\leb_k(I_{\vec{a}}^k(B)) = 0$.
  However,
  \begin{align*}
    \leb_n(B)
    & =
    \leb_n(\theta(I_{\vec{a}}(B)))
    =
    \leb_n(\theta(\bigcup_{k=M}^\infty I_{\vec{a}}^k(B)))
    =
    \leb_n(\bigcup_{k=M}^\infty \theta_k(I_{\vec{a}}^k(B))) \\
    & \leq
    \sum_{k=M}^\infty \leb_n(\theta_k(I_{\vec{a}}^k(B)))
    \leq
    \sum_{k=M}^\infty \mathsf{Lip}(\theta_k)^{3N}\cdot \leb_n(I_{\vec{a}}^k(B))
    = 0
  \end{align*}
  implies that
  $\leb_n(B) = 0$
  which gives a contradiction.
  \cm{check the Lip measure again}
\end{proof}

\begin{lemma}
  \label{lemma: bthmc reach A}
  Assuming $w$ is continuously differentiable on a non-null set $A$ where $A \in \mathcal{U}$ and
  $\set{\grad{U_n}}$ is uniformly bounded above and below (i.e.~there are bounds $M_1,M_2$, where $M_1 \leq \grad{U_n}(\vec{q}) \leq M_2$ for all $\vec{q} \in \domain{\grad{U_n}}$ for all $n \in \Nat$).
  Then {there exists a step size $\epsilon$ such that}
  for any sequence $\vec{q} \in \support{w}$,
  % for any $\vec{q} \in \traces$ where $w(\vec{q}) > 0$,
  $\tdist(A) > 0$ implies
  $Q(\vec{q},A) > 0$.
\end{lemma}

\begin{proof}
  Let $\vec{q} \in \Real^m$ be $\btra$-supported.
  Since $A \in \mathcal{U}$, its interior $\interior{A}$ is an non-empty open set.
  Hence for some $n$,
  there is an non-empty open subset $\prod_{i=1}^n (a_i,b_i)$ of $A \cap \Real^n$.

  Now we
  consider the conditions on the starting momentum $\vec{p}^0$ in order for the position $\vec{q}^L$ at the end of the trajectory of the leapfrog steps to be in $A$
  assuming that the position of the intermediate states never leave the domain of $U_k$
  for some $k \geq M := \max\set{m,n}$.
  \begin{align*}
    \vec{q}^L \in \prod_{i=1}^n (a_i,b_i) \times \Real^{k-n}
    & \quad \Leftrightarrow
      \quad \forall i = 1,\dots,n \quad
      \vec{q}_i^{0} + \epsilon L \vec{p}_i^{0} -
      \epsilon^2\big(
        \frac{L}{2}\grad{U_k}(\vec{q}^{0}) +
        \sum_{k=1}^{L-1} k \grad{U_k}(\vec{q}^{L-k})
      \big)
      \in (a_i, b_i) \\
    & \quad \Leftarrow
      \quad \forall i = 1,\dots,n \quad
      \vec{p}_i^0 \in
      \Big(\frac{1}{\epsilon L}(a_i -\vec{q}^{0}_i + \frac{(\epsilon L)^2}{2} M_2),
      \frac{1}{\epsilon L}(b_i -\vec{q}^{0}_i + \frac{(\epsilon L)^2}{2} M_1)\Big) =: I_i
  \end{align*}
  For any $\vec{p} \in \prod_{i=1}^n I_i$,
  the union
  $\bigcup_{k=M}^{\infty} \set{
    \vec{p'}\in \Real^{k-n} \mid
    (\vec{q}\concat \vec{0}^{k-m},\vec{p} \concat \vec{p'})\in \validstates
  } $
  is non-null.
  This is because
  the measure of the union can be seen as the
  value measure of the almost surely terminating probabilistic program
  which given $\vec{q} \in \nsupport{\btra}{m}$ and $\vec{p}\in \prod_{i=1}^n I_i$ returns $\vec{p'}\in \Real^{k-n}$ such that
  $(\vec{q}\concat \vec{0}^{k-m},\vec{p} \concat \vec{p'})$ is a valid state,

  For
  $\epsilon < \frac{1}{L} \sqrt{\frac{2(b_i-a_i)}{M_2-M_1}}$ for all $i$,
  the intervals $\set{I_i}$ are non-empty and hence
  $Q(\vec{q},A)
  \geq \sum_{k=M}^{\infty} \Gau_k(\set{
    \vec{p'}\in \Real^{k-n} \mid
    (\vec{q}\concat \vec{0}^{k-m},\vec{p} \concat \vec{p'})\in \validstates,
    \vec{p} \in \prod_{i=1}^n I_i
  })
  > 0$.
\end{proof}

\begin{definition}
  \label{ass: convergence}
  We gather all the conditions so far.
  \begin{compactenum}[(C1)]
  \item $w$ is continuously differentiable on a non-null set $A$ with measure-zero boundary.
  % \item $\set{U_n}$ is uniformly bounded above.
  \item \changed[fz]{$w|_{\support{w}}$} is bounded below by a positive constant.
  % \item $\set{\grad{U_n}}$ is uniformly bounded above and below.
  \item For each $n$, the function $\frac{\grad{w_{\le n}}}{w_{\le n}}$ is uniformly bounded from above and below on $\support{w_{\le n}} \cap A$.
  % \item Each $\grad{U_n}$ is Lipschitz on $A \cap \domain{U_n}$.
  \item For each $n$, the function $\frac{\grad{w_{\le n}}}{w_{\le n}}$ is Lipschitz \changed[fz]{continuous} on $\support{w_{\le n}} \cap A$.
  \end{compactenum}
\end{definition}
Note that
\begin{compactenum}[(C1) {implies}]
  \item
    $w$ is continuously differentiable on a non-null set $A \in \mathcal{U}$.
  \item
    $\set{U_n}$ is uniformly bounded above (i.e.~there is an upper bound $M$, where $U_n(\vec{q}) < M$ for all $\vec{q} \in \domain{U_n}$ for all $n \in \Nat$).
  \item
    $\set{\grad{U_n}}$ is uniformly bounded above and below (i.e.~there are bounds $M_1,M_2$, where $M_1 \leq \grad{U_n}(\vec{q}) \leq M_2$ for all $\vec{q} \in \domain{\grad{U_n}}$ for all $n \in \Nat$).
  \item
    $\grad{U_n}$ is Lipschitz on $A \cap \domain{U_n}$.
\end{compactenum}

Now we are ready to prove irreducibility.

\begin{lemma}[Irreducible]
  \label{lemma: irreducible}
  If Assumptions (C1)--(C4) are satisfied,
  {there exists a step size $\epsilon$ such that}
  for any sequence $\vec{q} \in \support{w}$
  and measurable set $B \in \Sigma_{\traces}$,
  $\tdist(B) > 0$ implies
  $Q^i(\vec{q},B) > 0$ for $i \in \set{1,2}$.
\end{lemma}

\begin{proof}
  Let $A$ be the non-null set in $\mathcal{U}$ where
  $\btra$ is continuously differentiable on $A$ \emph{and}
  $\tmeasure(\traces \setminus A) = 0$
  \emph{and}
  \cref{lemma:bthmc irreducible in A} holds for all elements in $A$.
  Such $A$ must exist by Assumption \aref{ass:2}{2} and (C1).

  First note that $\tdist(A \cap B) > 0$.
  Otherwise, we must have $\tdist((\traces \setminus A) \cap B) > 0$. But this implies
  $\tmeasure(\traces \setminus A) \geq \tmeasure((\traces \setminus A) \cap B) > 0$
  which contradicts the assumption.

  We do case analysis on $\vec{q}\in \traces$.
  \begin{itemize}
    \item If $\vec{q} \in A$, then by \cref{lemma:bthmc irreducible in A}, $Q(\vec{q},A \cap B) > 0$.
    \item If $\vec{q} \not\in A$, then
      by \cref{lemma: bthmc reach A}, $Q(\vec{q},A) > 0$ and so
    \begin{align*}
      Q^2(\vec{q},B)
      & \geq Q^2(\vec{q}, A \cap B) \\
      & = \int_{\traces} Q(\vec{q'}, A\cap B)\ Q(\vec{q},\dif\vec{q'}) \\
      & \geq \int_{A} Q(\vec{q'},A \cap B)\ Q(\vec{q},\dif\vec{q'})
      > 0.
    \end{align*}
  \end{itemize}
\end{proof}

\begin{lemma}[Aperiodic]
  \label{lemma: aperiodic}
  If Assumptions (C1)--(C4) are satisfied,
  $Q$ is aperiodic.
\end{lemma}

\begin{proof}
  Assume for contradiction that $Q$ is not aperiodic.
  Then, there exists disjoint $B_0,\dots, B_d$ for $d \geq 1$ such that $\tdist(B_0) > 0$ and
  $x \in B_i$ implies $Q(x,B_{(i+1) \mod (d+1)}) = 1$ for all $i = 0,\dots, d$.

  Let $A$ be the non-null set in $\mathcal{U}$ where
  $\btra$ is continuously differentiable on $A$ \emph{and}
  $\tmeasure(\traces \setminus A) = 0$
  \emph{and}
  \cref{lemma: bthmc reach A} holds for all elements in $A$.
  Such $A$ must exist by Assumption \aref{ass:2}{2} and (C1).
  Let $C_i := B_i \cap A$ for all $i = 0,\dots, d$.
  Hence, $\tdist(C_0) > 0$ and
  $x \in C_i$ implies $Q(x,C_{(i+1) \mod (d+1)}) = 1$ for all $i = 0,\dots, d$.

  Let $x \in C_0$ be a $w$-supported sequence. Such an $x$ must exist as $\tdist(C_0) > 0$.
  Then, $Q(x,C_1) = 1$ implies $Q(x,C_0) \leq Q(x,\traces\setminus C_1) = 0$
  which contradicts with \cref{lemma: irreducible}
  as $x \in A$.

  \iffalse
  For $d \geq 2$, this contradicts with \cref{lemma: irreducible} which says that
  for any $w$-supported sequence $x \in \traces$, either $Q(x,C_0) > 0$ or $Q^2(x,B_0) > 0$.
  Let $x \in B_0$ be a $w$-supported sequence (note such an $x$ must exist as $\tdist(B_0) > 0$).
  Then, the assumption that $Q(x,B_1) = 1$ implies $Q(x,B_0) \leq Q(x,\traces\setminus B_1) = 0$ and
  similarly $Q^2(x,B_2) = \int_{\traces} Q(y,B_2)\ Q(x,dy) = \int_{B_1} Q(y,B_2)\ Q(x,dy) = 1$ implies
  $Q^2(x,B_0) = 0$ which is a contradiction.

  Now we focus on the case where $d = 1$.
  It is easy to see that if $\tdist(B_0 \cup B_1) < 1$, then
  elements in $B_0$ cannot reach $\traces \setminus (B_0 \cup B_1)$,
  which has a positive weight under $\tdist$
  and this contradicts \cref{lemma: irreducible}.

  Assume $\tdist(B_0 \cup B_1) = 1$.
  Then, $\tdist(A \cap B_i) = 0$ for $i \in \{0,1\}$.
  (By the proof of \cref{lemma: irreducible}, $\tdist(A \cap B_i) >0$ implies $Q(q,B_i) > 0$ for all $q \in B_i \cap \support{w}$, but this contradicts with our assumption.)
  Hence, $\int_A \frac{1}{Z} w\ d\tmeasure = \tdist(A) = \tdist(A \cap \traces) = \tdist(A \cap (B_0 \cup B_1)) = \tdist(A \cap B_0) + \tdist(A \cap B_1) = 0$.
  This means $w$ is almost always zero on a non-null set $A$,
  which contradicts with Assumption 3.
  \fi
\end{proof}

Finally by Tierney's Theorem (\cref{lemma: Tieryney}), the
$\tdist$-irreducible (\cref{lemma: irreducible}) and
$\tdist$-aperiodic (\cref{lemma: aperiodic})
transition kernel $Q$ with invariant distribution $\tdist$ (\cref{thm: marginalised distribution is the target distribution})
converges to $\tdist$.

\convergence*

% \begin{theorem}
%   \label{thm: np-hmc converges}
%   Assuming all assumptions listed in \cref{ass: convergence} are satisfied.
%   The Markov chain generated by \pythoninline{NPHMC} converges to the target distribution $\tdist$.
% \end{theorem}

\section{Experiments}
\label{appendix: experiments}
% !TEX root = ../main.tex

\subsection{Details on the Experimental Setup}

For our experimental evaluation, we implemented the algorithms in Python, using PyTorch for tensor and gradient computations.
The source code for our implementation and experiments is available at \url{https://github.com/fzaiser/nonparametric-hmc} and archived as \cite{code}.

\paragraph{Inference algorithms}
The four inference algorithms we compared were:
\begin{enumerate}
\item NP-DHMC (ours): the nonparametric adaptation of \cite{NishimuraDL20}, explained in \cref{sec: hmc variants}, using the efficiency improvements from \cref{sec:efficiency-improvements}.
\item Lightweight Metropolis-Hastings (LMH),
\item Particle Gibbs (PGibbs) and
\item Random walk lightweight Metropolis-Hastings (RMH).

% \item Importance sampling (IS): it samples from the prior and records an importance weight for the run.
% When all samples are computed, we resample according to the importance weight to obtain unweighted samples.
% \item Metropolis-Hastings (MH): with the transition kernel from \cite{DBLP:conf/icfp/BorgstromLGS16}.
% \item Lightweight Metropolis-Hastings (LMH): there are many variants of this. The one we implemented is equivalent to the one in Anglican. It randomly truncates the current trace and samples additionally necessary samples from the prior. \fz{TODO: Is there a good reference for this behaviour?}
% \lo{A possible reference is \cite{intro-prob-prog}: check out their MH with single-site proposal in \S 4.2.1.}
\end{enumerate}
We used the Anglican implementations of the latter three algorithms.

\paragraph{Models}
\changed[fz]{For NP-DHMC, the models were given to the algorithm as probabilistic programs in the form of a Python function with a context argument for NP-DHMC.
The context allows probabilistic primitives and records the trace and weight for the inference algorithms.
This way, evaluating the density function $w$ amounts to running the probabilistic programs.
For LMH, PGibbs, and RMH, the Python models were translated to Clojure programs using Anglican's probabilistic programming constructs.}
The pseudocode for the geometric example and the random walk example can be found in the main text.
The Gaussian and Dirichlet process mixture model is explained there as well, using \changed[lo]{statistical} notation.
%We didn't explain how sampling from $\mathrm{DP}(\alpha, \mathrm{Uniform}([0,1]^3))$ works.
%This is done using the stick-breaking procedure \cite{Sethuraman94}.
\changed[lo]{Sampling from $\mathrm{DP}(\alpha, \mathrm{Uniform}([0,1]^3))$ is implemented using the stick-breaking procedure \cite{Sethuraman94}.}
We use a cutoff of $\epsilon = 0.01$ for the stick size as explained in the text.
In pseudocode, it looks as follows:
\begin{python}
def dp(alpha, H):
    stick = 1.0
    beta = 0.0
    cumulative_product = 1.0
    weights = []
    means = []
    while stick > 0.01:
        cumulative_product *= 1 - beta
        beta = sample(Beta(1, alpha))
        theta = sample(H)
        weights.append(beta * cumulative_product)
        means.append(theta)
        stick -= beta * cumulative_product
    return weights, means
\end{python}

\paragraph{ESS computation}
For the random walk example, we computed the effective sample size.
For this we used NumPyro's \cite{pyro} \pythoninline{diagnostics.effective_sample_size} function.
It is designed to estimate the effective sample size for MCMC samplers using autocorrelation \cite{Gelman2014}. %\cite{Geyer2011}.
\changed[lo]{For importance samples used as the ground truth, we used the importance weights directly to compute the ESS: given importance weights $w_1,\dots,w_n$, the ESS is $\frac{(\sum_{i=1}^n w_i)^2}{\sum_{i=1}^n w_i^2}$.
We also computed the (autocorrelation-based) MCMC ESS for the importance samples and we obtained very similar results.}
%The two ESS estimates are comparable in this case.}

\paragraph{Hyperparameter choices}
We produced 10 runs with 1000 samples each for every example except the last, Dirichlet process mixture model (DPMM).
For the DPMM example, we only produced 100 samples in each run because of the forbidding computational cost.
We set the number of burn-in samples that are discarded to 10\% of the total number of samples, i.e. 100 samples for each run.
Since each run of the DPMM only had 100 samples, we set the burn-in higher there, namely to 50.
We did not vary this hyperparameter much because higher values did not seem to make a difference.
For the number of leapfrog steps we tried values $L \in \{5, 20, 50, 100 \}$, and for the step size we tried values $\epsilon \in \{0.01, 0.05, 0.1, 0.5\}$.
Generally, the simple geometric distribution example already works for very rough hyperparameters ($L = 5, \epsilon = 0.1$).
Finer steps work as well, but are not necessary.
However, more complex models generally require finer steps (GMM: $L = 50, \epsilon = 0.05$).
The other inference algorithms we tested don't have any hyperparameters that need to be set.

\paragraph{Thinning}
Since NP-DHMC performs more computation than its competitors for each sample because it evaluates the density function in each of the $L$ leapfrog steps, not just once like the other inference algorithms.
To equalise the computation budgets, we generate $L$ times as many samples for each competitor algorithm, and apply thinning (taking every $L$-th sample) to get a comparable sample size.

\subsection{Additional Plots and Data}

%In order to check our implementations of IS, MH and LMH for bugs, we ran the implementations of the four inference algorithms on the two micro-benchmarks from \cite{HurNRS15}.
%\changed[fz]{Our implementations performed as expected on these parametric models.}

In addition to the ESS and LPPD computations, we also plotted both as a variable of the number of samples computed.
The results can be seen in \cref{fig:walk-ess-plot,fig:gmm-dpmm-lppd-plot}.
\changed[fz]{As we can see, NP-DHMC performs the best consistently over the course of the inference, not just in terms of the final result.}
%\changed[fz]{As we can see, the best LPPD results of IS, MH, and LMH after the full number of samples are still much worse than NP-DHMC after having produced just a few samples.
%This is further evidence for the fact that the samples produced by NP-DHMC are of much higher quality than those of its competitors.}

\begin{figure}
\centering
\includegraphics[width=0.6\textwidth]{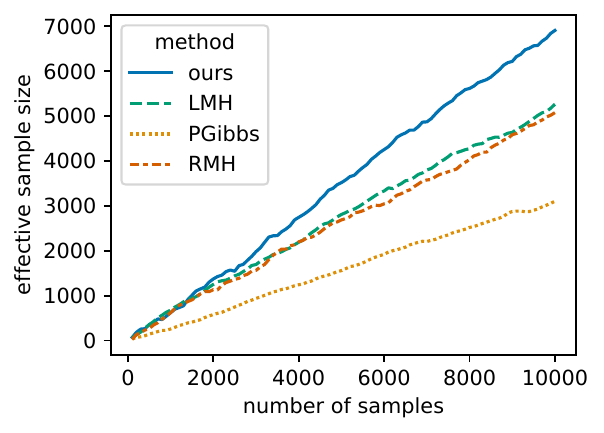}
\caption{ESS for the random walk example in terms of number of samples}
\label{fig:walk-ess-plot}
\end{figure}
\begin{figure}
\centering
\includegraphics[width=0.49\textwidth]{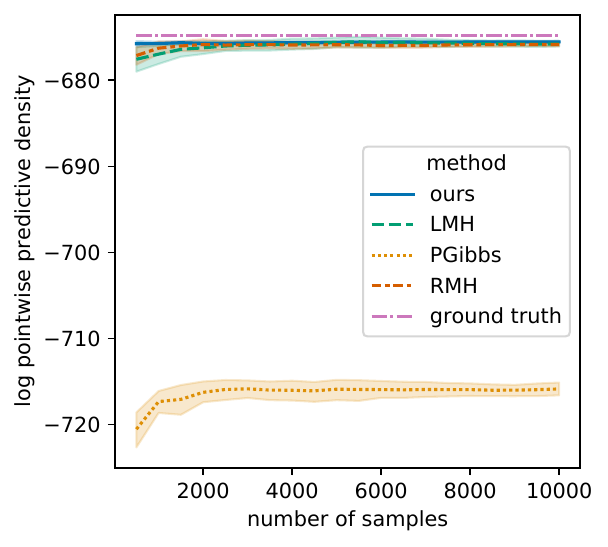}
\includegraphics[width=0.49\textwidth]{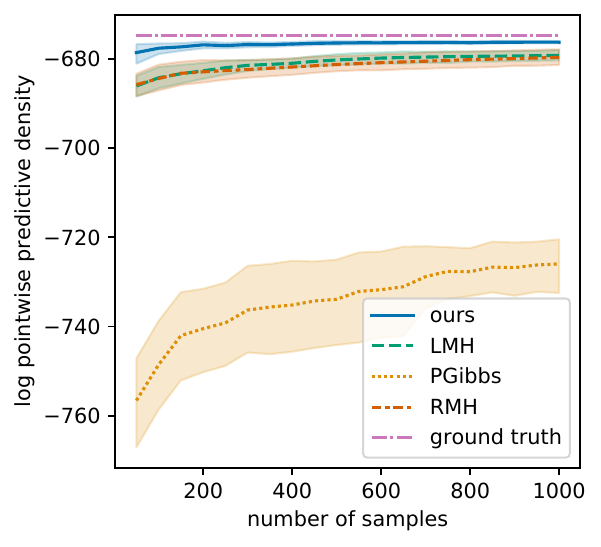}
\caption{LPPD for the GMM and DP mixture model in terms of the number of samples from 10 runs. The shaded area is one standard deviation.
These are the full plots of \cref{fig:gmm-lppd-plot-zoomed,fig:dpmm-lppd-plot-zoomed}, respectively.
}
\label{fig:gmm-dpmm-lppd-plot}
\end{figure}

% \iffalse
\paragraph{Running time}
We report the wall-clock times for the different algorithms.
Experiments were carried out on a computer with \cm{@Fabian, are these still true?}\fz{yes} an Intel Core i7-8700 CPU @ 3.20 GHz x 12 and 16 GB RAM, running Ubuntu 20.04.
The results are presented in \cref{table:running-times}.

NP-DHMC is significantly slower than the competition in the geometric and random walk examples, faster for GMM and comparable for DPMM.
Due to the nature of the coordinate integrator of discontinuous HMC \cite{NishimuraDL20},
NP-DHMC has to run the model $L \times d$ times per sample where $d$ is the number of discontinuous variables in the model.
%This is particularly relevant for the geometric, random walk and DPMM examples which have an unbounded number of discontinuous random variables.
We could improve the algorithm by only updating a subset of the discontinuous variables per iteration.
In addition, NP-DHMC computes gradients and simulates Hamiltonian dynamics, which is computationally expensive.
\changed[fz]{On the random walk example we also ran Pyro HMC and NUTS, as mentioned before.
Both of them were a lot slower than our implementation, which speaks to the fact that HMC methods simply have an unavoidable performance overhead.}
Finally, the implementation of NP-DHMC is a research prototype, so it is not optimal and there is a lot of room for improvement.

\begin{table}
\begin{center}
\footnotesize
\caption{Running times for the different inference algorithms in seconds per sample.}
\label{table:running-times}
\vspace{0.1in}
\begin{tabular}{lllllll}
\toprule
method &ours &LMH &PGibbs &RMH &Pyro HMC &Pyro NUTS \\
\midrule
geometric example &0.0418 &0.0003 &0.0001 &0.0005 &n/a &n/a \\
random walk example &0.2266 &0.0077 &0.0051 &0.0095 &$\approx 0.41$ &$\approx 5.7$ \\
GMM example &0.1879 &1.6572 &1.6835 &1.6376 &n/a &n/a \\
DPMM example &1.8516 &2.1491 &1.7855 &2.0584 &n/a &n/a \\
\bottomrule
\end{tabular}
\end{center}
\end{table}
% \fi

% Double bibliography leads to errors for me, so I removed it – fz
%\clearpage
%\bibliography{../database.bib}
%\bibliographystyle{icml2021}

\else
\fi
\end{document}

\iffalse
\section{Do \emph{not} have an appendix here}

\textbf{\emph{Do not put content after the references.}}
%
Put anything that you might normally include after the references in a separate
supplementary file.

We recommend that you build supplementary material in a separate document.
If you must create one PDF and cut it up, please be careful to use a tool that
doesn't alter the margins, and that doesn't aggressively rewrite the PDF file.
pdftk usually works fine.

\textbf{Please do not use Apple's preview to cut off supplementary material.} In
previous years it has altered margins, and created headaches at the camera-ready
stage.
\fi
%%%%%%%%%%%%%%%%%%%%%%%%%%%%%%%%%%%%%%%%%%%%%%%%%%%%%%%%%%%%%%%%%%%%%%%%%%%%%%%
%%%%%%%%%%%%%%%%%%%%%%%%%%%%%%%%%%%%%%%%%%%%%%%%%%%%%%%%%%%%%%%%%%%%%%%%%%%%%%%

% This document was modified from the file originally made available by
% Pat Langley and Andrea Danyluk for ICML-2K. This version was created
% by Iain Murray in 2018, and modified by Alexandre Bouchard in
% 2019 and 2021. Previous contributors include Dan Roy, Lise Getoor and Tobias
% Scheffer, which was slightly modified from the 2010 version by
% Thorsten Joachims & Johannes Fuernkranz, slightly modified from the
% 2009 version by Kiri Wagstaff and Sam Roweis's 2008 version, which is
% slightly modified from Prasad Tadepalli's 2007 version which is a
% lightly changed version of the previous year's version by Andrew
% Moore, which was in turn edited from those of Kristian Kersting and
% Codrina Lauth. Alex Smola contributed to the algorithmic style files.

% --- supplement: appendix.tex ---

\appendix
\onecolumn
\icmltitle{Nonparametric Hamiltonian Monte Carlo (Appendix)}
\vskip 0.3in

% !TEX root = ../main.tex

\section{Statistical PCF}
\label{appendix: SPCF}
% !TEX root = ../main.tex

% outline
In this section, we present a simply-typed
statistical probabilistic programming language with (stochastic) branching and recursion, and its operational semantics.

This language serves two purposes for the NP-HMC algorithm.
First, it is a purified universal probabilistic programming language (PPL) widely considered \cite{DBLP:conf/icfp/BorgstromLGS16,VakarKS19,MakOPW20} which specifies tree-representable functions that satisfies Ass.~\aref{ass:1}{1}, \aref{ass:2}{2} and \aref{ass:3}{3} (\cref{prop: all spcf terms have TR weight function,lemma: all AST SPCF term satisfy all assumptions}) and hence NP-HMC can be applied.
Second, its (operational) semantics is used to prove correctness of NP-HMC in \cref{appendix: correctness}.

\subsection{Syntax}
\label{subsec:stat PCF}

SPCF is a simply-typed higher-order universal PPL with branching and recursion.
More formally, it is a statistical probabilistic version of call-by-value
PCF \cite{DBLP:journals/tcs/Scott93,DBLP:conf/fsttcs/Sieber90} with reals as the ground type.
The terms and part of the typing system of SPCF are presented in \cref{fig:SPCF syntax}.
Free variables and closed terms are defined in the usual way.
In the interest of readability, we sometimes use pseudocode (e.g.~\cref{intro-program}) in the style of Python to express SPCF terms.

There are two probabilistic constructs of SPCF:
the sampling construct $\Normal$ draws from $\Gau$, the standard Gaussian distribution with mean $0$ and variance $1$;
the scoring construct $\Score{\terma}$ enables conditioning on observed data by multiplying the weight of the current execution with the real number denoted by $\terma$.
Note this is not limiting as the standard uniform distribution with endpoints $0$ and $1$ can be described as $\PCF{\mathsf{cdfNormal}}(\Normal)$ where $\mathsf{cdfNormal}$ is the cumulative distribution function (cdf) of the standard normal distribution.
And any real-valued distribution with inverse cdf $f$ can be described as $\PCF{f}(\PCF{\mathsf{cdfNormal}}(\Normal))$.

\begin{remark}
  The main difference between our variant of SPCF and the others
  \cite{VakarKS19,MakOPW20} is that our sampling construct draws from the \emph{standard normal distribution} instead of the standard uniform distribution.
  This does not restrict nor extend our language and is only considered since the target (parameter) space of the standard HMC algorithm matches that of the support of a standard $n$-dimensional normal distribution.
\end{remark}

\begin{figure}[h]
  \defn{Types} (typically denoted $\typea,\typeb$) and
  \defn{terms} (typically $\terma,\termb,\termc $):
  \begin{align*}
    % types
    \typea,\typeb & ::=
    \PCFReal \mid \typea \tyarrow \typeb \\
    % terms
    \terma,\termb,\termc & ::=
    y \mid
    \PCF{r} \mid
    \lambda y.\terma \mid
    \terma\,\termb \mid
    \PCFIf{\termc \leq 0}{\terma}{\termb}\mid
    \PCF{f}(\terma_1,\dots,\terma_\ell) \mid
    \Y{\terma} \mid
    \Normal \mid
    \Score{\terma}
  \end{align*}
  \defn{Typing system}:
  $$
    \AxiomC{\vphantom{$4$}}
    \UnaryInfC{$\Gamma \vdash \Normal:\PCFReal$}
    \DisplayProof
    \qquad
    \AxiomC{$\Gamma \vdash \terma:\PCFReal$}
    \UnaryInfC{$\Gamma \vdash \Score{\terma}:\PCFReal$}
    \DisplayProof
    \qquad
    \AxiomC{$
      \Gamma \vdash \terma : (\typea \tyarrow \typeb) \tyarrow (\typea \tyarrow \typeb)
    $}
    \UnaryInfC{$
      \Gamma \vdash \Y{\terma} : \typea \tyarrow \typeb
    $}
    \DisplayProof
  $$
  \caption{
    Syntax of SPCF, where $r \in \Real$, $x,y$ are variables, and
   $f:\Real^n \to \Real$ ranges over a set $\pop$ of partial, measurable
   primitive functions.
  }
  \label{fig:SPCF syntax}
\end{figure}

\subsection{Operational Semantics}
\label{subsec:operational semantics}

\begin{figure}[t]
  \defn{Values} (typically denoted $\valuea$),
  \defn{redexes} (typically $\redexa$) and
  \defn{evaluation contexts} (typically $\evalcon$):
  \begin{align*}
    \valuea & ::=
    \PCF{r} \mid \lambda y.M \\
    \redexa & ::=
              (\lambda y\ldotp\terma)\,\valuea \mid
              \Ifleq{\PCF{r}}{\terma}{\termb} \mid
              \PCF{f}(\PCF{r_1},\dots,\PCF{r_\ell}) \mid
              \Y{(\lambda y\ldotp\terma)}\mid
              \Normal \mid
              \Score{\PCF{r}}\\
    \evalcon & ::=
               [] \mid
               \evalcon\,\terma \mid
               (\lambda y.\terma)\,\evalcon \mid
               \Ifleq{\evalcon}{\terma}{\termb}\mid
               \PCF{f}(\PCF{r_1}, \dots, \PCF{r_{i-1}},\evalcon,\terma_{i+1},\dots,\terma_\ell) \mid
               \Y{\evalcon} \mid
              \Score{\evalcon}
  \end{align*}
  \noindent\defn{Redex contractions}:
  \begin{align*}
    \config{(\lambda y.\terma)\,\valuea}{w}{\trace} & \red
      \config{\terma[\valuea/y]}{w}{\trace} \\
    \config{\PCF{f}(\PCF{r_1},\dots,\PCF{r_\ell})}{w}{\trace} & \red
      \begin{cases}
        \config{\PCF{f(r_1,\dots,r_\ell)}}{w}{\trace}
        & \text{if } (r_1,\dots,r_\ell) \in \domain{f},\\
        \Fail & \text{otherwise.}
      \end{cases}
      \\
    \config{\Y{(\lambda y.\terma)}}{w}{\trace} & \red
      \config{\lambda z.\terma[\Y{(\lambda y.\terma)}/y]\,z}{w}{\trace}
      \tag{for fresh variable $z$} \\
    \config{\Ifleq{\PCF{r}}{\terma}{\termb}}{w}{\trace} & \red
      \begin{cases}
        \config{\terma}{w}{\trace}
        & \text{if } r \leq 0,\\
        \config{\termb}{w}{\trace} & \text{otherwise.}
      \end{cases}\\
    \config{\Normal}{w}{\trace} & \red
      \config{\PCF{r}}{w}{\trace\concat [r]} \tag{for some $r \in \Real$} \\
    \config{\Score{\PCF{r}}}{w}{\trace} & \red
      \begin{cases}
        \config{\PCF{r}}{r\cdot w}{\trace}
        & \text{if } r > 0,\\
        \Fail & \text{otherwise.}
      \end{cases}
  \end{align*}
  \noindent\defn{Evaluation contexts}:
  $$
    \AxiomC{$\config{\redexa}{w}{\trace} \red \config{\contra}{w'}{\trace'}$}
    \UnaryInfC{$\config{E[\redexa]}{w}{\trace} \red \config{E[\contra]}{w'}{\trace'}$}
    \DisplayProof
    \qquad
    \AxiomC{$\config{\redexa}{w}{\trace} \red\Fail$}
    \UnaryInfC{$\config{E[\redexa]}{w}{\trace} \red\Fail$}
    \DisplayProof
    % \infer{\config{E[\redexa]}{w}{\trace} \red \config{E[\contra]}{w'}{\trace'}}{\config{\redexa}{w}{\trace} \red \config{\contra}{w'}{\trace'}}
    % \qquad\qquad
    % \infer{\config{E[\redexa]}{w}{\trace} \red\Fail}{\config{\redexa}{w}{\trace} \red\Fail}
  $$
  \caption{
    Operational small-step semantics of SPCF}
  \label{fig:operational small-step}
\end{figure}

The small-step reduction of SPCF is standard (see \citet{DBLP:conf/icfp/BorgstromLGS16}).
We present it as a rewrite system of \defn{configurations}, which are triples of the form $\config{\terma}{w}{\trace}$ where $M$ is a closed SPCF term, $w \in \pReal$ is a \defn{weight}, and $\trace \in \traces$ a trace, as defined in \cref{fig:operational small-step}.
\fz{Shouldn't top and bottom of the evaluation context rules be flipped?}

In the rule for $\Normal$, a random value $r \in \Real$ is generated and recorded in the trace, while the weight remains unchanged:
even though the program samples from a normal distribution, the weight does not factor in Gaussian densities as they are already accounted for by $\measure{\traces}$.
In the rule for $\Score{\PCF{r}}$, the current weight is multiplied by $r\in\Real$: typically this reflects the likelihood of the current execution given some observed data.
Similarly to \cite{DBLP:conf/icfp/BorgstromLGS16} we reduce terms which cannot be reduced in a reasonable way (i.e.~scoring with nonpositive constants or evaluating functions outside their domain) to $\Fail$.

We write $\redplus$ for the transitive closure of $\red$, and
$\red^*$ for the reflexive and transitive closure of $\red$.

\subsubsection{Value and Weight Functions.}
\label{appendix: value and weight functions}

Recall the measure space of traces
$\traces := \bigcup_{n\in\Nat} \Real^n$ is equipped with the standard disjoint union $\sigma$-algebra
$\Sigma_\traces := \set{\bigcup_{n\in\Nat} U_n \mid U_n \in \Borel_n }$, with measure given by summing the respective (higher-dimensional) normals $\tmeasure(\bigcup_{n\in\Nat} U_n) := \sum_{n\in\Nat} \Gau_n(U_n)$.
Following \citet{DBLP:conf/icfp/BorgstromLGS16},
we write $\terms$ to denote the set of all SPCF terms and view it as $\bigcup_{n\in\Nat} (\sk_n \times \Real^n)$
where $\sk_n$ is the set of SPCF terms with exactly $n$ numerals place-holders.
The measurable space of terms
is equipped with the $\sigma$-algebra $\Sigma_{\terms}$ that is the Borel algebra of the
countable disjoint union topology of the product topology of the discrete topology on
$\sk_n$ and the standard topology on $\Real^n$.
Similarly the subspace $\closedvalues$ of closed values inherits the Borel algebra on $\terms$.

% Let $\terma$ be an SPCF term with free variables amongst $x_1,\dots,x_m$ of type $\PCFReal$. Its \defn{value function} $\valuefn_{\terma} : \Real^m \times \traces \to \closedvalues \cup \{\bot\}$ returns, given values for each free variable and a trace, the output value of the program, if the program terminates in a value. The \defn{weight function} $\weightfn_{\terma} : \Real^m \times \traces \to \pReal$ returns the final weight of the corresponding execution. Formally:
Let $\terma$ be a closed SPCF term.
Its \defn{value function} $\valuefn_{\terma} : \traces \to \closedvalues \cup \{\bot\}$ returns, given a trace, the output value of the program, if the program terminates in a value.
The \defn{weight function} $\weightfn_{\terma} : \traces \to \pReal$ returns the final weight of the corresponding execution. Formally:
\begin{align*}
  \valuefn_{\terma} (\trace) & :=
  \begin{cases}
  V & \hbox{if $\config{\terma}{1}{\emptytrace} \red^*
      \config{\valuea}{w}{\trace}$}\\
  \bot & \tow
  \end{cases}
  &
  \weightfn_{\terma} (\trace) & :=
  \begin{cases}
  w & \hbox{if
    $\config{\terma}{1}{\emptytrace} \red^* \config{\valuea}{w}{\trace}$}\\
  0 & \tow
  \end{cases}
\end{align*}
It follows already from \cite{DBLP:conf/icfp/BorgstromLGS16} that the functions $\valuefn_\terma$ and $\weightfn_\terma$ are measurable.

Finally, every closed SPCF term $\terma$ has an associated \defn{value measure}
\begin{align*}
  {\oper{\terma}}: {\Sigma_{\closedvalues}} &\longrightarrow{\pReal}\\
   U & \longmapsto
    \shortint{\inv{\valuefn_\terma}(U)}
    {\weightfn_\terma}
    {\tmeasure}
\end{align*}
This corresponds to the denotational semantics of SPCF in the $\omega$-quasi-Borel space model via computational adequacy \cite{VakarKS19}.

\begin{proposition}
  \label{prop: all spcf terms have TR weight function}
  Every closed SPCF term has a tree representable weight function.
\end{proposition}

\begin{proof}
  Assume $\terma$ is a closed SPCF term and $\seqa \in \nsupport{\weightfn_\terma}{n}$.
  The reduction of $\terma$ must be
  $\config{\terma}{1}{\emptytrace} \red^* \config{\valuea}{w}{\seqa}$ for some value $\valuea$ and weight $w > 0$.
  Assume for contradiction that there is some $k < n$ where
  $\config{\terma}{1}{\emptytrace} \red^* \config{\valuea'}{w'}{\seqrange{1}{k}}$ for some value $\valuea'$ and weight $w' > 0$.
  Since $\seqrange{1}{k}$ is a prefix of $\seqa$ and $\red$ is deterministic if the trace is given,
  we must have
  $
  \config{\terma}{1}{\emptytrace}
  \redplus \config{\valuea'}{w'}{\seqrange{1}{k}}
  \redplus \config{\valuea}{w}{\seqa},
  $
  which contradicts the fact that $\valuea'$ is a value.
\end{proof}

\subsection{Almost-sure Termination}

% Intuitively, a program $M$ (closed term of ground type) is almost-surely terminating if the probability that a run of $M$ terminates is 1.

\begin{definition}\rm
  \label{def:ast}
  We say that a SPCF term $M$ \defn{terminates almost surely} if
  % \begin{asparaenum}[1.]
  %   \item
    $\terma$ is closed and
    $\tmeasure(\set{\trace \in \traces \mid \exists V, w \,.\,
    \config{\terma}{1}{\emptytrace} \red^* \config{\valuea}{w}{\trace}}) =1$;
  % or
  %   \item
    % $\terma$ has free variables amongst $x_1,\ldots,x_m$ all of which are of type $\PCFReal$, and there exists $T \in \Sigma_{\Real^m}$ such that $\leb_m(\Real^m\setminus T)=0$ and for each $\tupf r\in T$, $M[\tupf{\PCF r}/\tupf x]$ terminates almost surely.
  % \end{asparaenum}
\end{definition}

The following proposition is used in \cref{prop: step 1 and 2 is probabilistic} to support the correctness proof.

\begin{proposition}
  \label{prop: AST SPCF term gives probability measure}
  The value measure $\oper{\terma}$ of
  a closed almost surely terminating SPCF term $\terma$ which does not contain $\Score{-}$ as a subterm
  is probabilistic.
\end{proposition}

One of the main contribution of \cite{MakOPW20} is to find a suitable class of primitive functions such that their main theorem (\cref{lemma: weight and value functions are diff ae}) holds.

For our purposes, we take the set of \defn{analytic functions} with co-domain $\Real$ as our class $\pop$ of primitive functions which, as shown in Example 3 of \cite{MakOPW20}, satisfies the conditions for which the following lemma holds.

\begin{lemma}[\citet{MakOPW20}, Theorem 3]
  \label{lemma: weight and value functions are diff ae}
  Let $\terma$ be an SPCF term which terminates almost surely.
  Then its weight function $\weightfn_{\terma}$ and value function $\valuefn_{\terma}$ are differentiable almost everywhere.
\end{lemma}

\begin{definition}\rm
  \label{def:integrable}
  We say that a SPCF term $M$ is \defn{integrable} if
  % \begin{asparaenum}[1.]
  %   \item
      $\terma$ is closed and
      its value measure is finite,
      i.e.~$\oper{M}(\closedvalues) < \infty$;
  % or
  %   \item $\terma$ has free variables amongst $x_1,\ldots,x_m$ all of which are of type $\PCFReal$, and there exists $T \in \Sigma_{\Real^m}$ such that $\leb_m(\Real^m\setminus T)=0$ and for each $\tupf r\in T$, $M[\tupf{\PCF r}/\tupf x]$ is integrable.
  % \end{asparaenum}
\end{definition}

We conclude with the following lemma which shows that NP-HMC is an adequate inference algorithm for closed SPCF terms.

\begin{lemma}
  \label{lemma: all AST SPCF term satisfy all assumptions}
  The weight function of a closed integrable almost surely terminating SPCF term satisfies Assumptions \aref{ass:1}{1}, \aref{ass:2}{2} and \aref{ass:3}{3} of the NP-HMC algorithm.
\end{lemma}

\begin{proof}
  Let $\terma$ be a closed integrable almost surely terminating SPCF term,
  and $\btra$ be its weight function.
  $\btra$ is tree representable by \cref{prop: all spcf terms have TR weight function}.
  Integrability of $\btra$ (Assumption \aref{ass:1}{1}) is given as an assumption, and
  $\btra$ is almost everywhere continuously differentiable (Assumption \aref{ass:2}{2}) by \cref{lemma: weight and value functions are diff ae}.

  Assume for contradiction that Assumption \aref{ass:3}{3} does not hold.
  i.e.~There is a non-null set $U$ of infinite real-valued sequence where $\btra$ is zero on all prefixes of sequences in $U$.
  Let $U_p := \set{\seqrange{1}{k} \mid \seqa \in U, k \in \Nat}$ be the set of prefixes of sequences in $U$.
  Since $U$ is non-null, $U_p$ must also be non-null.
  Moreover, $\btra$ is zero on all traces in $U_p$.
  By the definition of weight function,
  $\seqa \in U_p$ implies
  % $\btra(\seqa) = 0$ if and only if
  $\config{\terma}{1}{\emptytrace} \not\red^* \config{\valuea}{w'}{\seqa}$
  for some $\valuea$ and $w'$.
  Hence, the probability of a non-terminating run of $\terma$ is non-zero and
  $\terma$ is not almost surely terminating.
\end{proof}

\begin{remark}
  The weight function as defined in \cref{appendix: value and weight functions} is the input density function of the target distribution to which an inference algorithm typically samples from.
  In this paper, we call this function the ``weight function'' when considering semantics following \cite{DBLP:conf/esop/CulpepperC17,VakarKS19,MakOPW20},
  and use the notion ``density'' when referring it in an inference algorithm similar to \cite{DBLP:conf/aistats/ZhouGKRYW19,DBLP:conf/icml/ZhouYTR20,cusumanotowner2020automating}.
\end{remark}

\section{Hamiltonian Monte Carlo Algorithm and its Variants}
\label{appendix: HMC}
% !TEX root = ../main.tex

Hamiltonian Monte Carlo (HMC) algorithm \cite{DUANE1987216,CancesLS07,Neal2011}
is a Markov chain Monte Carlo inference algorithm that
generates
samples from a continuous (finite) distribution $\tdist$ on
the \changed[lo]{measure space} $(\Real^n, \Borel_n, \leb_n)$, where $\Borel_n$ denotes the Borel $\sigma$-algebra.

\subsection{HMC Algorithm}

To generate a Markov chain $\set{\vec{q_i}}_{i\in\Nat}$ of samples from $\tdist$,
HMC simulates the \emph{Hamiltonian} motion of a particle on the negative logarithm of the density function of $\tdist$ with some auxiliary momentum.
Hence regions with high probability in $\tdist$ have low potential energy and are more likely to be visited by the simulated particle.
In each iteration, the particle is given some random momentum.
We formalise the algorithm here.

\subsubsection{Hamiltonian Dynamics}
\label{sec: Hamilton eq}

Say $\tpdf:\Real^n \to \Real$ is the (not necessarily normalized) probability density function of $\tdist$.
The simulated particle has two types of energies:
\defn{potential energy} $U:\Real^n \to\Real$ given by $U(\vec q) := -\log \tpdf(\vec q)$ and
\defn{kinetic energy} $K:\Real^n \to\Real$ given by $K(\vec p) := -\log \pdf{D}(\vec p)$ where
$D$ is some momentum distribution, typically a $n$-dimensional normal distribution.
Henceforth, we take
$K(\vec p) := \sum_{i=1}^n \frac{\vec{p}_i^2}{2}$.

The \defn{Hamiltonian} $H: \Real^n \times \Real^n \to \pReal $ of a system is defined quite simply to be the sum of the potential and kinetic energies, i.e.~
\[
  H(\vec{q},\vec{p}) := U(\vec{q}) + K(\vec{p}).
\]

The trajectories $\set{(\vec{q}^t,\vec{p}^t)}_{t \geq 0}$, where $\vec{q}^t$ and $\vec{p}^t$ are the position and momentum of the particle at time $t$ respectively, defined by the Hamiltonian $H$, can be determined by the \defn{Hamiltonian equations}:
\[
  \frac{\dif \vec{q}(t)}{\dif t} := \frac{\partial H}{\partial \vec{p}}(\vec{q}(t),\vec{p}(t))
  = \grad{K}(\vec{p}(t)) = \vec{p}(t) \qquad\text{and}\qquad
  \frac{\dif \vec{p}(t)}{\dif t} := -\frac{\partial H}{\partial \vec{q}}(\vec{q}(t),\vec{p}(t))
  = -\grad{U}(\vec{q}(t)).
\]
with initial conditions $(\vec{q}(0),\vec{p}(0)) = (\vec{q}^0,\vec{p}^0)$.

The \defn{canonical distribution} (also called Boltzmann-Gibbs distribution) $\sdist$
on the measure space $(\Real^n \times \Real^{n},\Sigma_{\Real^n \times \Real^{n}},\leb_{2n})$
corresponding to $H$
is given by the probability density function
\[
  \spdf(\vec{q},\vec{p}) := \frac{1}{Z} \exp{(-H(\vec{q},\vec{p}))} = \frac{1}{Z} \exp{(-U(\vec{q})-K(\vec{p}))}
  \qquad
  \text{where }Z := \shortint{\Real^n}{\tpdf}{\leb_n}
\]

\subsubsection{The Algorithm}
\label{sec:HMC algorithm}

Since computers cannot simulate continuous motions like Hamiltonian,
the equations of motion are generally numerically integrated by the \defn{leapfrog} method
(also called the velocity-Verlet algorithm \cite{PhysRev.159.98}):
\begin{align*}
  % \text{if } (\vec{q}^n,\vec{p}^n) \in \mathsf{Refl} \text{ then } \vec{p}^n & = -\vec{p}^n \\
  \vec{p}^{n+1/2} & = \vec{p}^n - \epsilon / 2 \cdot \grad{U}(\vec{q}^n) \\
  \vec{q}^{n+1} & = \vec{q}^n + \epsilon \cdot \vec{p}^{n+1/2} \\
  \vec{p}^{n+1} & = \vec{p}^{(n+1)/2} - \epsilon / 2 \cdot \grad{U}(\vec{q}^{n+1})
\end{align*}
where $\epsilon$ is the time step.
% \fz{Shouldn't it be $n+1/2$ instead of $(n+1)/2$?}

The \defn{integrator} ${\HMCint_n}:\Real^n \times \Real^n \to \Real^n \times \Real^n$
as given in \cref{alg:hmc integrator},
takes a state $(\vec{q},\vec{p})$ and
performs $L$ leapfrog steps with initial condition $(\vec{q}^0,\vec{p}^0) := (\vec{q},\vec{p})$ and time step $\epsilon$, and
return the state $(\vec{q}^{L},-\vec{p}^{L})$.

\begin{figure}[t!]
  \centering
  \vspace{-3mm}
  \begin{minipage}{0.45\linewidth}
    \begin{algorithm}[H]
      \caption{{HMC Integrator $\HMCint_n$}}
      \label{alg:hmc integrator}
      \begin{algorithmic}
        \STATE {\bfseries Input:}
          current state $(\vec{q_0}, \vec{p_0})$,
          potential energy $U$,
          step size $\epsilon$,
          number of steps {$L$}
        \STATE {\textbf{Output:}}
          new state $(\vec q, \vec p)$
        \algrule
        \STATE $(\seqa,\vec{p}) = (\vec{q_0}, \vec{p_0})$ \hfill \COMMENT{initialise}
        \FOR{$i=0$ {\bfseries to} $L$}
          \STATE $\vec{p}=\vec{p}-\frac{\epsilon}{2}\grad{U}(\seqa)$ \hfill\COMMENT{1/2 momentum step}
          \STATE $\seqa=\seqa+\epsilon \, \vec{p}$ \hfill\COMMENT{1 position step}
          \STATE $\vec{p}=\vec{p}-\frac{\epsilon}{2}\grad{U}(\seqa)$ \hfill\COMMENT{1/2 momentum step}
        \ENDFOR
        \STATE $\vec{p} = -\vec{p}$
        \STATE {\bfseries return} $(\seqa, \vec{p})$
      \end{algorithmic}
    \end{algorithm}
  \end{minipage}
  \hspace{0.05\linewidth}
  \begin{minipage}{0.45\linewidth}
    \begin{algorithm}[H]
      \caption{HMC Step}
      \label{alg:hmc}
      \begin{algorithmic}
        \STATE {\bfseries Input:}
          current sample $\vec{q_0}$,
          potential energy $U$,
          step size $\epsilon$,
          number of steps $L$
        \STATE \textbf{Output:}
          next \changed[fz]{sample $\vec{q}$}
        \algrule
        \STATE $\vec{p_0} \sim \Gau_{n}$
          \hfill \COMMENT{Kick}
        \STATE {$(\seqa, \vec{p}) =
          {\HMCint_n}((\vec{q_0}, \vec{p_0}), U, \epsilon, L)$}
          \hfill \COMMENT{Integrate}
        \IF{$\Uni(0,1)\footnote{$\Uni(0,1)$ is the standard uniform distribution.} < \min \set{1, \frac{\spdf(\vec{q},\vec{p})}{\spdf(\vec{q_0},\vec{p_0})}} $}
        \STATE {\bfseries return} $\seqa$
          \hfill \COMMENT{MH acceptance ratio}
        \ELSE
        \STATE {\bfseries return} $\vec{q_0}$
        \ENDIF
      \end{algorithmic}
    \end{algorithm}
  \end{minipage}
\end{figure}

\begin{proposition}[\citet{bou-rabee_sanz-serna_2018}, Theorem 4.1 and 4.2]
  \label{prop: hmc integrator is volume preserving and reversible}
  The integrator ${\HMCint_n}$ is volume preserving (i.e.~${\HMCint_n}_*\leb_{2n} = \leb_{2n}$) and reversible (i.e.~${\HMCint_n} = \inv{{\HMCint_n}}$) on $\Real^n \times \Real^n$.
\end{proposition}

\begin{proof}
  Let $\phi_k^P, \phi_k^Q:\Real^{2n}\to\Real^{2n}$ be the transition of momentum and position variables with step size $k$ respectively,
  i.e.~$\phi_k^P(\vec{q},\vec{p}) = (\vec{q},\vec{p}-k \grad{U}(\vec{q}))$, and
  $\phi_k^Q(\vec{q},\vec{p}) = (\vec{q} + k \grad{K}(\vec{p}),\vec{p})$.
  Hence, we can write the integrator ${\HMCint_n}$ as the composition $S \circ \phi_{\epsilon/2}^P \circ \phi_{\epsilon}^Q \circ \phi_{\epsilon/2}^P$,
  where $S(\vec{q},\vec{p}) := (\vec{q},-\vec{p})$.

  It is easy to see that
  $\inv{(\phi_k^P)} = S \circ \phi_k^P \circ S$ and
  $\inv{(\phi_k^Q)} = S \circ \phi_k^Q \circ S$.
  Hence,
  $
    \inv{{\HMCint_n}}
    =
    \inv{(\phi_{\epsilon/2}^P)} \circ \inv{(\phi_\epsilon^Q)} \circ \inv{(\phi_{\epsilon/2}^P)} \circ S
    =
    S \circ {\phi_{\epsilon/2}^P} \circ {\phi_\epsilon^Q} \circ {\phi_{\epsilon/2}^P}
    = {\HMCint_n}
  $
  and ${\HMCint_n}$ is reversible.

  Similarly it is easy to see that the \changed[fz]{shear} transformations $\phi_k^P$, $\phi_k^Q$ and momentum flip $S$ preserves measure on $\Real^{2n}$,
  i.e.~$\phi_k^P(D)$, $\phi_k^Q(D)$, $S(D)$ and $D$ have the same measure for all measurable set $D$ in $\Real^{2n}$.
  Hence,
  $
    ({\HMCint_n}_*\leb_{2n})(D)
    =
    \leb_{2n}(\inv{{\HMCint_n}}(D))
    =
    \leb_{2n}({\HMCint_n}(D))
    =
    \leb_{2n}(D)
  $
  and ${\HMCint_n}$ is volume preserving.
\end{proof}

\cref{alg:hmc} shows how HMC generates a sample from the current one $\vec{q_0}$.
It first performs leapfrog steps on $(\vec{q_0},\vec{p_0})$ via the integrator $\HMCint_n$
with a randomly chosen initial momentum $\vec{p_0}$.
The result $(\vec{q},\vec{p})$ of $\HMCint_n$ is then accepted with probability
$\min \set{1, \frac{\spdf(\vec{q},\vec{p})}{\spdf(\vec{q_0},\vec{p_0})}}$.
Note that if Hamiltonian is preserved
(i.e.~$H(\vec{q},\vec{p}) = H(\vec{q_0},\vec{p_0})$),
the acceptance probability is one and the proposal will always be accepted.

A Markov chain $\set{\vec{q_i}}_{i\in\Nat}$ is generated by iterating \cref{alg:hmc}.

\subsubsection{Correctness}
\label{sec:HMC correctness}

The HMC algorithm is only effective if its generated Markov chain $\set{\vec{q_i}}_{i\in\Nat}$ does converge to the target distribution $\tdist$.
Here we consider the typical convergence result of the total variation norm for the probability measure generated.

Formally, we say a Markov chain $\set{\vec{q_i}}_{i\in\Nat}$ converges to the target distribution $\tdist$ on $\Real^n$ if
\[
  \forall \vec{q}\in\Real^n, \quad \lim_{m\to\infty} \norm{Q^m(\vec{q},-)-\tdist} = 0,
\]
where
$Q^m(\vec{q},A)$ is the probability for which the Markov chain is in $A\in\Borel_n$ after $m$ steps starting at $\vec{q}\in\Real^n$ and
$\norm{-}$ denotes the total variation norm on $\Real^n$ (i.e.~$\norm{\mu} := \sup_{A \in \Borel_n} \mu(A) - \inf_{A \in \Borel_n} \mu(A)$).

Here we present the necessary conditions to prove such a result for the HMC algorithm.
Let $Q:\Real^n \times \Borel_n \to\pReal$ be the transition kernel specified by \cref{alg:hmc},
so that $Q(\vec{q},A)$ is the probability for which the next sample returned by \cref{alg:hmc} is in $A \in \Borel_n$ given the current sample is $\vec{q}\in\Real^n$.
We write $Q^m$ to be $m$ compositions of $Q$.
(i.e.~$Q^0(\vec{q},A) := [\vec{q} \in A]$; for $k > 0$, $Q^{k+1}(\vec{q},A) := \int_{\Real^n}{Q^{k}(\vec{q'},A)}\ {Q(\vec{q},\dif \vec{q'})}$).

First, we make sure that $\tdist$ is the invariant distribution of the Markov chain.

\begin{proposition}[\citet{bou-rabee_sanz-serna_2018}, Theorem 5.2]
  $\tdist$ is invariant against $Q$.
\end{proposition}

\iffalse
\begin{proof}
  The invariant distribution of a chain with transition kernel $P$ is a distribution $\mu$ where
  for any measurable set $B$,
  $\int_{\Real^n \times \Real^{n}} P(a,B)\ \mu(da) = \mu (B)$.
  In this case, we say that $P$ leaves the distribution $\mu$ invariant.

  It is easy to see that the composition of transition kernels leaves a distribution invariant if and only if its components do.
  Hence, we consider the transition kernels $P_1$ and $P_{2,3}$ of the algorithm.
  Showing $P_1$ leaves $\sdist$ invariant is simple. We consider $P_{2,3}$ instead.
  \iffalse % formatted for 2-col
  \begin{align*}
    & \int_{\Real^n \times \Real^{n}} P_{2,3}(a,B)\ \sdist(da)\\
    & =
    \int_{B}\spdf(a) \leb_{2n}(da)
    +
    \int_{\Real^n \times \Real^{n}} \charfn{\set{{\HMCint_n}(a) \in B}}(a)\cdot \alpha(a) \cdot\spdf(a)\ \leb_{2n}(da)
     -
    \int_{\Real^n \times \Real^{n}} \charfn{\set{a \in B}}(a)\cdot \alpha(a) \cdot\spdf(a)\ \leb_{2n}(da)
  \end{align*}
  \fi
  \[
    \int_{\Real^{n} \times \Real^n} P_{2,3}(a,B)\ \sdist(da)
    =
    \int_{B}\spdf\ d\leb_{2n} +
    \int_{\Real^n \times \Real^{n}} {[{\HMCint_n}(a) \in B]}\cdot \alpha(a) \cdot\spdf(a)\ \leb_{2n}(da) -
    \int_{\Real^n \times \Real^{n}} {[a \in B]}\cdot \alpha(a) \cdot\spdf(a)\ \leb_{2n}(da)
  \]
  \lo{Preceding equation: the domain of integration should be $\Real^n \times \Real^{n}$ rather than $\Real^{2n}$. Please also change the Dirac measure as pointed out above.}\cm{Done}
  The second and third integrals on the RHS are the same since
  the pushforward measure of $\leb_{2n}$ along the integrator ${\HMCint_n}$ is the same as $\leb_{2n}$ (${\HMCint_n}$ is volume preserving) and
  $\alpha(a)\cdot\spdf(a) = \alpha({\HMCint_n}(a))\cdot\spdf({\HMCint_n}(a))$ (${\HMCint_n}$ is reversible).
  \fz{I don't see why the last equation holds.}
\end{proof}

It is important to notice the difference between the Markov chain $\set{(\vec{q_i},\vec{p_i})}_{i\in\Nat} $ generated by the HMC algorithm and the marginalized chain $\set{\vec{q_i}}_{i\in\Nat}$ which we will be considering below.
In standard HMC, the Markov chain $\set{(\vec{q_i},\vec{p_i})}_{i\in\Nat} $ leaves $\sdist$ invariant implies
the marginalized chain $\set{\vec{q_i}}_{i\in\Nat} $ leaves the target distribution $\tdist$ invariant.
However, we will see that this is not so \changed[lo]{straightforward} for NP-HMC.
\fi

While showing $\tdist$ is the invariant distribution for the Markov chain is relatively simple,
we would be wrong to think that convergence follows trivially.
In fact, as shown in the following example, the Markov chain can easily be periodic.

\begin{example}[\citet{bou-rabee_sanz-serna_2018}, Example 5.1]
  \label{ex: not always ergodic}
  Consider the case where the target distribution is a (unnormalised) one-dim.~normal distribution.
  In particular say the potential energy is
  $U(q) := q^2/2$.
  Then, the Hamiltonian flow ($H(q,p) = U(q) + K(p) = q^2/2 + p^2/2$)
  is a rotation in the $(q,p)$-plane with period $2\pi$.
  If the duration of the simulation is $\pi$, the exact flow returns $q_1 = -q_0$.
\end{example}

\cm{Find a concrete $L$ and $\epsilon$ for this example.}

There are known conditions
for which HMC converges to the right distribution \cite{schutte1999}.
Here we follow the treatment given by \citet{CancesLS07}.

Results from \cite{Tierney94,DBLP:conf/icfp/BorgstromLGS16} tell us that it is enough to show that the transition kernel $Q$ is \defn{strongly $\tdist$-irreducible}: for all $a$ and $B$, $\tdist(B) > 0$ implies $Q(a,B) > 0$.

\begin{lemma}[\citet{CancesLS07}, Lemma 2 and 3 (Strong irreducibility)]
  \label{lemma: HMC irreducbile}
  Assume $U$ is continuously differentiable, bounded above on $\Real^n$ and
  $\grad{U}$ is globally Lipschitz.
  Then the transition kernel $Q$ is strongly $\tdist$-irreducible.
\end{lemma}

\begin{lemma}[\citet{DBLP:conf/icfp/BorgstromLGS16}, Lemma 33 (Aperiodicity)]
  \label{lemma: HMC aperiodic}
  A strongly $\tdist$-irreducible transition kernel is also $\tdist$-aperiodic.
\end{lemma}

\begin{restatable}[\citet{Tierney94}, Theorem 1 and Corollary 2]{lemma}{tierney}
  \label{lemma: Tieryney}
  If the transition kernel $Q$ with invariant distribution $\tdist$
  is $\tdist$-irreducible and $\tdist$-aperiodic, then
  for all $\vec{q}$, $\lim_{n\to\infty} \norm{Q^n(\vec{q},-)-\tdist} = 0$.
\end{restatable}

\begin{theorem}
  \label{thm: hmc converges}
  If $U$ is continuously differentiable, bounded above on $\Real^n$ and
  $\grad{U}$ is globally Lipschitz,
  the Markov chain generated by iterating \cref{alg:hmc} converges to the target distribution $\tdist$.
\end{theorem}

\subsection{HMC Variants}
\label{sec: hmc variants}

\subsubsection{Reflective/Refractive HMC}

\newcommand{\nextBoundary}{\ensuremath{\mathsf{nextBoundary}}}
\newcommand{\decompose}{\ensuremath{\mathsf{decompose}}}

Reflective/refractive HMC (RHMC) \cite{AfsharD15} is an extension of HMC that improves its behaviour for discontinuous density functions.
Standard HMC is correct for such distributions as well, but the acceptance probability may be very low and convergence extremely slow.

We need to quickly discuss what discontinuities mean in our setting:
In addition to discontinuities of each $U_n: \Real^n \to \Real$ itself, we also regard it as a discontinuity when $\vec q$ leaves the support of $U_n$, since this means that a different branch in the tree representing function is chosen.
The set of these discontinuities is $\partial \support{w}$, i.e. the boundary of the support of the density function.

Fortunately, the extension of RHMC to our nonparametric setting is straightforward.
The algorithm is described in \cref{alg:np-rhmc-integrator}.
The only relevant difference is the need for an $\extend$ call in the algorithm.

The rest of the algorithm is the same as \cite{AfsharD15}:
It uses two additional functions that deal with the discontinuities of $U$: \decompose{} and \nextBoundary.
Just like in \cite{AfsharD15}, we assume that these are given to the algorithm because their implementation depends on the kind of discontinuities in the density function.
In the original paper, they only consider discontinuities that are given by affine subspaces.

The function $\nextBoundary(\vec q, \vec p, T, U)$ takes a position $\vec q \in \Real^n$, a momentum $\vec p \in \Real^n$, a time limit $T > 0$, and family of potential energies $\set{U_n}_{n\in \Nat}$.
It then checks whether a particle starting at $\vec q$ moving with momentum $\vec p$ will hit a discontinuity of $U$ in time $\le T$.
If so, it returns the time $t$ of ``impact'', the position $\vec q_<$ just before the discontinuity and $\vec q_>$ just after the discontinuity.

The function $\decompose(\vec q, \vec p, U)$ takes a position $\vec q$ on the discontinuity, a momentum $\vec p$, and $U$ as before.
It then decomposes the momentum $\vec p$ into a component $\vec p_\parallel$ that is parallel to the discontinuity and $\vec p_\perp$ that is perpendicular to it.

The basic idea of the algorithm is inspired by reflection and refraction in physics.
We simulate the trajectory of a particle according to Hamiltonian dynamics.
When hitting a discontinuity, we compute the potential difference. If the kinetic energy is big enough to overcome it, refraction occurs: the perpendicular component of $\vec p$ is scaled down.
Otherwise, the particle is reflected.

The only difference to the original algorithm in \cite{AfsharD15} is the call to $\extend$.
Why is it necessary?
When hitting a discontinuity (and only then!), we may have to switch to a different branch on the tree representing the density function.
Hence we may have to extend the position $q_>$ just after the discontinuity, which is why we call extend on it.

\newcommand{\randomlyPermute}{\ensuremath{\mathsf{randomlyPermute}}}
\newcommand{\coordIntegrator}{\ensuremath{\mathsf{coordIntegrator}}}

\begin{figure}[t!]
  \centering
  \vspace{-3mm}
  \begin{minipage}{0.49\linewidth}
    \begin{algorithm}[H]
      \caption{NP-RHMC Integrator $\NPRInt$}
      \label{alg:np-rhmc-integrator}
      \begin{algorithmic}
        \STATE {\bfseries Input:}
          current state $(\vec{q_0}, \vec{p_0})$,
          family of potential energies $\set{U_n}_{n\in \Nat}$,
          step size $\epsilon$,
          number of steps {$L$}
        \STATE \textbf{Output:}
          new state $(\vec q, \vec p)$ computed according to Hamiltonian dynamics,
          extended initial state $(\vec q_0, \vec p_0)$
        \algrule
        \STATE $(\vec q,\vec{p}) = (\vec{q_0}, \vec{p_0})$ \hfill \COMMENT{initialise}
        \FOR{$i=0$ {\bfseries to} $L$}
          \STATE $\vec{p}=\vec{p}-\frac{\epsilon}{2}\grad{U_{\len{\vec{q_0}}}}(\seqa)$ \hfill\COMMENT{1/2 momentum step}
          \STATE $t = 0$ \hfill\COMMENT{start of position step}
          \WHILE{$\nextBoundary(\vec q, \vec p, \epsilon - t, U)$ exists}
            \STATE $(t', \vec q_<, \vec q_>) = \nextBoundary(\vec q, \vec p, \epsilon - t, U)$
            \STATE $t = t + t'$
            \STATE $((\vec q', \vec p'), (\vec q_0', \vec p_0')) = \extend((\vec q_>, \vec p), (\vec q_0, \vec p_0), i \epsilon + t, U)$
            \STATE $\Delta U = (U_{\len{\vec q'}}(\vec q') - U_{\len{\vec q_<}}(\vec q_<))$
            \IF{$\|\vec p_\perp\|^2 > 2 \Delta U$}
              \STATE $(\vec p_\parallel, \vec p_\perp) = \decompose(\vec q', \vec p', U)$
              \STATE $\vec p_\perp = \sqrt{\|\vec p_\perp\|^2 - 2 \Delta U} \frac{\vec p_\perp}{\|\vec p_\perp\|}$ \hfill\COMMENT{refraction}
              \STATE $\vec q = \vec q'$
            \ELSE
              \STATE $(\vec p_\parallel, \vec p_\perp) = \decompose(\vec q_<, \vec p, U)$
              \STATE $\vec p_\perp = -\vec p_\perp$\hfill\COMMENT{reflection}
              \STATE $\vec q = \vec q_<$
            \ENDIF
            \STATE $\vec p = \vec p_\perp + \vec p_\parallel$
          \ENDWHILE
          \STATE $\vec q =\vec q + (\epsilon - t)\vec{p}$ \hfill\COMMENT{rest of position step}
          \STATE $\vec{p}=\vec{p}-\frac{\epsilon}{2}\grad{U_{\len{\vec q}}}(\seqa)$ \hfill\COMMENT{1/2 momentum step}
        \ENDFOR
        \STATE $\vec{p} = -\vec{p}$
        \STATE {\bfseries return} $((\seqa, \vec{p}), (\vec{q_0}, \vec{p_0}))$
      \end{algorithmic}
    \end{algorithm}
    \begin{algorithm}[H]
      \caption{$\extend$ for NP-DHMC}
      \label{alg: extend for np-dhmc}
      \begin{algorithmic}
        \STATE {\bfseries Input:}
          current state $(\vec{q},\vec{p})$,
          initial state $(\vec{q_0},\vec{p_0})$,
          time $t$,
          family of potential energies $U = \set{U_n}_{n\in\Nat}$
          family of potential energies $\set{U_n}_{n\in \Nat}$,
          step size $\epsilon$,
          number of steps {$L$}
        \STATE \textbf{Output:}
          extended current state $(\vec q, \vec p)$,
          extended initial state $(\vec q_0, \vec p_0)$
        \algrule
          \WHILE{$\vec{q} \not\in \domain{U_{\len{\vec q}}}$}
            \STATE $x \sim \Gau(0,1)$
            \IF{$\len{q} + 1 \in C$}
              \STATE $y \sim \Gau(0,1)$ \hfill\COMMENT{Gaussian for continuous params}
              \STATE $(x_0, y_0) = (x - t \, y, y)$ \hfill\COMMENT{update to current time $t$}
            \ELSE
              \STATE $y_0 \sim \Lap(0,1)$ \hfill\COMMENT{Laplace for discontinuous ones}
              \STATE $(x_0, y_0) = (x - t \, \sign(y), y)$ \hfill\COMMENT{update to current time $t$}
            \ENDIF
            \STATE $(\vec{q_0},\vec{p_0})  = (\vec{q_0} \concat [x_0], \vec{p_0} \concat [y_0])$
            \STATE $(\vec{q}, \vec{p}) = (\vec{q} \concat [x], \vec{p} \concat [y])$ \hfill\COMMENT{increment dimension}
          \ENDWHILE
          \STATE \textbf{return} $((\vec{q},\vec{p}),(\vec{q_0},\vec{p_0}))$
      \end{algorithmic}
    \end{algorithm}
  \end{minipage}
  \hspace{0.01\linewidth}
  \begin{minipage}{0.49\linewidth}
    \begin{algorithm}[H]
      \caption{NP-DHMC Integrator $\NPDisInt$}
      \label{alg:np-dis-hmc-integrator}
      \begin{algorithmic}
        \STATE {\bfseries Input:}
          current state $(\vec{q_0}, \vec{p_0})$,
          family of potential energies $\set{U_n}_{n\in \Nat}$,
          step size $\epsilon$,
          number of steps {$L$},
          discontinuous coordinates $D$
        \STATE \textbf{Output:}
          new state $(\vec q, \vec p)$ computed according to Hamiltonian dynamics,
          extended initial state $(\vec q_0, \vec p_0)$
        \algrule
        \STATE $(\vec q,\vec{p}) = (\vec{q_0}, \vec{p_0})$ \hfill \COMMENT{initialise}
        \STATE $\vec{q}' = \vec{q}_0$
        \STATE $\vec{p}' = \vec{p}_0$
        \STATE $N = \len{\vec{q}_0}$
        \FOR{$i=0$ {\bfseries to} $L$}
          \STATE $\vec{p}_C = \vec{p}_C - \frac{\epsilon}{2}{\nabla_{\vec q_C} U_N}{(\vec q)}$
          \STATE $\vec{q}_C = \vec{q}_C + \frac{\epsilon}{2} {\vec{p}_C}$
          \FOR{$j \in \randomlyPermute(D)$}
            \IF{$j < \len{\vec q}$}
              \STATE \COMMENT{$\len{\vec q}$ may have changed, so must check $j < \len{\vec q}$}
              \STATE $((\vec q, \vec p), (\vec{q'}, \vec{p'})) = $
              \STATE \qquad$\coordIntegrator((\vec q, \vec p), (\vec{q'}, \vec{p'}), j, i\epsilon, \epsilon)$
            \ENDIF
          \ENDFOR
          \STATE $N = \len{\vec{q}}$
          \STATE $\vec{q}_C = \vec{q}_C + \frac{\epsilon}{2} {\vec{p}_C}$
          \STATE $\vec{p}_C = \vec{p}_C - \frac{\epsilon}{2}{\nabla_{\vec q_C} U_N}{(\vec q)}$
        \ENDFOR
        \STATE $\vec{p} = -\vec{p}$
        \STATE {\bfseries return} $((\seqa, \vec{p}), (\vec{q'}, \vec{p'}))$
      \STATE
      \STATE \textbf{function} $\coordIntegrator((\vec q, \vec p), (\vec{q'}, \vec{p'}), j, t, \epsilon)$
        \STATE $\vec{q^*} = \vec q$
        \STATE $q^*_j = q^*_j + \epsilon \sign(p_j)$
        \STATE $((\vec q^*, \vec p^*), (\vec{q'}^*, \vec{p'}^*)) = \extend((\vec q^*, \vec p^*), (\vec{q'}, \vec{p'}), t, U)$
        \STATE $\Delta U = U(\vec{q^*}) - U(\vec q)$
        \IF{$|p_j| > \Delta U$}
          \STATE $(\vec q, \vec p) = (\vec{q^*}, \vec{p^*})$ \hfill\COMMENT{enough kinetic energy to jump}
          \STATE $(\vec{q'}, \vec{p'}) = (\vec{q'}^*, \vec{p'}^*)$
          \STATE $p_j = p_j - \sign(p_j)\Delta U$
        \ELSE
          \STATE $p_j = -p_j$ \hfill\COMMENT{not enough kinetic energy, reflect}
        \ENDIF
        \STATE {\bfseries return} $((\vec q, \vec p), (\vec{q'}, \vec{p'}))$
      \end{algorithmic}
    \end{algorithm}
  \end{minipage}
\end{figure}

\subsubsection{Laplace Momentum and Discontinuous HMC}

The Hamiltonian Monte Carlo method usually uses Gaussian momentum because it corresponds to the physical interpretation of kinetic energy being $\frac12 \sum_i \vec{p}_i^2$ for a momentum vector $\vec p$.
\citet{NishimuraDL20} propose to use Laplace momentum where the kinetic energy for a momentum vector $\vec p$ is given by $\sum_i |\vec{p}_i|$.
This means that the momentum vector must follow a Laplace distribution, denoted as $\Lap(0,1)$, with density proportional to $\prod_i \exp(-|\vec{p}_i|)$.
Hamilton's equations have to be changed to
\[ \frac{\dif \vec q}{\dif t} = \sign(p), \quad \frac{\dif \vec p}{\dif t} = -\nabla_q U. \]
Note that the time derivative of $\vec q$ only depends on the sign of the $p_i$'s.
Hence, if the sign does not change, the change of $\vec q$ can be computed, irrespective of the intermediate values of $U_{\len{\vec q}}(\vec q)$.
The integrator of discontinuous HMC \cite{NishimuraDL20} takes advantage of this for ``discontinuous parameters'', i.e. parameters that $U$ is not continuous in. Thus it can jump through multiple discontinuities of $U$ without evaluating it at every boundary.

We adapt the integrator from \cite{NishimuraDL20} to NP-HMC.
Following them, we assume for simplicity that each coordinate of the position space either corresponds to a continuous or discontinuous parameter, irrespective of which path is chosen.
The set $C$ records all the continuous parameters and $D = \mathbb N \setminus C$ the discontinuous ones.
We use a Gaussian distribution for the continuous parameters of the momentum vector and a Laplace distribution for the discontinuous parameters.
Our integrator updates the continuous coordinates by half a step size just as before, but then the discontinuous ones are updated coordinate by coordinate, a technique called \emph{operator splitting}.
Afterwards, the continuous coordinates are updated by half a step size again.
Algorithm \ref{alg:np-dis-hmc-integrator} contains all the details.

Again, the main difference to the original algorithm is a call to $\extend$.
Note we also have to modify the $\extend$ function itself (given in \cref{alg: extend for np-dhmc}) because some momentum coordinates have to be sampled from a Laplace distribution, and not a Gaussian as before.

\changed[fz]{We also make the following modification: we update the $q_0$ position to current time $t$ instead of $q$ because this avoids having to re-run the probabilistic program.
If we update $q$, a re-run might be necessary if $q$ changed again after an extension, but for $q_0$ this is not the case because the extended part does not affect the weight.}

\iffalse
\newcommand{\randomlyPermute}{\ensuremath{\mathsf{randomlyPermute}}}
\newcommand{\coordIntegrator}{\ensuremath{\mathsf{coordIntegrator}}}

{\centering
\begin{minipage}{.45\linewidth}
\begin{algorithm}[H]
  \caption{NP-DHMC Integrator $\NPDisInt$}
  \label{alg:np-dis-hmc-integrator}
  \begin{algorithmic}
    \STATE {\bfseries Input:}
      current state $(\vec{q_0}, \vec{p_0})$,
      family of potential energies $\set{U_n}_{n\in \Nat}$,
      step size $\epsilon$,
      number of steps {$L$},
      discontinuous coordinates $D$
    \STATE \textbf{Output:}
      new state $(\vec q, \vec p)$ computed according to Hamiltonian dynamics,
      extended initial state $(\vec q_0, \vec p_0)$
    \algrule
    \STATE $(\vec q,\vec{p}) = (\vec{q_0}, \vec{p_0})$ \hfill \COMMENT{initialise}
    \STATE $\vec{q}' = \vec{q}_0$
    \STATE $\vec{p}' = \vec{p}_0$
    \STATE $N = \len{\vec{q}_0}$
    \FOR{$i=0$ {\bfseries to} $L$}
      \STATE $\vec{p}_C = \vec{p}_C - \frac{\epsilon}{2}{\nabla_{\vec q_C} U_N}{(\vec q)}$
      \STATE $\vec{q}_C = \vec{q}_C + \frac{\epsilon}{2} {\vec{p}_C}$
      \FOR{$j \in \randomlyPermute(D)$}
        \IF{$j < \len{\vec q}$}
          \STATE \COMMENT{length of $\vec q$ may have changed, so we need to check $j < \len{\vec q}$}
          \STATE $((\vec q, \vec p), (\vec{q'}, \vec{p'})) = \coordIntegrator((\vec q, \vec p), (\vec{q'}, \vec{p'}), j, i\epsilon, \epsilon)$
        \ENDIF
      \ENDFOR
      \STATE $N = \len{\vec{q}}$
      \STATE $\vec{q}_C = \vec{q}_C + \frac{\epsilon}{2} {\vec{p}_C}$
      \STATE $\vec{p}_C = \vec{p}_C - \frac{\epsilon}{2}{\nabla_{\vec q_C} U_N}{(\vec q)}$
    \ENDFOR
    \STATE $\vec{p} = -\vec{p}$
    \STATE {\bfseries return} $((\seqa, \vec{p}), (\vec{q'}, \vec{p'}))$
  \STATE
  \STATE \textbf{function} $\coordIntegrator((\vec q, \vec p), (\vec{q'}, \vec{p'}), j, t, \epsilon)$
    \STATE $\vec{q^*} = \vec q$
    \STATE $q^*_j = q^*_j + \epsilon \sign(p_j)$
    \STATE $((\vec q^*, \vec p^*), (\vec{q'}^*, \vec{p'}^*)) = \extend((\vec q^*, \vec p^*), (\vec{q'}, \vec{p'}), t, U)$
    \STATE $\Delta U = U(\vec{q^*}) - U(\vec q)$
    \IF{$|p_i| > \Delta U$}
      \STATE $(\vec q, \vec p) = (\vec{q^*}, \vec{p^*})$ \hfill\COMMENT{enough kinetic energy to jump}
      \STATE $(\vec{q'}, \vec{p'}) = (\vec{q'}^*, \vec{p'}^*)$
      \STATE $p_i = p_i - \sign(p_i)\Delta U$
    \ELSE
      \STATE $p_i = -p_i$ \hfill\COMMENT{not enough kinetic energy, reflect}
    \ENDIF
    \STATE {\bfseries return} $((\vec q, \vec p), (\vec{q'}, \vec{p'}))$

  \STATE

  \STATE \textbf{function} $\extend((\vec{q},\vec{p}),(\vec{q_0},\vec{p_0}),t,U)$
    \WHILE{$\vec{q} \not\in \domain{U_{\len{\vec q}}}$}
      \STATE $x_0 \sim \Gau(0,1)$
      \IF{$\len{q} + 1 \in C$}
        \STATE $y_0 \sim \Gau(0,1)$ \hfill\COMMENT{Gaussian for continuous params}
      \ELSE
        \STATE $y_0 \sim \Lap(0,1)$ \hfill\COMMENT{Laplace for discontinuous ones}
      \ENDIF
      \STATE $(x, y) = (x_0 + t \, y_0, y_0)$ \hfill\COMMENT{update to current time $t$}
      \STATE $(\vec{q_0},\vec{p_0})  = (\vec{q_0} \concat [x_0], \vec{p_0} \concat [y_0])$
      \STATE $(\vec{q}, \vec{p}) = (\vec{q} \concat [x], \vec{p} \concat [y])$ \hfill\COMMENT{increment dimension}
    \ENDWHILE
    \STATE \textbf{return} $((\vec{q},\vec{p}),(\vec{q_0},\vec{p_0}))$
  \end{algorithmic}
\end{algorithm}
\end{minipage}
}
\fi

\subsection{Efficiency Improvements}
\label{sec:efficiency-improvements}

As touched upon in the main text, our implementation includes various performance improvements compared to the pseudocode presentation of NP-HMC.
\begin{asparaenum}[(i)]
\item The $\extend$ function (\cref{alg:extend}) as presented may seem inefficient.
While it terminates almost surely (thanks to Assumption~\aref{ass:3}{3}),
the expected number of iterations may be infinite.
In practice, however, the density function $w$ will arise from a probabilistic program, such as \cref{intro-program}.
Therefore, to evaluate $w$, it would be natural to run the program.
The length of $\vec q$ returned by $\extend$ is exactly the number of \pythoninline{sample} statements encountered during the program's execution.
In particular, if the program has finite expected running time, then the same is true of $\extend$.

\item On top of that, efficient implementations of NP-HMC will interleave the execution of the program with $\extend$, by gradually extending $\vec q$ (if necessary) at every encountered \pythoninline{sample} statement.
This way, $\extend$ increases the running time only by a small constant factor.

\item \changed[fz]{For this to work, we also make the following modification: we update the $q_0$ position to current time $t$ instead of $q$ because this avoids having to re-run the probabilistic program.
If we update $q$, a re-run might be necessary if $q$ changed again after an extension, but for $q_0$ this is not the case because the extended part does not affect the weight.}

%Similarly, the use of $U_n = -\log w_{\le n}$ and $w_{\le n}(\vec q) = \sum_{k=1}^n \btra(\seqrange{1}{k})$ in the above algorithms does not mean that we actually have to compute sums every time.
\item In a similar vein, we do not have to compute the sum $w_{\le n}(\vec q) = \sum_{k=1}^n \btra(\seqrange{1}{k})$ each time $U_n = -\log w_{\le n}$ is accessed.
By the prefix property, only one of the summands of $w_{\le n}(\vec q)$ is actually nonzero.
Moreover, if $w$ is given by a probabilistic program, then the weight computed during the execution of the program on $\vec q$ is exactly this nonzero summand, assuming that the trace $\vec q$ is long enough for a successful run (which the $\extend$ function ensures).
\item
Another notable way our implementation differs from the algorithm presented above is that it not only extends a trace $\vec q$ in $\extend$ (if necessary), but also trims it (if necessary) to the unique prefix $\vec q'$ of $\vec q$ with positive $w(\vec q')$.
The dimension of $\vec p$ is adjusted accordingly.
This seems to work much better for certain examples, such as the geometric distribution described in \cref{sec:experiements}.
The reason is most likely that the unused suffix (which may have been adapted to the state \changed[lo]{before the current call of $\extend$) is a hindrance when trying to extend to a different state later on.}
\end{asparaenum}

\section{Proof of Correctness}
\label{appendix: correctness}
% !TEX root = ../main.tex

In this section,
we show that the NP-HMC algorithm is correct,
in the sense that the Markov chain generated by iterating \cref{alg:np-hmc}
converges to the target distribution
$\tdist:{A}\mapsto{\frac{1}{Z} \int_{A} w\ d\tmeasure}$
where
$Z := \int_{\traces}w\ d\tmeasure$.

Henceforth, we assume that \changed[cm]{the density function $\btra$ of the target distribution $\tdist$ is tree-representable and} satisfies Assumptions \aref{ass:1}{1}, \aref{ass:2}{2} and \aref{ass:3}{3}.

\subsection{An Equivalent Algorithm}

We write \cref{alg:np-hmc} as the program \pythoninline{NPHMCstep}
(\cref{alg:np-hmc integrator} as \pythoninline{NPint} and \cref{alg:extend} as \pythoninline{extend}) in \cref{python:np-hmc}.
We present
input sample as \pythoninline{q0};
the density function as \pythoninline{w} and
define potential energy $U$, which is a family of partial functions, as a function \pythoninline{U}, such that
\pythoninline{U(n)} is a partial function denoting $U_n$;
step size as \pythoninline{ep}; and
number of steps as \pythoninline{L}.
We also assume the following primitive functions are implemented:
\pythoninline{normal} is the sampling construct in the language which samples a real number from the standard normal distribution $\Gau_1$.
\pythoninline{domain(f)} gives the domain of the partial function \pythoninline{f}.
\pythoninline{pdfN(x,n)} gives the probability density of \pythoninline{x} on the standard \pythoninline{n}-dimensional normal distribution.
\pythoninline{cdfN(x)} gives the cumulative distribution of \pythoninline{x} on the standard normal distribution.
\pythoninline{grad(f,x)} gives the gradient of the partial function \pythoninline{f} at \pythoninline{x} if defined and \pythoninline{None} if not.

The program \pythoninline{NPHMC} generates a Markov chain on $\traces$ by iterating \pythoninline{NPHMCstep}.

\begin{figure*}
  \hspace{0.025\linewidth}
  \begin{minipage}{0.45\linewidth}
\begin{python}[caption={Python code for \pythoninline{NPHMC}},label={python:np-hmc}]
def extend((q,p),(q0,p0),t,U):
  while q not in domain(U(len(q))):
    x0 = normal
    y0 = normal
    x = x0 + t*y0
    y = y0
    q0.append(x0)
    p0.append(y0)
    q.append(x)
    p.append(y)
  return ((q,p),(q0,p0))

def NPint((q0,p0),U,ep,L):
  q = q0
  p = p0
  for i in range(L):
    p = p - ep/2*grad(U(len(q0)),q)
    q = q + ep*p
    ((q,p),(q0,p0)) =
      extend((q,p),(q0,p0),i*ep,U)
    p = p - ep/2*grad(U(len(q0)),q)
  return ((q,p),(q0,p0))

def NPHMCstep(q0,w,ep,L):
  # initialisation
  p0 = [normal for i in range(len(q0))]
  U = lambda n: lambda q:
    -log(sum([w(q[:i]) for i in range(n)]))
  # NP-HMC integration
  ((q,p),(q0,p0)) = NPint((q0,p0),U,ep,L)
  # MH acceptance
  if cdfN(normal) < accept((q,p),(q0,p0),w):
    return supported(q,w)
  else:
    return supported(q0,w)

def NPHMC(q0,w,ep,L,M):
  S = [q0]
  for i in range(M):
    S.append(NPHMCstep(S[i],w,ep,L))
  return S
\end{python}
\begin{python}[caption={Python code for helper functions},label={python:helper}]
# the MH acceptance ratio
def accept((q,p),(q0,p0),w):
  N = len(q)
  N_trunc = lambda q':
    sum([w(q'[:i]) for i in range(N)])
  weight = (N_trunc(q)*pdfN((q,p),2N))/
           (N_trunc(q0)*pdfN((q0,p0),2N))
  return min(1,weight)

# the w-supported prefix of q
def supported(q,w):
  k = 1
  while w(q[:k]) == 0 and k < len(q):
    k += 1
  return q[:k]
\end{python}
  \end{minipage}
  \hspace{0.025\linewidth}
  \begin{minipage}{0.45\linewidth}
\begin{python}[caption={Python code for \pythoninline{eNPHMC}},label={python:enp-hmc}]
def validstate((q0,p0),U,ep,L):
  q = q0
  p = p0
  for i in range(L):
    p = p - ep/2*grad(U,q)
    q = q + ep*p
    if q not in domain(U):
      return False
    p = p - ep/2*grad(U,q)
  return True

def HMCint((q0,p0),U,ep,L):
  q = q0
  p = p0
  for i in range(L):
    p = p - ep/2*grad(U,q)
    q = q + ep*p
    p = p - ep/2*grad(U,q)
  # momentum flip
  p = -p
  return (q,p)

def eNPHMCstep((q0,p0),w,ep,L):
  # initialisation (step 1)
  q0 = supported(q0,w)
  p0 = [normal for i in range(len(q0))]
  U = lambda n: lambda q:
    -log(sum([w(q[:i]) for i in range(n)]))
  # search (step 2)
  while not validstate((q0,p0),U(len(q0)),ep,L):
    x0 = normal
    y0 = normal
    q0.append(x0)
    p0.append(y0)
  # HMC integration (step 3)
  (q,p) = HMCint((q0,p0),U(len(q0)),ep,L)
  # MH acceptance (step 4)
  if cdfN(normal) < accept((q,p),(q0,p0),w):
    return (q,p)
  else:
    return (q0,p0)

def eNPHMC(q0,w,ep,L,M):
  mc = [(q0,0)]
  for i in range(M):
    mc.append(eNPHMCstep(mc[i],w,ep,L))
  # marginalisation
  S = [supported(q,w) for (q,p) in mc]
  return S
\end{python}
  \end{minipage}
\end{figure*}

Instead of a direct proof, we consider an auxiliary program \pythoninline{eNPHMC} \emph{equivalent} to \pythoninline{NPHMC} \changed[lo]{(in the sense of \cref{prop: we can move all sampling to the top of np-hmc})}, which does not increase the dimension dynamically;
instead it finds the smallest $N$ such that all intermediate positions during the $L$ leapfrog steps stay in the domain of $U_N$,
and performs leapfrog steps as in standard HMC.

The program \pythoninline{eNPHMC} is given in \cref{python:enp-hmc},
which iterates \pythoninline{eNPHMCstep} to generate a Markov chain on \emph{states}
and then marginalise it using the helper function \pythoninline{supported} to obtain a Markov chain on $\traces$.
The program \pythoninline{validstate} determines whether
the input state \pythoninline{(q0,p0)} goes beyond the domain of the potential energy \pythoninline{U} in \pythoninline{L} leapfrog steps, and
the program \pythoninline{HMCint} is the leapfrog integrator of the standard HMC algorithm.

\begin{remark}
  Programs in \cref{python:np-hmc,python:enp-hmc,python:helper} are given in Python syntax, but they can be translated into SPCF.
  First, note we can represent pairs and lists using Church encoding as follows:
  \begin{align*}
    \Pair{\typea,\typeb} & := \typea \to \typeb \to (\typea \to \typeb \to \PCFReal) \to \PCFReal &
    \List{\typea} & := (\typea \to \PCFReal \to \PCFReal) \to (\PCFReal \to \PCFReal) \\
    \anbr{\terma,\termb} & \equiv \lambda z .z\,\terma\,\termb &
    [\terma_1,\dots,\terma_\ell] & \equiv \lambda f x .f\,\terma_1(f\,\terma_2 \dots (f\,\terma_\ell\,\PCF{0}))
  \end{align*}
  Hence a state $(\vec{q},\vec{p}) \in \Real^\ell \times \Real^\ell$ can be encoded as a value
  $[\anbr{\PCF{\vec{q}_1},\PCF{\vec{p}_1}},\dots,\anbr{\PCF{\vec{q}_\ell},\PCF{\vec{p}_\ell}}]$
  with type
  $\List{\Pair{\PCFReal,\PCFReal}}$.

  Now we look at all the primitive functions used in the programs.
  It is easy to see that
  \pythoninline{cdfN}, \pythoninline{pdfN} and \pythoninline{log} are analytic functions.
  \pythoninline{len}, \pythoninline{append} and \pythoninline{sum} can be defined on Church lists.
  \pythoninline{grad} can be defined using the simple numerical differentiation method using analytic functions like subtraction and division.
  We can change \pythoninline{domain} in such a way that
  it takes \pythoninline{q} and \pythoninline{w} as inputs and
  tests whether \pythoninline{sum([w(q[:i]) for i in range(len(q))])} is zero (instead of testing whether \pythoninline{q} is in the domain of \pythoninline{U(len(q))}).

  Now we give a formal definition of equivalence.
  We say two SPCF programs are \defn{equivalent} if
  they induce the same value and weight functions,
  as specified in \cref{appendix: value and weight functions}.
\end{remark}

\begin{proposition}
  \label{prop: we can move all sampling to the top of np-hmc}
  \pythoninline{NPHMC} and \pythoninline{eNPHMC} are equivalent.
\end{proposition}

\begin{proof}
  We give an informal explanation here.

  First note that
  \pythoninline{NPHMCstep} is a Markov process on samples,
  and \pythoninline{eNPHMCstep} on states.
  However, it is easy to see that some minor changes to \pythoninline{NPHMCstep} and \pythoninline{NPHMC}
  make \pythoninline{NPHMCstep} a Markov process on states.
  \changed[lo]{Precisely, the following does not alter the meaning of program \pythoninline{NPHMC}:
  \begin{compactenum}[(1)]
    \item  Given a state \pythoninline{(q0,p0)} in \pythoninline{NPHMCstep}, apply \pythoninline{supported} to \pythoninline{q0} at the start of initialisation and
      return the state \pythoninline{(q0,p0)} or \pythoninline{(q,p)} at the MH acceptance step.
    \item In \pythoninline{NPHMC}, add the marginalisation step just like in \pythoninline{eNPHMC}.
  \end{compactenum}}
  Hence, it is enough to show that all steps in programs \pythoninline{NPHMCstep} and \pythoninline{eNPHMCstep} are equivalent, i.e.~they \changed[cm]{give the same weight and value functions}.

  \lo{@Carol: The preceding statement is problematic: the meaning of ``all steps in programs $A$ and $B$ are operationally equivalent'' is unclear.
  Is the following correct? I think it may be too strong.

  First observe that the input \pythoninline{p0} in \pythoninline{eNPHMCstep} is spurious (in that it is ignored).
  So modulo this input and the modification (1) above, \pythoninline{NPHMCstep} and \pythoninline{eNPHMCstep} are functions of the same type.
  The claim is that, given the same trace (meaning the sequence of random draws from \pythoninline{normal}), \pythoninline{NPHMCstep} and \pythoninline{eNPHMCstep} are extensionally equal, i.e., they define the same function.}

  After the modification,
  \pythoninline{NPHMCstep} and \pythoninline{eNPHMCstep} have the same
  initialisation and MH acceptance step.
  So it remains to show that the NP-HMC integration as described in \pythoninline{NPint} behaves the same as searching for a valid initial state (step 2) and HMC integration (step 3) in \pythoninline{eNPHMCstep}.

  In \pythoninline{NPHMCstep}, \pythoninline{((q,p),(q0,p0)) = NPint((q0,p0),U,ep,L)}
  ``integrates'' from the initial state \pythoninline{(q0,p0)}
  until it goes beyond the domain of \pythoninline{U(len(q0))}, at which moment
  it \pythoninline{extend}s.

  While in \pythoninline{eNPHMCstep}, it increments the dimension of the state \pythoninline{(q0,p0)} until
  it has \emph{just} enough dimension to ``integrate'' for time \pythoninline{ep*L} through \pythoninline{U(len(q0))} without going beyond the domain of \pythoninline{U(len(q0))}.
  This ensures the state \pythoninline{(q0,p0)} is safe to be an input to the standard HMC integrator \pythoninline{HMCint}.

  Notice that
  given the same values for the samples,
  the resulting initial state \pythoninline{(q0,p0)} in \pythoninline{NPHMCstep} would be the same as
  that in \pythoninline{eNPHMCstep}.
  Hence, the proposal state \pythoninline{(q,p)} in both programs would be the same.
\end{proof}

\begin{remark}
  The discussion in the proof of \cref{prop: we can move all sampling to the top of np-hmc} argues informally that \pythoninline{NPHMC} and \pythoninline{eNPHMC} are equivalent.
  We outline a formal proof here.
  To show that \pythoninline{NPHMC} and \pythoninline{eNPHMC} are equivalent, we first demonstrate that one program can be obtained form another by a series of meaning-preserving transformations (i.e.~transformations that preserves the value and weight functions).
  After that we show that the convergence result (\cref{thm: np-hmc converges}) is invariant over equivalent programs.
\end{remark}

Since \pythoninline{NPHMC} and \pythoninline{eNPHMC} are equivalent,
it is enough to show that \pythoninline{eNPHMC} is correct, i.e.~generates a Markov chain that converges to the target distribution.
We present a three-step proof.
\begin{compactenum}[1.]
  \item We first identify the invariant distribution $\sdist$ of the Markov chain $\chain$ generated by iterating \pythoninline{eNPHMCstep}. (\cref{eq: defn of state distribution})
  \item
  We then show that the \emph{marginalised} chain $\set{f\stateterm{i}}_{i\in\Nat}$ is invariant under the target distribution $\tdist$, where
  \changed[fz]{$f(\vec q, \vec p)$ is the unique prefix of $\vec q$ that has positive weight according to $w$.}
  (\cref{thm: marginalised distribution is the target distribution})
  \item Finally, we show this chain converges for a small enough step size $\epsilon$. (\cref{thm: np-hmc converges})
\end{compactenum}

\subsection{Invariant Distribution}

By iterating \pythoninline{eNPHMCstep}, a Markov chain $\chain$ is generated.
We now analyse this Markov chain by studying its invariant distribution $\sdist$ and
transition kernel.

% measure space
Let $(\states,\Sigma_{\states},\smeasure)$ be the \defn{state space}
where
$\states := \biguplus_{n\in\Nat} (\Real^n\times \Real^n)$,
$\Sigma_\states := \set{\biguplus_{n\in\Nat} U_n \mid U_n \in  \Borel_{2n}}$ and
$\smeasure(\biguplus_{n\in\Nat} U_n) := \sum_{n\in\Nat} (\Gau_n\times \Gau_{n})(U_n)$.
It is easy to see that all output states in \pythoninline{eNPHMCstep}, and hence all elements of the Markov chain, is in $\states$.

However not all states have a positive weight.
In fact not even the union of the support of invariant distributions of the fixed dimension HMC on each of the truncations works.
This is because if \pythoninline{eNPHMCstep} returns $(\vec{q},\vec{p}) \in \Real^{2k}$, then it cannot return states of the form $(\vec{q}\concat \vec{q'},\vec{p} \concat\vec{p'}) \in \Real^{2n}$, which is a valid returning state for the fixed dimension HMC.
Hence we define a subset of states which precisely capture all possible returning states of \pythoninline{eNPHMCstep},
and define a distribution on it.

We say a state $(\vec{q},\vec{p})$ is {$(\epsilon, L)$-\defn{valid} (or simply \defn{valid} whenever the parameters $\epsilon$ and $L$ are clear from the context)} if
a particle starting from the state $(\vec{q},\vec{p})$
does not ``fall beyond'' the domain of $U_{\len{\vec{q}}} := -\log \trunc{\len{\vec{q}}} $
in the course of $L$ discrete leapfrog steps of size $\epsilon$,
and the states $(\vec{q}^{1\dots k},\vec{p}^{1\dots k})$ are not {$(\epsilon, L)$-valid} for all $k < n$.

Let $\validstates$ denote the set of all valid states
and $\validstates_n := \validstates \cap (\Real^n \times \Real^n)$ denote the the set of all $n$-dimension valid states.
The program \pythoninline{validstate} verifies valid states, i.e~
\pythoninline{validstate} always returns True when the input state is valid.

Let $\sdist$ be a distribution on $\states$ with density
% \fz{shouldn't this be just density? It's not necessarily a probability density function}
$\spdf$ (with respect to $\smeasure$) given by
\begin{align} \label{eq: defn of state distribution}
  \spdf(\vec{q},\vec{p}) :=
  \MyCase{
    \frac{1}{Z} \trunc{\len{\vec{q}}}(\vec{q})
  }{(\vec{q},\vec{p})\in \validstates}{0}
\end{align}
% Thus, the probability of a measurable set $X$ on $\states$ given by $\sdist$ is
% \[
%   \sdist (X) =
%   \int_X
%   [(\vec{q},\vec{p}) \in \validstates] \cdot
%   \frac{1}{Z} \sum_{n=1}^{\len{\vec{q}}} w(\vec{q}^{1\dots n})
%   \
%   \smeasure(d(\vec{q},\vec{p}))
% \]
Since the the position component of all valid states must have a $w$-supported prefix,
the set of valid states can be written as
\ifonecolumn
\[
  \validstates =
  \bigcup_{n=1}^{\infty}
  \bigcup_{m=n}^{\infty}
  \{
    (\vec{q}\concat\vec{x},\vec{y}) \in \validstates_m \mid
    \vec{q} \in \nsupport{w}{n},
    \vec{x} \in \Real^{m-n}, \vec{y} \in \Real^m
  \},
\]
\else
\begin{align*}
  \validstates =
  \bigcup_{n=1}^{\infty}
  \bigcup_{m=n}^{\infty}
  \{ &
    (\vec{q}\concat\vec{x},\vec{y}) \in \validstates_m \mid \\
    &\quad
    \vec{q} \in \nsupport{w}{n},
    \vec{x} \in \Real^{m-n}, \vec{y} \in \Real^m
  \},
\end{align*}
\fi
and hence the distribution $\sdist$ can be written as
\ifonecolumn
\begin{align} \label{eq: state distribution}
  \sdist: X & \mapsto
  \expint{X}{
  [(\vec{q},\vec{p}) \in \validstates] \cdot
  \frac{1}{Z} \trunc{\len{\vec{q}}}(\vec{q})}
  {\smeasure}{(\vec{q},\vec{p})}
  =
  \expint{X}{
  [(\vec{q},\vec{p}) \in \validstates] \cdot
  \frac{1}{Z} \sum_{n=1}^{\len{\vec{q}}} w(\vec{q}^{1\dots n})}
  {\smeasure}{(\vec{q},\vec{p})} \nonumber \\
  & =
  \sum_{n=1}^\infty
  \sum_{m=n}^\infty
  \expint{\Real^n}{
  \expint{\Real^{m-n}}{
  \expint{\Real^m}{
  [(\vec{q}\concat\vec{x},\vec{y}) \in X \cap \validstates_m] \cdot
  \frac{1}{Z} w(\vec{q})
  }{\Gau_m}{\vec{y}}
  }{\Gau_{m-n}}{\vec{x}}
  }{\Gau_n}{\vec{q}}
\end{align}
\else
\begin{align} \label{eq: state distribution}
  & \sdist: X \mapsto \nonumber \\
  &
  \expint{X}{
  [(\vec{q},\vec{p}) \in \validstates] \cdot
  \frac{1}{Z} \sum_{n=1}^{\len{\vec{q}}} w(\vec{q}^{1\dots n})}
  {\smeasure}{(\vec{q},\vec{p})} \nonumber \\
  % & =\sum_{n=1}^\infty
  % \sum_{m=n}^\infty
  % \int_{\Real^n}
  % \int_{\Real^{m-n}}
  % \int_{\Real^m}
  % [(\vec{q}\concat\vec{x},\vec{y}) \in X \cap \validstates_m] \cdot
  % \frac{1}{Z} w(\vec{q}) \cdot \pdfGau_m(\vec{y})
  % \
  % \leb_m(d\vec{y})
  % \Gau_{m-n}(d\vec{x})
  % \Gau_n(d\vec{q}) \\
  & =\sum_{n=1}^\infty
  \sum_{m=n}^\infty
  \int_{\Real^n}
  \int_{\Real^{m-n}}
  \int_{\Real^m}
  \nonumber\\
  & \qquad\qquad\quad
  [(\vec{q}\concat\vec{x},\vec{y}) \in X \cap \validstates_m] \cdot
  \frac{1}{Z} w(\vec{q})
  \nonumber\\
  & \qquad\qquad\qquad\qquad\quad
  \Gau_m(\dif\vec{y})
  \Gau_{m-n}(\dif\vec{x})
  \Gau_n(\dif\vec{q}).
\end{align}
\fi
We claim that $\sdist$ is the \defn{invariant distribution} of the Markov chain determined by
\pythoninline{eNPHMCstep}.
The rest of this subsection is devoted to a proof of the claim.

% definition of transition kernel
% the following seems too technical
For any state $(\vec{q},\vec{p}) \in \states$,
we write $\church{(\vec{q},\vec{p})}$ to be the term $[\anbr{\PCF{\vec{q}_1},\PCF{\vec{p}_1}},\dots,\anbr{\PCF{\vec{q}_{\len{\vec{q}}}},\PCF{\vec{p}_{\len{\vec{q}}}}}]$ of type $\List{\Pair{\PCFReal,\PCFReal}}$.
Take a SPCF term $\terma$ of type
$\set{x:\List{\Pair{\PCFReal,\PCFReal}} } \vdash \terma:\List{\Pair{\PCFReal,\PCFReal}}$.
We define a function $v_{\terma} :\states \times \traces \to \states$ such that
$\church{v_{\terma}({\vec{s}},\trace)} = \valuefn_{\terma[\church{\vec{s}}/x]}(\trace)$.
Then,
the \defn{transition kernel}
$\transkernel{\terma}: \states \times \Sigma_{\states} \rightarrow \states$ of $\terma$ given by
\begin{align*}
  \transkernel{\terma} ({\vec{s}},U) := \shortint{\inv{v_{\terma}({\vec{s}},-)}(U)}{\weightfn_{\terma[\church{\vec{s}}/x]}}{\tmeasure}.
\end{align*}
returns the probability of $\terma$ returning a state in $U$ given the input $\vec{s}$.

We say $\terma$ leaves the distribution $\mu$ on $\states$ invariant if
for all $U \in \Sigma_\states$,
$\expint{\states}{\transkernel{\terma}(\vec{s},U)}{\mu}{\vec{s}}
=
\mu(U).$

\subsubsection{Initialisation and Search (Steps 1 and 2)}

Given $(\vec{q_0},\vec{p_0}) \in \validstates$ and $X\in \Sigma_{\states}$,
where $w(\vec{q_0}^{1\dots n}) > 0$,
the initialisation (step 1) of \pythoninline{eNPHMCstep} returns
a pair of the $w$-supported prefix of $\vec{q_0}$ and a randomly drawn momentum.
Hence, its transition kernel $\transkernel{1}$ is given by
$\transkernel{1}((\vec{q_0},\vec{p_0}),X) := \expint{\traces}{[(\vec{q_0}^{1\dots n},\vec{t}) \in X]}{\tmeasure}{\vec{t}}$.
Note that
$\vec p_0$ (of the input state $(\vec q_0, \vec p_0)$) is ignored by \pythoninline{eNPHMCstep}.

If the input state $(\vec{q_0},\vec{p_0})$ is not a valid state,
we have $\transkernel{1}((\vec{q_0},\vec{p_0}),X) = 0$.
This is required for technical reasons but is excluded in the program \pythoninline{eNPHMCstep} for ease of readability.
At it stands in \cref{python:enp-hmc}, \pythoninline{eNPHMCstep} does not care whether the input state is valid as long as it has a prefix which is $\btra$-supported.
To define such a transition kernel for \pythoninline{eNPHMCstep},
we can simply call \pythoninline{validstate} on the input state at the start of initialisation
and fail this execution if the input state is not valid.

After that,
given $(\vec{q_0},\vec{p_0}) \in \states$ and $X\in \Sigma_{\states}$
where $w(\vec{q_0}^{1\dots n}) > 0$,
step 2 of \pythoninline{eNPHMCstep} searches for a valid state by repeating drawing from the standard normal distribution.
We can write its transition kernel $\transkernel{2}$ as
$\transkernel{2}((\vec{q_0},\vec{p_0}),X) :=
\expint{\traces}{[(\vec{q_0}\concat\vec{t}^{\mathsf{odd}},\vec{p_0} \concat \vec{t}^{\mathsf{even}}) \in X \cap \validstates]}{\tmeasure}{\vec{t}}$
where
$\vec{t}^{\mathsf{odd}}$ and $\vec{t}^{\mathsf{even}}$ are subsequences of $\vec{t}$ containing the values of odd and even indexes respectively.

For any $X \in \Sigma_{\traces}$,
the (combined) transition kernel $\transkernel{1,2}$ of steps 1 and 2 of \pythoninline{eNPHMCstep} is given by
\begin{align*}
  \transkernel{1,2}((\vec{q_0},\vec{p_0}),X)
  & =
  \expint{\traces}{
  \expint{\traces}{
  [(\vec{q_0}^{1\dots n}\concat\vec{t'}^{\mathsf{odd}},\vec{t} \concat \vec{t'}^{\mathsf{even}}) \in X \cap \validstates]
  }{\tmeasure}{\vec{t'}}
  }{\tmeasure}{\vec{t}} \\
  & =
  \expint{\Real^n}{
  \sum_{m=n}^{\infty}
  \expint{\Real^{m-n}}{
  \expint{\Real^{m-n}}{
  [(\vec{q_0}^{1\dots n}\concat\vec{t''},\vec{t} \concat \vec{t'}) \in X \cap \validstates]
  }{\Gau_{m-n}}{\vec{t''}}
  }{\Gau_{m-n}}{\vec{t'}}
  }{\Gau_n}{\vec{t}} \\
  & =
  \sum_{m=n}^\infty
  \expint{\Real^m}{
  \expint{\Real^{m-n}}{
  [(\vec{q_0}^{1\dots n} \concat \vec{x}, \vec{y})\in X\cap \validstates]
  }{\Gau_{m-n}}{\vec{x}}
  }{\Gau_m}{\vec{y}}
\end{align*}
if $(\vec{q_0},\vec{p_0}) \in \validstates$;
and $\transkernel{1,2}((\vec{q_0},\vec{p_0}),X) = 0 $ otherwise.

\begin{proposition}
  \label{prop: step 1 and 2 is probabilistic}
  The transition kernel is probabilistic,
  i.e.~
  $\transkernel{1,2}((\vec{q_0},\vec{p_0}),\states) = \transkernel{1,2}((\vec{q_0},\vec{p_0}),\validstates) = 1$ for any valid state
  $(\vec{q_0},\vec{p_0}) \in \validstates$.
\end{proposition}

\begin{proof}
  Let $(\vec{q_0},\vec{p_0}) \in \validstates$.
  We can see $\transkernel{1,2}((\vec{q_0},\vec{p_0}), -)$ as the value measure of steps 1 and 2 of \pythoninline{eNPHMCstep} (with the initial states substituted by $\church{(\vec{q_0},\vec{p_0})}$) which does not contain $\Score{-}$ as a subterm.
  Moreover, Assumption \aref{ass:3}{3} ensures step 2 almost always terminates and returns a valid state.
  Hence, \cref{prop: AST SPCF term gives probability measure} tells us that $\transkernel{1,2}((\vec{q_0},\vec{p_0}), -)$ is probabilistic and
  $\transkernel{1,2}((\vec{q_0},\vec{p_0}),\states) = \transkernel{1,2}((\vec{q_0},\vec{p_0}),\validstates) = 1$.
\end{proof}

% \lo{Intuitively, $\transkernel{1,2}((\vec{q_0},\vec{p_0}),\states) = 1$ follows from Proposition~\ref{prop:ast programs are aed}.
% It is not clear to me why $\transkernel{1,2}((\vec{q_0},\vec{p_0}),\validstates) = 1$ follows from $\transkernel{1,2}((\vec{q_0},\vec{p_0}),\states) = 1$.}
% \cm{It is because after step 2 in \pythoninline{eNPHMCstep}, we ensured that the state \pythoninline{(q0,p0)} is valid. So,
% $\transkernel{1,2}((\vec{q_0},\vec{p_0}),\validstates) = \transkernel{1,2}((\vec{q_0},\vec{p_0}),\states) = 1$. }

\begin{proposition}
  \label{prop: np-hmc-equivalent step 12 inv}
  $\sdist$ is invariant {with respect to}
  step 1 and 2 of \pythoninline{eNPHMCstep}.
\end{proposition}

\begin{proof}
  %\lo{NOTATION. We can save some space in the following by renaming $(\vec{q_0},\vec{p_0})$ to $(\vec{q},\vec{p})$: I don't think the subscript 0 has any suggestive benefit.}

  We aim to show:
  \(\expint{\states}{\transkernel{1,2}((\vec{q_0},\vec{p_0}),X)}{\sdist}{(\vec{q_0},\vec{p_0})} =
  \sdist(X)\)
  for any measurable set $X \in \Sigma_{\states}$.
  \begin{calculation}
    \displaystyle
    \expint{\states}{\transkernel{1,2}((\vec{q_0},\vec{p_0}),X)}{\sdist}{(\vec{q_0},\vec{p_0})}
    =
    \expint{\validstates}{\transkernel{1,2}((\vec{q_0},\vec{p_0}),X)}{\sdist}{(\vec{q_0},\vec{p_0})}
    %
    \step[=]{
      \cref{eq: state distribution}, definition of $\transkernel{1,2}$ and
      writing $(\vec{q_0},\vec{p_0})\in \validstates$ as $(\vec{q} \concat \vec{x} , \vec{y})$ where
      $\vec{q} \in \support{w}$
    }
    \displaystyle
    \sum_{n=1}^\infty
    \sum_{m=n}^\infty
    \int_{\Real^n}
    \int_{\Real^{m-n}}
    \int_{\Real^m}
    \bigg(
      \sum_{k=n}^\infty
      \expint{\Real^k}{
      \expint{\Real^{k-n}}
      {[(\vec{q} \concat \vec{x'} , \vec{y'})\in X\cap \validstates]}
      {\Gau_{k-n}}{\vec{x'}}
      }
      {\Gau_k}{\vec{y'}}
    \bigg)
    \cdot \\
    \qquad\qquad\qquad\qquad\qquad\qquad\
    \displaystyle
    \bigg(
      [(\vec{q}\concat\vec{x},\vec{y}) \in \validstates]\cdot
      \frac{1}{Z} w(\vec{q})
    \bigg)
    \
    \Gau_{m}(\dif\vec{y})
    \Gau_{m-n}(\dif\vec{x})
    \Gau_{n}(\dif\vec{q})
    %
    \step[=]{ Rearranging (allowed because everything is nonnegative) }
    \displaystyle
    \sum_{n=1}^\infty
    \sum_{k=n}^\infty
    \int_{\Real^n}
    \int_{\Real^{k-n}}
    \int_{\Real^k}
    [(\vec{q} \concat \vec{x'} , \vec{y'})\in X\cap \validstates]\cdot
    \frac{1}{Z} w(\vec{q})\\
    \qquad\quad\
    \displaystyle
    \bigg(
      \sum_{m=n}^\infty
      \expint{\Real^{m-n}}{
      \expint{\Real^m}
      {[(\vec{q}\concat\vec{x},\vec{y}) \in \validstates]}
      {\Gau_{m}}{\vec{y}}
      }
      {\Gau_{m-n}}{\vec{x}}
    \bigg)
    \
    \Gau_k(\dif\vec{y'})
    \Gau_{k-n}(\dif\vec{x'})
    \Gau_{n}(\dif\vec{q})
    %
    \step[=]{Definition of $\transkernel{1,2}$ where $(\hat{\vec{q}},\hat{\vec{p}})$ is an arbitrary valid state such that $\hat{\vec{q}}^{1\dots n} = \vec{q}$}
    \displaystyle
    \sum_{n=1}^\infty
    \sum_{k=n}^\infty
    \expint{\Real^n}{
    \expint{\Real^{k-n}}{
    \expint{\Real^k}{
    [(\vec{q} \concat \vec{x'} , \vec{y'})\in X\cap \validstates]\cdot
    \frac{1}{Z} w(\vec{q})\cdot
    \transkernel{1,2}((\hat{\vec{q}},\hat{\vec{p}}),\validstates)
    \
    }{\Gau_k}{\vec{y'}}
    }{\Gau_{k-n}}{\vec{x'}}
    }{\Gau_{n}}{\vec{q}}
    %
    \step[=]{
      Definition of $\spdf$ and
      \cref{prop: step 1 and 2 is probabilistic} for some valid state $(\hat{\vec{q}},\hat{\vec{p}})$
    }
    \displaystyle
    \shortint{X}{\spdf}{\smeasure}
  \end{calculation}
\end{proof}

\subsubsection{Integration and Acceptance (Steps 3 and 4)}

Let $(\vec{q_0},\vec{p_0}) \in \states$ and $X\in \Sigma_{\states}$.
Now we check that the HMC integration (step 3) and acceptance (step 4) preserve the invariant distribution $\sdist$.

Similar to HMC, the transition kernel for steps 3 and 4 is given by
\[
  \transkernel{3,4}((\vec{q_0},\vec{p_0}),X)
  =
  \MyCase{\alpha(\vec{q_0},\vec{p_0})\cdot[\HMCint_{\len{\vec{q_0}}}(\vec{q_0},\vec{p_0}) \in X] + (1-\alpha(\vec{q_0},\vec{p_0}))\cdot [(\vec{q_0},\vec{p_0}) \in X]}{(\vec{q_0},\vec{p_0}) \in \validstates}
  {0}
\]
where
$\alpha(\vec{q_0},\vec{p_0}) = \min \set{
  1,
  \frac
    {\trunc{N}(\vec{q})\cdot \pdfGau_{2N}(\vec{q},\vec{p}) }
    {\trunc{N}(\vec{q_0})\cdot \pdfGau_{2N}(\vec{q_0},\vec{p_0}) }
}$
for
$N = \len{\vec{q_0}}$ and
$(\vec{q},\vec{p})= \HMCint_{N}(\vec{q_0},\vec{p_0})$.

\begin{proposition}
  \label{prop: hmc integrator is volume preserving and reversible on valid states}
  The HMC integrator ${\HMCint_n}$ with respect to the potential energy $U_n$ is
  volume preserving with respect to $\leb_{2n}$ (i.e.~${\HMCint_n}_*\leb_{2n} = \leb_{2n}$) and
  reversible (i.e.~${\HMCint_n} = \inv{{\HMCint_n}}$) on $\validstates_n$.
\end{proposition}

\begin{proof}
  Since measurable subsets of and states in $\validstates_n$ are also in the $n$-dimension Euclidean Space,
  and $\HMCint_n$ always map valid states to valid states,
  \cref{prop: hmc integrator is volume preserving and reversible} is sufficient.
\end{proof}

% \fz{I'm confused as to what measures are the base measures here. \cref{prop: hmc integrator is volume preserving and reversible} talks about the Lebesgue measure. But the last proposition talks about $\Gau_n \times \leb_n$.}

\begin{proposition}
  \label{prop: np-hmc-equivalent step 34 inv}
  $\sdist$ is invariant against integration and acceptance (steps 3 and 4) of \pythoninline{eNPHMCstep}.
\end{proposition}

\begin{proof}
  We aim to show:
  $\expint{\states}{\transkernel{3,4}(x,X)}{\sdist}{x} = \sdist(X)$
  for all $X \in \Sigma_{\states}$.
  By \cref{prop: hmc integrator is volume preserving and reversible on valid states},
  for all $n$, HMC integrator $\HMCint_n$ is volume preserving against $\leb_{2n}$ and reversible on $\validstates_n$. Hence, we have
  \begin{align*}
    &\expint{\states}{\transkernel{3,4}(x,X)}{\sdist}{x}
    =
    \expint{\validstates}{\transkernel{3,4}(x,X)}{\sdist}{x}
    =
    \sum_{n=1}^{\infty} \expint{\validstates_n}{\transkernel{3,4}(x,X)\cdot \spdf(x)}{(\Gau_n\times\Gau_{n})}{x} \\
    &=
    \shortint{X}{\spdf}{\smeasure}
    +
    \sum_{n=1}^{\infty}
    \bigg(
      \expint{\validstates_n}{[\HMCint_n(x) \in X \cap \validstates_n] \cdot \alpha(x) \cdot\spdf(x)
      \cdot \pdfGau_{2n}(x)}{\leb_{2n}}{x} \\
    & \qquad\qquad\qquad\qquad\qquad
      -
      \expint{\validstates_n}{[x \in X \cap \validstates_n]\cdot \alpha(x) \cdot\spdf(x)
      \cdot \pdfGau_{2n}(x)}{\leb_{2n}}{x}
    \bigg)
  \end{align*}
  The second and third integrals are the same since
  the pushforward measure of $\leb_{2n}$ along the integrator $\HMCint_n$ is the same as $\leb_{2n}$ ($\HMCint_n$ is volume preserving on $\validstates_n$) for all $n$ and
  $\alpha(x)\cdot\spdf(x)\cdot \pdfGau_{2n}(x) = \alpha(\HMCint_n(x))\cdot\spdf(\HMCint_n(x))\cdot \pdfGau_{2n}(\HMCint_n(x))$ for all $x \in \validstates_n$ (all $\HMCint_n$ are reversible on $\validstates_n$).
\end{proof}

Since the transition kernel $P$ of \pythoninline{eNPHMCstep} is the composition of $\transkernel{1,2}$ and $\transkernel{3,4}$,
i.e.~$P(x,X) := \int_{\states} \transkernel{3,4}(x',X)\ \transkernel{1,2}(x,\dif x') $ for $x \in \states$ and $X \in \Sigma_{\states}$,
and both $\transkernel{1,2}$ and $\transkernel{3,4}$ are invariant against $\sdist$ (\cref{prop: np-hmc-equivalent step 12 inv,prop: np-hmc-equivalent step 34 inv}),
we conclude with the following lemma.

\begin{lemma}
  \label{lemma: pi is the invariant distribution of np-hmc}
  $\sdist$ is the invariant distribution of the Markov chain generated by iterating \pythoninline{eNPHMCstep}.
\end{lemma}

\subsection{Marginalised Markov Chains}

It is important to notice that the Markov chain
$\set{(\vec{q_i},\vec{p_i})}_{i\in\Nat} $
generated by iterating \pythoninline{eNPHMCstep}
with invariant distribution $\sdist$
is \emph{not} the samples we are seeking.
The chain we are in fact interested in is
the \emph{marginalised} chain
$\set{f(\vec{q_i},\vec{p_i})}_{i\in\Nat}$
where the measurable\footnote{For any measurable set $A \in \Sigma_{\traces}$,
  $f^{-1}(A) = \big(\bigcup_{n=1}^\infty \bigcup_{m=n}^\infty ((A \cap \Real^n) \times \Real^{m-n})\times \Real^m\big) \cap \validstates$ is measurable in $\states$.}
function $f$ finds the prefix of $\vec{q}$ which is $w$-supported, formally defined as
\begin{align*}
  {f}: \quad {\validstates} & \longrightarrow{\traces} \\
  {(\vec{q},\vec{p})} & \longmapsto
    \vec{q}^{1\dots n} \quad
    \text{for } \vec{q}^{1\dots n} \in \support{w}.
\end{align*}
This function is realised by the \pythoninline{supported} program in \cref{python:helper}.

In this section we show that this marginalised chain has the target distribution $\tdist$ as its invariant distribution.
Let $Q: \support{w} \times \Sigma_{\traces} \to \pReal$ be the transition kernel of this marginalised chain.
We can write it as
$Q(f(x), A) = P(x, \inv{f}(A))$
for $x \in \validstates$ and $A \in \Sigma_{\traces}$.

\begin{remark}
  In the standard HMC algorithm, the function $f$ would simply be the first projection,
  and it is trivial to check that the pushforward of the invariant distribution along the first projection is exactly the target distribution.
  Hence this step tends to be skipped in the correctness proof of HMC \cite{Neal2011,bou-rabee_sanz-serna_2018}.
\end{remark}

\begin{lemma}
  \label{lemma: relationships between pi and pi_n}
  Writing
  $\validstates_{\leq n} := \bigcup_{k=1}^n \validstates_k$,
  we let $\sdist_n$ be a probability distribution on measurable space
  $(\Real^{2n}, \Borel^{2n},\Gau_{2n})$ given by
  \[
    \sdist_n(X) := \expint{X}{\frac{1}{Z_n} \trunc{n}(\vec{q})}{\Gau_{2n}}{(\vec{q},\vec{p})}
    \qquad
    \text{where }
    Z_n := \shortint{\Real^n}{\trunc{n}}{\Gau_n}
    \text{ and }
    X \in \Borel_{2n}.
  \]
  % \fz{Wouldn't it make more sense to consider the pdf wrt the standard Gaussian?}
  \begin{compactenum}[(1)]
    \item
      $\sdist(\states \setminus {\validstates_{\leq n}}) \to 0$ as $n\to \infty$.
    \item
      For $m \geq n$,      $Z_n\cdot\sdist_n = Z_m\cdot e^{(m,n)}_* \sdist_m $ on $\validstates_n$
      where
      $e^{(m,n)} : \Real^m \times \Real^{m} \to \Real^n \times \Real^{n}$ with
      $e^{(m,n)}(\vec{q},\vec{p}) = (\vec{q}^{1\dots n} ,\vec{p}^{1\dots n}) $.
    \item
      $Z\cdot\sdist = Z_n\cdot g^{(n)}_*\sdist_n$ on ${\validstates_{\leq n}}$ where
      $g^{(n)} : \Real^n \times \Real^{n}\partialto {\validstates_{\leq n}}$ such that
      $g^{(n)}(\vec{q},\vec{p}) = (\vec{q}^{1\dots k} ,\vec{p}^{1\dots k}) \in {\validstates_{\leq n}}$.
  \end{compactenum}
\end{lemma}

\begin{proof}
  \begin{compactenum}[(1)]
    \item
      $\sdist$ is an invariant distribution,
      and hence it is probabilistic.
      The sum
      $\sum_{n=1}^{\infty} \sdist(\validstates_n)$
      which equals
      $\sdist(\bigcup_{n=1}^{\infty} \validstates_n) = \sdist(\validstates)$ must converge.
      Hence
      $
      \sdist(\states \setminus {\validstates_{\leq n}})
      = \sum_{i=n+1}^{\infty} \sdist(\validstates_i)
      \to 0
      $ as $n\to \infty$.

    \item Simple to show.

    \item Let $X$ be a measurable subset of ${\validstates_{\leq n}}$.
    Then,
    \begin{align*}
      Z\cdot \sdist(X)
      & = \sum_{k=1}^n Z_k \cdot \sdist_k(X\cap \validstates_k) = Z_n \sum_{k=1}^n e^{(n,k)}_* \sdist_n (X\cap \validstates_k) \\
      & = Z_n \cdot \sdist_n (\bigcup_{k=1}^n
          \set{ (\vec{q},\vec{p}) \in \Real^{2n} \mid (\vec{q}^{1\dots k},\vec{p}^{1\dots k}) \in X\cap \validstates_k}) \\
      & = Z_n \cdot g^{(n)}_*\sdist_n (X).
    \end{align*}
  \end{compactenum}
\end{proof}

\invariant*
% \begin{lemma}
%   \label{lemma: marginalised distribution is the target distribution}
%   $\tdist$ is the invariant distribution of the Markov chain generated by iterating \cref{alg:np-hmc}.
% \end{lemma}

\begin{proof}
  For any $A \in \Sigma_{\traces}$,
  if (1) $\tdist = f_*\sdist$ on $\traces$
  and (2) $\tmeasure = f_*\smeasure$ on $\support{w}$,
  then
  \begin{align*}
    \tdist(A) & = f_*\sdist(A)
    = \expint{\states}{P(x, \inv{f}(A))}{\smeasure}{x} \tag{\cref{lemma: pi is the invariant distribution of np-hmc}} \\
    & = \expint{\validstates}{P(x, \inv{f}(A))}{\smeasure}{x}
    = \expint{\validstates}{Q(f(x), A)}{\smeasure}{x} \\
    & = \expint{\support{\btra}}{Q(q, A)}{f_*\smeasure}{q}
    = \expint{\support{\btra}}{Q(q, A)}{\tmeasure}{q}
    = \expint{\traces}{Q(q, A)}{\tmeasure}{q}.
  \end{align*}
  Hence it is enough to show (1) and (2).

  \begin{compactenum}[(1)]
    \item
      Let $A \subseteq \Real^n$ be a measurable set on $\traces$
      and $\delta > 0$.
      Then partitioning
      $\inv{f}(A) = \set{(\vec{q},\vec{p}) \in \validstates
        \mid
      \vec{q}^{1\dots n} \in A}$
      using $\validstates_k$, we have {for sufficiently large $m$},
      \begin{align*}
        f_*\sdist(A)
        & = \sdist\left(\bigcup_{k=1}^m \inv{f}(A)\cap \validstates_k\right)
          + \sdist\left(\bigcup_{k=m+1}^\infty \inv{f}(A)\cap \validstates_k\right) \\
        & < \frac{Z_m}{Z} \cdot
          g^{(m)}_*\sdist_m\left(\bigcup_{k=1}^m \inv{f}(A)\cap \validstates_k\right) + \delta
        \tag{by \cref{lemma: relationships between pi and pi_n} (1) and (3)} \\
        & \leq \frac{Z_m}{Z} \cdot
          \sdist_m(A\times\Real^{m-n} \times\Real^m) + \delta \\
        & = \tdist(A) + \delta.
      \end{align*}
      For any measurable set $A \in \Sigma_{\traces}$,
      we have
      $
      f_*\sdist(A)
      =
      \sum_{n=1}^\infty f_*\sdist(A\cap \Real^n)
      \leq
      \sum_{n=1}^\infty \tdist(A\cap \Real^n)
      =
      \tdist(A)$.
      Since both $\tdist$ and $\sdist$ are {probability} distributions, we also have
      $\tdist(A) = 1 - \tdist(\traces \setminus A)
      \leq 1 - f_*\sdist(\traces \setminus A)
      = 1 - (1- f_*\sdist(A))
      = f_*\sdist(A)$.
      Hence
      $f_*\sdist = \tdist$ on $\traces$.
    \item
      Similarly, let $A \subseteq \nsupport{\btra}{n}$ be a measurable set on $\traces$
      and $\delta > 0$.
      Then {for sufficiently large $m$}, we must have
      $\smeasure(\bigcup_{k=m+1}^\infty \validstates_k) = \smeasure(\validstates \setminus \validstates_{\leq m}) < \delta$.
      Hence,
      \begin{align*}
        f_*\smeasure(A)
        & = \smeasure\left(\bigcup_{k=1}^m \inv{f}(A)\cap \validstates_k\right)
          + \smeasure\left(\bigcup_{k=m+1}^\infty \inv{f}(A)\cap \validstates_k\right) \\
        & < \sum_{k=1}^m \Gau_{2k} (\inv{f}(A)\cap \validstates_k) + \delta \\
        & = \sum_{k=1}^m \Gau_{2m} (\set{(\vec{q},\vec{p})\in\Real^{2m} \mid (\vec{q}^{1\dots k},\vec{p}^{1\dots k}) \in \inv{f}(A)\cap \validstates_k}) + \delta \\
        & = \Gau_{2m} (\bigcup_{k=1}^m \set{(\vec{q},\vec{p})\in\Real^{2m} \mid (\vec{q}^{1\dots k},\vec{p}^{1\dots k}) \in \inv{f}(A)\cap \validstates_k}) + \delta \\
        & \leq \Gau_{2m}(A\times\Real^{m-n} \times\Real^m) + \delta \\
        & = \tmeasure(A) + \delta.
      \end{align*}
      Then the proof proceeds as in (1).

  \end{compactenum}
\end{proof}

\subsection{Convergence}
\label{sec:convergence}

Last but not least, we check for the convergence of the marginalised chain to the target distribution $\tdist$.

As shown in \cref{ex: not always ergodic},
it is not trivial that the standard HMC algorithm converges.
The same can be said of the NP-HMC algorithm.
Recall the conditions on the transition kernel to ensure convergence.

\tierney*

Recall $Q$ is the transition kernel of the Markov chain generated by iterating \cref{alg:np-hmc}
on $\support{w}$.
In \cref{thm: marginalised distribution is the target distribution}, we have shown that $Q$ has invariant distribution $\tdist$.
Hence, most of this section is devoted to searching for sufficient conditions (\cref{ass: convergence}) in order to show that the transition kernel $Q$ is $\tdist$-irreducible (\cref{lemma: irreducible}) and aperiodic (\cref{lemma: aperiodic}).
We conclude in \cref{thm: np-hmc converges} that this Markov chain converges to the target distribution $\tdist$.

We start by
extending the result in \cite{CancesLS07}
% which gives sufficient conditions
% ($U$ is continuously differentiable, bounded above on $\Real^n$ and $\grad{U}$ is globally Lipschitz),
in two ways:
\begin{compactenum}
  \item
    The density function is only continuously differentiable \emph{almost everywhere}.
  \item
    The position space is the target space $\traces$.
\end{compactenum}

Let $\mathcal{U}$ be the collection of measurable subsets of $\traces$ with the property that their boundary has measure zero.
Formally,
$\mathcal{U} := \set{A \in \Sigma_{\traces} \mid \tmeasure(\boundary{A}) = 0 }$.
Not every set in $\Sigma_{\traces}$ satisfies this property.
A typical example would be the fat Cantor set.
It is easy to see that $\mathcal{U}$ is closed under complementation.
Moreover, for any non-null set $A$ in $\mathcal{U}$,
its interior $\interior{A}$ is non-empty.

We assume the density function $w:\traces \to \pReal$ is continuously differentiable on a non-null set $A \in \mathcal{U}$.
We start by showing that the Markov chain can almost surely move between $\btra$-supported elements in $A$.

\begin{lemma}
  \label{lemma: bthmc reachable in A}
  Assume $w$ is continuously differentiable on a non-null set $A \in \mathcal{U}$ and
  $\set{U_n}$ is uniformly bounded above (i.e.~there is an upper bound $M$, where $U_n(\vec{q}) < M$ for all $\vec{q} \in \domain{U_n}$ for all $n \in \Nat$).
  For \changed[cm]{almost all} $\vec{a}, \vec{b} \in A \cap \support{\btra}$,
  there
  exists some $k \geq \max{\set{\len{\vec{a}},\len{\vec{b}}}}$ and $\vec{p} \in \Real^k$ such that
  $\proj{1}(\HMCint_k(\vec{a}\concat \vec{0}^{1\dots k-\len{\vec{a}}},\vec{p}))^{1\dots \len{\vec{b}}} = \vec{b}$,
  where $\proj{1}(\vec{q},\vec{p}) = \vec{q}$.
\end{lemma}

\begin{proof}
  Define a function $V$ on the sequence space $\Real^\omega$, which is a Fréchet space with a family of semi-norms $\set{\norm{-}_k}_{k\in\Nat}$ where $\norm{\vec{x}}_k = |\vec{x}_k|$, as
  \begin{align*}
    {V}:\quad {\Real^{\omega}} & \longrightarrow{\pReal} \\
    {\vec{x}} & \longmapsto{-\log \sum_{k=1}^\infty w(\vec{x}^{1\dots k}).}
  \end{align*}
  $V$ is well-defined thanks to Assumption \aref{ass:3}{3}.
  Since $w$ is continuously differentiable on $A$,
  $V$ is continuously differentiable on the non-empty open set $\hat{A} := \bigcup_{n=1}^\infty (\interior{A} \cap \Real^n) \times \Real^\omega$.
  Moreover, $V$ must be bounded above, say by some $M$.

  Now we consider the minimization of the function $S_\epsilon: \changed[cm]{(\Real^{\omega})^{L+1}} \to \Real^{\omega}$ \changed[fz]{where $\epsilon$ is the leapfrog step size},
  \[
    (S_\epsilon (\vec{q}^0,\dots,\vec{q}^{L}))_k
    :=
    \epsilon \sum_{i=0}^{L-1} \bigg(
      \frac{1}{2} \Big(
        \frac{\vec{q}_k^{i+1} - \vec{q}_k^{i}}{\epsilon}
      \Big)^2
      -
      \frac{V(\vec{q}^{i+1})+V(\vec{q}^i)}{2}
    \bigg)
    \qquad\text{for all }k\in\Nat
  \]
  where
  $\vec{q}^0 = \vec{a} \concat \vec{0}$ and
  $\vec{q}^L = \vec{b}\concat \vec{0}$.
  Since $V$ is bounded above by $M$,
  for all $\phi \in (\Real^\omega)^{L+1}$,
  each component of ${S_\epsilon(\phi)}\in\Real^{\omega}$ is bounded below by $-\epsilon (L-1)M$
  (i.e.~$\forall k\in\Nat$,
  ${S_\epsilon(\phi)}_k > -\epsilon (L-1)M$).
  Hence, $S_\epsilon$ is bounded below.
  By the completeness of $\Real^\omega$, $\inf{S_\epsilon} \in \Real^\omega$ exists.

  Consider \changed[fz]{a} minimising sequence $\set{\phi_n}_{n\in\Nat} $ on $\changed[cm]{(\Real^\omega)}^{L+1}$ where
  ${S_\epsilon(\phi_{n+1})}_k < {S_\epsilon(\phi_n)}_k$ for all $n,k \in\Nat$ and
  $S_\epsilon(\phi_n) \to \inf{S_\epsilon}$ as $n \to \infty$.
  Writing the sequence as $\set{(\vec{q}^{0,n},\dots, \vec{q}^{L,n})}_{n\in\Nat}$,
  we say it is bounded on $(\Real^\omega)^{L+1}$ if and only if
  for each $i = 0,\dots, L$, $\set{\vec{q}^{i,n}}_{n\in\Nat}$ is a bounded set on $\Real^\omega$ which is equivalent to saying that
  for each $i= 0,\dots, L$ and for all $k \in \Nat$, $\set{\norm{\vec{q}^{i,n}}_k}_{n\in\Nat}$ is bounded on $\Real$.
  It is easy to see that for all $n\in\Nat$ and $i= 1,\dots, L$,
  $\norm{\vec{q}^{i+1,n}-\vec{q}^{i,n}}_k \leq 2\epsilon S_\epsilon(\phi_0) + 2\epsilon^2 LM$ and
  $\norm{\vec{q}^{1,n}}_k \leq 2\epsilon S_\epsilon(\phi_0) + 2\epsilon^2 LM + \norm{\vec{q}^0}_k$,
  so for any $i = 0,\dots, L$ and $k \in \Nat$,
  $\set{\norm{\vec{q}^{i,n}}_k}_{n\in\Nat}$ is bounded
  and hence
  the sequence $\set{\phi_n}_{n\in\Nat}$ is bounded.
  Moreover,
  \changed[cm]{its closure $\Phi := \closure{\set{\phi_n}_{n\in\Nat}}$ is bounded and closed.}
  % since the sequence contains all its limit points, $\set{\phi_n}_{n\in\Nat}$ is closed.

  Note that the Fréchet space $\Real^\omega$ is a quasi-complete nuclear space and has the Heine–Borel property\cm{according to Wikipedia}, i.e.~all closed and bounded set is compact.
  So, the \changed[cm]{set $\Phi$ is compact.}
  % So, the \changed[cm]{set} $\set{\phi_n}_{n\in\Nat}$ is compact.
  Moreover, since $\Real^\omega$ is completely metrisable, the compact \changed[cm]{set $\Phi$} is also sequentially compact, i.e.~\changed[cm]{every sequence in $\Phi$ has a subsequence converging to a point in $\Phi$.}
  Hence $\set{\phi_n}_{n\in\Nat} \subseteq \Phi$ must have a subsequence $\set{\phi_{n_k}}_{k\in\Nat}$ which converges to some point $\bar{\phi}$ in $\Phi$.

  \changed[cm]{
  We claim that $\bar{\phi}$ is almost surely in $\hat{A}^{L+1}$.
  \fz{again, "almost surely" requires a probability measure. Do you mean for almost all a,b, we have $\bar\phi$ in $\hat{A}^{L+1}$?}
  We show that the set $(\Real^\omega)^{L+1} \setminus \hat{A}^{L+1}$ has measure zero.
  First note that by Assumption \aref{ass:2}{2}, $\btra$ is continuously differentiable almost everywhere and hence $\traces \setminus A$ is a null set.
  Moreover, by the definition of $A \in \mathcal{U}$, $\traces \setminus \interior{A}$ is also a null set.
  Then this implies the set of infinite sequences with no prefixes in $\interior{A}$ has measure zero, i.e.~$\Real^\omega\setminus \hat{A}$ is a null set.
  Hence $(\Real^\omega)^{L+1} \setminus \hat{A}^{L+1} = \set{(\vec{q}^0, \dots, \vec{q}^L) \in (\Real^\omega)^{L+1} \mid \exists i\ .\ \vec{q}^i \not\in\hat{A}} = \bigcup_{i=0}^L (\Real^\omega)^{i} \times (\Real^\omega \setminus \hat{A} ) \times (\Real^\omega)^{L-i} $ has zero measure.

  Since $\bar{\phi}$ is constrained by $\vec{q}^0 = \vec{a} \concat \vec{0}$ and $\vec{q}^L = \vec{b} \concat \vec{0}$,
  there can only be a null set of
  $\vec{a}, \vec{b} \in A \cap \support{w}$ which induces $\bar{\phi}$ in the null set $(\Real^\omega)^{L+1} \setminus \hat{A}^{L+1}$.
  Hence $\bar{\phi}$ is almost surely in $\hat{A}^{L+1}$.

  Assume $\bar{\phi}$ is in $\hat{A}^{L+1}$.
  Since $V$ is continuously differentiable on $\hat{A}$, so is $S_\epsilon$ on $\hat{A}^{L+1}$.
  By the continuity of $S_\epsilon$, we have
  $
    \inf{S_\epsilon}
    =
    \lim_{k\to\infty} S_\epsilon(\phi_{n_k})
    =
    S_\epsilon(\lim_{k\to\infty}\phi_{n_k})
    =
    S_\epsilon(\bar{\phi})
  $,
  so $S_\epsilon$ attains its infimum on $\hat{A}^{L+1}$.
  }

  By \cref{prop: R^omega turns on inf}, the gradient of $S_\epsilon$ at its infimum $\bar{\phi} = (\vec{\bar{q}}^0,\dots, \vec{\bar{q}}^L) $ is $\vec{0}$.
  Hence $\vec{\bar{q}}^0 = \vec{a} \concat \vec{0}$,
  $\vec{\bar{q}}^L = \vec{b}\concat \vec{0}$ and
  \[
    \vec{\bar{q}}^{i+1} = 2\vec{\bar{q}}^i - \vec{\bar{q}}^{i-1} - \epsilon^2 \grad{V}(\vec{\bar{q}}^i)
    \qquad
    \text{for }i = 1,\dots, L-1
  \]
  which is the solution to the leapfrog steps.
  In other words, the infimum $\bar{\phi}$ gives a path from $\vec{a}\concat \vec{0}$ to $\vec{b}\concat\vec{0}$ via the leapfrog trajectory with initial momentum
  $\vec{p} = \frac{1}{\epsilon}(\vec{\bar{q}}^1-\vec{a}\concat\vec{0}) + \frac{\epsilon}{2}\grad{V}(\vec{a}\concat\vec{0})$.

  Last but not least, let $k$ be the maximum of $k_i$'s where $w({\vec{\bar{q}}^i}^{1\dots k_i}) > 0$ for all $i = 0,\dots, L$.
  Then it is easy to see that
  $\proj{1}(\HMCint_k(\vec{a}\concat \vec{0}^{1\dots k-\len{\vec{a}}}),\vec{p}^{1\dots k})^{1\dots \len{\vec{b}}} = \vec{b}$.
\end{proof}

\begin{proposition}
  \label{prop: R^omega turns on inf}
  Let $f:\Real^\omega \to \Real^\omega$ be a function with infimum\fz{what does this mean if the range is $\Real^\omega$?}\cm{It is defined below. If $x_0$ is an infimum of $f$, then
  ${f(x)}_{\ell} \geq {f(x_0)}_{\ell}$
  for all $\ell\in\Nat$ and $x \in \Real^\omega$.} at $x_0 \in \Real^\omega$ and
  is continuously differentiable on $A \subseteq \Real^\omega$ where $x_0 \in A$,
  then $\grad{f}(x_0)$ is the zero map, i.e.~$\grad{f}(x_0)(h) = \vec{0}$ for all $h \in \Real^\omega$.
\end{proposition}

\begin{proof}
  First note that $f$ is continuously differentiable at $x_0 \in A$ means that
  for any $\epsilon > 0$ there exists an $\delta > 0$ such that for any $k \in\Nat$ and $x \in \Real^\omega$ such that $\norm{x-x_0}_k < \delta $ implies
  $\frac{\norm{f(x)-f(x_0)-L(x-x_0)}_\ell}{\norm{x-x_0}_k} < \epsilon $ for all $\ell \in\Nat$,
  where $L:\Real^\omega \to \Real^\omega$ is the bounded linear map defined as $L := (Df)(x_0)$.
  \footnote{
    This can be easily seen by substituting $h$ by $\frac{x-x_0}{\norm{x-x_0}_k}$ in the standard definition of continuously differentiable functions $f$ on $A \subseteq \Real^\omega$.
  }

  Assume for contradiction that $L$ is not a zero map.
  i.e.~There exists some $h \in \Real^\omega$ such that $Lh \not= 0$.
  Let $k$ be the coordinate such that ${(Lh)}_k \not= 0$
  and $\epsilon > 0$.

  Since $x_0$ is an infimum of $f$,
  ${f(x)}_{\ell} \geq {f(x_0)}_{\ell}$
  for all $\ell\in\Nat$ and $x \in \Real^\omega$.
  Moreover, $f$ is continuously differentiable at $x_0$ so
  there exists an $\delta > 0$ such that for any $x \in \Real^\omega$, $\norm{x-x_0}_k < \delta $ implies
  $\frac{\norm{f(x)-f(x_0)-L(x-x_0)}_\ell}{\norm{x-x_0}_k} < \epsilon $ for all $\ell \in\Nat$.

  Consider the sequence $\set{y_n}_{n\in\Nat}$ defined as
  $y_n := x_0 - \frac{1}{n}\frac{Lh}{\norm{Lh}_k}\cdot h$.
  The distance between $y_n$ and $x_0$ is
  $
    \norm{y_n-x_0}_{k}
    =
    \bignorm{\frac{-1}{n}\frac{Lh}{\norm{Lh}_k}\cdot h}_{k}
    =
    {\frac{1}{n}\norm{h}_k}.
  $
  So for large enough $n$,
  $\norm{y_n-x_0}_{k} < \delta$.

  Hence,
  \[
    0
    \leq
    \frac{{(f(y_n)-f(x_0))}_k}{\norm{y_n-x_0}_{k}}
    <
    \frac{{L(y_n-x_0)}_k}{\norm{y_n-x_0}_{k}} + \epsilon
    =
    \frac{n}{\norm{h}_{k}}\cdot \bigg(\frac{-1}{n}\frac{(Lh)^2}{\norm{Lh}_k}\bigg)_k + \epsilon
    =
    -\frac{\norm{Lh}_k}{\norm{h}_k} + \epsilon
  \]
  which implies
  $\norm{Lh}_k < \norm{h}_k\epsilon$.
  Since $\epsilon$ is arbitrary, we have $\norm{Lh}_k \leq 0$ which implies $(Lh)_k = 0$ and contradicts our assumption.
\end{proof}

Now we show that the Markov chain can move to any measurable set with positive measure on $A$ from almost all $\btra$-supported element in $A$.

\begin{lemma}
  \label{lemma:bthmc irreducible in A}
  Assuming $w$ is continuously differentiable on a non-null set $A \in \mathcal{U}$ and
  $\set{U_n}$ is uniformly bounded above (i.e.~there is an upper bound $M$, where $U_n(\vec{q}) < M$ for all $\vec{q} \in \domain{U_n}$ for all $n \in \Nat$)
  and
  $\grad{U_n}$ is Lipschitz on $A \cap \domain{U_n}$.
  For \changed[cm]{almost all} $\vec{a} \in A \cap \support{w}$ and measurable subset $B \subseteq A$,
  $\tdist(B) > 0$
  implies
  $Q(\vec{a},B) > 0$.
\end{lemma}

\begin{proof}
  It is enough to prove the statement for
  a non-null measurable set $B \subseteq A \cap \Real^n$ where
  all elements of $B$ have positive weight since
  all measurable subset $B$ of $A$ with $\tdist(B)>0$ must contains such a subset.
  \changed[cm]{Moreover we restrict $B$ to the elements where the statement in \cref{lemma: bthmc reachable in A} always hold w.r.t.~$\vec{a}$.}

  Say $m = \len{\vec{a}}$ and
  $M = \max\set{m,n}$.
  Let
  $I_{\vec{a}}(B)  = \{
  {\vec{p} \in \Real^k} \mid
  {k \geq M}$ and
  all intermediate leapfrog steps starting from $(\vec{a} \concat \vec{0}^{1\dots k-m},\vec{p})\in\validstates$ are in $A \cap \domain{U_k}$ and
  $\proj{1}(\HMCint_k(\vec{a}\concat \vec{0}^{1\dots k-m},\vec{p}))^{1\dots n} \in B\}$.
  It is enough to show that
  $\sum_{k = M}^\infty \leb_k(I_{\vec{a}}(B) \cap \Real^k) > 0$.

  Let $\theta: I_{\vec{a}}(B) \to B$ be the function where
  $\theta(\vec{p})$ gives
  the next sample in $B$ after $L$ HMC leapfrog steps starting with initial state
  $(\vec{a}\concat \vec{0}^{1\dots \len{\vec{p}}-m},\vec{p})$.
  By \cref{lemma: bthmc reachable in A}, $\theta$ is subjective.

  We write $I_{\vec{a}}^k(B) = I_{\vec{a}}(B) \cap \Real^k$ and
  show that $\theta_k: I_{\vec{a}}^k(B) \to B$ is Lipschitz.
  By assumption for any $\vec{p}^0 \in I_{\vec{a}}^k(B)$,
  all the intermediate positions are in $\domain{U_k} \cap A$.
  Hence, we can write $\theta_k(\vec{p}) := \proj{1}(\HMCint_k(\vec{a}\concat \vec{0}^{1\dots k-m},\vec{p})) = \vec{q}^L$
  as
  \[
    \vec{q}^{0} + \epsilon L \vec{p}^{0} -
    \epsilon^2\big(
      \frac{L}{2}\grad{U_k}(\vec{q}^{0}) +
      \sum_{k=1}^{L-1} k \grad{U_k}(\vec{q}^{L-k})
    \big).
  \]
  Let $\vec{p}, \vec{p'} \in I_{\vec{a}}^k(B)$,
  and $\vec{q}^i, \vec{q'}^i$ be the position of the state after $i$ leapfrog steps with momentum kick $\vec{p},\vec{p'}$ respectively. Then,
  \begin{align*}
    |\theta_k(\vec{p}) - \theta_k(\vec{p'})| =
    |\vec{q}^L-\vec{q'}^L|
    & \leq \epsilon L|\vec{p}-\vec{p'}| +
    \epsilon^2 \sum_{i=1}^{L-1}i|\grad{U_k}(\vec{q}^{L-i})-\grad{U_k}(\vec{q'}^{L-i})| \\
    & \leq \epsilon L|\vec{p}-\vec{p'}| +
    \epsilon^2 \sum_{i=1}^{L-1}i|\vec{q}^{L-i}-\vec{q'}^{L-i}| \tag{$U_k$ is Lipschitz on $A \cap \domain{U_k}$}
  \end{align*}
  hence
  $|\theta_k(\vec{p}) - \theta_k(\vec{p'})| \leq c |\vec{p}-\vec{p'}|$ for some constant $c$
  and $\theta_k$ is Lipschitz.

  Assume for contradiction that
  $\sum_{k = M}^\infty \leb_k(I_{\vec{a}}(B) \cap \Real^k) = 0$
  which means that
  for all ${k \geq M}$, $\leb_k(I_{\vec{a}}^k(B)) = 0$.
  However,
  \begin{align*}
    \leb_n(B)
    & =
    \leb_n(\theta(I_{\vec{a}}(B)))
    =
    \leb_n(\theta(\bigcup_{k=M}^\infty I_{\vec{a}}^k(B)))
    =
    \leb_n(\bigcup_{k=M}^\infty \theta_k(I_{\vec{a}}^k(B))) \\
    & \leq
    \sum_{k=M}^\infty \leb_n(\theta_k(I_{\vec{a}}^k(B)))
    \leq
    \sum_{k=M}^\infty \mathsf{Lip}(\theta_k)^{3N}\cdot \leb_n(I_{\vec{a}}^k(B))
    = 0
  \end{align*}
  implies that
  $\leb_n(B) = 0$
  which gives a contradiction.
  \cm{check the Lip measure again}
\end{proof}

% proof for hmc with almost everywhere differentiable potential energy
\iffalse
\begin{proposition}
  \label{prop: dis hmc reachable}
  Let $A$ be an non-null set in $\mathcal{U}$.
  Assume $U:\Real^n \to \pReal$ is continuously differentiable except on $\boundary{A}$ and
  $\grad{U}$ is bounded (above and below).
  Then, {there exists a step size $\epsilon$ such that}
  for any $\vec{q}\in \Real^n$,
  $Q(\vec{q},A) > 0$,
  where $Q$ is the transition kernel for the marginalised chain of the HMC algorithm.
\end{proposition}

\begin{proof}
  % no reflection
  First, we assume there is no reflection in the trajectory.
  Studying the leapfrog step described in \cref{sec:HMC algorithm} tells us that
  the position variable $\vec{q}^{L}$ after $L$ $\epsilon$-sized leapfrog steps can be written as
  \[
    \vec{q}^{0} + \epsilon L \vec{p}^{0} -
    \epsilon^2\big(
      \frac{L}{2}\grad{U}(\vec{q}^{0}) +
      \sum_{k=1}^{L-1} k \grad{U}(\vec{q}^{L-k})
    \big)
  \]
  where $(\vec{q}^{k},\vec{p}^{k})$ is the state after $k$ leapfrog steps.
  Notice that
  $\proj{1}(\HMCint_n(\vec{q},\vec{p}))$ is
  the position variable $\vec{q}^{L}$ after $L$ $\epsilon$-sized leapfrog steps starting with the state
  $(\vec{q}^{0},\vec{p}^{0}) = (\vec{q},\vec{p})$.

  Since $A \in \mathcal{U}$, its interior $\interior{A}$ is an non-empty open set in $\Real^n$ and contains an open set of the form
  $\prod_{i=1}^n (a_i,b_i)$ for some real numbers $a_i < b_i$.
  Letting
  $\vec{p}^{0}_i$ be in the open interval
  $ I_i :=
  (
    \frac{1}{\epsilon L}(a_i -\vec{q}^{0}_i + \frac{(\epsilon L)^2}{2} M_2),
    \frac{1}{\epsilon L}(b_i -\vec{q}^{0}_i + \frac{(\epsilon L)^2}{2} M_1)
  ) $
  where $M_1$ and $M_2$ are the lower and upper bounds of $\grad{U}$ respectively,
  implies
  $\vec{q}^{L} \in \prod_{i=1}^n (a_i,b_i) \subseteq A$.

  Note that $I_i$ is non-empty if and only if
  $b_i-a_i > \frac{(\epsilon L)^2}{2}(M_2-M_1)$.
  We can find a small enough step size $\epsilon$ which makes all open intervals $I_i$ non-empty.
  Hence,
  $
  Q(\vec{q},A)
  \geq
  \int_{\Real^n} [\proj{1}(\HMCint_n(\vec{q},\vec{p})) \in A]\cdot \pdf{\Gau_n(0,1}(\vec{p}) \ \leb_n(d\vec{p})
  \geq
  \int_{\prod_{i=1}^n I_i} \pdf{\Gau_n\Real}\ d\leb_n >0.
  $

  % with reflection
  Say there is a reflection at the $k$-th leapfrog step and no other, where
  $k > n/2 + 1$.
  Then the state after $k$ leapfrog steps would be
  $\vec{q}^k = \vec{q}^{k-1} + \epsilon(-\vec{p}^{k-1}-\frac{\epsilon}{2}\grad{U}(\vec{q}^{k-1}))$ and
  $\vec{p}^k = - \vec{p}^{k-1} - \frac{\epsilon}{2}\grad{U}(\vec{q}^{k-1}) - \frac{\epsilon}{2}\grad{U}(\vec{q}^{k})$,
  which tells us that
  after $L$ leapfrog steps, the position of the particle is
  \begin{align*}
    \vec{q}^L & =
    \vec{q}^0 + (2k-2-n)\epsilon \vec{p}^0 \\
    &
    - \epsilon^2\big(
        \frac{k-3}{2}\grad{U}(\vec{q}^0) +
        \sum_{i=1}^{k-3}(k-i-1)\grad{U}(\vec{q}^i) -
        \grad{U}(\vec{q}^{k-2}) +
        \frac{1}{2}\grad{U}(\vec{q}^{k-1}) +
        \frac{n-k}{2}\grad{U}(\vec{q}^{k}) +
        \sum_{i=k+1}^{n-1}(n-i)\grad{U}(\vec{q}^i)
    \big)
  \end{align*}

  Similar to above, we give the condition on $\vec{p}^0$ for $\vec{q}^L$ to be in the open set $\prod_{i=1}^n(a_i,b_i)$.
  Doing some rearranging, we know that letting $\vec{p}^0_i$ be in the open interval
  $
    I_i :=
    (
      \frac{1}{(2k-2-n)\epsilon}(a_i-\vec{q}^0_i+\epsilon^2(C_k\cdot M_2-M_1)),
      \frac{1}{(2k-2-n)\epsilon}(b_i-\vec{q}^0_i+\epsilon^2(C_k\cdot M_1-M_2))
    )
  $
  where $C_k = \frac{(n-k)^2 + 1}{2} + k -3$,
  would allows $\vec{q}^L_i \in (a_i,b_i)$ as long as
  $I_i$ is non-empty, i.e.~
  $b_i-a_i > \epsilon^2(C_k+1)(M_2-M_1)$.

  The case where $k < n/2 + 1$ is similar.
  So as $\vec{p}^0_i$ is in the interval
  $
    I_i :=
    (
      \frac{1}{(2k-2-n)\epsilon}(b_i-\vec{q}^0_i+\epsilon^2(C_k\cdot M_1-M_2)),
      \frac{1}{(2k-2-n)\epsilon}(a_i-\vec{q}^0_i+\epsilon^2(C_k\cdot M_2-M_1))
    )
  $,
  which is non-empty if and only if
  $ b_i-a_i < \epsilon^2(C_k+1)(M_2-M_1)$.

  Hence, we want a step size $\epsilon$ which satisfies
  \begin{align*}
    b_i-a_i & < \epsilon^2(C_k+1)(M_2-M_1) \text{ for } k < n/2+1\\
    b_i-a_i & > \epsilon^2(C_k+1)(M_2-M_1) \text{ for } k > n/2+1
  \end{align*}
  \cm{I don't know what to be done when $k = n/2+1$ as we need to decode $\grad{U}(\vec{q}^i)$ to get the term $\vec{p}^0$.}

  \cm{The Maths would be even more involved when there are more than one reflection.}
\end{proof}

\begin{lemma}[Irreducibility]
  \label{lemma: diff HMC irreducbile}
  Let $A \in \mathcal{U}$.
  Assume $U$ is continuously differentiable except on $\boundary{A}$,
  bounded above on $\Real^n$ and
  $\grad{U}$ is globally Lipschitz and
  bounded (above and below).
  Then the transition kernel $Q$ of the HMC algorithm is $\tdist$-irreducible.
\end{lemma}

\begin{proof}
  We do case analysis on $\vec{q}\in \Real^n$ and non-null measurable set $B$,
  i.e.~ $\leb_n(B) > 0$.
  \begin{itemize}
    \item[Case]
      If $\vec{q}\in A$ and $A\cap B$ is non-null.
      Then, since $U$ is continuously differentiable on $A$,
      by \cref{lemma: HMC irreducbile},
      $Q(\vec{q},B) \geq Q(\vec{q},A \cap B) > 0$.

    \item[Case]
      If $\vec{q}\in A$ and $A\cap B$ is a null set.
      Then,
      $
        \leb_n(\comp{A})
        \geq
        \leb_n(\comp{A} \cap B)
        =
        \leb_n(\comp{A} \cap B)
        +
        \leb_n(A \cap B)
        =
        \leb_n(B)
        > 0,
      $
      hence $\comp{A}$ is a non-null set in $\mathcal{U}$
      and by \cref{prop: dis hmc reachable},
      $Q(\vec{q},\comp{A}) > 0$.

      Since $U$ is continuously differentiable on $\comp{A}$,
      hence $Q(\vec{q'},B) \geq Q(\vec{q'},\comp{A}\cap B) > 0$,
      we have
      \[
        Q^2(\vec{q},B)
        = \int_{\Real^n} Q(\vec{q'},B)\ Q(\vec{q},d\vec{q'})
        \geq \int_{\comp{A}} Q(\vec{q'},B)\ Q(\vec{q},d\vec{q'})
        > 0.
      \]
  \end{itemize}
\end{proof}
\fi

\begin{lemma}
  \label{lemma: bthmc reach A}
  Assuming $w$ is continuously differentiable on a non-null set $A$ where $A \in \mathcal{U}$ and
  $\set{\grad{U_n}}$ is uniformly bounded above and below (i.e.~there are bounds $M_1,M_2$, where $M_1 \leq \grad{U_n}(\vec{q}) \leq M_2$ for all $\vec{q} \in \domain{\grad{U_n}}$ for all $n \in \Nat$).
  Then {there exists a step size $\epsilon$ such that}
  for any sequence $\vec{q} \in \support{w}$,
  % for any $\vec{q} \in \traces$ where $w(\vec{q}) > 0$,
  $\tdist(A) > 0$ implies
  $Q(\vec{q},A) > 0$.
\end{lemma}

\begin{proof}
  Let $\vec{q} \in \Real^m$ be $\btra$-supported.
  Since $A \in \mathcal{U}$, its interior $\interior{A}$ is an non-empty open set.
  Hence for some $n$,
  there is an non-empty open subset $\prod_{i=1}^n (a_i,b_i)$ of $A \cap \Real^n$.

  Now we
  consider the conditions on the starting momentum $\vec{p}^0$ in order for the position $\vec{q}^L$ at the end of the trajectory of the leapfrog steps to be in $A$
  assuming that the position of the intermediate states never leave the domain of $U_k$
  for some $k \geq M := \max\set{m,n}$.
  \begin{align*}
    \vec{q}^L \in \prod_{i=1}^n (a_i,b_i) \times \Real^{k-n}
    & \quad \Leftrightarrow
      \quad \forall i = 1,\dots,n \quad
      \vec{q}_i^{0} + \epsilon L \vec{p}_i^{0} -
      \epsilon^2\big(
        \frac{L}{2}\grad{U_k}(\vec{q}^{0}) +
        \sum_{k=1}^{L-1} k \grad{U_k}(\vec{q}^{L-k})
      \big)
      \in (a_i, b_i) \\
    & \quad \Leftarrow
      \quad \forall i = 1,\dots,n \quad
      \vec{p}_i^0 \in
      \Big(\frac{1}{\epsilon L}(a_i -\vec{q}^{0}_i + \frac{(\epsilon L)^2}{2} M_2),
      \frac{1}{\epsilon L}(b_i -\vec{q}^{0}_i + \frac{(\epsilon L)^2}{2} M_1)\Big) =: I_i
  \end{align*}
  For any $\vec{p} \in \prod_{i=1}^n I_i$,
  the union
  $\bigcup_{k=M}^{\infty} \set{
    \vec{p'}\in \Real^{k-n} \mid
    (\vec{q}\concat \vec{0}^{k-m},\vec{p} \concat \vec{p'})\in \validstates
  } $
  is non-null.
  This is because
  the measure of the union can be seen as the
  value measure of the almost surely terminating probabilistic program
  which given $\vec{q} \in \nsupport{\btra}{m}$ and $\vec{p}\in \prod_{i=1}^n I_i$ returns $\vec{p'}\in \Real^{k-n}$ such that
  $(\vec{q}\concat \vec{0}^{k-m},\vec{p} \concat \vec{p'})$ is a valid state,

  For
  $\epsilon < \frac{1}{L} \sqrt{\frac{2(b_i-a_i)}{M_2-M_1}}$ for all $i$,
  the intervals $\set{I_i}$ are non-empty and hence
  $Q(\vec{q},A)
  \geq \sum_{k=M}^{\infty} \Gau_k(\set{
    \vec{p'}\in \Real^{k-n} \mid
    (\vec{q}\concat \vec{0}^{k-m},\vec{p} \concat \vec{p'})\in \validstates,
    \vec{p} \in \prod_{i=1}^n I_i
  })
  > 0$.
\end{proof}

\iffalse
  % with reflection
  Say there is a reflection at the $k$-th leapfrog step and no other, where
  $k > n/2 + 1$.
  Then the state after $k$ leapfrog steps would be
  $\vec{q}^k = \vec{q}^{k-1} + \epsilon(-\vec{p}^{k-1}-\frac{\epsilon}{2}\grad{U}(\vec{q}^{k-1}))$ and
  $\vec{p}^k = - \vec{p}^{k-1} - \frac{\epsilon}{2}\grad{U}(\vec{q}^{k-1}) - \frac{\epsilon}{2}\grad{U}(\vec{q}^{k})$,
  which tells us that
  after $L$ leapfrog steps, the position of the particle is
  \begin{align*}
    \vec{q}^L & =
    \vec{q}^0 + (2k-2-n)\epsilon \vec{p}^0 \\
    &
    - \epsilon^2\big(
        \frac{k-3}{2}\grad{U}(\vec{q}^0) +
        \sum_{i=1}^{k-3}(k-i-1)\grad{U}(\vec{q}^i) -
        \grad{U}(\vec{q}^{k-2}) +
        \frac{1}{2}\grad{U}(\vec{q}^{k-1}) +
        \frac{n-k}{2}\grad{U}(\vec{q}^{k}) +
        \sum_{i=k+1}^{n-1}(n-i)\grad{U}(\vec{q}^i)
    \big)
  \end{align*}

  Similar to above, we give the condition on $\vec{p}^0$ for $\vec{q}^L$ to be in the open set $\prod_{i=1}^n(a_i,b_i)$.
  Doing some rearranging, we know that letting $\vec{p}^0_i$ be in the open interval
  $
    I_i :=
    (
      \frac{1}{(2k-2-n)\epsilon}(a_i-\vec{q}^0_i+\epsilon^2(C_k\cdot M_2-M_1)),
      \frac{1}{(2k-2-n)\epsilon}(b_i-\vec{q}^0_i+\epsilon^2(C_k\cdot M_1-M_2))
    )
  $
  where $C_k = \frac{(n-k)^2 + 1}{2} + k -3$,
  would allows $\vec{q}^L_i \in (a_i,b_i)$ as long as
  $I_i$ is non-empty, i.e.~
  $b_i-a_i > \epsilon^2(C_k+1)(M_2-M_1)$.

  The case where $k < n/2 + 1$ is similar.
  So as $\vec{p}^0_i$ is in the interval
  $
    I_i :=
    (
      \frac{1}{(2k-2-n)\epsilon}(b_i-\vec{q}^0_i+\epsilon^2(C_k\cdot M_1-M_2)),
      \frac{1}{(2k-2-n)\epsilon}(a_i-\vec{q}^0_i+\epsilon^2(C_k\cdot M_2-M_1))
    )
  $,
  which is non-empty if and only if
  $ b_i-a_i < \epsilon^2(C_k+1)(M_2-M_1)$.

  Hence, we want a step size $\epsilon$ which satisfies
  \begin{align*}
    b_i-a_i & < \epsilon^2(C_k+1)(M_2-M_1) \text{ for } k < n/2+1\\
    b_i-a_i & > \epsilon^2(C_k+1)(M_2-M_1) \text{ for } k > n/2+1
  \end{align*}
  \cm{I don't know what to be done when $k = n/2+1$ as we need to decode $\grad{U}(\vec{q}^i)$ to get the term $\vec{p}^0$.}

  \cm{The Maths would be even more involved when there are more than one reflection.}
\fi

\begin{definition}
  \label{ass: convergence}
  We gather all the conditions so far.
  \begin{compactenum}[(C1)]
  \item $w$ is continuously differentiable on a non-null set $A$ with measure-zero boundary.
  % \item $\set{U_n}$ is uniformly bounded above.
  \item \changed[fz]{$w|_{\support{w}}$} is bounded below by a positive constant.
  % \item $\set{\grad{U_n}}$ is uniformly bounded above and below.
  \item For each $n$, the function $\frac{\grad{w_{\le n}}}{w_{\le n}}$ is uniformly bounded from above and below on $\support{w_{\le n}} \cap A$.
  % \item Each $\grad{U_n}$ is Lipschitz on $A \cap \domain{U_n}$.
  \item For each $n$, the function $\frac{\grad{w_{\le n}}}{w_{\le n}}$ is Lipschitz \changed[fz]{continuous} on $\support{w_{\le n}} \cap A$.
  \end{compactenum}
\end{definition}
Note that
\begin{compactenum}[(C1) {implies}]
  \item
    $w$ is continuously differentiable on a non-null set $A \in \mathcal{U}$.
  \item
    $\set{U_n}$ is uniformly bounded above (i.e.~there is an upper bound $M$, where $U_n(\vec{q}) < M$ for all $\vec{q} \in \domain{U_n}$ for all $n \in \Nat$).
  \item
    $\set{\grad{U_n}}$ is uniformly bounded above and below (i.e.~there are bounds $M_1,M_2$, where $M_1 \leq \grad{U_n}(\vec{q}) \leq M_2$ for all $\vec{q} \in \domain{\grad{U_n}}$ for all $n \in \Nat$).
  \item
    $\grad{U_n}$ is Lipschitz on $A \cap \domain{U_n}$.
\end{compactenum}

Now we are ready to prove irreducibility.

\begin{lemma}[Irreducible]
  \label{lemma: irreducible}
  If Assumptions (C1)--(C4) are satisfied,
  {there exists a step size $\epsilon$ such that}
  for any sequence $\vec{q} \in \support{w}$
  and measurable set $B \in \Sigma_{\traces}$,
  $\tdist(B) > 0$ implies
  $Q^i(\vec{q},B) > 0$ for $i \in \set{1,2}$.
\end{lemma}

\begin{proof}
  Let $A$ be the non-null set in $\mathcal{U}$ where
  $\btra$ is continuously differentiable on $A$ \emph{and}
  $\tmeasure(\traces \setminus A) = 0$
  \emph{and}
  \cref{lemma:bthmc irreducible in A} holds for all elements in $A$.
  Such $A$ must exist by Assumption \aref{ass:2}{2} and (C1).

  First note that $\tdist(A \cap B) > 0$.
  Otherwise, we must have $\tdist((\traces \setminus A) \cap B) > 0$. But this implies
  $\tmeasure(\traces \setminus A) \geq \tmeasure((\traces \setminus A) \cap B) > 0$
  which contradicts the assumption.

  We do case analysis on $\vec{q}\in \traces$.
  \begin{itemize}
    \item If $\vec{q} \in A$, then by \cref{lemma:bthmc irreducible in A}, $Q(\vec{q},A \cap B) > 0$.
    \item If $\vec{q} \not\in A$, then
      by \cref{lemma: bthmc reach A}, $Q(\vec{q},A) > 0$ and so
    \begin{align*}
      Q^2(\vec{q},B)
      & \geq Q^2(\vec{q}, A \cap B) \\
      & = \int_{\traces} Q(\vec{q'}, A\cap B)\ Q(\vec{q},\dif\vec{q'}) \\
      & \geq \int_{A} Q(\vec{q'},A \cap B)\ Q(\vec{q},\dif\vec{q'})
      > 0.
    \end{align*}
  \end{itemize}
\end{proof}

\begin{lemma}[Aperiodic]
  \label{lemma: aperiodic}
  If Assumptions (C1)--(C4) are satisfied,
  $Q$ is aperiodic.
\end{lemma}

\begin{proof}
  Assume for contradiction that $Q$ is not aperiodic.
  Then, there exists disjoint $B_0,\dots, B_d$ for $d \geq 1$ such that $\tdist(B_0) > 0$ and
  $x \in B_i$ implies $Q(x,B_{(i+1) \mod (d+1)}) = 1$ for all $i = 0,\dots, d$.

  Let $A$ be the non-null set in $\mathcal{U}$ where
  $\btra$ is continuously differentiable on $A$ \emph{and}
  $\tmeasure(\traces \setminus A) = 0$
  \emph{and}
  \cref{lemma: bthmc reach A} holds for all elements in $A$.
  Such $A$ must exist by Assumption \aref{ass:2}{2} and (C1).
  Let $C_i := B_i \cap A$ for all $i = 0,\dots, d$.
  Hence, $\tdist(C_0) > 0$ and
  $x \in C_i$ implies $Q(x,C_{(i+1) \mod (d+1)}) = 1$ for all $i = 0,\dots, d$.

  Let $x \in C_0$ be a $w$-supported sequence. Such an $x$ must exist as $\tdist(C_0) > 0$.
  Then, $Q(x,C_1) = 1$ implies $Q(x,C_0) \leq Q(x,\traces\setminus C_1) = 0$
  which contradicts with \cref{lemma: irreducible}
  as $x \in A$.

  \iffalse
  For $d \geq 2$, this contradicts with \cref{lemma: irreducible} which says that
  for any $w$-supported sequence $x \in \traces$, either $Q(x,C_0) > 0$ or $Q^2(x,B_0) > 0$.
  Let $x \in B_0$ be a $w$-supported sequence (note such an $x$ must exist as $\tdist(B_0) > 0$).
  Then, the assumption that $Q(x,B_1) = 1$ implies $Q(x,B_0) \leq Q(x,\traces\setminus B_1) = 0$ and
  similarly $Q^2(x,B_2) = \int_{\traces} Q(y,B_2)\ Q(x,dy) = \int_{B_1} Q(y,B_2)\ Q(x,dy) = 1$ implies
  $Q^2(x,B_0) = 0$ which is a contradiction.

  Now we focus on the case where $d = 1$.
  It is easy to see that if $\tdist(B_0 \cup B_1) < 1$, then
  elements in $B_0$ cannot reach $\traces \setminus (B_0 \cup B_1)$,
  which has a positive weight under $\tdist$
  and this contradicts \cref{lemma: irreducible}.

  Assume $\tdist(B_0 \cup B_1) = 1$.
  Then, $\tdist(A \cap B_i) = 0$ for $i \in \{0,1\}$.
  (By the proof of \cref{lemma: irreducible}, $\tdist(A \cap B_i) >0$ implies $Q(q,B_i) > 0$ for all $q \in B_i \cap \support{w}$, but this contradicts with our assumption.)
  Hence, $\int_A \frac{1}{Z} w\ d\tmeasure = \tdist(A) = \tdist(A \cap \traces) = \tdist(A \cap (B_0 \cup B_1)) = \tdist(A \cap B_0) + \tdist(A \cap B_1) = 0$.
  This means $w$ is almost always zero on a non-null set $A$,
  which contradicts with Assumption 3.
  \fi
\end{proof}

Finally by Tierney's Theorem (\cref{lemma: Tieryney}), the
$\tdist$-irreducible (\cref{lemma: irreducible}) and
$\tdist$-aperiodic (\cref{lemma: aperiodic})
transition kernel $Q$ with invariant distribution $\tdist$ (\cref{thm: marginalised distribution is the target distribution})
converges to $\tdist$.

\convergence*

% \begin{theorem}
%   \label{thm: np-hmc converges}
%   Assuming all assumptions listed in \cref{ass: convergence} are satisfied.
%   The Markov chain generated by \pythoninline{NPHMC} converges to the target distribution $\tdist$.
% \end{theorem}

\section{Experiments}
\label{appendix: experiments}
% !TEX root = ../main.tex

\subsection{Details on the Experimental Setup}

For our experimental evaluation, we implemented the algorithms in Python, using PyTorch for tensor and gradient computations.
The source code for our implementation and experiments is available at \url{https://github.com/fzaiser/nonparametric-hmc} and archived as \cite{code}.

\paragraph{Inference algorithms}
The four inference algorithms we compared were:
\begin{enumerate}
\item NP-DHMC (ours): the nonparametric adaptation of \cite{NishimuraDL20}, explained in \cref{sec: hmc variants}, using the efficiency improvements from \cref{sec:efficiency-improvements}.
\item Lightweight Metropolis-Hastings (LMH),
\item Particle Gibbs (PGibbs) and
\item Random walk lightweight Metropolis-Hastings (RMH).

% \item Importance sampling (IS): it samples from the prior and records an importance weight for the run.
% When all samples are computed, we resample according to the importance weight to obtain unweighted samples.
% \item Metropolis-Hastings (MH): with the transition kernel from \cite{DBLP:conf/icfp/BorgstromLGS16}.
% \item Lightweight Metropolis-Hastings (LMH): there are many variants of this. The one we implemented is equivalent to the one in Anglican. It randomly truncates the current trace and samples additionally necessary samples from the prior. \fz{TODO: Is there a good reference for this behaviour?}
% \lo{A possible reference is \cite{intro-prob-prog}: check out their MH with single-site proposal in \S 4.2.1.}
\end{enumerate}
We used the Anglican implementations of the latter three algorithms.

\paragraph{Models}
\changed[fz]{For NP-DHMC, the models were given to the algorithm as probabilistic programs in the form of a Python function with a context argument for NP-DHMC.
The context allows probabilistic primitives and records the trace and weight for the inference algorithms.
This way, evaluating the density function $w$ amounts to running the probabilistic programs.
For LMH, PGibbs, and RMH, the Python models were translated to Clojure programs using Anglican's probabilistic programming constructs.}
The pseudocode for the geometric example and the random walk example can be found in the main text.
The Gaussian and Dirichlet process mixture model is explained there as well, using \changed[lo]{statistical} notation.
%We didn't explain how sampling from $\mathrm{DP}(\alpha, \mathrm{Uniform}([0,1]^3))$ works.
%This is done using the stick-breaking procedure \cite{Sethuraman94}.
\changed[lo]{Sampling from $\mathrm{DP}(\alpha, \mathrm{Uniform}([0,1]^3))$ is implemented using the stick-breaking procedure \cite{Sethuraman94}.}
We use a cutoff of $\epsilon = 0.01$ for the stick size as explained in the text.
In pseudocode, it looks as follows:
\begin{python}
def dp(alpha, H):
    stick = 1.0
    beta = 0.0
    cumulative_product = 1.0
    weights = []
    means = []
    while stick > 0.01:
        cumulative_product *= 1 - beta
        beta = sample(Beta(1, alpha))
        theta = sample(H)
        weights.append(beta * cumulative_product)
        means.append(theta)
        stick -= beta * cumulative_product
    return weights, means
\end{python}

\paragraph{ESS computation}
For the random walk example, we computed the effective sample size.
For this we used NumPyro's \cite{pyro} \pythoninline{diagnostics.effective_sample_size} function.
It is designed to estimate the effective sample size for MCMC samplers using autocorrelation \cite{Gelman2014}. %\cite{Geyer2011}.
\changed[lo]{For importance samples used as the ground truth, we used the importance weights directly to compute the ESS: given importance weights $w_1,\dots,w_n$, the ESS is $\frac{(\sum_{i=1}^n w_i)^2}{\sum_{i=1}^n w_i^2}$.
We also computed the (autocorrelation-based) MCMC ESS for the importance samples and we obtained very similar results.}
%The two ESS estimates are comparable in this case.}

\paragraph{Hyperparameter choices}
We produced 10 runs with 1000 samples each for every example except the last, Dirichlet process mixture model (DPMM).
For the DPMM example, we only produced 100 samples in each run because of the forbidding computational cost.
We set the number of burn-in samples that are discarded to 10\% of the total number of samples, i.e. 100 samples for each run.
Since each run of the DPMM only had 100 samples, we set the burn-in higher there, namely to 50.
We did not vary this hyperparameter much because higher values did not seem to make a difference.
For the number of leapfrog steps we tried values $L \in \{5, 20, 50, 100 \}$, and for the step size we tried values $\epsilon \in \{0.01, 0.05, 0.1, 0.5\}$.
Generally, the simple geometric distribution example already works for very rough hyperparameters ($L = 5, \epsilon = 0.1$).
Finer steps work as well, but are not necessary.
However, more complex models generally require finer steps (GMM: $L = 50, \epsilon = 0.05$).
The other inference algorithms we tested don't have any hyperparameters that need to be set.

\paragraph{Thinning}
Since NP-DHMC performs more computation than its competitors for each sample because it evaluates the density function in each of the $L$ leapfrog steps, not just once like the other inference algorithms.
To equalise the computation budgets, we generate $L$ times as many samples for each competitor algorithm, and apply thinning (taking every $L$-th sample) to get a comparable sample size.

\subsection{Additional Plots and Data}

%In order to check our implementations of IS, MH and LMH for bugs, we ran the implementations of the four inference algorithms on the two micro-benchmarks from \cite{HurNRS15}.
%\changed[fz]{Our implementations performed as expected on these parametric models.}

In addition to the ESS and LPPD computations, we also plotted both as a variable of the number of samples computed.
The results can be seen in \cref{fig:walk-ess-plot,fig:gmm-dpmm-lppd-plot}.
\changed[fz]{As we can see, NP-DHMC performs the best consistently over the course of the inference, not just in terms of the final result.}
%\changed[fz]{As we can see, the best LPPD results of IS, MH, and LMH after the full number of samples are still much worse than NP-DHMC after having produced just a few samples.
%This is further evidence for the fact that the samples produced by NP-DHMC are of much higher quality than those of its competitors.}

\begin{figure}
\centering
\includegraphics[width=0.6\textwidth]{img/walk-ess-plot.pdf}
\caption{ESS for the random walk example in terms of number of samples}
\label{fig:walk-ess-plot}
\end{figure}
\begin{figure}
\centering
\includegraphics[width=0.49\textwidth]{img/gmm-lppd-plot.pdf}
\includegraphics[width=0.49\textwidth]{img/dpmm-lppd-plot.pdf}
\caption{LPPD for the GMM and DP mixture model in terms of the number of samples from 10 runs. The shaded area is one standard deviation.
These are the full plots of \cref{fig:gmm-lppd-plot-zoomed,fig:dpmm-lppd-plot-zoomed}, respectively.
}
\label{fig:gmm-dpmm-lppd-plot}
\end{figure}

% \iffalse
\paragraph{Running time}
We report the wall-clock times for the different algorithms.
Experiments were carried out on a computer with \cm{@Fabian, are these still true?}\fz{yes} an Intel Core i7-8700 CPU @ 3.20 GHz x 12 and 16 GB RAM, running Ubuntu 20.04.
The results are presented in \cref{table:running-times}.

NP-DHMC is significantly slower than the competition in the geometric and random walk examples, faster for GMM and comparable for DPMM.
Due to the nature of the coordinate integrator of discontinuous HMC \cite{NishimuraDL20},
NP-DHMC has to run the model $L \times d$ times per sample where $d$ is the number of discontinuous variables in the model.
%This is particularly relevant for the geometric, random walk and DPMM examples which have an unbounded number of discontinuous random variables.
We could improve the algorithm by only updating a subset of the discontinuous variables per iteration.
In addition, NP-DHMC computes gradients and simulates Hamiltonian dynamics, which is computationally expensive.
\changed[fz]{On the random walk example we also ran Pyro HMC and NUTS, as mentioned before.
Both of them were a lot slower than our implementation, which speaks to the fact that HMC methods simply have an unavoidable performance overhead.}
Finally, the implementation of NP-DHMC is a research prototype, so it is not optimal and there is a lot of room for improvement.

% Especially in the mixture models, NP-DHMC is significantly slower than the competition.
% Why is this the case?

% On the one hand, NP-DHMC is expected to be slower because, \changed[lo]{as a (generalised) HMC algorithm,} it has to run the model at least $L$ times for each sample.
% Due to the nature of coordinate integrator of discontinuous HMC \cite{NishimuraDL20}, the factor is actually $L \times d$ where $d$ is the number of discontinuous variables in the model.
% Thus, for the DPMM example, NP-DHMC has to run the model $L \times d \approx 20 \times 10 = 200$ times per sample.
% This is something that could be improved in future work on discontinuous HMC, by only updating a subset of the discontinuous variables per iteration.
% In addition, NP-DHMC computes gradients and simulates Hamiltonian dynamics, which is computationally expensive.
% \changed[fz]{On the random walk example we also ran Pyro HMC and NUTS, as mentioned before.
% Both of them were a lot slower than our implementation, which speaks to the fact that HMC methods simply have an unavoidable performance overhead.}
% Finally, the implementation of NP-DHMC is a research prototype, so it is not optimal and there is a lot of room for improvement.

% While the running time of NP-DHMC is significant, the quality of its samples is also significantly higher than its competitors', especially on complex models.
% This can be seen from \cref{fig:gmm-lppd-plot} and \cref{fig:dpmm-lppd-plot}: the best LPPD results of IS, MH, and LMH after the full number of samples are still much worse than NP-DHMC after having produced just a few samples.
% In summary, for complex models like the mixture models we examined, NP-DHMC more than compensates for its longer running time with the much higher quality of its samples.

\begin{table}
\begin{center}
\footnotesize
\caption{Running times for the different inference algorithms in seconds per sample.}
\label{table:running-times}
\vspace{0.1in}
\begin{tabular}{lllllll}
\toprule
method &ours &LMH &PGibbs &RMH &Pyro HMC &Pyro NUTS \\
\midrule
geometric example &0.0418 &0.0003 &0.0001 &0.0005 &n/a &n/a \\
random walk example &0.2266 &0.0077 &0.0051 &0.0095 &$\approx 0.41$ &$\approx 5.7$ \\
GMM example &0.1879 &1.6572 &1.6835 &1.6376 &n/a &n/a \\
DPMM example &1.8516 &2.1491 &1.7855 &2.0584 &n/a &n/a \\
\bottomrule
\end{tabular}
\end{center}
\end{table}
% \fi

\icmlsetsymbol{equal}{*}

\begin{icmlauthorlist}
\icmlauthor{Carol Mak}{oxford}
\icmlauthor{Fabian Zaiser}{oxford}
\icmlauthor{Luke Ong}{oxford}
\end{icmlauthorlist}

\icmlaffiliation{oxford}{Department of Computer Science, University of Oxford, United Kingdom}

\icmlcorrespondingauthor{Carol Mak}{pui.mak@cs.ox.ac.uk}

% You may provide any keywords that you
% find helpful for describing your paper; these are used to populate
% the "keywords" metadata in the PDF but will not be shown in the document
\icmlkeywords{Inference algorithm, MCMC, Tree}

\vskip 0.3in
]

% this must go after the closing bracket ] following \twocolumn[ ...

% This command actually creates the footnote in the first column
% listing the affiliations and the copyright notice.
% The command takes one argument, which is text to display at the start of the footnote.
% The \icmlEqualContribution command is standard text for equal contribution.
% Remove it (just {}) if you do not need this facility.

%\printAffiliationsAndNotice{}  % leave blank if no need to mention equal contribution
% \printAffiliationsAndNotice{\icmlEqualContribution} % otherwise use the standard text.

\bibliography{../database.bib}
\bibliographystyle{icml2021}

%%%%%%%%%%%%%%%%%%%%%%%%%%%%%%%%%%%%%%%%%%%%%%%%%%%%%%%%%%%%%%%%%%%%%%%%%%%%%%%
%%%%%%%%%%%%%%%%%%%%%%%%%%%%%%%%%%%%%%%%%%%%%%%%%%%%%%%%%%%%%%%%%%%%%%%%%%%%%%%

% This document was modified from the file originally made available by
% Pat Langley and Andrea Danyluk for ICML-2K. This version was created
% by Iain Murray in 2018, and modified by Alexandre Bouchard in
% 2019 and 2021. Previous contributors include Dan Roy, Lise Getoor and Tobias
% Scheffer, which was slightly modified from the 2010 version by
% Thorsten Joachims & Johannes Fuernkranz, slightly modified from the
% 2009 version by Kiri Wagstaff and Sam Roweis's 2008 version, which is
% slightly modified from Prasad Tadepalli's 2007 version which is a
% lightly changed version of the previous year's version by Andrew
% Moore, which was in turn edited from those of Kristian Kersting and
% Codrina Lauth. Alex Smola contributed to the algorithmic style files.